%% file: main.tex
\documentclass[]{article}
\usepackage[margin=1in]{geometry}
\usepackage{natbib}
\usepackage{times}

\usepackage{amsmath,amssymb,amsfonts}
\usepackage{graphicx}
\usepackage{textcomp}
\usepackage{xcolor}

\usepackage[utf8]{inputenc} %
\usepackage[T1]{fontenc}    %
\usepackage{hyperref}       %
\usepackage{url}            %
\usepackage{booktabs}       %
\usepackage{amsthm}         %
\usepackage{nicefrac}       %
\usepackage{microtype}      %
\usepackage{xspace}         %
\usepackage{tikz}           %
\usepackage{siunitx}        %
\usepackage{multirow}       %
\usepackage{bbm}            %
\usepackage{subcaption}     %
\usepackage{tabularx}       %
\usepackage{pifont}         %
\usepackage{xurl}           %
\usepackage{mathtools}      %
\usepackage{pifont}         %
\usepackage{threeparttable} %
\usepackage{algorithm}      %
\usepackage{algpseudocode}  %
\usepackage{cleveref}       %
\usepackage{wrapfig}        %
\usepackage{enumitem}       %

\input{macros}

\title{A Certified Robustness Framework for Backdoor \\\ Poisoning Attacks using $f$-Divergences}

\author{Evan Rose \\
Northeastern University \\
\texttt{rose.ev@northeastern.edu}
\and Alina Oprea \\
Northeastern University \\
\texttt{a.oprea@northeastern.edu}}

\date{}

\begin{document}

\maketitle

\begin{abstract}
    Machine learning systems trained on untrusted data are vulnerable to data and backdoor poisoning attacks, in which an adversary corrupts training and test samples to manipulate predictions at inference time.
    Randomized smoothing yields provable defenses against such attacks, but existing applications to train-time robustness are limited to specific threat models and smoothing mechanisms.
    We propose a general framework that allows expressing complex adversary constraints on the number of corrupted training examples, feature and label perturbation magnitudes, and test-time corruptions.
    Our framework randomizes inputs by combining data subsampling with feature, label and test-sample noising.
    We develop a composable certification procedure based on $f$-divergences that handles arbitrary combinations of threat models and noising strategies, and recovers several prior certification regimes exactly as special cases.
    We demonstrate our framework experimentally by certifying, for the first time, robustness under general $\ell_p$-constrained poisoning adversaries under bagging with additive Laplace, Gaussian, and Uniform noises.
\end{abstract}

\input{sections/01_introduction}
\input{sections/02_background}
\input{sections/03_problem}
\input{sections/04_methodology}
\input{sections/05_certification}
\input{sections/06_evaluation}
\input{sections/07_conclusion}

\subsubsection*{Acknowledgments}
This work was supported by NSF awards CNS-2312875 and CNS-2331081, the U.S. Army Combat Capabilities Development Command Army Research Laboratory (DEVCOM ARL) under Cooperative Agreement Number W911NF-24-2-0115, and by a grant from Good Ventures Foundation.

\bibliography{references}
\bibliographystyle{plainnat}

\appendix
\crefalias{section}{appendix}
\crefalias{subsection}{appendix}
\crefalias{subsubsection}{appendix}

\input{sections/A_smoothing_background}
\input{sections/A_divergence_background}

\input{sections/A_detailed_prior_work}
\input{sections/A_proofs}
\input{sections/A_tightness}

\input{sections/A_methodology_details}

\input{sections/A_divergence_bounds}

\end{document}

%% file: macros.tex
\newtheorem{dfn}{Definition}[section]
\newtheorem{thm}{Theorem}[section]
\newtheorem{lem}{Lemma}[section]
\newtheorem{cor}{Corollary}[section]

\crefname{dfn}{Definition}{Definitions}
\crefname{thm}{Theorem}{Theorems}
\crefname{lem}{Lemma}{Lemmas}
\crefname{cor}{Corollary}{Corollaries}
\crefname{fct}{Fact}{Facts}
\crefname{asm}{Assumption}{Assumptions}
\crefname{exm}{Example}{Examples}
\crefname{cnj}{Conjecture}{Conjectures}
\crefname{rmk}{Remark}{Remarks}

\DeclarePairedDelimiter\abs{\lvert}{\rvert}
\DeclarePairedDelimiter\norm{\lVert}{\rVert}
\DeclarePairedDelimiter\floor{\lfloor}{\rfloor}

\newcommand{\ind}{\mathbbm{1}}
\DeclareMathOperator{\supp}{supp}
\DeclareMathOperator*{\argmax}{arg\,max}

\DeclareMathOperator{\dom}{dom}

\DeclareMathOperator{\esssup}{ess\,sup}

\DeclareMathOperator*{\bbE}{\mathbb{E}} %
\newcommand{\bbP}{\mathbb{P}}
\newcommand{\bbR}{\mathbb{R}}

\newcommand{\bbN}{\mathbb{N}}

\newcommand{\cA}{\mathcal{A}}
\newcommand{\cB}{\mathcal{B}}
\newcommand{\cD}{\mathcal{D}}
\newcommand{\cF}{\mathcal{F}}
\newcommand{\cG}{\mathcal{G}}

\newcommand{\cM}{\mathcal{M}}
\newcommand{\cN}{\mathcal{N}}
\newcommand{\cO}{\mathcal{O}}
\newcommand{\cP}{\mathcal{P}}
\newcommand{\cR}{\mathcal{R}}
\newcommand{\cS}{\mathcal{S}}

\newcommand{\cV}{\mathcal{V}}
\newcommand{\cW}{\mathcal{W}}
\newcommand{\cX}{\mathcal{X}}
\newcommand{\cY}{\mathcal{Y}}
\newcommand{\cZ}{\mathcal{Z}}

\newcommand{\smooth}[1]{\overline{#1}}

\DeclareMathOperator{\Binom}{Binom}
\DeclareMathOperator{\Lap}{Lap}
\DeclareMathOperator{\Unif}{Unif}

\DeclareMathOperator{\unif}{Unif}

\newcommand{\dle}{\preceq}

\newcommand{\KL}{D_{\mathrm{KL}}}
\newcommand{\TV}{D_{\mathrm{TV}}}
\newcommand{\RY}[1]{D_{\mathrm{R},#1}}

\newcommand{\HS}[1]{D_{\mathrm{HS},#1}}

\newcommand{\Df}[1][f]{D_{#1}}
\newcommand{\Hsq}{D_{\mathrm{H}^2}}
\newcommand{\Hel}[1][\alpha]{D_{\mathrm{H},#1}}
\newcommand{\Ren}[1][\alpha]{D_{\mathrm{R},#1}}
\newcommand{\Hock}[1][\beta]{D_{\mathrm{HS},#1}}
\newcommand{\HockU}[1][\beta]{\widetilde{D}_{\mathrm{HS},#1}}
\newcommand{\chiP}{D_{\chi^2_{\mathrm{P}}}}
\newcommand{\chiN}{D_{\chi^2_{\mathrm{N}}}}
\newcommand{\Hint}[1][\alpha]{Z_{#1}}

\newcommand{\defeq}{\coloneqq}

\newcommand{\emp}{m_{\mathrm{p}}}

\newcommand{\epsf}{\varepsilon_{\mathrm{f}}}
\newcommand{\epsl}{\varepsilon_{\mathrm{l}}}
\newcommand{\epste}{\varepsilon_{\mathrm{te}}}

\algrenewcommand\algorithmicrequire{\textbf{Input:}}
\algrenewcommand\algorithmicensure{\textbf{Output:}}

\usetikzlibrary{calc,arrows.meta}

%% file: sections/01_introduction.tex
\section{Introduction}
\label{sec:introduction}

Despite their continued adoption in wide-reaching domains, machine learning systems remain susceptible to adversarial attacks that subvert their reliability and can cause harmful outcomes.
These attacks span a broad class of threat models, including both training- and inference-phase attacks.
Of particular concern are \textit{data} and \textit{backdoor poisoning} attacks, in which an adversary corrupts a portion of the training data to degrade or control model behavior~\citep{gu2019badnets,shafahi2018poison,saha2020hidden,aghakhani2021bullseye,geiping2021witches,souri2022sleeper}.
To defend against such attacks, certified defenses have been proposed that provide provable guarantees on the consistency of a classifier's output around an input.
The most popular of these techniques, randomized smoothing, works by lifting the classifier into a function over a noised version of the input and defining a new classifier by taking the expectation~\citep{lecuyer2019certified,cohen2019certified}.

The intuitive reason randomized smoothing works is that averaging a function over a noised input induces a uniform continuity property whose stability depends primarily on the consistency of the function around the input, rather than on the continuity properties of the base function.
This principle applies whether the input and noise are simple (e.g., classification under Gaussian noise) or complex (e.g., an entire randomized dataset).
In the case of data poisoning or backdoor attacks, several natural noising strategies exist, including data subsampling, feature noising, and label flipping.
However, as the noising procedures become more complex, analyzing the certified guarantees becomes increasingly difficult.
As a result, existing certified defenses for data poisoning attacks remain limited to smoothing strategies such as simple bagging or discrete label flipping~\citep{jia2021intrinsic,chen2022collective,zhang2022bagflip}.

In this work, we ask, \emph{can we develop a highly general randomized smoothing framework for data poisoning and backdoor adversaries that easily handles complex threat models and smoothing strategies?}
Our insight is to leverage the convex-optimization view of randomized smoothing first introduced by~\citet{dvijotham2020framework}, in which the admissible perturbation set is replaced by a convex outer approximation defined by $f$-divergence constraints.
The certification condition relies on the smoothing mechanism and threat model only through a set of divergence constraints corresponding to a certain dual problem with nice composition properties.
We argue this is the ``right'' perspective for composite mechanisms such as those arising in the backdoor poisoning setting.

\paragraph{Contributions}
We present a randomized smoothing framework for backdoor and data poisoning attacks that incorporates rich attacker threat models, a multi-step randomization scheme, and efficient certification via convex relaxation using $f$-divergences.
Our framework makes it straightforward to describe complex threat models spanning feature, label, and test instance corruptions, to describe noising strategies that defend against them, and to derive sound robustness certificates under the resulting schemes.
We explain how our framework recovers the tight certificates of several prior works as special cases.
We use our framework to study certifiably robust classification under backdoor and data poisoning attacks in a wide range of settings not considered in prior works.

%% file: sections/02_background.tex
\section{Background and related work}
\label{sec:background}

A growing body of work studies how to obtain certifiable robustness guarantees against data poisoning and backdoor attacks.
\Cref{apx:prior-work} gives a more detailed treatment of selected works.

\paragraph{Randomized smoothing}
In response to the weaknesses of empirical defenses~\citep{athalye2018obfuscated},
randomized smoothing emerged as a technique to obtain provable robustness to adversarial examples~\citep{lecuyer2019certified,cohen2019certified,li2019certified,salman2019provably}.
Early work established the standard certificate analysis methodology via the Neyman--Pearson Lemma~\citep{neyman1933problem} and the use of confidence-corrected estimates of the smoothed prediction.
Other characterizations have also been explored, including functional optimization over $f$-divergence constraints~\citep{dvijotham2020framework} and a nonlinear Lipschitz condition~\citep{salman2019provably}.

\paragraph{Learner-specific defenses}
Some works derive robustness certificates for specific learning algorithms.
Early work studied duality-based upper bounds on the total test error~\citep{steinhardt2017certified}.
\citet{rosenfeld2020certified} derive an efficient certified inference algorithm under label flipping attacks for classifiers trained using linear regression and label flipping noise.
\citet{drews2020proving} train certifiably robust decision trees with abstract interpretation.
\citet{jia2022certified} derive efficient certified guarantees for nearest neighbor classifiers under label flipping attacks.
\citet{sabanayagam2025exact} and~\citet{gosch2026certifying} certify graph neural networks and SVMs against data poisoning, backdoor, and label flipping attacks using the neural tangent kernel and linear programming.
\citet{sosnin2025certified} extend abstract interpretation to gradient-based training to certify gradient-based learners.

\paragraph{Differential privacy for robustness}
Another line of work obtains robustness from differentially private (DP) learners.
\citet{ma2019data} certify global accuracy using the stability of expectation under a differentially private algorithm, and \citet{xie2023unraveling} study a similar property in federated learning.
\citet{liu2023enhancing} extend the results of \citet{ma2019data} to instance-wise poisoning by improving the group privacy analysis of the subsampled Gaussian mechanism.
\citet{saxena2026provable} connect the standard hypothesis testing framework for certification to the theory of DP privacy profiles, enabling efficient analysis of composed mechanisms such as DP-SGD.

\paragraph{Voting-based poisoning defenses}
The most popular strategy for provable robustness to poisoning is to train an ensemble of learners over different partitions or randomized samples and to use robustness properties of the majority vote.
One line of work considers deterministic partitioning strategies~\citep{levine2021deep,wang2022improved,rezaei2023runoff,hammoudeh2024provable}.
Another line of work considers Randomized Smoothing by training ensembles on randomized samples of training data~\citep{weber2023rab,jia2021intrinsic,chen2022collective,zhang2022bagflip}.

%% file: sections/03_problem.tex
\section{Problem statement and preliminaries}
\label{sec:problem-statement}

\subsection{Learning setup}
\label{sec:learning-setup}
We consider classification tasks with input space $\cX \subseteq \bbR^d$ and label space $\cY = [C]$, and write $\cZ \defeq \cX \times \cY$ for the labeled input space and $\cS \defeq \cZ^\ast$ for the space of training sets.
We absorb training and inference into a single procedure: define the joint input space $\cW \defeq \cS \times \cX$ and suppose $\cW \subseteq \widetilde{\cW}$ for some ambient space $\widetilde{\cW}$.
\footnote{We introduce $\widetilde{\cW}$ because the randomization we introduce later means the base learner must process datasets from a slightly different space than $\cW$.}
A \textit{base predictor} $G \colon \widetilde{\cW} \to \cP(\cY)$ maps a joint input $w = (S, x_{\mathrm{te}})$ to a distribution over labels.
Concretely, $G$ composes a learning algorithm $\cA$ with evaluation at a test point, $G(w) = G(S, x_{\mathrm{te}}) = \cA(S)(x_{\mathrm{te}})$.
The induced hard predictor is $g(w) \defeq \argmax_{y \in \cY} G(w)_y$, with a fixed tie-breaking rule.
$\cA$ can be randomized, in which case $G$ denotes the expectation over $\cA$'s internal randomness (and thus $G$ is a deterministic function of $w$).

\subsection{Threat model}
\label{sec:threat-model}
We consider a replacement adversary who jointly corrupts the training set and test sample.
For joint input $w = (S, x_{\mathrm{te}})$ with $S = ((x_i, y_i))_{i=1}^n$, denote the corrupted input by $w' = (S', x'_{\mathrm{te}})$, where $S' = ((x'_i, y'_i))_{i=1}^n$.
In particular, the adversary may corrupt training features $x_i \mapsto x'_i$, training labels $y_i \mapsto y'_i$, and the test sample $x_{\mathrm{te}} \mapsto x'_{\mathrm{te}}$, but may not insert or delete training examples.

We model corruption constraints along four axes.
The first axis constrains the number of corrupted examples in the training set.
The remaining axes constrain per-example feature or label corruptions in terms of a feature-space distance function $d_{\cX}$ or a label space distance function $d_{\cY}$.
The adversary's capability is thus parameterized by the pair $(d_{\cX}, d_{\cY})$ together with a \textit{threat vector} $\rho = (\emp, \epsf, \epsl, \epste)$.
Define $I(w, w') \defeq \{i \in [n] \,:\, (x_i, y_i) \ne (x'_i, y'_i)\}$.
The threat vector fixes the constraints (1) $\abs{I} \le \emp$, (2) $\max_{i \in I} d_{\cX}(x_i, x'_i) \le \epsf$, (3) $\max_{i \in I} d_{\cY}(y_i, y'_i) \le \epsl$, (4) $d_{\cX}(x_{\mathrm{te}}, x'_{\mathrm{te}}) \le \epste$.
These constraints determine the \textit{perturbation set} $\cB_{\rho}(w) \defeq \{w' \in \cW \,:\, \text{constraints (1)--(4) hold}\}$.

Decoupling the budget into four independent axes allows us to recover the threat models of several prior works as special cases~\citep{rosenfeld2020certified,jia2021intrinsic,zhang2022bagflip}, as well as to generalize to several previously unconsidered threat models (e.g., $\emp < \infty$ with $d_{\cX} = \norm{\cdot}_p$ for $p \ne 0$).
Usually we will take $d_{\cY}$ discrete and $d_{\cX} \equiv \norm{\cdot}_p$ for some $0 \le p \le \infty$.

\subsection{Certified robustness}
\label{sec:certification-definition}

We seek to provably defend against attacks within our threat model under the following definition.

\begin{dfn}[$\rho$-Robustness]
    \label{dfn:certified-robustness}
    Let $g \colon \cW \to \cY$ be a deterministic hard predictor.
    We say that $g$ is \emph{$\rho$-robust} at $w \in \cW$ if $g(w) = g(w')$ for all $w' \in \cB_\rho(w)$.
\end{dfn}

Existing certified poisoning defenses typically report robustness as a scalar \textit{radius} specifying the tolerable corruption magnitude along a single axis (e.g., $\emp$ or $\epsf$)~\citep{rosenfeld2020certified,levine2021deep,zhang2022bagflip}.
Instead, we are interested in characterizing the full certified region directly.
For fixed $g$, define the \textit{robustness region} $\cR(w) \defeq \{\rho \,:\, g \text{ is } \rho\text{-robust at } w\}$.
Defining the robustness region $\cR$ encodes the idea that varying one threat parameter (e.g., $\epste$) can affect robustness against other parameters (e.g., $\epsf$), and we study the entire object holistically.

%% file: sections/04_methodology.tex
\section{Methodology}
\label{sec:methodology}
Our goal is to develop a general defense for training classifiers that handles the threat models from \Cref{sec:problem-statement} and admits an efficient certification procedure proving robustness.

\subsection{Robust learning procedure}
\label{sec:robust-learning-procedure}
\paragraph{Robust training via randomized smoothing}
We design a certifiably robust training procedure following the methods of \textit{Randomized Smoothing}~\citep{lecuyer2019certified,li2019certified,cohen2019certified},
initially introduced for certified robustness against evasion attacks.
The main idea behind randomized smoothing is to transform a pre-existing \textit{base learner} (in our case, the joint training-inference procedure $G$) into a provably robust \textit{smoothed} classifier $\smooth{G_{\cM}}$.
The smoothed classifier $\smooth{G_{\cM}}$ is obtained by applying a randomized \textit{smoothing mechanism} $\cM \colon \cW \to \cP(\widetilde{\cW})$ to the inputs and lifting the base classifier $G$ to a classifier on the space of \textit{distributions} over the joint input space according to the aggregation rule: $\smooth{G_{\cM}}(w) \defeq \bbE_{\widetilde{w} \sim \cM(w)} \left[G(\widetilde{w})\right]$.
$\smooth{G_{\cM}}$ is a deterministic soft classifier whose associated hard classifier is $\overline{g_{\cM}}(w) = \argmax_{y} \smooth{G_{\cM}}(w)_y$.

\paragraph{Randomized mechanism}
The smoothing mechanism $\cM$ should be chosen in a way to ``wipe away'' any influence the corruptions might have on the prediction.
We design a multi-step randomization procedure that consists of a data subsampling phase followed by a feature and label noising phase.
The resulting probability measure takes the form of a mixture distribution whose components correspond to the samples retained during the subsampling phase.
Each component of the mixture follows structurally identical randomization behavior, but is instantiated with different parameters.

For a training set $S = ((x_i, y_i))_{i = 1}^n$ and sample-index sequence $U = (u_1, \ldots, u_k) \in [n]^k$, define $S_U \defeq ((x_{u_i}, y_{u_i}))_{i = 1}^k$.
Let $\Pi \in \cP([n]^\ast)$ be a subsampling distribution over retained sample indices $U$, $\phi_{\cX}(\cdot)$ a \textit{feature noise} mechanism (e.g., $\phi(x) = \cN(x, \sigma^2 I)$), and $\phi_{\cY}(\cdot)$ a \textit{label noise} mechanism (e.g., $\phi(y) = (1-\gamma) \delta_y + \gamma \unif([C])$).
Our smoothing mechanism $\cM \defeq \cM_{\Pi, \phi_{\cX}, \phi_{\cY}}$ is generated by the following procedure.
\textbf{(1)} Draw retained indices $U \sim \Pi$.
\textbf{(2)} Apply per-example noise to features and labels:
For each $i \in [\abs{U}]$, independently sample $\widetilde{x}_i \sim \phi_{\cX}(\cdot \mid x_{u_i})$ and $\widetilde{y}_i \sim \phi_\cY(\cdot \mid y_{u_i})$.
Construct $\widetilde{S} \defeq ((\widetilde{x}_{i}, \widetilde{y}_{i}))_{i=1}^{\abs{U}}$.
\textbf{(3)} Sample $\widetilde{x}_{\mathrm{te}} \sim \phi_\cX(\cdot \mid x_{\mathrm{te}})$.
\footnote{We couple train and test noise mechanisms for simplicity; decoupling them is conceptually straightforward.}
\textbf{(4)} Output $(\widetilde{S}, \widetilde{x}_{\mathrm{te}})$.

We make a few observations on the distributions resulting from the smoothing mechanism $\cM$.
By construction, sampling from $\cM$ is efficient.
$\cM$ can be written as a mixture distribution whose components are product measures parameterized by a subsample of $S$.
$\cM$ has $\abs{\supp \Pi}$ components, which is generally quite large (e.g., $n^k$ for bagging with replacement).
This makes evaluating the density (or mass) at a point a priori intractable.
For a large number of norm-based smoothing mechanisms (e.g., Gaussian), the product measure induced by $\phi_{\cX}$ reduces to a well-understood measure over the full dataset space identified with $\bbR^{nd}$ (e.g., $\prod_{i=1}^n \cN(0_d, \sigma^2I_d) \cong \cN(0_{nd}, \sigma^2 I_{nd})$).

\paragraph{Concrete instantiations}
We motivate our smoothing scheme with a few concrete instantiations.
Natural choices of $\Pi$ are bagging (with or without replacement) or Poisson subsampling.
Natural choices for $\phi_{\cX}$ include Gaussian, Laplace, Uniform, or discrete noise, but we can also accommodate more exotic noises such as the norm-based noises of \citet{yang2020randomized}.
Most often $\phi_{\cY}$ will be a flipping distribution $\phi(y) = \phi^{\textrm{flip}}_{\gamma, K}(y) = (1-\gamma) \delta_{y} + \gamma \mathrm{Unif}([K])$ for some alphabet size $K$, but we can also accommodate structured noises $\phi_{\cY}(y) = M_{y \widetilde{y}}$ for a Markovian transition matrix $M$.

The smoothing mechanisms of several prior works can be recovered for suitable choices of $\Pi, \phi_{\cX}, \phi_{\cY}$.
RAB~\citep{weber2023rab} takes $\Pi$ as a point mass on the original dataset (no subsampling) and sets $\phi(x) = \cN(x, \sigma^2 I_d)$.
\citet{jia2021intrinsic} take $\Pi$ as bagging with replacement with no feature or label noise.
BagFlip~\citep{zhang2022bagflip} considers bagging with replacement and coupled feature-label noise $\phi_{\cX}(x) = \bigotimes_{j=1}^{d} \phi^{\textrm{flip}}_{\gamma, K}(x_j)$ and $\phi_{\cY}(y) = \phi^{\textrm{flip}}_{\gamma, K}(y)$, where notably the feature and label alphabets are the same size and use the same flipping probability.

%% file: sections/05_certification.tex
\subsection{Certifying predictions: overview}
\label{sec:certifying}
\paragraph{Core challenges}
The main technical challenge we face is how to certify robustness for such a large class of general smoothing methods against the adversaries defined in \Cref{sec:threat-model}.
Prior works applying randomized smoothing to data poisoning or backdoor attacks~\citep{weber2023rab,jia2021intrinsic,zhang2022bagflip} apply techniques introduced by \citet{cohen2019certified}, who frame certification as a hypothesis testing problem and construct Neyman--Pearson tests to derive tight certificates.
Unfortunately, this path is only accessible for simple smoothing schemes and threat models, and does not scale to our setting: the likelihood ratio structure of $\cM$ is complicated due to its combinatorial mixture structure and is typically only directly tractable for simple distributions such as the discrete noises of \citet{zhang2022bagflip}.
Furthermore, changing the mechanisms $\Pi, \phi_{\cX}, \phi_{\cY}$ or the threat model $d_{\cX}, d_{\cY}, \rho$ a priori requires a laborious new derivation of the certification condition by hand.

What we desire instead is a generic way to \textit{automatically construct} the certification conditions given the threat model $d_{\cX}, d_{\cY}, \rho$ and the smoothing mechanism $\Pi, \phi_{\cX}, \phi_{\cY}$.
All relevant information can be derived for each component independently, and different combinations can be composed automatically with basic computational rules.
It turns out that such a procedure is possible if we turn to a certain dual formulation of Randomized Smoothing, first considered by \citet{dvijotham2020framework} for the evasion setting in the language of \emph{$f$-divergences}.
We start by introducing some preliminaries.

\subsection{Certification via convex optimization}
Define a \textit{specification} as a bounded measurable $\psi \colon \Omega \to \bbR$, and call it \textit{robustly certified} at $\nu \in \cP(\Omega)$ w.r.t. $\cD \subseteq\cP(\Omega)$ if $\bbE_{\nu'}[\psi] \ge 0$ for every $\nu' \in \cD$ (\Cref{dfn:specification}).
For convex $f$ with $f(1) = 0$ (a \textit{generator}), the induced $f$-divergence is $\Df(\nu' \| \nu) \defeq \bbE_{\nu}[f(d\nu' / d\nu)]$ (\Cref{dfn:f-divergence}).
Given generators $f_1,\ldots,f_m$ with bounds $\tau_i \ge 0$, define $\cD_\cF \defeq \bigcap_i\{\nu' : \Df[f_i](\nu' \| \nu) \le \tau_i\}$ (\Cref{dfn:f-divergence-set}).

We use the following result of~\citet{dvijotham2020framework}, stated in full in \Cref{thm:lim-info-cvx}.
Let $\theta_a, \theta_b$ denote lower and upper bounds on the majority and runner-up predicted probabilities from $\smooth{G_{\cM}}(w)$, respectively, and write $\theta = (\theta_a, \theta_b, 1 - \theta_a - \theta_b)$.
Then, every specification consistent with $\theta_a, \theta_b$ is robustly certified w.r.t. $\cD_\cF$ iff the solution to the following problem is non-negative:
\begin{equation}
    \label{eq:lim-info-program-brief}
    \inf_{\zeta \in \bbR_+^3} \zeta_a - \zeta_b
    \quad\text{s.t.}\quad
    \zeta_a + \zeta_b + \zeta_c = 1,
    \quad
    \Df[f_i](\zeta \| \theta) \le \tau_i,\;\; i=1,\ldots,m.
\end{equation}
Robustness certification in the sense of \Cref{dfn:specification}
is related to our own robustness notion (\Cref{dfn:certified-robustness}).
Writing $c \defeq \smooth{g_\cM}(w)$ and taking specifications $\psi_{c,c'}(\widetilde{w}) \defeq G(\widetilde{w})_c - G(\widetilde{w})_{c'}$, $c \in \argmax_y \smooth{G_\cM}(w')_y$ for all $w' \in \cB_\rho(w)$ iff every $\psi_{c,c'}$ with $c \ne c'$ is robustly certified at $\cM(w)$ w.r.t. $\cD_{w,\rho} \defeq \{\cM(w') \;:\; w' \in \cB_{\rho}(w)\}$.
We state and prove the formal statement in \Cref{thm:robustness-equivalence}.

In working with \Cref{eq:lim-info-program-brief}, it suffices to choose any $\tau_i \ge \Df[f_i](\nu' \| \cM(w))$ for every $\nu' \in \cD_{w, \rho}$, with the tightest choice being
\begin{equation}
    \label{eq:tau-star}
    \tau_f^{\star}(w, \rho) \defeq \sup_{w' \in \cB_\rho(w)} \Df(\cM(w') \| \cM(w)).
\end{equation}
One can use any $f$-divergence for which an upper bound on \Cref{eq:tau-star} is available.
The most interesting cases are the hockey-stick divergences, due to their characterization of tight certificates (\Cref{thm:tight-certificate-hockey}), and the R\'enyi divergences, due to their nice tensorization properties (\Cref{lem:tensorization}).

\subsection{Computing divergence bounds}
\label{sec:bounding-mixtures}

Under \Cref{thm:robustness-equivalence,thm:lim-info-cvx}, the problem of certifying robustness for $\overline{G_{\cM}}$ at $w \in \cW$ reduces to establishing a bound $\tau_f(w, \rho) \ge \tau_f^\star(w, \rho)$ for suitable $f$.
The technical challenge is grappling with the complex structure of $\cM$: subsampling through $\Pi$ induces a mixture with combinatorially many components, and noising through $\phi$ makes each component a product measure.
Our analysis proceeds in two steps: first, we reduce the combinatorial structure of the mixture distribution to a lower-dimensional problem, and second, we analyze divergences between paired product components.

\paragraph{Bounding divergences for mixtures}
The first key observation for bounding divergences is that the components of any $\cM(w')$ and $\cM(w)$ share mixing weights when paired by subsample, which interact nicely with $\Df(\cM(w') \| \cM(w))$.
Moreover, despite a combinatorial number of components, the components can often be grouped together, which gives us the following bound:

\begin{thm}[Reorganized Joint Convexity Bound]
    \label{thm:joint-convexity-bound}
    Let $\mu = \sum_{i=1}^N \theta_i \mu_i$ and $\nu = \sum_{i=1}^N \theta_i \nu_i$ be mixtures with shared mixing weights $\theta_i$.
    Suppose there is a partition of indices $[N] = \bigsqcup_{i=1}^k S_i$ and a sequence of nonnegative real numbers $(\delta_i)_{i=1}^k$ such that for all $i \in [k]$ and all $j \in S_i$, $D_f(\mu_j \| \nu_j) \le \delta_i$.
    Then one has $D_f(\mu \| \nu) \le \tau_f$ for $\tau_f \defeq \sum_{i=1}^k \beta_i \delta_i$, where $\beta_i \defeq \sum_{j \in S_i} \theta_j$.
\end{thm}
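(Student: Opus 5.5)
The plan is to derive the bound from the joint convexity of $f$-divergences, followed by a regrouping of the resulting sum. Joint convexity states that for mixtures with shared weights, $D_f\bigl(\sum_j \theta_j \mu_j \,\big\|\, \sum_j \theta_j \nu_j\bigr) \le \sum_j \theta_j D_f(\mu_j \| \nu_j)$. Once this inequality is available, the theorem follows by bookkeeping. Split the sum over $[N]$ along the partition $\bigsqcup_{i=1}^k S_i$, which gives
\begin{equation*}
\sum_{j=1}^N \theta_j D_f(\mu_j \| \nu_j) = \sum_{i=1}^k \sum_{j \in S_i} \theta_j D_f(\mu_j \| \nu_j).
\end{equation*}
Each term is nonnegative, since $\theta_j \ge 0$ and $D_f \ge 0$ by Jensen with $f(1)=0$. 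Bounding each $D_f(\mu_j\|\nu_j)$ by $\delta_i$ for $j \in S_i$ therefore yields $\sum_i \delta_i \sum_{j\in S_i}\theta_j = \sum_i \beta_i \delta_i = \tau_f$.

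The substantive step is proving joint convexity in the generality used here, where the $\mu_j,\nu_j$ need not be mutually absolutely continuous. I will use the perspective-function formulation. Fix a common dominating measure $\lambda$, for instance $\lambda = \sum_j (\mu_j + \nu_j)$, and write $p_j = d\mu_j/d\lambda$ and $q_j = d\nu_j/d\lambda$. Then
\begin{equation*}
D_f(\mu_j\|\nu_j) = \int q_j f(p_j/q_j)\, d\lambda,
\end{equation*}
with the standard conventions $0 f(0/0)=0$ and $0 f(a/0) = a f'(\infty)$, where $f'(\infty) = \lim_{t\to\infty} f(t)/t$. The perspective $(p,q)\mapsto q f(p/q)$ is jointly convex and positively homogeneous on $[0,\infty)^2$, and these conventions extend it as a lower semicontinuous sublinear function. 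Sublinearity gives the pointwise inequality
\begin{equation*}
\Bigl(\sum_j \theta_j q_j\Bigr) f\Bigl(\frac{\sum_j \theta_j p_j}{\sum_j\theta_j q_j}\Bigr) \le \sum_j \theta_j\, q_j f(p_j/q_j)
\end{equation*}
$\lambda$-a.e. Integrating this inequality gives joint convexity for finitely many components.

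I expect the main obstacle to be ensuring that the paper's definition $D_f(\nu'\|\nu)=\bbE_\nu[f(d\nu'/d\nu)]$ agrees with the perspective form when absolute continuity fails. If the appendix definition (\Cref{dfn:f-divergence}) already uses the perspective or singular-part convention, the argument goes through directly. Otherwise I will restrict to the case $\mu_j \ll \nu_j$ for all $j$. In that case $\mu \ll \nu$ automatically, and the pointwise inequality reduces to Jensen's inequality applied with weights $\theta_j q_j / \sum_l \theta_l q_l$ to $f$ evaluated at the ratios $p_j/q_j$. If some $D_f(\mu_j\|\nu_j)=\infty$, the bound holds trivially, so no integrability issues arise.
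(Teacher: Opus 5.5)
Your proposal is correct and follows the paper's proof, which likewise applies joint convexity (\Cref{lem:f-convex}) to get $D_f(\mu\|\nu)\le\sum_j\theta_j D_f(\mu_j\|\nu_j)$ and then regroups the sum along the partition; your perspective-function argument is a valid self-contained proof of the $N$-component joint convexity that the paper simply cites. Regarding your concern about the definition, \Cref{dfn:f-divergence} already includes the singular term $f'(\infty)\mu(q=0)$ and \Cref{lem:f-perspective} gives exactly the perspective representation you use, so your general argument applies directly and the absolutely continuous fallback is unnecessary (it would also not cover the mutually singular point-mass components that arise in pure bagging).
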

We prove \Cref{thm:joint-convexity-bound} in \Cref{apx:deferred-proofs}, along with an analogue for R\'enyi divergences (\Cref{cor:renyi-mixture-full}).
\Cref{thm:joint-convexity-bound} does not depend on the specific geometry of the feature or label noise and only uses pairing of components and joint convexity of $\Df$.
The grouping of components is important computationally, as it reduces the sum to a tractable number of terms.
For our noises, the component bound depends on $U$ only through the number $s(U)$ of corrupted indices.
Grouping by $s$ leaves only a few terms, e.g., $k + 1$ for bagging and $\min(k, \emp) + 1$ for bagging without replacement.
This trick can be viewed as an extension to $f$-divergences of earlier techniques that partition the output space into regions of constant likelihood ratio~\citep{lee2019tight,bojchevski2020efficient,zhang2022bagflip}.

\subsection{Bounding component divergences}
It remains to find the bounds $\delta_j$.
Consider a fixed corruption $w'$ of $w$, and apply \Cref{thm:joint-convexity-bound}.
Writing $z_U \defeq (S_U, x_{\mathrm{te}})$ for an un-noised subsampled input, we obtain the component pairs $(\phi(z_U), \phi(z'_U))$ with $z'_U$ determined by $w'$.
A small complication is that the $z_U$ are coupled through $w'$, so we relax the problem slightly and take $\delta_j \ge \sup_{z'} \Df(\phi(z') \| \phi(z))$ over all corruptions $z'$ of $z$ that are feasible given group $j$ (for example, a component might correspond to $U$ with $s$ corrupted examples).
This is tight when some $w'$ attains every component supremum simultaneously.

\paragraph{Adjacency parameters}
The supremum $\sup_{z'} \Df(\phi(z') \| \phi(z))$ is natively over a high-dimensional space.
However, for sufficiently symmetrical noises, the divergence between $(x,x')$ depends only on a low-dimensional \textit{adjacency parameter} $\theta(x,x')$, in the sense that every $f$-divergence can be written in the form $\Df(\phi(x') \| \phi(x)) = \delta_f(\theta(x, x'))$ for some $\delta_f$ depending on $f$.
One way to see this is by the invariance of $f$-divergences under bimeasurable bijections (\Cref{lem:data-processing}).
As a concrete example, for Gaussian noise any $f$-divergence depends only on the $\ell_2$ separation $r = \norm{x-x'}_2 / \sigma$.

\begin{table}[ht]
    \caption{Divergence bounds derived in \Cref{apx:calculus}.
    Here $\theta$ is the adjacency parameter of \Cref{dfn:adjacency-parameter} and $\Delta \defeq x - x'$. Notation $p_+$, $p_0$, $p_-$, $\kappa_\alpha(t)$, $d_\beta$, and $P_d$ are defined in \Cref{apx:calculus}.}
    \label{tab:divergences}
    \centering
    \scriptsize
    \setlength{\tabcolsep}{3pt}
    \renewcommand{\arraystretch}{1.35}
    \begin{tabular}{@{}llll@{}}
        \toprule
        \textbf{Noise} & \textbf{Adjacency $\theta$} & $\Ren[\alpha](\mu \| \nu)$ & $\Hock[\beta](\mu \| \nu)$ \\
        \midrule
        $\Unif(\gamma)$
          & $1 - \prod_i \bigl(1 - \tfrac{\abs{\Delta_i}}{2\gamma}\bigr)_{\!+}$
          & $\tfrac{1}{\alpha - 1}\log(1 - \theta)$
          & $\min(1, \beta)\,\theta$ \\
        $\cN(\cdot, \sigma^2 I)$
          & $r = \norm{\Delta}_2 / \sigma$
          & $\alpha r^2 / 2$
          & $\Phi\bigl(\tfrac{r}{2} - \tfrac{\log\beta}{r}\bigr) - \beta \Phi\bigl(-\tfrac{r}{2} - \tfrac{\log\beta}{r}\bigr) - (1 - \beta)_+$ \\
        $\Lap(\cdot, b)$
          & $\mathbf{t} = \abs{\Delta} / b$
          & $\sum_j \kappa_\alpha(\mathbf{t}_j)$
          & $\bigl[(1 - \sqrt{\beta}e^{-t/2})_+ - (1-\beta)_+\bigr]_+$ \\
        $\phi^{\textrm{flip}}_{\gamma, C}$
          & $s = \norm{\Delta}_0$
          & $\tfrac{s}{\alpha - 1}\log\bigl(p_+^\alpha p_-^{1-\alpha} + p_+^{1-\alpha}p_-^\alpha + p_0\bigr)$
          & $\sum_{d \ge d_\beta} \bigl(1 - \beta (p_+/p_-)^{-d}\bigr) P_d$ \\
        \bottomrule
    \end{tabular}
    \label{tab:divergence-bounds}
\end{table}

\paragraph{Worst-case pairs}
A priori, \Cref{eq:tau-star} admits a different maximizer for each $f$.
In most cases, this is fortunately not the case.
Write $(\mu', \nu') \dle (\mu, \nu)$ when $\Df(\mu' \| \nu') \le \Df(\mu \| \nu)$ for every generator $f$.
We show that $\dle$ is entirely determined by the Hockey-stick envelope (\Cref{lem:dle-via-hs}): $(\mu', \nu') \dle (\mu, \nu)$ iff $\Hock(\mu' \| \nu') \le \Hock(\mu \| \nu)$ for all $\beta > 0$.
In such cases, a single $(\mu^\star, \nu^\star)$ generates every $\tau_f^\star$ simultaneously.
This property also composes nicely under product measures (\Cref{lem:worst-case-product}), which is useful for combining feature and label flipping.
We show that such a pair exists for Gaussian and Uniform noise under all $\ell_p$ norms, for flipping noise under $\ell_0$, and for Laplace noise under $\ell_1$ and $\ell_\infty$ norms.
For $1 < p < \infty$
For Laplace noise under $\ell_2$, the behavior is more complicated, requiring special treatment (\Cref{apx:worst-case-laplace}).

Worst-case pairs, when they exist, reduce divergence bound computation to optimizing the adjacency parameter.
For Gaussian noise this corresponds to maximizing the $\ell_2$ perturbation, which is easy.
For Uniform noise the problem is nontrivial, and we must resort to a loose relaxation (\Cref{apx:worst-cast-uniform}).

\paragraph{Composition under products}
Our framework involves applying independent noise (between features and labels within an example, and independence between examples), which requires us to understand $f$-divergences between product measures.
R\'enyi divergences are attractive here due to their nice tensorization properties (\Cref{lem:tensorization}) and can be computed immediately from components.
We show in \Cref{lem:hockey-stick-product} how to compute Hockey-stick divergences $\Hock(\mu_1 \otimes \mu_2 \| \nu_1 \otimes \nu_2)$ in closed form via the formula $\HockU(\mu \| \nu) = \bbE_{\mu_1}[\HockU[\beta / \Lambda_1](\mu_2 \| \nu_2)]$.
This is especially useful for modeling dirty-label adversaries, where the corresponding noise $\phi$ can be written as a product of a label flipping measure $\mu_1 = \phi^{\textrm{flip}}_{\gamma, C}$ and a continuous noise $\mu_2$ (e.g., $\cN(x, \sigma^2 I)$).

\paragraph{Numerical computation}
Gaussian, flipping, and Uniform noise have hockey-stick profiles computable in closed-form once their worst-case adjacency parameter is known.
For Laplace noise the profile involves repeated self-convolution of the log likelihood ratio distribution (which has no nice closed form), so we compute it with numerical integration techniques.

\paragraph{Worked example}
Consider $\cX = \bbR^{d}$, $\cY = [C]$, and $S = ((x_i, y_i))_{i=1}^n$.
Let $\Pi$ be bagging with replacement of $k$ samples, and let $\phi_{\cX}(x) = \cN(x, \sigma^2 I_d)$, so that $\cM(S) = \sum_{U \in [n]^k} n^{-k} \cN(S_U, \sigma^2 I_{dk})$ has $n^k$ components.
Under an $\ell_2$ adversary, a component with $s$ corrupted samples has adjacency parameter $r = \norm{S_U - S'_U}_2 / \sigma \le \epsf \sqrt{s} / \sigma$, which is attained at maximal corruption magnitude.
Since $s \sim \Binom(k, \emp / n)$ under $\Pi$, \Cref{thm:joint-convexity-bound} gives $\tau_f = \sum_{s=0}^k \Binom(s;k,\emp / n) \delta_f(\epsf \sqrt{s} / \sigma)$, where $\delta_f$ depends on $f$ (e.g., $\delta_{\mathrm{TV}}(r) = 2\Phi(r/2)-1$; see \Cref{tab:divergence-bounds}).
If we also model a backdoor trigger, we can replace $\epsf^2 \,s$ by $\epsf^2 \, s + \epste^2$ to give $\delta_f(\sqrt{\epsf^2\,s +\epste^2 } / \sigma)$ in the sum.
For label-flipping adversaries and randomized labels, we can use \Cref{lem:hockey-stick-product} to compute $\delta_\beta$ as a weighted sum of Gaussian hockey-stick values at thresholds $\beta (p_+ / p_-)^{-d}$ (\Cref{thm:gauss-flip-div}).

%% file: sections/06_evaluation.tex
\section{Evaluation}
\label{sec:evaluation}

We present several applications of our framework to data poisoning and backdoor poisoning attacks.
Our experiments focus on image classification on the MNIST~\citep{lecun1998gradient} and CIFAR10~\citep{krizhevsky2009learning} datasets, in addition to the class-subsampled variants MNIST-17 and CIFAR10-02~\citep{zhang2022bagflip,weber2023rab}.
We train ensembles of size $N=10,000$ and $N=1,000$ for MNIST and CIFAR10, respectively.
We use confidence level $\alpha=0.001$.

\subsection{$\ell_p$ feature adversaries}
\label{sec:lp-adversary}
\begin{figure}[ht]
    \centering
    \begin{subfigure}[t]{0.31\linewidth}
        \centering
        \captionsetup{font=scriptsize}
        \includegraphics[width=\linewidth]{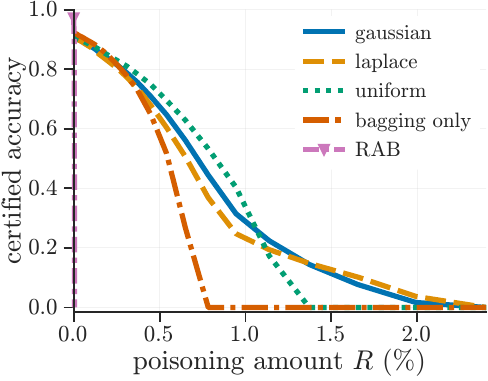}
        \caption{$\ell_1$ adversary, $\epsf=1$.}
        \label{fig:mnist-cleanlabel-l1}
    \end{subfigure}
    \hfill
    \begin{subfigure}[t]{0.31\linewidth}
        \centering
        \captionsetup{font=scriptsize}
        \includegraphics[width=\linewidth]{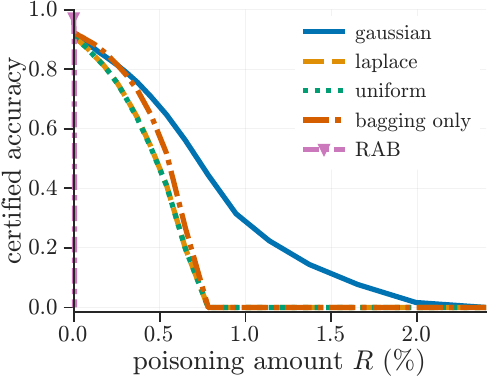}
        \caption{$\ell_2$ adversary, $\epsf=1$.}
        \label{fig:mnist-cleanlabel-l2}
    \end{subfigure}
    \hfill
    \begin{subfigure}[t]{0.31\linewidth}
        \centering
        \captionsetup{font=scriptsize}
        \includegraphics[width=\linewidth]{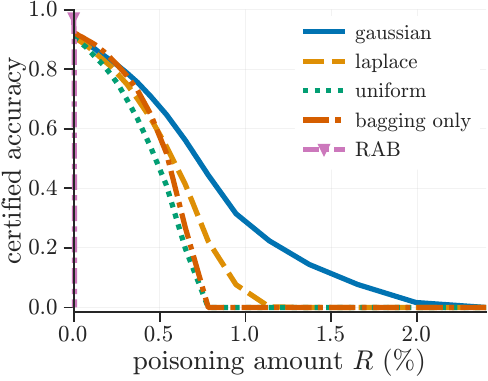}
        \caption{$\ell_\infty$ adversary, $\epsf=1/28$.}
        \label{fig:mnist-cleanlabel-linf}
    \end{subfigure}
    \hfill
    \caption{Certified accuracies on MNIST for clean-label triggerless attacks under different $\ell_p$ adversaries. The budget in each case is calibrated to correspond to the same $\ell_2$ budget, which has the effect that Gaussian noise certifies the same radius across all curves.}
    \label{fig:mnist-cleanlabel-lp}
\end{figure}

\paragraph{Setup}
We consider a setting in which the attacker corrupts at most $\emp$ training examples, each by at most $\norm{x-x'}_{p} \le \epsf$.
The test sample and labels are not corrupted.
We instantiate $\Pi$ as bagging and compare Gaussian, Laplace, and Uniform feature noise.
We also evaluate pure bagging~\citep{jia2021intrinsic} and Gaussian noise without subsampling~\citep{weber2023rab} as baselines.

No prior method makes effective use of both the $\emp$ and $\epsf$ constraints.
For example, \citet{jia2021intrinsic} only subsample and pessimistically assume that a single retained poison compromises the learner, while \citet{weber2023rab} do not subsample data.
\citet{zhang2022bagflip} use discrete noise that only works under an $\ell_0$ constraint, and cannot certify more than bagging in $\ell_p$: while $\ell_p$ constraints can be converted into (looser) $\ell_q$ constraints for $p,q > 0$, they cannot convert into nontrivial $\ell_0$ constraints.

For $p > 0$, we compare with the certificate obtainable from the RAB defense~\citep{weber2023rab}, which adds Gaussian noise to the entire training set without bagging.
RAB's certificate can be used in our threat model by computing the maximum admissible $\ell_2$ perturbation under our threat model.
In particular, an $\ell_p$ bound can be converted into an $\ell_2$ bound by taking $\epsf^{(2)} \le \epsf^{(1)}$ or $\epsf^{(2)} \le \sqrt{d}\epsf^{(\infty)}$, respectively, and with $\emp$ poisoned examples we obtain the full dataset $\ell_2$ bound $\sqrt{\emp} \epsf^{(2)}$.

\paragraph{Findings}
We make several observations on the clean-label poisoning robustness reported in \Cref{fig:mnist-cleanlabel-lp}.
First, pure bagging always attains a higher non-robust certified accuracy, which makes sense since adding noise to data should only reduce classifier accuracy.
Second, uniform noise is competitive only for $\ell_1$ and for moderate poisoning amounts ($< 1.2\%$).
This finding corroborates \citet{yang2020randomized}, who find in the evasion setting that Gaussian noise yields the largest certified accuracies for $\ell_2$ and $\ell_\infty$, whereas Uniform noise works best for $\ell_1$.
Laplace noise is only competitive for $\ell_1$ adversaries at high poisoning rates, where it marginally beats Gaussian noise for $R \ge 1.5$.

We show the dependence on $\epsf$ for MNIST, MNIST-17, CIFAR10, and CIFAR10-02 in \Cref{fig:budgets} when training under Gaussian noise.
In this case, the advantage of constraining $\epsf$ deteriorates quickly: on MNIST, the certified radius is substantially larger for $\epsf=1$, but by $\epsf = 2$ the certificate nearly matches that at $\epsf=\infty$ (corresponding to the bagging certificate of \citet{jia2021intrinsic}).

\subsection{Label flipping and backdoor adversaries}
\label{sec:adversariess}
\paragraph{Setup}
We study the impact of different threat models on certifiable robustness.
We consider two main attacker settings: joint feature-label attacks and joint train-test (backdoor) attacks.
In both settings, the adversary can corrupt at most $\emp$ training examples and, for each corrupted sample, modify features up to $\epsf$ distance in $\ell_2$.
In feature-label attacks, the adversary can additionally corrupt the label.
In backdoor attacks, the adversary also corrupts the test sample with test budget $\epste = \epsf$.

\paragraph{Findings}
We report the robustness tradeoffs for different datasets.
The label-flipping adversary is reported in \Cref{tab:label-flip}, and the backdoor adversary is reported in \Cref{tab:backdoor}.

Our method significantly outperforms RAB under backdoor adversaries.
On MNIST-17, while RAB reaches trivial certified accuracy after $R=0.5\%$, our method retains over 45\% certified accuracy at $R=2\%$.
The improvement is due to bagging: with high probability, we retain only a small number $\cO(k \cdot \emp / n)$ of poisoned samples, whereas RAB retains all poisoned samples.
For RAB to certify comparable poisoned radii, it would need to train with $\sigma = \cO(\epsf \sqrt{\emp})$, which would destroy predictive performance, whereas for our method $\sigma = \cO(\epsf \sqrt{\emp \, k / n})$ still yields large certificates.

\begin{table}[ht]
    \centering
    \footnotesize
    \caption{
    Robustness tradeoff for joint feature \& label adversaries at poisoned fractions $R$, with $\epsl = 1$ and $\ell_2$ budget $\epsf = 1$ (MNIST, MNIST-17) or $0.5$ (CIFAR10, CIFAR10-02).
    Bag sizes $k$ and noise scales $\sigma_a < \sigma_b$ are fixed per dataset (\Cref{tab:configs}).
    Label noise retains the true label with $p=0.9$.
    }
    \setlength{\tabcolsep}{3.5pt}
    \begin{tabular}{l cccc cccc cccc cccc}
        \toprule
        & \multicolumn{4}{c}{\textbf{MNIST}} & \multicolumn{4}{c}{\textbf{MNIST-17}} & \multicolumn{4}{c}{\textbf{CIFAR10}} & \multicolumn{4}{c}{\textbf{CIFAR10-02}} \\
        \cmidrule(lr){2-5} \cmidrule(lr){6-9} \cmidrule(lr){10-13} \cmidrule(lr){14-17}
        $R$ (\%) & $0$ & $0.5$ & $1$ & $1.5$ & $0$ & $1$ & $2.5$ & $3.5$ & $0$ & $0.05$ & $0.1$ & $0.15$ & $0$ & $0.8$ & $1.6$ & $2.4$ \\
        Bagging & \textbf{82.3} & \textbf{56.8} & \textbf{31.9} & \textbf{12.0} & \textbf{99.1} & \textbf{98.0} & 0.0 & 0.0 & \textbf{32.5} & \textbf{14.7} & \textbf{5.3} & \textbf{1.1} & \textbf{78.3} & \textbf{64.4} & \textbf{40.2} & 2.4 \\
        \midrule
        Ours, $\sigma_a$ & 81.2 & 50.0 & 23.5 & 5.1 & 98.7 & 96.6 & \textbf{48.6} & 0.0 & 22.3 & 9.8 & 3.3 & 0.6 & 66.9 & 48.5 & 26.6 & 0.6 \\
        Ours, $\sigma_b$ & 78.7 & 44.8 & 17.9 & 3.7 & 98.8 & 93.0 & 35.3 & \textbf{5.3} & 16.7 & 6.4 & 1.7 & 0.1 & 64.2 & 45.4 & 27.5 & \textbf{3.4} \\
        \bottomrule
    \end{tabular}
    \label{tab:label-flip}
\end{table}

\begin{table}[ht]
    \centering
    \footnotesize
    \caption{
    Robustness tradeoff for clean label backdoor adversaries on MNIST, MNIST-17, CIFAR10, and CIFAR10-02 at different poisoned fractions $R$.
    The $\ell_2$ constraint is $\epsf = \epste = 1$ on MNIST and MNIST-17 and $\epsf = \epste = 0.5$ on CIFAR10 and CIFAR10-02.
    $\sigma_a, \sigma_b$ are as in \Cref{tab:configs}.
    RAB trains on the full training set.
    }
    \setlength{\tabcolsep}{3.5pt}
    \begin{tabular}{lcccccccccccccccc}
        \toprule
        & \multicolumn{4}{c}{\textbf{MNIST}} & \multicolumn{4}{c}{\textbf{MNIST-17}} & \multicolumn{4}{c}{\textbf{CIFAR10}} & \multicolumn{4}{c}{\textbf{CIFAR10-02}} \\
        \cmidrule(lr){2-5} \cmidrule(lr){6-9} \cmidrule(lr){10-13} \cmidrule(lr){14-17}
        $R$ (\%) & $0$ & $0.1$ & $0.5$ & $1$ & $0$ & $0.5$ & $1$ & $2$ & $0$ & $0.02$ & $0.06$ & $0.1$ & $0$ & $0.2$ & $0.4$ & $0.6$ \\
        RAB, $\sigma_a$ & \textbf{74.1} & 0.0 & 0.0 & 0.0 & \textbf{95.9} & 0.0 & 0.0 & 0.0 & 18.4 & 0.0 & 0.0 & 0.0 & 63.5 & 0.0 & 0.0 & 0.0 \\
        RAB, $\sigma_b$ & 66.5 & 0.0 & 0.0 & 0.0 & 94.7 & 0.0 & 0.0 & 0.0 & \textbf{19.3} & 0.0 & 0.0 & 0.0 & \textbf{67.5} & 0.0 & 0.0 & 0.0 \\
        \midrule
        Ours, $\sigma_a$ & 48.6 & \textbf{42.2} & \textbf{20.5} & 5.0 & 92.9 & \textbf{88.0} & \textbf{76.6} & 41.8 & 6.8 & 5.6 & 3.8 & 2.5 & 40.9 & 34.4 & 27.5 & 20.2 \\
        Ours, $\sigma_b$ & 41.9 & 36.6 & 17.8 & \textbf{6.2} & 91.3 & 86.1 & 76.1 & \textbf{45.4} & 8.7 & \textbf{7.9} & \textbf{6.5} & \textbf{5.3} & 54.0 & \textbf{51.4} & \textbf{48.7} & \textbf{45.8} \\
        \bottomrule
    \end{tabular}
    \label{tab:backdoor}
\end{table}

\begin{figure}[ht]
    \centering
    \begin{subfigure}[t]{0.24\linewidth}
        \centering
        \captionsetup{font=scriptsize}
        \includegraphics[width=\linewidth]{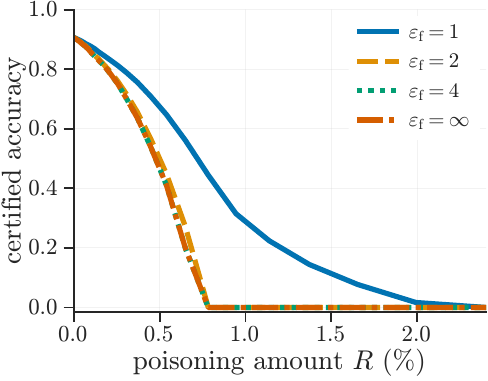}
        \caption{MNIST}
        \label{fig:budget-mnist}
    \end{subfigure}
    \hfill
    \begin{subfigure}[t]{0.24\linewidth}
        \centering
        \captionsetup{font=scriptsize}
        \includegraphics[width=\linewidth]{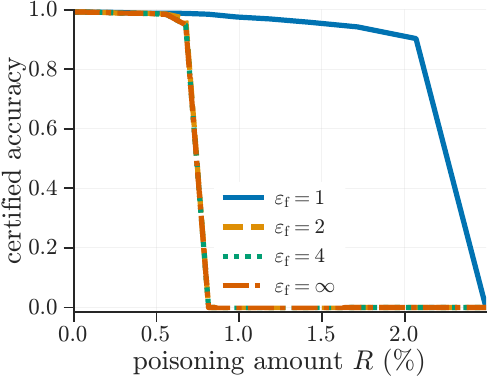}
        \caption{MNIST-17}
        \label{fig:budget-mnist17}
    \end{subfigure}
    \hfill
    \begin{subfigure}[t]{0.24\linewidth}
        \centering
        \captionsetup{font=scriptsize}
        \includegraphics[width=\linewidth]{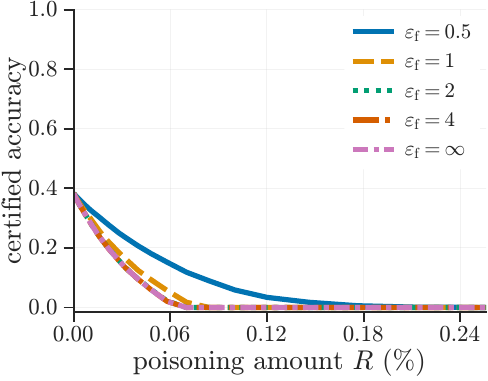}
        \caption{CIFAR10}
        \label{fig:budget-cifar10}
    \end{subfigure}
    \hfill
    \begin{subfigure}[t]{0.24\linewidth}
        \centering
        \captionsetup{font=scriptsize}
        \includegraphics[width=\linewidth]{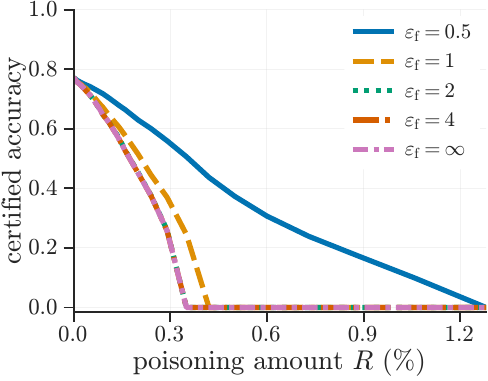}
        \caption{CIFAR10-02}
        \label{fig:budget-cifar10-02}
    \end{subfigure}
    \caption{Certified accuracy versus poisoning fraction $R$ at each per-example budget $\epsf$, under an $\ell_2$ adversary with Gaussian feature noise. $\sigma = 0.5$ for MNIST and $\sigma=0.35$ for CIFAR10.}
    \label{fig:budgets}
\end{figure}

\paragraph{When is more noise useful?}
Aggressive noise erases corruptions, which corresponds to larger certified radii via smaller divergence bounds, but also degrades the base learner, lowering the vote confidence.
In general, two training-data noising strategies exist: either we use aggressive plain bagging (hoping to throw away bad examples), or we try to recover performance from retained bad examples by adding additional noise to them.
It is therefore important to understand how different noises affect the resulting certificates.

As a simple example, consider $\ell_2$ corruptions with Gaussian noise.
To keep the component divergences small, the noise level $\sigma$ must grow with the adversary budget $\epsf$.
If $\sigma$ becomes large, the input becomes impossible to classify, and the base learner accuracy will drop dramatically.
In this regime, it is preferable to use no feature noising at all and rely on pure bagging to provide certificates.

Label noising is especially delicate (\Cref{tab:label-flip}): we only observe improvements over the plain bagging method for binary classification datasets MNIST-17 and CIFAR10-02.
This is intuitive: whereas features can be noised individually while still preserving high information content taken over the full input, the label serves as the only supervision signal for the learning task, so noising it is especially destructive.
Still, label flipping can be effective for sufficiently easy tasks: on MNIST-17, we observe certified radius out to $3.5\%$ of the training dataset achieved by aggressive feature and label noising compared with the plain bagging baseline.

\subsection{Comparison with prior works}
Our method recovers the exact certificates of~\citet{zhang2022bagflip} under all threat models and smoothing strategies they consider (\Cref{apx:tightness-bagflip}, \Cref{fig:bagflip-tightness}), and those of \citet{jia2021intrinsic} for replacement adversaries (\Cref{apx:tightness-bagging}).

\begin{wrapfigure}{r}{0.26\textwidth}
    \vspace{-1.0em}
    \centering
    \includegraphics[width=\linewidth]{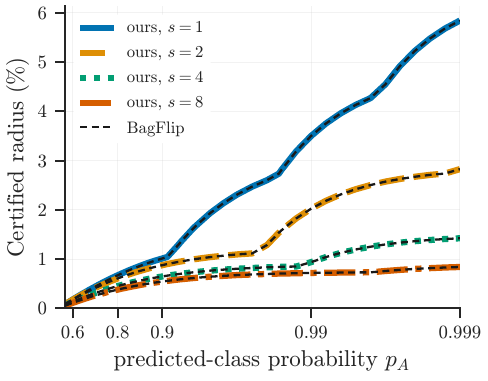}
    \caption{Tightness of our certificate with BagFlip's.}
    \label{fig:bagflip-tightness}
    \vspace{-1em}
\end{wrapfigure}
Another useful finding is that, by writing down the hockey stick divergences in closed-form, we are able to implement certification for a fixed $\epsf$ bound in a much more efficient $\cO(k^2 \epsf^2)$ compared with BagFlip's $\cO(d^3 + k^2 d^2)$ preprocessing cost when $\epsf \ll d$.
Notably, we depend on $d$ only through $\epsf$.
Our formulation is also amenable to parallelized precomputation across all desired $\epsf$, whereas the linear recurrence of \citet{zhang2022bagflip} is not.

Because BagFlip couples the feature space to the label space, it can only certify dirty-label attacks when $\abs{\cX} = \abs{\cY}$ and when labels are flipped with the same distribution as features.
This in particular means it cannot certify dirty-label attacks in the multi-class setting if the features are binary.
Our framework, in contrast, handles both of these cases directly.
We are able to compute certificates in a setting where the label noise differs from the feature flipping noise, or where the number of distinct feature values differs from the number of classes.

%% file: sections/07_conclusion.tex
\section{Conclusion}
\label{sec:conclusion}
We provide a randomized smoothing framework for certified robustness against backdoor and data poisoning attacks.
Our framework decomposes the threat model into independent axes bounding corruption magnitudes, and applies composite noise comprising data subsampling, feature noising, and label flipping.
We reduce certification in this regime to a convex program over $f$-divergence constraints and show how to compute the corresponding constraint sets.
Our constraint computation leverages reorganized joint convexity to efficiently handle the mixture structure induced by subsampling and uses general, composable principles to accommodate a wide range of smoothing measures.
We demonstrate this generality by certifying, for the first time, $\ell_1$, $\ell_2$, and $\ell_\infty$ adversaries on models trained with Gaussian, Laplace, Uniform, and discrete noise.

\paragraph{Limitations and open questions}
Our method trains thousands of base learners for sound prediction estimation.
We mitigate this cost with an efficient parallelized training implementation, but future methods might seek efficient closed-form solutions such as those considered in prior work~\citep{rosenfeld2020certified,weber2023rab}.
Our method does not exploit properties specific to the dataset or base learner.
Future work should consider whether using such information can yield stronger certificates.
An interesting remaining challenge is to precisely characterize the structural properties of optimal noising schemes for complex backdoor poisoning threat models.
For example, we do not understand the tradeoff between subsampling and data noising very well.
Finally, learner-based robustness properties (such as data-smoothness) might be leveraged to obtain even tighter certificates.

%% file: sections/A_smoothing_background.tex
\section{Extended Background on Randomized Smoothing}

\subsection{Tightness of Randomized Smoothing}

Historically, several analytical approaches have been applied to derive sound robustness guarantees under randomized smoothing procedures.
Originally, the differential privacy guarantees of mechanisms like adding Gaussian noise were used to derive sound but slack bounds on the certified radius~\citep{lecuyer2019certified}.
Other, tighter, but still slack bounds were derived by analyzing connection between Renyi divergence and certified robustness \citet{li2019certified}.
\citet{cohen2019certified} were the first to derive provably optimal robustness certificates by analyzing the connection between robustness certification and hypothesis testing.
\citet{salman2019provably} derived the same tight certificate as a consequence of a nonlinear Lipschitz continuity property of Weierstrass transforms.

\citet{dvijotham2020framework} extend the results of ~\citet{li2019certified} to formulate general robustness certification as a convex optimization problem.
This optimization problem involves bounding various statistical divergences between noised versions of the clean and corrupted algorithm inputs.
The general formulation not tight under most choices of divergence.
However, \citet{dvijotham2020framework} demonstrate a connection with hockey-stick divergences which can recover optimal certificates in the general case.
Moreover, hockey-stick divergences are connected to differential privacy (specifically, $(\varepsilon, \delta)$-approximate differential privacy), thereby connecting privacy, convex optimization and hypothesis testing together as tools for randomized smoothing certification.

\subsection{$f$-Divergences}
\label{apx:dvijotham}

In this section we present the certified robustness framework of \citet{dvijotham2020framework}, based on convex programming.
\citet{dvijotham2020framework} present three main results important for our work.
The first result, \Cref{thm:full-info-cvx}, characterizes robustness of a specification under $f$-divergence constraints in terms of a convex program.
The second result, \Cref{thm:lim-info-cvx}, characterizes the robustness over a broader space of specifications, aligning with the more typical certified robustness setting.
The third results, \Cref{thm:tight-certificate-hockey}, demonstrates that under the setting of \Cref{thm:lim-info-cvx}, the Hockey-stick divergences of all orders are sufficient to recover tight certificates in the original primal certification problem.

\begin{dfn}[Specification; robust certification]
    \label{dfn:specification}
    A \emph{specification} is a bounded measurable function $\psi \colon \Omega \to \bbR$.
    We say $\psi$ is \emph{robustly certified at $\nu$ with respect to $\cD$} if $\bbE_{w \sim \nu'} \left[\psi(w)\right] \ge 0$ for all $\nu' \in \cD$.
\end{dfn}

\begin{dfn}[$f$-divergence constraint set]
    \label{dfn:f-divergence-set}
    For convex $f \colon \bbR_+ \to \bbR$ with $f(1) = 0$, the $f$-divergence from $\nu$ to $\nu'$ is $D_f(\nu' \| \nu) \defeq \bbE_{\omega \sim \nu}[f(d\nu'(\omega)/d\nu(\omega))]$.
    Given a collection $\cF$ of functions $f$ each defining an $f$-divergence, and to each $f$ an associated bound $\tau_f \ge 0$, define the $\cF$-constraint set as
    \begin{equation}
        \cD_{\cF} \defeq \textstyle\bigcap_{f \in \cF} \cD_{f} = \{\nu' \in \cP(\Omega) : D_f(\nu' \| \nu) \le \tau_f \text{ for all } f \in \cF \}.
    \end{equation}
\end{dfn}

\begin{thm}[\citet{dvijotham2020framework}]
    \label{thm:full-info-cvx}
    Let $\cD_{\cF,\tau}$ be as in \Cref{dfn:f-divergence-set}.
    For $\lambda = (\lambda_1,\ldots,\lambda_M) \in \bbR_+^M$, define $f_{\lambda}(u) \defeq \sum_{i=1}^M \lambda_i f_i(u)$ and denote its convex conjugate by $f_\lambda^\ast$.
    A specification $\psi$ is robustly certified at $\nu$ with respect to $\cD_{\cF, \tau}$ (in the sense of \Cref{dfn:specification}) \textit{if and only if} the optimal value of the convex program
    \begin{equation}
    \label{eq:full-info-dual}
        \sup_{\lambda \in \bbR_+^M, \kappa \in \bbR} \kappa - \sum_{i=1}^M \lambda_i \tau_i - \bbE_{w \sim \nu}\left[f_\lambda^\ast(\kappa - \psi(w))\right]
    \end{equation}
    is non-negative.
\end{thm}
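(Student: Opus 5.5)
The plan is a Lagrangian duality argument on the infinite-dimensional primal problem. Parametrize $\nu' \in \cP(\Omega)$ with $\nu' \ll \nu$ by its density $r = d\nu'/d\nu \ge 0$. Certification is then equivalent to nonnegativity of
\[
P \defeq \inf_{r \ge 0,\ \bbE_\nu[r]=1,\ \bbE_\nu[f_i(r)] \le \tau_i} \bbE_\nu[r\psi].
\]
Measures not absolutely continuous w.r.t.\ $\nu$ are handled by the usual convention that $D_f$ includes the recession term $f'(\infty)\,\nu'_\perp(\Omega)$. I would either absorb this by allowing $f_\lambda^\ast$ to take the value $+\infty$, or restrict to generators with $f'(\infty) = \infty$ and note that the singular part then violates some constraint.

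\textbf{Weak duality (sufficiency).} Introduce multipliers $\kappa \in \bbR$ for the normalization constraint and $\lambda \in \bbR_+^M$ for the divergence constraints. The Lagrangian is
\[
L(r,\lambda,\kappa) = \bbE_\nu[r\psi] + \sum_i \lambda_i\bigl(\bbE_\nu[f_i(r)] - \tau_i\bigr) - \kappa\bigl(\bbE_\nu[r] - 1\bigr).
\]
Minimizing pointwise over $r(\omega) \ge 0$ gives
\[
\inf_{u \ge 0}\bigl[u(\psi(\omega) - \kappa) + f_\lambda(u)\bigr] = -f_\lambda^\ast(\kappa - \psi(\omega)),
\]
where $f_\lambda$ is extended by $+\infty$ on the negatives. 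Interchanging the infimum and the expectation is justified by measurable selection, or simply by the pointwise inequality for the direction needed here. The dual function is therefore exactly the objective of \Cref{eq:full-info-dual}. Since $P \ge L(r,\lambda,\kappa)$ for every feasible $r$ and every $(\lambda,\kappa)$, a nonnegative dual value implies $P \ge 0$.

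\textbf{Strong duality (necessity).} This is the main obstacle. I would apply a convex duality theorem for a convex objective with finitely many convex inequality constraints and one affine equality, over the convex set $\{r \in L^1(\nu) : r \ge 0\}$ (for example Luenberger's Lagrange duality theorem). That theorem only needs a Slater point. The natural candidate is $r \equiv 1$: it gives $\bbE[f_i(1)] = 0 \le \tau_i$, with strict inequality whenever $\tau_i > 0$. The equality constraint is affine and the problem value is finite because $\psi$ is bounded, so strong duality holds with an attained dual and the two values coincide.

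For the degenerate case $\tau_i = 0$, I would argue separately: if $f_i$ is strictly convex at $1$, then the constraint forces $r = 1$ and the statement becomes trivial. Otherwise I would perturb, replacing $\tau_i$ by $\tau_i + \epsilon$, and pass to the limit $\epsilon \downarrow 0$ using monotonicity of both values in $\tau$. Boundedness of $\psi$ is what keeps all the expectations finite and makes the pointwise conjugate computation legitimate.
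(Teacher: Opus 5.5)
Your main argument is correct when every $\tau_i>0$. Parametrizing $\nu'\ll\nu$ by $r=d\nu'/d\nu$, dualizing, computing the inner infimum pointwise as $-f_\lambda^\ast(\kappa-\psi)$, and invoking Slater at $r\equiv1$ is essentially how \citet{dvijotham2020framework} prove it. The paper gives no proof of its own and simply imports the result. (For the equality-constrained duality theorem you also need $0$ in the interior of $\{\bbE_\nu[r]-1\}$ over the Slater region; $r\equiv1+t$ supplies this.)

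\textbf{The degenerate case $\tau_i=0$.} This is the real gap, and it does arise in this paper: the hockey-stick profiles of flipping and Laplace noise vanish for large $\beta$, and the generator $(u-\beta)_+$ is flat around $1$. Monotonicity in $\tau$ only shows that $\lim_{\epsilon\downarrow0}P(\tau+\epsilon\mathbf{1})$ exists and is at most $P(\tau)$. By perturbation duality the dual value at $\tau$ is \emph{exactly} this limit (the lower semicontinuous hull of the convex value function), and a convex function on the orthant can drop at the boundary. So the theorem at such $\tau$ is equivalent to right-continuity of $P$, which you must prove. One route:
\begin{enumerate}
\item Replace $f_i$ by $f_i-c_i(u-1)$ with $c_i\in\partial f_i(1)$. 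This changes neither $\Df[f_i]$ on probability measures nor the dual value, after substituting $\kappa\mapsto\kappa+\sum_i\lambda_ic_i$. Now $f_i\ge0$, and the zero-budget constraints read $r\in I$ $\nu$-a.e.\ for an interval $I\ni1$.
\item For near-optimal $r_\epsilon$ at $\tau+\epsilon\mathbf{1}$, the bound $\bbE_\nu[f_i(r_\epsilon)]\le\epsilon$ forces $\bbE_\nu[\operatorname{dist}(r_\epsilon,I)]\to0$.
\item Mix with $r\equiv1$ to create slack in the positive budgets, clip into $I$ (which increases no normalized $f_j$), and re-normalize the mass. This yields a point feasible at $\tau$ whose objective is within $o(1)$ of the original. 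If $1$ is an endpoint of $I$, the same estimate gives $\TV(\nu'_\epsilon\|\nu)\to0$ directly.
\end{enumerate}
Relatedly, your strictly-convex subcase is trivial only on the primal side. You still must show the dual supremum reaches $\bbE_\nu[\psi]$, and in general it is not attained. For a single KL constraint with $\tau=0$, optimizing over $\kappa$ gives $-\lambda\log\bbE_\nu[e^{-\psi/\lambda}]$, which is strictly below $\bbE_\nu[\psi]$ for nonconstant $\psi$ and approaches it only as $\lambda\to\infty$.

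\textbf{Singular measures.} Your first remedy for $\nu'\not\ll\nu$ does not work. The dual objective sees $\psi$ only $\nu$-a.e., and finiteness of $\bbE_\nu[f_\lambda^\ast(\kappa-\psi)]$ enforces $\kappa-\psi\le f_\lambda'(\infty)$ only $\nu$-a.e., whereas singular mass can sit on a $\nu$-null set where $\psi$ is smaller. With the general definition of $\Df$ (including the $f'(\infty)$ term), the equivalence is in fact false whenever all $f_i'(\infty)<\infty$. Take $\nu$ uniform on $[0,1]$, $\psi=1$ except $\psi(0)=-1$, and the single constraint $\TV(\nu'\|\nu)\le0.6$. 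Then $\nu'=0.4\,\nu+0.6\,\delta_0$ is feasible with $\bbE_{\nu'}[\psi]=-0.2<0$, while $(\lambda,\kappa)=(0,1)$ already gives dual value $1$. So the statement must be read with $\cD_{\cF,\tau}$ consisting of measures $\nu'\ll\nu$, as the $d\nu'/d\nu$ in \Cref{dfn:f-divergence-set} indicates. Under that reading your main argument needs no singular-part discussion. Your alternative of requiring $f'(\infty)=\infty$ would exclude exactly the hockey-stick and TV constraints the paper relies on.
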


\begin{thm}[\citet{dvijotham2020framework}]
    \label{thm:lim-info-cvx}
    Let $\cD_{\cF, \tau}$ be as in \Cref{dfn:f-divergence-set}, and let $\theta_a, \theta_b \ge 0$ satisfy $\theta_a + \theta_b \le 1$.
    Define the specification class
    \begin{equation}
        \cG(\theta_a, \theta_b) \defeq \left\{ \psi \colon \cW \to \{-1,0,+1\} \;\middle|\;
        \begin{aligned}
        \Pr_{w \sim \nu}[\psi(w) = +1] &\ge \theta_a \\
        \Pr_{w \sim \nu}[\psi(w) = -1] &\le \theta_b
        \end{aligned}
        \right\}.
    \end{equation}
    Every $\psi \in \cG(\theta_a, \theta_b)$ is robustly certified at $\nu$ with respect to $\cD_{\cF, \tau}$ \emph{if and only if} the optimal value of the convex program
    \begin{equation}
    \label{eq:lim-info-program}
    \begin{aligned}
        \inf_{\zeta_a, \zeta_b, \zeta_c \ge 0} \qquad &\zeta_a - \zeta_b \\
        \text{s.t.} \qquad
        & \zeta_a + \zeta_b + \zeta_c = 1,\\
        & D_{f_i}(\zeta \| \theta) \le \tau_i, \quad{i = 1,\ldots,M}
    \end{aligned}
    \end{equation}
    is non-negative, where $\theta = (\theta_a, \theta_b, 1 - \theta_a - \theta_b)$ and  $\zeta = (\zeta_a, \zeta_b, \zeta_c)$ are interpreted as probability distributions.
\end{thm}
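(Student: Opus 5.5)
The plan is to prove both directions by reducing everything to three-point distributions via the data-processing inequality for $f$-divergences (\Cref{lem:data-processing}), applied to the pushforward under $\psi$. Fix $\psi \in \cG(\theta_a,\theta_b)$ and $\nu' \in \cD_{\cF,\tau}$. Let $\theta^\psi \defeq (p_a,p_b,p_c)$ be the law of $\psi$ under $\nu$ on the outcomes $(+1,-1,0)$, and let $\zeta$ be its law under $\nu'$. Then $\bbE_{\nu'}[\psi] = \zeta_a - \zeta_b$. Data processing gives $\Df[f_i](\zeta\|\theta^\psi) \le \Df[f_i](\nu'\|\nu) \le \tau_i$ for every $i$. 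So $\zeta$ is feasible for the program \eqref{eq:lim-info-program} with $\theta$ replaced by $\theta^\psi$. Membership in $\cG$ only gives $p_a \ge \theta_a$ and $p_b \le \theta_b$, so one more step is needed to pass from $\theta^\psi$ to $\theta$.

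\emph{Step 1 (monotonicity in the reference).} I will construct a Markov kernel $K$ on $\{a,b,c\}$ with $K\theta^\psi = \theta$ that only moves mass ``downward'': $a\to c$, $a\to b$ and $c\to b$. Concretely, move the excess $p_a-\theta_a$ out of $a$ and supply the deficit $\theta_b - p_b$ into $b$. The first source for that deficit is the mass leaving $a$; the remainder is taken from $c$. This is feasible because $p_a+p_b+p_c = 1 \ge \theta_a+\theta_b$, so $c$ has enough mass, and $K$ is a valid stochastic kernel after normalizing each row by the source mass (rows with zero source mass are set arbitrarily). Applying the same $K$ to $\zeta$ gives $\tilde\zeta \defeq K\zeta$. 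Data processing again yields $\Df[f_i](\tilde\zeta\|\theta) \le \Df[f_i](\zeta\|\theta^\psi) \le \tau_i$. Because $K$ only moves mass away from $a$ or into $b$, we also get $\tilde\zeta_a - \tilde\zeta_b \le \zeta_a - \zeta_b$. Hence a nonnegative optimal value of \eqref{eq:lim-info-program} gives $0 \le \tilde\zeta_a-\tilde\zeta_b \le \bbE_{\nu'}[\psi]$, which proves the ``if'' direction.

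\emph{Step 2 (converse).} Suppose some feasible $\zeta^\star$ has $\zeta^\star_a - \zeta^\star_b < 0$. Partition $\Omega$ into measurable sets $A_a, A_b, A_c$ with $\nu(A_j)=\theta_j$. Define $\psi$ to equal $+1,-1,0$ on these sets respectively, so $\psi \in \cG(\theta_a,\theta_b)$ with equality in both constraints. Set $\nu' \defeq \sum_j (\zeta^\star_j/\theta_j)\,\nu|_{A_j}$. Its density with respect to $\nu$ is piecewise constant, so $\Df[f_i](\nu'\|\nu) = \Df[f_i](\zeta^\star\|\theta) \le \tau_i$. Therefore $\nu' \in \cD_{\cF,\tau}$, while $\bbE_{\nu'}[\psi] = \zeta^\star_a-\zeta^\star_b<0$, so $\psi$ is not robustly certified.

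\emph{Main obstacle.} The delicate step is the converse, specifically its two technical requirements.
\begin{itemize}
\item \textbf{Partition.} It needs a partition of $\Omega$ realizing arbitrary masses $\theta_j$. I would assume $\nu$ is atomless, which holds in the intended continuous-noise setting, and otherwise pass to $\Omega\times[0,1]$ with an auxiliary uniform coordinate.
\item \textbf{Zero-mass coordinates.} It needs care when some $\theta_j=0$ but $\zeta^\star_j>0$. Here I would use the convention $0\cdot f(t/0) = t\, f'(\infty)$. That case is feasible only when $f_i'(\infty)<\infty$, and it is realized by adding a singular component to $\nu'$ supported on a $\nu$-null set on which $\psi$ takes the corresponding value.
\end{itemize}
I expect checking that the $f$-divergence of this singular construction matches the three-point value exactly to be the fiddliest part. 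The kernel argument in Step 1 is routine once the mass-moving order is fixed.
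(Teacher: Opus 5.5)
The paper gives no proof of this theorem; it is quoted from \citet{dvijotham2020framework}, so your argument has to stand on its own.

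\textbf{The ``if'' direction.} This half is correct and is a clean primal argument. You push forward under $\psi$, then under a downward kernel $K$ with $K\theta^\psi=\theta$. Using only \Cref{lem:data-processing}, this maps every feasible point for reference $\theta^\psi$ to a feasible point for reference $\theta$ with no larger objective, without going through the dual of \Cref{thm:full-info-cvx}. The kernel exists because the mass drawn from $c$ is $\bigl(\theta_b-p_b-(p_a-\theta_a)\bigr)_+=(p_c-\theta_c)_+\le p_c$.

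One correction: rows with zero source mass may not be set arbitrarily. When $\nu'\not\ll\nu$, $\zeta$ can charge a coordinate that $\theta^\psi$ does not. If, for example, $p_c=0$ and row $c$ sends mass to $a$, then $\tilde\zeta_a-\tilde\zeta_b\le\zeta_a-\zeta_b$ can fail. Take those rows to be the identity.

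\textbf{The converse.} This is where the gap is, and your fallback does not close it. For atomless $\nu$ your construction works, including the singular pieces for $\theta_j=0$: singletons are $\nu$-null, and the $f'(\infty)$ term reproduces the three-point divergence exactly.

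For atomic $\nu$, however, the ``only if'' direction is false, so no argument can prove the statement as written. Take $\nu=\delta_{w_0}$, $\theta=(0.6,0.3,0.1)$, and a single total-variation constraint with $\tau=0.2$.
\begin{itemize}
\item The program has the feasible point $\zeta=(0.4,0.5,0.1)$ with value $-0.1$.
\item Yet $\Pr_\nu[\psi=+1]\ge 0.6$ forces $\psi(w_0)=+1$.
\item Every $\nu'$ with $\TV(\nu'\|\nu)\le 0.2$ has $\nu'(\{w_0\})\ge 0.8$, so $\bbE_{\nu'}[\psi]\ge 0.6$ for every $\psi\in\cG(\theta_a,\theta_b)$.
\end{itemize}

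Lifting to $\Omega\times[0,1]$ proves a different equivalence: one for randomized specifications against a larger perturbation set, which includes non-product measures. Certifying the original class does not imply certifying the lifted one, and this example shows exactly that. So nothing transfers back to $\Omega$.

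Nor is atomlessness automatic in this paper, contrary to your remark about the intended setting: $\cM(w)$ is finitely supported under pure bagging or flipping noise. The converse should therefore carry an explicit hypothesis, either atomless $\nu$ or the class enlarged to randomized specifications. The ``if'' direction, which you prove in full generality, is the one the paper's certificates rely on for soundness.
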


\begin{thm}[\citet{dvijotham2020framework}]
    \label{thm:tight-certificate-hockey}
    Let $\theta_a, \theta_b, \mu, \cD, S$ be as in \Cref{dfn:specification}.
    Define $\tau_\beta \defeq \sup_{\nu \in \cD} \HS{\beta}(\nu \| \mu)$ for all $\beta \ge 0$ and let $\beta_a^\star, \beta_b^\star$ be defined as
    \begin{equation}
        \beta_a^\star, \beta_b^\star = \argmax_{\beta_a \ge \beta_b \ge 0}\quad 1 - \beta_a(1 - \theta_a) - \beta_b \theta_b - (\tau_{\beta_a} + [1 - \beta_a]_+) - (\tau_{\beta_b} + [1 - \beta_b]_+).
    \end{equation}
    Define the constraint set
    \begin{equation}
        \cD_{\mathrm{HS}} \defeq \left\{ \nu \in \cP(\cX) \;\middle|\;
        \begin{aligned}
            \HS{\beta_a^\star}(\nu \| \mu) &\le \tau_{\beta_a^\star} \\
            \HS{\beta_b^\star}(\nu \| \mu) &\le \tau_{\beta_b^\star}
        \end{aligned}
        \right\}.
    \end{equation}
    Then, $S$ is information-limited robustly certified at $\mu$ with respect to $\cD$ \emph{if and only if} $S$ is information-limited robustly certified at $\mu$ with respect to $\cD_{\mathrm{HS}}$.
    Thus, the optimal information-limited certificate in this case can be obtained by applying the information-limited certificate of \Cref{thm:lim-info-cvx} to $\cD_{\mathrm{HS}}$.
\end{thm}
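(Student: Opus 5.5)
The argument compares the two constraint sets directly. One inclusion gives one direction for free. For the other, I would show that the information-limited certificate for $\cD_{\mathrm{HS}}$, computed through \Cref{thm:lim-info-cvx}, is exactly the dual quantity maximized in the definition of $\beta_a^\star,\beta_b^\star$. I would then check that this quantity never exceeds the true certificate for $\cD$.

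\textbf{Step 1 (easy direction).} By definition $\tau_\beta = \sup_{\nu\in\cD}\HS{\beta}(\nu\|\mu)$. Every $\nu\in\cD$ therefore satisfies both hockey-stick constraints, so $\cD\subseteq\cD_{\mathrm{HS}}$. Certification is a universal statement over the constraint set. Hence if every $\psi\in\cG(\theta_a,\theta_b)$ is certified with respect to $\cD_{\mathrm{HS}}$, it is certified with respect to $\cD$.

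\textbf{Step 2 (hockey-stick bounds on events).} Fix $\nu$ with $\HS{\beta}(\nu\|\mu)\le\tau_\beta$. Use the variational form $\HS{\beta}(\nu\|\mu)=\sup_A[\nu(A)-\beta\mu(A)]-[1-\beta]_+$, matching the normalization implicit in the displayed objective. For $\psi\in\cG(\theta_a,\theta_b)$ let $A=\{\psi=+1\}$ and $B=\{\psi=-1\}$. Then
\[
\nu(A)\ge 1-\beta_a(1-\theta_a)-\tau_{\beta_a}-[1-\beta_a]_+,
\qquad
\nu(B)\le\beta_b\theta_b+\tau_{\beta_b}+[1-\beta_b]_+ .
\]
Consequently $\bbE_\nu\psi=\nu(A)-\nu(B)$ is bounded below by the objective $J(\beta_a,\beta_b)$ in the definition of $\beta^\star$. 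If $J(\beta_a^\star,\beta_b^\star)\ge0$, every such $\psi$ is certified with respect to $\cD_{\mathrm{HS}}$, using only the two constraints at $\beta_a^\star,\beta_b^\star$.

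\textbf{Step 3 (converse; main obstacle).} Suppose every $\psi\in\cG$ is certified with respect to $\cD$. I must show that $\max J\ge0$. I would apply \Cref{thm:lim-info-cvx} twice.

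First, apply it to the full hockey-stick family $\{\HS{\beta}\}_{\beta\ge0}$ with bounds $\tau_\beta$. Dualizing the three-atom program \Cref{eq:lim-info-program}, with the conjugates of hockey-stick generators, should give exactly $\sup_{\beta_a\ge\beta_b}J$.
\begin{itemize}
\item Only two multipliers survive because the program has just two nontrivial coordinates.
\item Strong duality holds since the program is a finite-dimensional convex problem with Slater point $\zeta=\theta$.
\end{itemize}
So certification with respect to the all-$\beta$ set is equivalent to $\max J\ge0$, and by Step 2 it is also equivalent to certification with respect to $\cD_{\mathrm{HS}}$.

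Second, I must show that certification with respect to $\cD$ implies certification with respect to the all-$\beta$ hockey-stick set. This is the delicate point: $\cD$ need not be convex, and $\tau_\beta$ may be attained by different $\nu$ for different $\beta$. I would argue on the three-point pushforwards $\zeta=(\nu(A),\nu(B),\nu(C))$. By data processing (\Cref{lem:data-processing}), every such $\zeta$ lies in the hockey-stick set. The key claim is that the infimum of $\zeta_a-\zeta_b$ over the hockey-stick set equals the infimum of $\nu(A)-\nu(B)$ over $\nu\in\cD$ and $\psi\in\cG$. I would try to prove this via Neyman--Pearson: for a single $\nu$, the extremal $(\nu(A),\nu(B))$ over $\psi\in\cG$ is given by likelihood-ratio sets, and their values are Legendre transforms of $\beta\mapsto\HS{\beta}(\nu\|\mu)$. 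Taking the supremum over $\nu\in\cD$ then turns into the supremum of these Legendre duals, which is what $\tau_\beta$ encodes. The part that needs care is the exchange $\inf_\nu\sup_\beta=\sup_\beta\inf_\nu$. I would handle it by noting that the three-atom feasible region depends on $\nu$ only through its hockey-stick profile, and then invoking a minimax theorem on the compact $\beta$-domain, with the concave-convex structure supplied by the dual.
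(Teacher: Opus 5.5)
The paper cites this result from \citet{dvijotham2020framework} without proof, so your proposal is judged on its own. Your Steps 1 and 2 are correct. The gap is in the second half of Step 3, and a second problem sits in its first half.

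\textbf{The key claim in Step 3 is false for general $\cD$.} You assert that the infimum over the all-$\beta$ hockey-stick set equals $\inf_{\nu\in\cD,\psi\in\cG}\bbE_\nu\psi$. No minimax argument can deliver this, because it fails in general. Write $T_\nu(x)\defeq\sup\{\nu(E):\mu(E)\le x\}$ and take $\mu$ atomless. Neyman--Pearson gives $\inf_{\psi\in\cG}\bbE_\nu\psi=1-T_\nu(1-\theta_a)-T_\nu(\theta_b)$. The all-$\beta$ relaxation instead has value $1-\widehat T(1-\theta_a)-\widehat T(\theta_b)$, where $\widehat T(x)\defeq\inf_{\beta\ge0}[\beta x+\tau_\beta+(1-\beta)_+]$ is the concave hull of $\sup_{\nu\in\cD}T_\nu$. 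These differ whenever different elements of $\cD$ are extremal at $1-\theta_a$ and at $\theta_b$.

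The proposed exchange also cannot be justified. The map $\nu\mapsto\inf_\psi\bbE_\nu\psi$ is concave, which is the wrong curvature on the minimizing side, and $[0,\infty)$ is not compact. Even granting the exchange, $\inf_\nu$ of your objective would subtract $\sup_\nu[\HS{\beta_a}(\nu\|\mu)+\HS{\beta_b}(\nu\|\mu)]$, not $\tau_{\beta_a}+\tau_{\beta_b}$.

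Here is a concrete counterexample. Take $\mu=\Unif[0,1]$, $\theta_a=0.5$, $\theta_b=0.1$, and $\cD=\{\nu_1,\nu_2\}$, where:
\begin{itemize}
\item $\nu_1$ has densities $3,\,7/9$ on $[0,0.1],\,(0.1,1]$;
\item $\nu_2$ has densities $3/2,\,1/2$ on $[0,0.5],\,(0.5,1]$.
\end{itemize}
Their worst-case margins are $4/45$ and $1/10$, so $\cG$ is certified w.r.t.\ $\cD$. Now let $\nu$ have densities $3,\,9/8,\,1/2$ on $[0,0.1],\,(0.1,0.5],\,(0.5,1]$. Then $T_\nu=\widehat T$, so $\HS{\beta}(\nu\|\mu)=\tau_\beta$ for all $\beta$. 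Hence $\nu\in\cD_{\mathrm{HS}}$ for any $\beta_a^\star,\beta_b^\star$. The specification $\psi=+1$ on $(0.5,1]$, $-1$ on $[0,0.1]$ lies in $\cG$ and has $\bbE_\nu\psi=-0.05$.

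Replacing $\cD$ by its convex hull changes neither side, so convexity is not the missing ingredient. What is missing is a dominating $\nu^\star\in\cD$ with $(\nu,\mu)\dle(\nu^\star,\mu)$ for all $\nu\in\cD$, i.e.\ a worst-case pair in the sense of \Cref{lem:dle-via-hs}. Under that hypothesis $\widehat T=T_{\nu^\star}$. Your Neyman--Pearson computation at $\nu^\star$ alone then gives $\sup_{\beta_a,\beta_b}J\ge0$, and Step 2 finishes with no minimax.

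\textbf{The ordering in the first half of Step 3 is wrong.} You claim that dualizing yields $\sup_{\beta_a\ge\beta_b}J$. In fact the all-$\beta$ value is the \emph{unconstrained} supremum, attained at $\beta_a\in\partial\widehat T(1-\theta_a)$ and $\beta_b\in\partial\widehat T(\theta_b)$. Concavity together with $\theta_b\le1-\theta_a$ then forces $\beta_b\ge\beta_a$.

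Under the printed constraint $\beta_a\ge\beta_b$, the maximum collapses to the diagonal value $1-2\widehat T\bigl(\tfrac{1-\theta_a+\theta_b}{2}\bigr)$. This is strictly worse for strictly concave profiles once $\theta_a+\theta_b<1$. For example, take a single Gaussian pair with $r=0.75$, $\theta_a=0.6$, $\theta_b=0.1$. The true margin is about $0.012$, but the two-constraint program defining $\cD_{\mathrm{HS}}$ has value about $-0.06$. So the ordering must be reversed or dropped.

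\textbf{A simpler route for the first half.} Avoid Slater, which fails at any order with $\tau_\beta=0$. Instead argue directly: a three-atom $\zeta$ satisfies every hockey-stick constraint iff its trade-off curve lies below $\widehat T$.
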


In our setting $\mu = \cM(w)$ and $\cD = \cD_{w,\rho}$, so the quantity in \Cref{thm:tight-certificate-hockey} is the hockey-stick profile
\begin{equation}
    \label{eq:hs-profile}
    \tau^\star_\beta \defeq \sup_{w' \in \cB_\rho(w)} \Hock[\beta](\cM(w') \,\|\, \cM(w)), \qquad \beta \ge 0.
\end{equation}

%% file: sections/A_divergence_background.tex
\section{Background on $f$-divergences}

In this section we formally define $f$-divergences and state some useful facts for computing with them.
We state here only the most essential results for understanding our derivations; a more complete treatment can be found in, e.g., \citet{csiszar2004information,liese2006divergences}
or \citet[ch.~7]{polyanskiy2025information}.
Additionally, we state and prove a few additional useful lemmata.

\begin{dfn}[$f$-divergence]
    \label{dfn:f-divergence}
    Let $f \colon \bbR^+ \to \bbR$ be convex with $f(1) = 0$, and take by convention
    \begin{equation*}
        f(0) \defeq \lim_{x \downarrow 0} f(x),
        \qquad
        f'(\infty) \defeq \lim_{x \downarrow 0} x f(1/x),
        \qquad
        0 \cdot f'(\infty) \defeq 0.
    \end{equation*}
    Let $\mu, \nu$ be two probability measures on a measurable space $(\Omega, \cA)$.
    If $\mu \ll \nu$ then the \emph{$f$-divergence from $\nu$ to $\mu$} is defined as
    \begin{equation}
        \label{eq:f-divergence-simple}
        D_f(\mu \| \nu) \defeq \textstyle\bbE_{\nu}\left[f\left(\frac{d\mu}{d\nu}\right)\right]
    \end{equation}
    where $\tfrac{d\mu}{d\nu}$ is a Radon--Nikodym derivative.
    More generally, let $\lambda$ be any common dominating measure.
    Then, writing $p \defeq \tfrac{d\mu}{d\lambda}$ and $q \defeq \tfrac{d\nu}{d\lambda}$, we may define
    \begin{equation}
        \label{eq:f-divergence-full}
        D_f(\mu \| \nu) \defeq \int_{q > 0} q(x) f\left(\frac{p(x)}{q(x)}\right) d\lambda + f'(\infty) \mu(q = 0).
    \end{equation}
    \Cref{eq:f-divergence-full} does not depend on the choice of common dominating measure $\lambda$, and moreover when $\mu \ll \nu$ gives the same result as \Cref{eq:f-divergence-simple}.
    We call the function $f$ the \emph{generator} of the $f$-divergence $D_f$.
\end{dfn}

\begin{table}[ht]
    \centering
    \renewcommand{\arraystretch}{1.5}
    \small
    \caption{$f$-divergences and associated generators and integral form, with $p \defeq \tfrac{d\mu}{d\lambda}$ and $q \defeq \tfrac{d\nu}{d\lambda}$.}
    \begin{tabular}{@{}llll@{}}
        \toprule
        $f$-Divergence & Regime & Generator $f(u)$ & Integral form \\
        \midrule
        Total variation $\TV(\mu \| \nu)$
            &
            & $\tfrac{1}{2}\abs{u - 1}$
            & $\frac{1}{2} \int \abs{p - q} \,d\lambda$ \\
        Hockey-stick $\HS{\beta}(\mu \| \nu)$
            & $\beta \ge 0$
            & $(u - \beta)_+ - (1 - \beta)_+$
            & $\int (p - \beta q)_+ \,d\lambda - (1 - \beta)_+$ \\
        Kullback--Leibler $\KL(\mu \| \nu)$
            &
            & $u \log u$
            & $\int p \log \frac{p}{q} \,d\lambda$ \\
        Squared Hellinger $\Hsq(\mu \| \nu)$
            &
            & $\tfrac{1}{2}(\sqrt{u} - 1)^2$
            & $1 - \int \sqrt{pq} \,d\lambda$ \\
        Hellinger $\Hel[\alpha](\mu \| \nu)$
            & $\alpha \in (0, 1) \cup (1, \infty)$
            & $\frac{u^\alpha - 1}{\alpha - 1}$
            & $\frac{1}{\alpha - 1} (\int p^\alpha q^{1-\alpha} \,d\lambda - 1)$ \\
        Pearson $\chiP(\mu \| \nu)$
            &
            & $(u - 1)^2$
            & $\int \frac{(p - q)^2}{q}\, d\lambda$ \\
        Neyman $\chiN(\mu \| \nu)$
            &
            & $\frac{(u - 1)^2}{u}$
            & $\int \frac{(p - q)^2}{p} \,d\lambda$\\
        \bottomrule
    \end{tabular}
    \label{tab:f-divergences}
\end{table}

\paragraph{Unnormalized hockey-stick divergences}
We write
\begin{equation}
    \HockU[\beta](\mu \| \nu) \defeq \int (p - \beta q)_+ \, d\lambda = \Hock[\beta](\mu \| \nu) + (1 - \beta)_+ =\sup_A \mu(A) - \beta\, \nu(A).
\end{equation}

\paragraph{$f$-divergences from likelihood ratios}
If we define the likelihood and log-likelihood ratios
\begin{equation}
    \label{eq:likelihood-ratio}
    \Lambda \defeq p / q \in [0, \infty], \qquad L \defeq \log \Lambda \in [-\infty, +\infty],
\end{equation}
we can rewrite \Cref{eq:f-divergence-full} in the form
\begin{equation}
    \label{eq:f-divergence-lambda}
    \Df(\mu \| \nu) = \textstyle\bbE_{\nu}[f(\Lambda)] + f'(\infty)\mu(\Lambda = 0).
\end{equation}

\paragraph{R\'enyi divergences and the Hellinger integral}
The generator $f_\alpha(x) \defeq \tfrac{x^\alpha - 1}{\alpha - 1}$ is convex for every $\alpha \in (0,1) \cup (1,\infty)$, and defines the \emph{Hellinger divergence of order $\alpha$}, $\Hel[\alpha](\mu \| \nu) \defeq \Df[f_\alpha](\mu \| \nu)$.
The \emph{Hellinger integral of order $\alpha$} is
\begin{equation}
    \label{eq:hellinger-integral}
    \Hint[\alpha](\mu \| \nu) \defeq \int p^\alpha q^{1 - \alpha} \, d\lambda = 1 + (\alpha - 1) \, \Hel[\alpha](\mu \| \nu),
\end{equation}
with the convention that $\Hint[\alpha] = \infty$ if $\alpha \ge 1$ and $\mu \not\ll \nu$.
The \emph{R\'enyi divergence of order $\alpha$} is related via the monotone transformation:
\begin{equation}
    \label{eq:renyi}
    \Ren[\alpha](\mu \| \nu) \defeq \frac{1}{\alpha - 1} \log \Hint(\mu \| \nu) = \frac{1}{\alpha - 1} \log\left( 1 + (\alpha - 1) \Hel(\mu \| \nu) \right).
\end{equation}
Note $Z_\alpha$ is \emph{not}, strictly speaking, an $f$-divergence, since $x \mapsto x^\alpha$ does not vanish at $1$.
Writing $L \defeq \log \tfrac{d\mu}{d\nu}$ for the log-likelihood ratio, $Z_\alpha(\mu \| \nu) = \bbE_\mu\left[ e^{(\alpha - 1) L} \right]$, which is the moment generating function of $Z$ under $\mu$ evaluated at $\alpha - 1$.
By taking limits, one may also define the R\'enyi divergences at orders $\alpha \in \{0, 1, \infty\}$:
\begin{align*}
    \Ren[0](\mu \| \nu) &\defeq -\log \nu(\{p > 0\}), \quad p = d\mu / d\lambda, \\
    \Ren[1](\mu \| \nu) &\defeq \KL(\mu \| \nu), \\
    \Ren[\infty](\mu \| \nu) &\defeq \log \esssup_\nu \tfrac{d\mu}{d\nu}.
\end{align*}

In general, it is more convenient to state the associated R\'enyi divergence $\Ren[\alpha]$ rather than the raw Hellinger divergence $\Hel[\alpha]$, due to the nice tensorization and continuity properties of $\Ren[\alpha]$ (see \Cref{lem:tensorization}).

\paragraph{Special cases}
The most useful $f$-divergences are the hockey stick divergences (due to their association with tight certificates via \Cref{thm:tight-certificate-hockey}) and the R\'enyi divergences (due to their attractive computational properties; see, e.g., \Cref{lem:tensorization}).
Moreover, several of the divergences in \Cref{tab:f-divergences} can be expressed in terms of a hockey stick or R\'enyi divergence of an appropriate order.
We have the following relationships: $\TV = \HS{1}$, $\Hel[2] = \chiP$, $\Hsq = \tfrac{1}{2} \Hel[1/2] = 1 - \Hint[1/2]$.

\paragraph{Practical considerations}
Practically speaking, the relationships between divergences in \Cref{tab:f-divergences} mean we should utilize the Hockey-stick divergences directly whenever possible, and otherwise simply compute the R\'enyi divergences of all orders (which is useful when tight Hockey-stick divergences are inaccessible).
Other specific divergences, like $\Hsq$ and $\chiP$, restate information already encoded in the R\'enyi divergences of the appropriate orders.

\begin{dfn}[Perspective function]
    \label{dfn:perspective}
    Let $f$ be a generator as in \Cref{dfn:f-divergence} with the usual conventions for $f(0)$ and $f'(\infty)$.
    The \emph{perspective} of $f$ is the function $g_f \colon \bbR^{+} \times \bbR^{+} \to \bbR \cup \left\{ +\infty\right\}$ given by
    \begin{equation}
        \label{eq:perspective}
        g_f(a, b) \defeq
        \begin{cases}
            b f(a / b) &b > 0, \\
            a f'(\infty) & b = 0, a > 0, \\
            0 & a = b = 0.
        \end{cases}
    \end{equation}
\end{dfn}

\begin{lem}[Properties of perspective functions]
    \label{lem:perspective-properties}
    $g_f$ is jointly convex and positively homogeneous, i.e., $g_f(ta, tb) = t g_f(a, b)$ for every $t > 0$.
    Consequently $g_f$ is subadditive:
    \begin{equation}
        g_f (a_1 + a_2, b_1 + b_2) \le g_f(a_1, b_1) + g_f(a_2, b_2).
    \end{equation}
    Moreover $g_f$ admits the representation
    \begin{equation}
        \label{eq:perspective-support}
        g_f(a, b) = \sup_{s \in \dom f^{\ast}} [s a - f^{*}(s) b]
    \end{equation}
    where $f^{\ast}$ is the convex conjugate of $f$, and for $b > 0$ the supremum is attained exactly at those $s \in \partial f (a / b)$.
\end{lem}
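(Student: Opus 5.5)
The plan is to prove the support representation \eqref{eq:perspective-support} first and derive the other properties from it. Extend $f$ to all of $\bbR$ by setting $f(u) = +\infty$ for $u < 0$. With the convention $f(0) = \lim_{x \downarrow 0} f(x)$, this extension is a proper, closed (lower semicontinuous) convex function. By the Fenchel--Moreau theorem, $f = f^{\ast\ast}$, that is,
\begin{equation*}
    f(u) = \sup_{s \in \dom f^\ast} \left[ s u - f^\ast(s) \right].
\end{equation*}

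I would then check the representation case by case.
\begin{itemize}
    \item For $b > 0$, substitute $u = a/b$ and multiply by $b$. This gives $b f(a/b) = \sup_s [s a - f^\ast(s) b]$, which matches \eqref{eq:perspective}.
    \item For $a = b = 0$, the right-hand side is $\sup_s 0 = 0$.
    \item For $b = 0$ and $a > 0$, the right-hand side equals $a \sup \dom f^\ast$. So I must show that $\sup \dom f^\ast = f'(\infty) = \lim_{t \to \infty} f(t)/t$, which is the convention of \Cref{dfn:f-divergence} after the substitution $t = 1/x$. This is the statement that the recession function of $f$ is the support function of $\dom f^\ast$.
\end{itemize}

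I expect this recession-function identity to be the main technical step. I would prove it directly with two inequalities.
\begin{itemize}
    \item For any $s \in \dom f^\ast$, the bound $f(t) \ge s t - f^\ast(s)$ gives $\lim_t f(t)/t \ge s$. Hence $f'(\infty) \ge \sup \dom f^\ast$.
    \item Conversely, suppose $s > \sup \dom f^\ast$ (the case $\sup \dom f^\ast = +\infty$ is trivial). Then $f^\ast(s) = +\infty$, so $s t - f(t)$ is unbounded above in $t$. By convexity, $f(t)/t$ is nondecreasing on $t > 0$ up to the vanishing $f(0)/t$ correction. It follows that $\lim_t f(t)/t \le s$.
\end{itemize}
I would take care with the case $f'(\infty) = +\infty$, where both sides are $+\infty$, and with the convention $0 \cdot f'(\infty) = 0$.

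With the representation in hand, the remaining claims are short.
\begin{itemize}
    \item \emph{Joint convexity.} For each $s$, the map $(a, b) \mapsto s a - f^\ast(s) b$ is linear. A pointwise supremum of linear functions is convex and lower semicontinuous.
    \item \emph{Positive homogeneity.} Each linear function in the supremum is homogeneous in $(a, b)$, so the supremum is as well. One can also verify this directly from the three cases of \eqref{eq:perspective}.
    \item \emph{Subadditivity.} By homogeneity and then convexity,
    \begin{equation*}
        g_f(a_1 + a_2, b_1 + b_2) = 2\, g_f\bigl(\tfrac{a_1 + a_2}{2}, \tfrac{b_1 + b_2}{2}\bigr) \le g_f(a_1, b_1) + g_f(a_2, b_2).
    \end{equation*}
    Alternatively, the supremum of a sum is at most the sum of the suprema.
    \item \emph{Attainment.} For $b > 0$, write $s a - f^\ast(s) b = b\,[s(a/b) - f^\ast(s)]$. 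By the Fenchel--Young inequality this is at most $b f(a/b)$, with equality exactly when $s \in \partial f(a/b)$. So the supremum is attained precisely on $\partial f(a/b)$, and it is not attained when that set is empty.
\end{itemize}
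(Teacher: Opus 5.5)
The paper does not prove this lemma. It appears in the background appendix among facts stated without proof, with the reader referred to Csisz\'ar--Shields, Liese--Vajda, and Polyanskiy--Wu, so there is no in-paper argument to match. Your proposal is a correct, self-contained proof.

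A common direct route verifies joint convexity on $b_1, b_2 > 0$ by applying convexity of $f$ with weights $\lambda b_1/(\lambda b_1 + (1-\lambda) b_2)$ and its complement. It then reads homogeneity off the formula, handles $b = 0$ by a limiting argument, and obtains the support representation separately from duality. You instead prove the representation first, on the whole closed quadrant. Convexity, homogeneity and subadditivity then follow from ``supremum of linear functions'', and every boundary case ($b = 0$, $f(0) = +\infty$, $f'(\infty) = +\infty$) is treated uniformly. The price is Fenchel--Moreau plus the identity $\sup \dom f^\ast = f'(\infty)$, which your two-inequality argument establishes correctly in one dimension.

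Two small repairs are needed.
\begin{enumerate}
\item The phrase ``vanishing $f(0)/t$ correction'' presumes $f(0) < \infty$. That fails for a generator in the paper's own table: the Neyman generator $(u-1)^2/u$ has $f(0) = +\infty$.
  \begin{itemize}
  \item Use instead that the chord slope $(f(t) - f(1))/(t-1)$ is nondecreasing on $t > 1$. This gives existence of $\lim_{t \to \infty} f(t)/t$ with no finiteness assumption at $0$.
  \item Also note that $st - f(t)$ is bounded above on bounded subsets of $[0,\infty)$, because your extended $f$ is lower semicontinuous and hence bounded below there. So its unboundedness forces $f(t_n)/t_n < s$ along some sequence $t_n \to \infty$.
  \end{itemize}
\item In the attainment claim, $\partial f$ must be read as the subdifferential of your extended function. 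At $a = 0$ with $f(0)$ finite, this set is $\{s : s \le f'_+(0)\}$. It is empty when $f(0) = +\infty$, and also when $f'_+(0) = -\infty$, as for $u \log u$. These are exactly the cases where the supremum is not attained, as the Fenchel--Young equality case predicts.
\end{enumerate}
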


\begin{dfn}[Convex Conjugate]
    \label{dfn:convex-conjugate}
    For a function $f \colon \bbR \to \bbR$, its \emph{convex conjugate} is defined by $f^\ast(u) \defeq \sup_{v}(uv - f(v))$.
\end{dfn}

\begin{lem}[Basic properties of $f$-divergences]
    \label{lem:f-divergence-properties}
    Let $\mu, \nu, \rho$ be probability measures and $f$ a generator as in \Cref{dfn:f-divergence}.
    \crefalias{enumi}{lem}
    \begin{enumerate}[label=(\roman*), ref=\thelem(\roman*)]
        \item \label{lem:f-nonneg} \emph{(Nonnegativity.)} $D_f(\mu \| \nu) \ge 0$.
        If $f$ is strictly convex at 1, then $D_f(\mu \| \nu) = 0$ if and only if $\mu = \nu$.
        \item \label{lem:f-reversal} \emph{(Reversal.)} With the conjugate generator $\overline{f}(x) \defeq x f(1/x)$, one has $D_{\overline{f}}(\mu \| \nu) = D_f(\nu \| \mu)$.
        \item \label{lem:f-cancel} \emph{(Cancellation of common parts.)} $D_f(\mu \otimes \rho \| \nu \otimes \rho) = D_f(\mu \| \nu)$.
        \item \label{lem:f-convex} \emph{(Joint convexity.)} Let $0 < \lambda < 1$ and suppose $\mu = \lambda \mu_0 + (1 - \lambda) \mu_1$ and $\nu = \lambda \nu_0 + (1 - \lambda) \nu_1$ for probability measures $\mu_i,\nu_i$. Then $D_f(\mu \| \nu) \le \lambda D_f(\mu_0 \| \nu_0) + (1 - \lambda) D_f(\mu_1 \| \nu_1)$.
        \item \label{lem:f-perspective} \emph{(Perspective representation.)} $D_f(\mu \| \nu) = \int g_f (p, q)\,d\lambda$ for common dominating $\lambda$.
        \item \label{lem:f-variational} \emph{(Variational representation.)} With $f^\ast$ the convex conjugate of $f$,
        \begin{equation*}
            D_f(\mu \| \nu) = \sup_{g \colon \Omega \to \dom(f^\ast)} \left\{\bbE_{\mu} [g] - \bbE_{\nu} [f^{\ast} \circ g]\right\}.
        \end{equation*}
    \end{enumerate}
\end{lem}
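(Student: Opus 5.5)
The plan is to prove item (v), the perspective representation, first and derive the remaining items from it together with \Cref{lem:perspective-properties}. For (v), fix a common dominating measure $\lambda$ with densities $p,q$. Split $\Omega$ into $\{q>0\}$, $\{q=0,p>0\}$ and $\{p=q=0\}$. On the first set $g_f(p,q)=q f(p/q)$, which is exactly the integrand in \Cref{eq:f-divergence-full}. On the second set $g_f(p,q)=p f'(\infty)$, and integrating gives $f'(\infty)\mu(q=0)$. The third set contributes $0$. So (v) is a restatement of \Cref{eq:f-divergence-full}. Independence of the choice of $\lambda$ follows from positive homogeneity of $g_f$: if $\lambda \ll \lambda'$ with $h=d\lambda/d\lambda'$, then $g_f(ph,qh)=h\,g_f(p,q)$.

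Items (i), (ii) and (iv) then come almost immediately.
\begin{itemize}
\item[(i)] Apply Jensen to $\bbE_\nu[f(\Lambda)]$ on $\{q>0\}$, using $\bbE_\nu[\Lambda]=\mu(q>0)$. Then bound the singular part below by $f'(\infty)\mu(q=0)$, using $f(x)\ge f(1)+ (x-1)s$ for a subgradient $s\le f'(\infty)$. Combined with subadditivity and homogeneity of $g_f$, this gives $\int g_f(p,q)\,d\lambda \ge g_f(1,1)=f(1)=0$. For strict convexity at $1$, equality in Jensen forces $\Lambda\equiv 1$ $\nu$-a.s. and $\mu(q=0)=0$, so $\mu=\nu$.
\item[(ii)] Check pointwise that $g_{\overline f}(a,b)=g_f(b,a)$. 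For $a,b>0$ this is $b\cdot (a/b) f(b/a)=a f(b/a)$. The boundary cases use $\overline f(0)=f'(\infty)$ and $\overline f'(\infty)=f(0)$, read off from the conventions in \Cref{dfn:f-divergence}. Then integrate.
\item[(iv)] Take $\lambda$ dominating all four measures. The densities of the mixtures are $\lambda p_0+(1-\lambda)p_1$ and $\lambda q_0+(1-\lambda)q_1$. Pointwise joint convexity of $g_f$ from \Cref{lem:perspective-properties} gives the inequality, and integrating finishes it.
\end{itemize}
For (iii), use the product dominating measure $\lambda\otimes\lambda_\rho$ with densities $p(x)r(y)$ and $q(x)r(y)$. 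Homogeneity gives $g_f(pr,qr)=r\,g_f(p,q)$, and Fubini with $\int r\,d\lambda_\rho=1$ then gives the claim.

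For (vi), apply \Cref{eq:perspective-support} pointwise. For any measurable $g$ with values in $\dom f^\ast$, $g_f(p,q)\ge g\,p - f^\ast(g)\,q$; integrating gives $D_f(\mu\|\nu)\ge \bbE_\mu[g]-\bbE_\nu[f^\ast\circ g]$. For the reverse inequality I would select $g(\omega)\in\partial f(p/q)$ on $\{q>0\}$, where \Cref{lem:perspective-properties} says the supremum is attained. On $\{q=0\}$ I would take $g$ approaching $\sup\dom f^\ast=f'(\infty)$.

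The main obstacle is this attainment step in (vi). It has three parts: measurable selection of a subgradient; integrability, since $\bbE_\nu[f^\ast\circ g]$ may be undefined or infinite; and the boundary where $p/q\in\{0\}$ or $f'(\infty)=\infty$. I would first handle bounded simple functions $g$. I would then pass to the supremum using truncations $g_n=\max(\min(g,n),-n)$ intersected with $\dom f^\ast$, and monotone convergence on the two parts $\{q>0\}$ and $\{q=0\}$ separately. The case $D_f=\infty$ needs the truncation sequence to be chosen so that the lower bounds diverge.
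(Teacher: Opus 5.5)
The paper does not prove this lemma. It is quoted as standard background from the textbook references cited at the start of that appendix, so there is no in-paper argument to compare against. Your organization is the standard one and it is correct:
\begin{itemize}
\item take (v) as the definition;
\item read (ii)--(iv) off pointwise properties of $g_f$, namely the swap identity $g_{\overline{f}}(a,b)=g_f(b,a)$, joint convexity, and homogeneity plus Tonelli;
\item obtain (vi) from the support representation.
\end{itemize}
Its main advantage is that the singular term $f'(\infty)\mu(q=0)$ is carried along automatically and never needs a separate case. Two steps need more care than you give them.

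\textbf{The equality case in (i).} Jensen is applied at the mean $a \defeq \mu(q>0)$, which need not be $1$. Strict convexity at $1$ therefore says nothing about the equality case until you know $a=1$. Equality in Jensen also does not by itself give $\mu(q=0)=0$: a constant $\Lambda\equiv a<1$ attains it. The conclusion $a=1$ must come from equality throughout the chain
\[
0 = D_f(\mu\|\nu) \ge f(a) + (1-a)f'(\infty) \ge (1-a)\bigl(f'(\infty)-s\bigr) \ge 0 .
\]
If $a<1$, this forces $f(a)=s(a-1)$ and $f'(\infty)=s$. Then $f$ coincides with the line $s(x-1)$ on all of $[a,\infty)$, which contradicts strict convexity at $1$. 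The last inequality alone cannot do this job, because $s=f'(\infty)$ is possible for generators strictly convex at $1$; for example $f(x)=((1-x)_+)^2$ has $s=f'(\infty)=0$. Only once $a=1$ is established does Jensen's equality case give $\Lambda\equiv 1$ $\nu$-a.s.

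\textbf{The function class in (vi).} As literally written, the supremum over all measurable $g$ into $\dom f^\ast$ needs an integrability restriction to be meaningful. Your truncation argument shows that bounded $g$ suffice, which is the right reading. The argument can also be streamlined:
\begin{itemize}
\item $g=f'_+(\Lambda)$ on $\{q>0\}$ is a measurable selection with no further work;
\item for large $n$, the truncation levels lie in $(-\infty,f'(\infty))\subseteq\dom f^\ast$;
\item truncation moves $g_n$ monotonically toward the maximizer of the concave map $s\mapsto s\Lambda-f^\ast(s)$, so the integrands $g_n p - f^\ast(g_n)q$ increase;
\item for large $n$ they are bounded below by $-C(p+q)$.
\end{itemize}
Monotone convergence then gives the identity directly, including the case $D_f=\infty$, with no detour through simple functions.
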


\begin{lem}[Data-processing inequality]
    \label{lem:data-processing}
    Let $K$ be a Markov kernel from $(\Omega, \cA)$ to $(\Omega', \cA')$, and write $K\mu$ for the pushforward $A \mapsto \int K(\omega, A) \, \mu(d\omega)$.
    Then for every generator $f$,
    \begin{equation}
        \label{eq:dpi}
        D_f(K \mu \| K \nu) \leq D_f(\mu \| \nu).
    \end{equation}
    Equality holds whenever $K$ is induced by a bimeasurable bijection
    $T \colon \Omega \to \Omega'$, so that $D_f(T_\# \mu \| T_\# \nu) = D_f(\mu \| \nu)$, and more generally whenever $K$ is a sufficient statistic for $\{\mu, \nu\}$.
\end{lem}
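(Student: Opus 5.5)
The plan is to prove the inequality \Cref{eq:dpi} from the variational representation \Cref{lem:f-variational}, and then derive both equality cases by applying the inequality a second time with a kernel that undoes $K$.

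\textbf{Inequality.} Write $(Kh)(\omega) \defeq \int h(\omega')\,K(\omega, d\omega')$ for bounded measurable $h$ on $\Omega'$, so that $\bbE_{K\mu}[h] = \bbE_\mu[Kh]$ by definition of the pushforward. Fix a bounded measurable test $g \colon \Omega' \to \dom(f^\ast)$.
\begin{itemize}
    \item Since $\dom(f^\ast)$ is an interval and $K(\omega,\cdot)$ is a probability measure, $Kg$ again takes values in $\dom(f^\ast)$.
    \item Since $f^\ast$ is convex, Jensen's inequality applied to $K(\omega,\cdot)$ gives $K(f^\ast\circ g) \ge f^\ast \circ Kg$ pointwise.
\end{itemize}
Combining these,
\begin{equation*}
    \bbE_{K\mu}[g] - \bbE_{K\nu}[f^\ast\circ g]
    = \bbE_\mu[Kg] - \bbE_\nu[K(f^\ast\circ g)]
    \le \bbE_\mu[Kg] - \bbE_\nu[f^\ast\circ Kg]
    \le D_f(\mu\|\nu).
\end{equation*}
The last step is the easy direction of \Cref{lem:f-variational}, applied with the test $Kg$ on $\Omega$. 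Taking the supremum over $g$ and applying \Cref{lem:f-variational} on $\Omega'$ then yields $D_f(K\mu\|K\nu) \le D_f(\mu\|\nu)$.

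\textbf{Main obstacle: the technical details of the variational step.} Three points need care.
\begin{itemize}
    \item The supremum in \Cref{lem:f-variational} should be restricted to bounded (or simple) $g$, so that every expectation above is finite.
    \item The boundary term $f'(\infty)\mu(q=0)$ in \Cref{eq:f-divergence-full}, which arises when $K\mu \not\ll K\nu$, must be captured by this supremum. This is standard when $\dom f^\ast$ is unbounded above in the appropriate way, but I would state it explicitly.
    \item If the variational formula is delicate at the boundary of $\dom f^\ast$, I would fall back on a different route:
    \begin{itemize}
        \item form the joint measures $\mu\otimes K$ and $\nu\otimes K$ on $\Omega\times\Omega'$;
        \item note that these have the same density ratio $p/q$ as $\mu,\nu$ (so, by \Cref{lem:f-perspective}, they have the same divergence);
        \item note that their $\Omega'$-marginals are $K\mu$ and $K\nu$;
        \item show that marginalization cannot increase $\int g_f$, using the joint convexity and positive homogeneity of the perspective (\Cref{lem:perspective-properties}) in a conditional-Jensen argument on the disintegration. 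This handles the singular part automatically, since $g_f$ already encodes $f'(\infty)$.
    \end{itemize}
\end{itemize}

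\textbf{Equality cases.} If $K$ is induced by a bimeasurable bijection $T$, apply the inequality to the kernel induced by $T^{-1}$. Since $(T^{-1})_\# T_\#\mu = \mu$ and likewise for $\nu$, this gives $D_f(\mu\|\nu) \le D_f(T_\#\mu\|T_\#\nu)$, and hence equality.

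More generally, suppose $K$ is sufficient for $\{\mu,\nu\}$. By definition there is a reverse kernel $R$ from $\Omega'$ back to $\Omega$ with $R(K\mu) = \mu$ and $R(K\nu) = \nu$. Applying the inequality to $R$ gives the reverse bound $D_f(\mu\|\nu) \le D_f(K\mu\|K\nu)$.
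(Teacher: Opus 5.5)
Your proof is correct. There is no in-paper argument to compare it with: the paper states this lemma without proof as a standard fact, deferring to \citet{csiszar2004information,liese2006divergences} and \citet[ch.~7]{polyanskiy2025information}. Both of your routes are the standard textbook arguments, and the trade-off is as you describe.

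\emph{Variational route.} It is two lines, but it rests on the nontrivial direction of \Cref{lem:f-variational} over a restricted class of test functions. That direction includes recovering the singular term $f'(\infty)\,\mu(q=0)$, which is exactly where $\sup\dom f^\ast = f'(\infty)$ enters. Take the class to be simple functions rather than merely bounded ones. When $\dom f^\ast$ is open at its right endpoint (e.g.\ $f(u)=-\log u$, with $\dom f^\ast=(-\infty,0)$), a bounded $g$ can make $f^\ast\circ g$ unbounded. A simple $g$ makes every interchange of integrals and your pointwise Jensen step trivially justified.

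\emph{Joint-measure route.} It is self-contained given \Cref{lem:perspective-properties} and absorbs the singular parts into $g_f$. You can avoid a disintegration, which needs regularity of the spaces. Set $\rho \defeq \tfrac12(\mu\otimes K+\nu\otimes K)$ with densities $a,b$. The densities of $K\mu,K\nu$ with respect to the $\Omega'$-marginal of $\rho$ are then $\bbE_\rho[a\mid\omega']$ and $\bbE_\rho[b\mid\omega']$. Conditional Jensen applied to $g_f$, through its support representation \eqref{eq:perspective-support}, finishes the argument.

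This lift-then-project pattern is the one the paper runs in the opposite direction in \Cref{lem:label-exact} and the display after it. There, the DPI for the projection $\pi$ is used to re-derive \Cref{thm:joint-convexity-bound}. Your argument is not circular, because it uses convexity of the perspective rather than \Cref{lem:f-convex}.

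\emph{Sufficiency clause.} State which definition you adopt, since the paper leaves the term undefined. Under Blackwell's definition (a common reverse kernel $R$ with $RK\mu=\mu$ and $RK\nu=\nu$), your argument is complete. Under the classical definition for a deterministic statistic, the common $R$ is a regular conditional distribution, and its existence needs something like standard Borel spaces.
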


\begin{lem}[Reversal of Hockey-Stick divergences]
    \label{lem:hockey-stick-reversal}
    Fix $\mu, \nu$ probability measures on $(\Omega, \cA)$, and $\beta > 0$.
    Define the dual parameter $\overline{\beta} \defeq 1 / \beta$, extended to $\beta \in \{0, \infty\}$ in the usual way.
    Then one has the following relationship:
    \begin{equation}
        \label{eq:hockey-stick-reversal}
        \HockU[\beta](\mu \| \nu) = \beta\, \HockU[\bar\beta](\nu \| \mu) + 1 - \beta,
        \qquad
        \Hock[\beta](\mu \| \nu) = \beta\, \Hock[\bar\beta](\nu \| \mu).
    \end{equation}
    In particular, the TV distance $\TV$ is symmetric since it corresponds to the self-dual $\beta = 1$.
    Moreover, the Hockey-Stick profile for $(\mu, \nu)$ over $0 \le \beta \le 1$ is entirely determined by the profile of the reversed pair $(\nu, \mu)$ for $\beta \ge 1$.
\end{lem}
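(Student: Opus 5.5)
The plan is to work directly with the integral form of the unnormalized hockey-stick divergence and use the elementary identity $x_+ = x + (-x)_+$; the normalized statement then follows by bookkeeping of the $(1-\beta)_+$ offsets. Fix a common dominating measure $\lambda$ (e.g.\ $\lambda = \mu + \nu$) and write $p \defeq d\mu/d\lambda$, $q \defeq d\nu/d\lambda$. By the definition of $\HockU[\beta]$, $\HockU[\beta](\mu \| \nu) = \int (p - \beta q)_+ \, d\lambda$. This expression is valid without assuming $\mu \ll \nu$, since it is the integral form in \Cref{tab:f-divergences}, so no absolute-continuity case split should be needed.

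\textbf{Step 1 (unnormalized identity).} Apply $x_+ = x + (-x)_+$ pointwise with $x = p - \beta q$. Integrating, and using $\int p\,d\lambda = \int q\,d\lambda = 1$, gives
\begin{equation*}
    \int (p - \beta q)_+ \, d\lambda = (1 - \beta) + \int (\beta q - p)_+ \, d\lambda = (1-\beta) + \beta \int \bigl(q - \tfrac{1}{\beta} p\bigr)_+ \, d\lambda .
\end{equation*}
The last step pulls out the factor $\beta > 0$. The final integral is exactly $\HockU[\bar\beta](\nu \| \mu)$. All integrands are nonnegative and bounded by $p + \beta q$, so every term is finite and the rearrangement is legitimate.

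\textbf{Step 2 (normalized identity).} Substitute $\Hock[\beta] = \HockU[\beta] - (1-\beta)_+$ on both sides. The right side becomes
\begin{equation*}
    \beta\,\Hock[\bar\beta](\nu\|\mu) = \beta\,\HockU[\bar\beta](\nu\|\mu) - \beta(1 - 1/\beta)_+ = \beta\,\HockU[\bar\beta](\nu\|\mu) - (\beta - 1)_+ .
\end{equation*}
The left side, by Step 1, equals $\beta\,\HockU[\bar\beta](\nu\|\mu) + (1-\beta) - (1-\beta)_+$. These agree because $(1-\beta) - (1-\beta)_+ = -(\beta-1)_+$, which is the scalar identity $x = x_+ - (-x)_+$ again.

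\textbf{Step 3 (consequences and endpoints).} Setting $\beta = 1$ gives $\TV(\mu\|\nu) = \Hock[1](\mu\|\nu) = \Hock[1](\nu\|\mu)$, which is symmetry. For the final claim, observe that $\beta \mapsto 1/\beta$ maps $(0,1]$ onto $[1,\infty)$. Hence for every $\beta \in (0,1]$ the value $\Hock[\beta](\mu\|\nu) = \beta\,\Hock[1/\beta](\nu\|\mu)$ is read off from the reversed pair's profile on $[1,\infty)$.

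The only genuinely delicate point is the extension to $\beta \in \{0, \infty\}$, which I expect to be the main (minor) obstacle. I would treat it by taking limits, interpreting $\beta\,\Hock[1/\beta]$ as $\lim_{\beta\downarrow 0}$, rather than by the formula directly. One checks:
\begin{itemize}
    \item $\Hock[0](\mu\|\nu) = \int p\,d\lambda - 1 = 0$;
    \item $\beta\,\Hock[1/\beta](\nu\|\mu) \le \beta \to 0$, because $\Hock[\cdot] \le 1$ for orders $\ge 1$.
\end{itemize}
The case $\beta = \infty$ is symmetric.
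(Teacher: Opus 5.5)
Your proof is correct and complete. Applying $x_+ = x + (-x)_+$ to $p-\beta q$ and integrating gives the unnormalized identity for every $\beta>0$. The integration is legitimate since $\abs{p-\beta q}\le p+\beta q$, and no absolute-continuity assumption is needed. The bookkeeping $(1-\beta)-(1-\beta)_+ = -(\beta-1)_+$ then gives the normalized identity.

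The paper states this lemma as a background fact without proof, so there is no in-paper argument to match. The route closest to the surrounding text would use the generator-level reversal \Cref{lem:f-reversal}. For $f_\beta(u)=(u-\beta)_+-(1-\beta)_+$ one checks $x f_\beta(1/x) = \beta f_{1/\beta}(x) + \max(1,\beta)(1-x)$. Adding a multiple of $u-1$ to a generator leaves $\Df$ between probability measures unchanged, so $\Hock[\beta](\mu\|\nu) = D_{\overline{f_\beta}}(\nu\|\mu) = \beta\,\Hock[1/\beta](\nu\|\mu)$.

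That version shows the lemma as an instance of a general $f$-divergence symmetry. However, it needs the affine-invariance fact, including the compensating shifts in $f(0)$ and $f'(\infty)$ when $\mu\not\ll\nu$. Your density computation avoids this and gives the unnormalized identity directly.

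One small precision on your endpoint remark. At $\beta=\infty$ the right-hand side $\infty\cdot\Hock[0](\nu\|\mu)$ is an indeterminate form; reading it as $0$ would be false whenever $\mu(q=0)>0$. So that case should be understood as the limit of the finite-$\beta$ identity, in which both sides tend to $\mu(q=0)$. It is not a repeat of your $\beta\downarrow 0$ bound, where both sides tend to $0$. Since the lemma fixes $\beta>0$, this does not affect the result.
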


\begin{lem}[Integral representation by Hockey-stick divergences]
    \label{lem:hs-integral-representation}
    Let $f$ be a generator as in \Cref{dfn:f-divergence}, finite on $(0, \infty)$, and let $\rho$ denote the Lebesgue-Stieltjes measure of its right derivative $f'_+$, that is, the unique Borel measure on $(0, \infty)$ with $\rho((a, b]) = f'_+(b) - f'_+(a)$ for $0 < a < b < \infty$.
    Then for all $\mu \ll \nu$,
    \begin{equation}
        \label{eq:hs-integral-representation}
        \Df(\mu \| \nu) = \int_{(0, \infty)} \Hock[\beta](\mu \| \nu)\, \rho(d\beta).
    \end{equation}
\end{lem}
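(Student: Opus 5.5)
The plan is to prove \Cref{lem:hs-integral-representation} pointwise first and then integrate. The pointwise step is a second-order Taylor expansion of the convex generator $f$ around $u = 1$, written in terms of hinge functions. Since $\mu \ll \nu$, \Cref{eq:f-divergence-simple} gives $\Df(\mu \| \nu) = \bbE_\nu[f(\Lambda)]$, where $\Lambda = d\mu/d\nu$ is $\nu$-a.s.\ finite and satisfies $\bbE_\nu[\Lambda] = 1$. The representation then follows by taking $\nu$-expectations termwise. The argument is arranged so that only nonnegative integrands appear, which makes Tonelli's theorem available.

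\textbf{Step 1 (pointwise representation).} Because $f$ is convex and finite on $(0,\infty)$, it is absolutely continuous there with a.e.\ derivative $f'_+$. Fix $u > 1$. Writing $f'_+(t) - f'_+(1) = \rho((1,t])$ and applying Fubini gives
\begin{equation*}
f(u) - f(1) - f'_+(1)(u-1) = \int_1^u \rho((1,t])\,dt = \int_{(1,\infty)} (u-\beta)_+\,\rho(d\beta).
\end{equation*}
Now fix $0 < u < 1$. Writing $f'_+(1) - f'_+(t) = \rho((t,1])$ gives in the same way
\begin{equation*}
f(u) - f(1) - f'_+(1)(u-1) = \int_{(0,1]} (\beta-u)_+\,\rho(d\beta).
\end{equation*}
In both cases the formula not covering $u$ contributes zero, so the two displays combine into one identity valid for all $u > 0$. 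The case $u = 0$ follows by letting $u \downarrow 0$. The left side tends to $f(0) - f(1) + f'_+(1)$ by the convention in \Cref{dfn:f-divergence}. The right side converges by monotone convergence, since $(\beta - u)_+ \uparrow \beta$.

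\textbf{Step 2 (identify the hinge terms with hockey-stick divergences).} For $\beta > 1$ the hockey-stick generator is $(u-\beta)_+ - (1-\beta)_+ = (u-\beta)_+$. Hence $\bbE_\nu[(\Lambda-\beta)_+] = \Hock[\beta](\mu\|\nu)$. For $\beta \le 1$ we use the identity $(\beta-u)_+ = (u-\beta)_+ - (u-1) - (1-\beta)$. This gives $\bbE_\nu[(\beta-\Lambda)_+] = \bbE_\nu[(\Lambda-\beta)_+] - (1-\beta) = \Hock[\beta](\mu\|\nu)$, where we used $\bbE_\nu[\Lambda - 1] = 0$. It is important not to merge the two branches into a single kernel plus a linear term. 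That rewriting produces a coefficient $f'_+(1) - \rho((0,1])$, which can be $-\infty$; for example, $f(u) = u\log u$ gives $\rho((0,1]) = \infty$.

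\textbf{Step 3 (integrate).} Substitute $u = \Lambda$ into the identity of Step 1 and take $\bbE_\nu$, using $f(1) = 0$. The linear term $f'_+(1)(\Lambda-1)$ is $\nu$-integrable with mean zero. Both hinge integrands are nonnegative and jointly measurable in $(\omega,\beta)$. Tonelli's theorem therefore lets us exchange $\bbE_\nu$ with $\int \rho(d\beta)$ separately on $(0,1]$ and on $(1,\infty)$. By Step 2 this yields $\Df(\mu\|\nu) = \int_{(0,\infty)} \Hock[\beta](\mu\|\nu)\,\rho(d\beta)$, with both sides possibly $+\infty$.

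I expect the main obstacle to be the bookkeeping at the boundary. One issue is justifying the $u = 0$ case, which matters on the event $\{\Lambda = 0\}$, and the possible infinitude of $f(0)$. The other is making sure that no $\infty - \infty$ cancellation is hidden in the linear term. Keeping the two one-sided Taylor remainders separate, as in Steps 1 and 2, is what handles both issues.
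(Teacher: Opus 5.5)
The paper does not prove this lemma. It appears among the background facts on $f$-divergences, with the reader deferred to Csisz\'ar--Shields, Liese--Vajda and Polyanskiy--Wu, and it is used only through the nonnegativity of $\rho$ in the proof of \Cref{lem:dle-via-hs}. So there is no in-paper route to compare against.

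Your argument supplies the missing proof and is correct. It is the standard one:
\begin{enumerate}
\item expand $f$ about $u=1$ with integral remainder written as one-sided hinge functions;
\item identify each hinge's $\nu$-expectation with $\Hock[\beta]$ via $\bbE_\nu[\Lambda]=1$;
\item exchange integrals by Tonelli.
\end{enumerate}

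The delicate points are all handled:
\begin{itemize}
\item The $u=0$ case goes through the convention $f(0)=\lim_{u\downarrow 0}f(u)$ and monotone convergence, which covers the event $\{\Lambda=0\}$ and the case $f(0)=+\infty$.
\item $\bbE_\nu[f(\Lambda)]$ is well defined in $(-\infty,+\infty]$, because the only signed term is the integrable linear one.
\item $\rho$ is $\sigma$-finite: it is finite on compact subintervals of $(0,\infty)$ since $f'_+$ is finite there.
\item On $(0,1]$ you identify $\bbE_\nu[(\beta-\Lambda)_+]$ with $\Hock[\beta]$ for each fixed $\beta$ after exchanging integrals, not pointwise in $u$. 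This is exactly what avoids the $\infty-\infty$ you point out for $u\log u$.
\end{itemize}

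Compared with the citation, your write-up makes explicit where each hypothesis enters. Finiteness of $f$ on $(0,\infty)$ makes $\rho$ a Radon measure. The assumption $\mu\ll\nu$ is used only for $\bbE_\nu[\Lambda]=1$ and to drop the $f'(\infty)\mu(\{q=0\})$ term.

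The same computation in fact extends to $\mu\not\ll\nu$. Let $m=\mu(\{q=0\})$. Then:
\begin{itemize}
\item each $\Hock[\beta]$ with $\beta>1$ picks up an extra $m$;
\item the identification on $(0,1]$ is unchanged;
\item the linear term now contributes $-f'_+(1)m$.
\end{itemize}
Since $\rho((1,\infty))=f'(\infty)-f'_+(1)$, these extra contributions cancel against $f'(\infty)m$. When $f'(\infty)=\infty$ and $m>0$, both sides are $+\infty$. The paper only needs the absolutely continuous case.
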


\begin{lem}[Tensorization Properties for $f$-Divergences]
    \label{lem:tensorization}
    For product measures $\mu = \bigotimes_{i=1}^n \mu_i$ and $\nu = \bigotimes_{i=1}^n \nu_i$ on the same product space, $\Hint[\alpha](\mu \| \nu) = \prod_{i=1}^n \Hint[\alpha](\mu_i \| \nu_i)$ for every $\alpha \in (0, 1) \cup (1, \infty)$.
    Furthermore, $\Ren[\alpha](\mu \|\nu) = \sum_{i=1}^n \Ren[\alpha](\mu_i \| \nu_i)$.
\end{lem}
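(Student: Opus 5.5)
The statement immediately above is \Cref{lem:tensorization}: the Hellinger integral is multiplicative over product measures, $\Hint[\alpha](\mu\|\nu)=\prod_i \Hint[\alpha](\mu_i\|\nu_i)$, and hence $\Ren[\alpha]$ is additive. The plan is to prove the statement for $n=2$ directly from the integral definition \Cref{eq:hellinger-integral}, then extend to general $n$ by induction.

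\textbf{Choice of dominating measure.} For each coordinate $i$ we fix a common dominating measure $\lambda_i$ for $(\mu_i,\nu_i)$; for instance $\lambda_i=\mu_i+\nu_i$. We write $p_i=d\mu_i/d\lambda_i$ and $q_i=d\nu_i/d\lambda_i$. The product measure $\lambda=\bigotimes_i\lambda_i$ is $\sigma$-finite and dominates both $\mu$ and $\nu$. By uniqueness of product measures on rectangles, the densities of $\mu$ and $\nu$ with respect to $\lambda$ are $p(x)=\prod_i p_i(x_i)$ and $q(x)=\prod_i q_i(x_i)$.

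\textbf{Factorization.} The integrand then factors pointwise:
\[
p^\alpha q^{1-\alpha}=\prod_i p_i(x_i)^\alpha q_i(x_i)^{1-\alpha}.
\]
We use the conventions $0^0=1$ and $0\cdot\infty=0$, which must agree with how \Cref{eq:hellinger-integral} treats the sets where $q=0$. The integrand is nonnegative and measurable, so Tonelli's theorem gives
\[
\int p^\alpha q^{1-\alpha}\,d\lambda=\prod_i\int p_i^\alpha q_i^{1-\alpha}\,d\lambda_i,
\]
and this holds even when some factors are infinite. That establishes the product formula.

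\textbf{The main obstacle: the boundary conventions.} The delicate step is checking that the identity still holds when some factor is degenerate.
\begin{itemize}
\item For $\alpha>1$: if some $\mu_i\not\ll\nu_i$, then $\mu\not\ll\nu$. Indeed, if $\nu_i(A)=0<\mu_i(A)$, the cylinder set over $A$ witnesses the failure. Both sides are then $\infty$, with the left side by convention and the right side because the infinite factor multiplies factors that are all positive. The factors are positive because $\int p_j^\alpha q_j^{1-\alpha}\,d\lambda_j\ge \mu_j(q_j>0)\cdot 0$ is not immediately useful; instead, positivity follows since for $\alpha>1$ Jensen's inequality gives $\Hint[\alpha]\ge 1$ whenever it is finite.
\item For $\alpha\in(0,1)$: every factor lies in $[0,1]$, so no $0\cdot\infty$ products arise. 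We need to confirm the integrand convention on $\{q=0\}$: there $p^\alpha\cdot 0=0$, consistently in every coordinate.
\end{itemize}

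\textbf{Additivity of Rényi divergences.} Taking logarithms and dividing by $\alpha-1$ in \Cref{eq:renyi} turns the product formula into $\Ren[\alpha](\mu\|\nu)=\sum_i \Ren[\alpha](\mu_i\|\nu_i)$. The usual extended-real conventions cover the infinite cases: $\log 0=-\infty$, so a factor $\Hint[\alpha]=0$ for $\alpha<1$ gives divergence $+\infty$ after dividing by the negative number $\alpha-1$. For general $n$, induction on $n$ via $\bigotimes_{i\le n}=\bigl(\bigotimes_{i<n}\bigr)\otimes(\cdot)_n$ suffices, although the Tonelli argument above already handles all $n$ at once.
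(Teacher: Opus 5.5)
The paper gives no proof of this lemma; it is quoted as a standard fact from the $f$-divergence literature. Your argument (a product dominating measure, factorized densities, Tonelli, then logarithms) is the standard proof and is correct.

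Two small fixes are worth making. For $\alpha>1$, the convention that actually matters is $0^{1-\alpha}=\infty$ together with $0\cdot\infty=0$; $0^0$ never arises, because the exponents $\alpha$ and $1-\alpha$ are never zero. The self-correcting sentence about positivity can be replaced by $\Hint[\alpha](\mu_j\|\nu_j)=\bbE_{\nu_j}[\Lambda_j^{\alpha}]\ge(\bbE_{\nu_j}[\Lambda_j])^{\alpha}=1$ with $\Lambda_j=d\mu_j/d\nu_j$, whenever $\mu_j\ll\nu_j$; otherwise the factor is $\infty$.
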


\begin{table}[ht]
    \centering
    \renewcommand{\arraystretch}{1.5}
    \small
    \caption{Tensorization of $f$-divergences under product measures.
    Throughout, $\mu = \bigotimes_1^n \mu_i$ and $\nu = \bigotimes_{i=1}^n \nu_i$ are product measures on a common product space.}
    \label{tab:tensorization}
    \begin{tabular}{@{}llll@{}}
        \toprule
        Divergence & & Tensorization & Notes \\
        \midrule
        Hellinger integral
            & $\Hint[\alpha](\mu \| \nu)$
            & $= \prod_{i=1}^n \Hint[\alpha](\mu_i \| \nu_i)$
            & $\alpha \in (0,1) \cup (1, \infty)$ \\
        R\'enyi
            & $\Ren[\alpha](\mu \| \nu)$
            & $= \sum_{i=1}^n \Ren[\alpha](\mu_i \| \nu_i)$
            & $\alpha \in [0, \infty]$ \\
        Kullback--Leibler
            & $\KL(\mu \| \nu)$
            & $= \sum_{i=1}^n \KL(\mu_i \| \nu_i)$
            & case $\alpha = 1$ above \\
        Squared Hellinger
            & $\Hsq(\mu \| \nu)$
            & $= 1 - \prod_{i=1}^n \bigl(1 - \Hsq(\mu_i \| \nu_i)\bigr)$
            & case $\alpha = 1/2$ above \\
        Pearson $\chi^2$
            & $\chiP(\mu \| \nu)$
            & $= \prod_{i=1}^n \bigl(1 + \chiP(\mu_i \| \nu_i)\bigr) - 1$
            & case $\alpha = 2$ above \\
        Neyman $\chi^2$
            & $\chiN(\mu \| \nu)$
            & $= \prod_{i=1}^n \bigl(1 + \chiN(\mu_i \| \nu_i)\bigr) - 1$
            & \Cref{lem:f-reversal} w/ $\chiP$ \\
        Hockey-stick
            & $\Hock[\beta](\mu \| \nu)$
            & $\le 1 - \prod_{i=1}^n \bigl(1 - \HS{\beta_i}(\mu_i \| \nu_i)\bigr)$
            & $\beta = \prod_i \beta_i$, $\beta_i \ge 1$ \\
        Total variation
            & $\TV(\mu \| \nu)$
            & $\le 1 - \prod_{i=1}^n \bigl(1 - \TV(\mu_i \| \nu_i)\bigr)$
            & \\
        \bottomrule
    \end{tabular}
\end{table}

As a result of \Cref{lem:tensorization}, the divergences of \Cref{tab:f-divergences} obtained by manipulating a R\'enyi-type divergence themselves possess nice tensorization properties.
We document these and more tensorization properties in \Cref{tab:tensorization}.
In addition, one has upper bounds on product measures for Hockey-stick and TV divergences, although these bounds are not tight in general.

\subsection{Additional lemmata}
In this section we state and prove some additional useful lemmata regarding various $f$-divergences.

\begin{lem}[$f$-divergence between measures with shared components]
    \label{lem:f-common-component}
    Let $\mu, \nu$ be probability measures on $(\Omega, \cA)$, let $\lambda$ be any dominating measure, and write $p \defeq d\mu / d\lambda$ and $q \defeq d\nu / d\lambda$.
    Define the overlap $S \defeq \{p > 0\} \cap \{q > 0\}$ (up to a $\lambda$-null set).
    Suppose $\mu(E) = \nu(E)$ for every measurable $E \subseteq S$.
    Then, denoting $m \defeq \mu(S^c) = \nu(S^c)$, one has $m = \TV(\mu \| \nu)$ and moreover for every generator $f$ as in \Cref{dfn:f-divergence},
    \begin{equation*}
        \Df[f](\mu \| \nu) = (f(0) + f'(\infty))m = (f(0) + f'(\infty))\TV(\mu \| \nu).
    \end{equation*}
\end{lem}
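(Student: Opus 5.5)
Fix a dominating measure $\lambda$ and write $p = d\mu/d\lambda$, $q = d\nu/d\lambda$. The plan is to split $\Omega$ into three pieces and evaluate the perspective integral $\Df(\mu\|\nu) = \int g_f(p,q)\,d\lambda$ of \Cref{lem:f-perspective} on each piece separately. The pieces are the overlap $S$, the region $A \defeq \{p > 0, q = 0\}$, and the region $B \defeq \{p = 0, q > 0\}$. The remaining set $\{p = q = 0\}$ contributes nothing, since $g_f(0,0) = 0$.

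First I show that $p = q$ holds $\lambda$-a.e. on $S$. For any measurable $E \subseteq S$ the hypothesis gives $\int_E (p - q)\,d\lambda = 0$. Taking $E = S \cap \{p > q\}$ and then $E = S \cap \{p < q\}$ forces both sets to be $\lambda$-null. On $S$ we therefore have $g_f(p,q) = q f(1) = 0$, so $S$ contributes nothing.

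On $A$ we have $g_f(p,0) = p f'(\infty)$, which contributes $f'(\infty)\mu(A)$. On $B$ we have $g_f(0,q) = q f(0)$, which contributes $f(0)\nu(B)$. Because $S^c$ equals $A \cup B$ up to a set where both densities vanish, $\mu(S^c) = \mu(A)$ and $\nu(S^c) = \nu(B)$. Both equal $m$ by hypothesis, so the total is $(f(0) + f'(\infty))m$.

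For the TV identity I apply the same computation to the TV generator $f(u) = \tfrac12\abs{u-1}$. It has $f(0) = \tfrac12$ and $f'(\infty) = \lim_{x\downarrow 0} x \cdot \tfrac12\abs{1/x - 1} = \tfrac12$, which gives $\TV(\mu\|\nu) = m$. Alternatively one can compute $\tfrac12\int\abs{p-q}\,d\lambda = \tfrac12(\mu(A) + \nu(B)) = m$ directly.

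The main subtlety is the case where $f(0)$ or $f'(\infty)$ is infinite. Then the formula should be read in the extended sense, with $\infty \cdot 0 = 0$ when $m = 0$, and it must match the convention $0 \cdot f'(\infty) = 0$ of \Cref{dfn:f-divergence}. When $m = 0$ we have $\mu = \nu$ and both sides vanish, so the identity is consistent. The other point to check is that the a.e. equality on $S$ uses only sets $E \subseteq S$, which is exactly what the hypothesis provides. Everything else is bookkeeping.
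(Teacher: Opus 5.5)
Your proposal is correct and follows essentially the same route as the paper. You split $\Omega$ into the overlap, the two singular pieces and the null set, evaluate the perspective integral piecewise to get $f'(\infty)\mu(A) + f(0)\nu(B)$, and specialize to the TV generator with $f(0) = f'(\infty) = \tfrac12$. Your explicit argument that $p = q$ holds $\lambda$-a.e.\ on $S$ fills in a step the paper uses implicitly when it writes $g_f(p,q) = q f(1)$ on $S$. One small correction: $\mu(S^c) = \nu(S^c)$ is not a hypothesis; it follows from taking $E = S$.
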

\begin{proof}
    Taking $E = S$ gives $\mu(S) = \nu(S)$ and hence $\mu(S^c) = \nu(S^c)$.
    Set
    \begin{equation*}
        S_\mu \defeq \{p > 0, q = 0\}, \qquad S_\nu \defeq \{p = 0, q > 0\}, \qquad N \defeq \{p = q = 0\}
    \end{equation*}
    so that $\Omega = S \sqcup S_\mu \sqcup S_\nu \sqcup N$ with $\mu(N) = \nu(N) = 0$.
    Evaluating the divergence via \Cref{lem:f-perspective} gives
    \begin{align*}
        \Df(\mu \| \nu) &= \int_S g_f(p, q) \,d\lambda + \int_{S_\mu} g_f(p, 0) \,d\lambda + \int_{S_\nu} g_f(0, q) \,d\lambda + \int_{N} g_f(0,0) \,d\lambda \\
        &= \int_S q f(1) \,d\lambda + \int_{S_\mu} p f'(\infty) \,d\lambda + \int_{S_\nu} q f(0) \,d\lambda + 0 \\
        &= f'(\infty)\mu(S_\mu) + f(0) \nu(S_\nu).
    \end{align*}
    To conclude, note $\mu(S_\mu) = \nu(S_\nu) = m$, and take $f(u) = \tfrac{1}{2}\abs{u - 1}$ for which $f(0) = f'(\infty) = \tfrac{1}{2}$.
\end{proof}
\begin{cor}[Hockey-stick envelope for measures with common component]
    In the setting of \Cref{lem:f-common-component}, for every $\beta \ge 0$,
    \begin{equation*}
        \Hock[\beta](\mu \| \nu) = \min(1, \beta)m,
    \end{equation*}
    where $m = \mu(S^c) = \nu(S^c)$.
    In particular, $\Hock[\beta](\mu \| \nu) = \TV(\mu \| \nu) = m$ whenever $\beta \ge 1$, independently of $\beta$.
\end{cor}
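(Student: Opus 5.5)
The plan is to obtain the corollary directly from \Cref{lem:f-common-component} by specializing the generator to the hockey-stick generator. Fix $\beta \ge 0$ and set $f_\beta(u) \defeq (u - \beta)_+ - (1 - \beta)_+$, as in \Cref{tab:f-divergences}. This function is convex, as a positive part of an affine function plus a constant, and satisfies $f_\beta(1) = 0$. Hence it is a valid generator in the sense of \Cref{dfn:f-divergence}, and $\Hock[\beta] = \Df[f_\beta]$. Under the standing hypothesis that $\mu$ and $\nu$ agree on every measurable subset of the overlap $S$, \Cref{lem:f-common-component} gives
\begin{equation*}
    \Hock[\beta](\mu \| \nu) = \bigl(f_\beta(0) + f_\beta'(\infty)\bigr)\, m .
\end{equation*}
It therefore remains only to evaluate the two boundary constants of $f_\beta$.

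For the value at zero, $f_\beta(0) = (-\beta)_+ - (1-\beta)_+ = -(1-\beta)_+$, since $\beta \ge 0$. For the slope at infinity we use the convention $f'(\infty) = \lim_{x \downarrow 0} x f(1/x)$. For small $x$ we have $1/x > \beta$, so
\begin{equation*}
    x f_\beta(1/x) = x\bigl(1/x - \beta\bigr) - x(1-\beta)_+ = 1 - x\beta - x(1-\beta)_+ \;\longrightarrow\; 1 .
\end{equation*}
Thus $f_\beta'(\infty) = 1$. Summing the two constants gives $1 - (1-\beta)_+ = \min(1,\beta)$, which is the claimed identity $\Hock[\beta](\mu\|\nu) = \min(1,\beta)\, m$. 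The edge case $\beta = 0$ is covered by the same computation: there $f_0(u) = u - 1$ and both sides equal $0$.

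The final assertion follows immediately. For $\beta \ge 1$ the factor $\min(1,\beta)$ equals $1$, so $\Hock[\beta](\mu\|\nu) = m$. By \Cref{lem:f-common-component}, $m = \TV(\mu\|\nu)$. The result is also consistent with $\TV = \HS{1}$.

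There is no real obstacle here. The only points needing care are the conventions for $f(0)$ and $f'(\infty)$ at the boundary, in particular confirming the limit defining $f'(\infty)$ when $\beta$ is large, and checking that the lemma's hypotheses do not depend on $\beta$. Both are handled by the explicit computation above.
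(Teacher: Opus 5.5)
Your proposal is correct and follows the same route as the paper: specialize \Cref{lem:f-common-component} to the hockey-stick generator $f_\beta$ and compute $f_\beta(0) + f_\beta'(\infty) = -(1-\beta)_+ + 1 = \min(1,\beta)$. Your explicit evaluation of the limit defining $f_\beta'(\infty)$ and your check of the edge case $\beta = 0$ fill in details that the paper's one-line proof leaves implicit.
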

\begin{proof}
    The generator $f_\beta(u) = (u - \beta)_+ - (1 - \beta)_+$ has $f_\beta(0) = -[1 - \beta]_+$ and $f'_\beta(\infty) = \lim_{x \downarrow 0} x f_\beta(1 / x) = 1$, so $f_\beta(0) + f'_\beta(\infty) = 1 - [1 - \beta]_+ = \min(1, \beta)$.
\end{proof}

The significance of \Cref{lem:f-common-component} is that for distributions exhibiting ``simple'' likelihood ratio behavior, the TV distance encodes the entire hockey stick envelope (indeed, \emph{any} $f$-divergence), and in particular this means via \Cref{thm:tight-certificate-hockey} that the information-limited certificate is strictly a function of the TV distance between those distributions.

%% file: sections/A_detailed_prior_work.tex
\section{Detailed Comparison with Prior Work}
\label{apx:prior-work}

\subsection{Threat model and mechanism comparison}

\Cref{tab:threat-model-comparison} expresses the threat model and smoothing mechanism of several prior certified poisoning defenses in our notation and describes whether our framework recovers their certificate.

\paragraph{RAB}
RAB bounds a single aggregate $\ell_2$ perturbation applied to the entire training dataset.
Strictly speaking, our threat model of \Cref{sec:threat-model} does not capture theirs, since in RAB an adversary may spread their corruption over all training examples.
Given our threat model with $\emp$ corrupted examples and $\ell_2$ budget $\epsf$, an outer RAB approximation may be obtained by taking the aggregate bound $\epsf \sqrt{\emp}$.
In the case of backdoor attacks we can arrive at a similar bound.
This is how we use their bound in \Cref{sec:evaluation}.
Under their own threat model, our certification procedure still recovers their certificate if we compute the divergences directly: the noise induces a single Gaussian pair (\Cref{dfn:gaussian-measure}) at distance $\norm{\Delta}_2 / \sigma$ with closed form hockey-stick divergence (\Cref{thm:gaussian-measure-divergence}).
This certificate reduces to the closed form of \citet{cohen2019certified}.

\paragraph{Bagflip}
BagFlip's adversary is mostly recoverable under our threat modeling.
The only nuance is in their treatment of label-flipping adversaries.
First, they require that the finite label space and feature space coincide, $\cX = \cY = [K]$.
Then, they treat the label as an extra feature and reduce certification to the feature-only case.
In particular, a single budget is shared between the features and label.
In this case, we can technically recover their threat model by taking $\rho = (\emp, s, 0, \epste)$ on the ``merged'' feature-label space.

If we start with our label-flipping threat model, $\rho = (\emp, \epsf, 1, \epste)$, we might ask how the resulting certificate compares to BagFlip's (which would take $s = \epsf + 1$).
First, observe that $\rho$ is more restrictive in the sense that our adversary \textit{must} place one of its corruptions on the label, or not at all, whereas BagFlip's adversary could reallocate a label corruption into an extra feature corruption.
Despite this, it turns out that the resulting certificates are identical.
Since features and labels share the alphabet $[K]$ and the flipping kernel, the divergences depend only on the number of changed coordinates (\Cref{thm:flip-measure-divergence}), and both worst cases corrupt $\epsf + 1$ coordinates.

This simplification fails if $\cX \ne \cY$ or if the feature and label noises differ (for example, to enable stronger noise on features than labels), in which case our framework can still compute tight certificates and is notably more general than \citet{zhang2022bagflip}.

\begin{table}[ht]
    \centering
    \small
    \begin{threeparttable}
    \caption{Threat models and smoothing mechanisms of prior randomized smoothing works expressed in the notation of \Cref{sec:problem-statement,sec:methodology}}
    \begin{tabular}{@{}llll@{}}
        \toprule
        Work & Threat vector $\rho$ & Mechanism $(\Pi, \phi_{\cX}, \phi_{\cY})$ & Certificate \\
        \midrule
        \citet{jia2021intrinsic} & $(\emp, \infty, 1, 0)$ & bagging, none, none & identical (\Cref{thm:bagging-exact})\\
        \citet{rosenfeld2020certified} & $(\emp, 0, 1, 0)$ & full set, none, flipping & tight\tnote{a}  \\
        \citet{zhang2022bagflip}  & $(\emp, s, \epsl, \epste)$ & bagging, flipping, flipping & identical (\Cref{thm:bagflip-recovery}) \\
        \citet{weber2023rab} & aggregate $\ell_2$ on $S$ & full set, Gaussian, none & identical\tnote{b} \\
        \bottomrule
    \end{tabular}
    \label{tab:threat-model-comparison}
    \begin{tablenotes}
        \footnotesize
        \item[a] We can obtain provably tight certificates, but must estimate the vote probabilities with an ensemble. \citet{rosenfeld2020certified} give an efficient, loose closed form for learners trained with linear regression.
        \item[b] Under their threat model, the certificate reduces to the standard Gaussian certificate of \citet{cohen2019certified}, which is already understood in \citet{dvijotham2020framework}.
    \end{tablenotes}
    \end{threeparttable}
\end{table}

\subsection{Comparison with \citet{saxena2026provable}}
\label{apx:saxena}

\citet{saxena2026provable} develop a randomized smoothing-based backdoor defense from the DP perspective.
They view the joint train-inference procedure as a randomized mechanism composed of a stack of two private algorithms.

We note several differences between our works.
We do not make any assumptions about properties of the learner, like privacy, instead placing assumptions on, and then noising, the input data.
It is these data noising schemes we analyze in this work, using techniques to handle subsampling, noise products, and then the individual noise factors.
In contrast, the analysis of \citet{saxena2026provable} is placed on understanding the composition of DP mechanisms.
Our threat modeling is more expressive in its per-example budgets for replacement adversaries, handling arbitrary $\ell_p$ constraints and label corruptions, whereas \citet{saxena2026provable} focuses primarily on symmetric set distance adversaries (corresponding to the standard adjacency relation notion in DP), with one notable exception being a special case for $\ell_2$ feature adversaries and Gaussian noise.

One interesting connection between our work and the methods of \citet{saxena2026provable} is in the use of the hockey-stick profile, also called the privacy profile in the DP literature, to characterize the robustness condition.
Whereas \citet{saxena2026provable} use the language and tools of DP, we formulate our results here in the language of $f$-divergences and convex optimization.

To demonstrate the connection, we derive one of the results of \citet{saxena2026provable} in our language.
\citet{saxena2026provable} consider a ``randomized DPA'' mechanism (motivated by \citep{levine2021deep}) which works as follows.
DPA assigns each training record to one of $N$ buckets by some fixed rule (e.g., index or hashing) and trains one model per bucket.
In the randomized view, the mechanism samples one bucket uniformly at random and trains a model on it.
In our notation, this means taking $\Pi$ to be uniform over $N$ disjoint buckets with no feature or label noise, and the test input to be noised by kernel $\phi_{\cX}$.

\begin{thm}[DPA with test-time noise; Theorem D.4 of \citet{saxena2026provable}]
    \label{thm:saxena}
    Let $\Pi$ be uniform over pairwise disjoint index buckets $U_1, \ldots, U_N$.
    Let noise $\phi_{\cX}$ be applied only to the test sample.
    Suppose the corruption changes records in at most $r$ buckets, and let $\tau_\beta^{\mathrm{te}} \ge \Hock[\beta](\phi_{\cX}(x'_{\mathrm{te}}) \| \phi_{\cX}(x_{\mathrm{te}}))$ for all admissible test corruptions.
    Then, for every $\beta > 0$,
    \begin{equation}
        \Hock[\beta](\cM(w') \| \cM(w)) \le \tfrac{r}{N}\min(1, \beta) + (1 - \tfrac{r}{N})\tau_{\beta}^{\mathrm{te}}
    \end{equation}
    Equality holds if exactly $r$ buckets are changed, the test perturbation attains $\tau_{\beta}^{\mathrm{te}}$, and the sub-datasets $S_{U_1}, \ldots, S_{U_N}$ and $S'_{U_j}$ (for corrupted $j$) are pairwise distinct.
\end{thm}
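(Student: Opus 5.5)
The plan is to write both $\cM(w)$ and $\cM(w')$ as uniform mixtures over the $N$ buckets, paired by bucket index, and to apply \Cref{thm:joint-convexity-bound} with the two-block partition $[N] = C \sqcup C^c$. Here $C$ is the set of corrupted buckets, with $\abs{C} \le r$. Concretely, the $j$-th component of $\cM(w)$ is $\nu_j \defeq \delta_{S_{U_j}} \otimes \phi_{\cX}(x_{\mathrm{te}})$. The $j$-th component of $\cM(w')$ is $\mu_j \defeq \delta_{S'_{U_j}} \otimes \phi_{\cX}(x'_{\mathrm{te}})$. Both mixtures carry weight $1/N$ on each component. Since no feature or label noise is applied, the training part of each component is a point mass.

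For the upper bound I bound the component divergences in the two blocks separately. For $j \notin C$ we have $S'_{U_j} = S_{U_j}$. Cancellation of common parts (\Cref{lem:f-cancel}) then gives $\Hock[\beta](\mu_j \| \nu_j) = \Hock[\beta](\phi_{\cX}(x'_{\mathrm{te}}) \| \phi_{\cX}(x_{\mathrm{te}})) \le \tau^{\mathrm{te}}_\beta$. For $j \in C$, the generic bound suffices: $\Hock[\beta] \le \sup_u f_\beta$ contributions give at most $f_\beta(0) + f_\beta'(\infty) = \min(1,\beta)$, the value computed in the corollary to \Cref{lem:f-common-component}. \Cref{thm:joint-convexity-bound} then yields $\frac{\abs{C}}{N}\min(1,\beta) + (1 - \frac{\abs{C}}{N})\tau^{\mathrm{te}}_\beta$. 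This expression is nondecreasing in $\abs{C}$, because $\tau^{\mathrm{te}}_\beta \le \min(1,\beta)$. So the bound also holds with $r$ in place of $\abs{C}$.

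For equality I compute $\HockU[\beta](\mu\|\nu) = \int (p - \beta q)_+$ directly, using the counting measure on datasets times a dominating measure on test points. By distinctness, each point mass $\delta_{S'_{U_j}}$ for $j \in C$ sits on an atom that $\nu$ does not charge. Each $\delta_{S_{U_j}}$ for $j \in C$ sits on an atom that $\mu$ does not charge. Each $\delta_{S_{U_j}}$ for $j \notin C$ sits on an atom charged with weight $1/N$ by both mixtures. The integral therefore decomposes atom by atom. The corrupted-$\mu$ atoms contribute $\frac{r}{N}$. The corrupted-$\nu$ atoms contribute $0$. The shared atoms contribute $\frac{1}{N}\HockU[\beta](\phi_{\cX}(x'_{\mathrm{te}}) \| \phi_{\cX}(x_{\mathrm{te}}))$ each. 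Subtracting $(1-\beta)_+$ and using $\HockU = \Hock + (1-\beta)_+$ gives
\begin{equation*}
\frac{r}{N} + \left(1 - \frac{r}{N}\right)\left(\tau^{\mathrm{te}}_\beta + (1-\beta)_+\right) - (1-\beta)_+ = \frac{r}{N}\min(1,\beta) + \left(1 - \frac{r}{N}\right)\tau^{\mathrm{te}}_\beta,
\end{equation*}
provided the test perturbation attains $\tau^{\mathrm{te}}_\beta$.

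I expect the main obstacle to be the equality case. There the pairwise-distinctness hypothesis must be used carefully so that no atom receives mass from two different components, which would otherwise let terms cancel. One also has to keep the $(1-\beta)_+$ bookkeeping consistent between the normalized and unnormalized hockey-stick divergences, particularly for $\beta < 1$.
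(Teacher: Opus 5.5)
Your proposal is correct and follows the paper's proof. You use a two-group application of \Cref{thm:joint-convexity-bound}, cancellation (\Cref{lem:f-cancel}) on uncorrupted buckets, and the singular value $\min(1,\beta)$ on corrupted buckets; your atom-by-atom evaluation of $\int (p-\beta q)_+$ for the equality case is an inlined version of the paper's appeal to \Cref{cor:identifiable} for disjoint component supports.

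One small slip: $\tau^{\mathrm{te}}_\beta$ is only assumed to be an upper bound, so $\tau^{\mathrm{te}}_\beta \le \min(1,\beta)$ need not hold. When it fails, the claimed bound is trivial because $\Hock[\beta](\cdot\|\cdot) \le \min(1,\beta)$ always, so your reduction from $\abs{C}$ to $r$ still works with a one-line case split.
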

\begin{proof}
    Apply \Cref{thm:joint-convexity-bound} with two groups corresponding to corrupted and uncorrupted buckets.
    For an uncorrupted bucket, the component divergence reduces to divergence between test components $\tau_{\beta}^{\mathrm{te}}$ (\Cref{lem:f-cancel}).
    For a corrupted bucket, the component pair is singular and its hockey stick is simply $\min(1, \beta)$.
    Combining these terms with the mixing weights $\tfrac{r}{N}$ and $1 - \tfrac{r}{N}$ gives the bound.
    If the equality conditions hold, the component supports are disjoint, and the bound is attained by \Cref{cor:identifiable}.
\end{proof}

The advantage of the randomized DPA mechanism is that, because the partition size $N$ is small, and because no feature or label noise is added to training data, the training can be done once for all buckets and re-used for arbitrarily many certifications (inference on a sampled model must still be performed).
Another interesting observation is that DPA can be used to obtain bounds for symmetric set difference adversaries who can add or remove training samples.
For example, if bucket assignment is performed by hashing the training example (as done in \citet{levine2021deep}), one has $r \le \min(2 \emp, N)$ for $\emp$ replaced training examples.

%% file: sections/A_proofs.tex
\section{Deferred proofs}
\label{apx:deferred-proofs}

\subsection{Equivalence of robustness conditions}
\begin{thm}[Robustness characterization]
    \label{thm:robustness-equivalence}
    Fix base learner $G$, smoothing mechanism $\cM$, input $w \in \cW$, and threat vector $\rho$.
    Write $c \defeq \smooth{g_{\cM}}(w)$.
    For each $c' \in \cY$, define the specification $\psi_{c,c'} \colon \widetilde{\cW} \to \bbR$ by $\psi_{c,c'}(\widetilde{w}) \defeq G(\widetilde{w})_c - G(\widetilde{w})_{c'}$.
    Then the following are equivalent:
    \begin{enumerate}
        \item $c \in \argmax_{y \in \cY} \smooth{G_{\cM}}(w')_y$ for every $w' \in \cB_{\rho}(w)$.
        \item For every $c' \ne c$, the specification $\psi_{c, c'}$ is robustly certified at $\cM(w)$ with respect to
        $\cD_{w,\rho} \defeq \{\cM(w') \;:\; w' \in \cB_{\rho}(w)\}$.
    \end{enumerate}
    In particular, either condition implies $\rho$-robustness of $\smooth{g_{\cM}}$ at $w$ whenever the certified margins are strict.
\end{thm}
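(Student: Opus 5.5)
The proof is a direct unfolding of definitions, using linearity of expectation to rewrite each specification's expectation as a difference of smoothed class probabilities. For any distribution $\nu'$ over $\widetilde{\cW}$ and any $c'$, the key identity is
\begin{equation*}
    \bbE_{\widetilde{w} \sim \nu'}\left[\psi_{c,c'}(\widetilde{w})\right] = \bbE_{\nu'}\left[G(\widetilde{w})_c\right] - \bbE_{\nu'}\left[G(\widetilde{w})_{c'}\right].
\end{equation*}
This is legitimate because $G$ takes values in $\cP(\cY)$, so each $G(\cdot)_y \in [0,1]$ is bounded. Measurability is implicit in the definition of $\smooth{G_\cM}$ as an expectation, so $\psi_{c,c'}$ is a genuine specification in the sense of \Cref{dfn:specification}. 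Specializing to $\nu' = \cM(w')$ and using the definition $\smooth{G_{\cM}}(w') = \bbE_{\widetilde{w}\sim\cM(w')}[G(\widetilde{w})]$, I get
\begin{equation*}
    \bbE_{\cM(w')}\left[\psi_{c,c'}\right] = \smooth{G_{\cM}}(w')_c - \smooth{G_{\cM}}(w')_{c'}.
\end{equation*}

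With this identity both directions are immediate.

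\textbf{(1)$\Rightarrow$(2).} Fix $c' \ne c$ and $\nu' \in \cD_{w,\rho}$. By definition of $\cD_{w,\rho}$, $\nu' = \cM(w')$ for some $w' \in \cB_\rho(w)$. By (1), $c$ is a maximizer of $\smooth{G_{\cM}}(w')$, so $\smooth{G_{\cM}}(w')_c \ge \smooth{G_{\cM}}(w')_{c'}$. The identity then gives $\bbE_{\nu'}[\psi_{c,c'}] \ge 0$, which is robust certification of $\psi_{c,c'}$ at $\cM(w)$.

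\textbf{(2)$\Rightarrow$(1).} Fix $w' \in \cB_\rho(w)$, so that $\cM(w') \in \cD_{w,\rho}$. For each $c' \ne c$, (2) gives $\smooth{G_{\cM}}(w')_c \ge \smooth{G_{\cM}}(w')_{c'}$. The case $c' = c$ is trivial, so $c$ attains the maximum over $\cY$, i.e.\ $c \in \argmax_y \smooth{G_{\cM}}(w')_y$.

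For the final claim, suppose the margins are strict, meaning $\bbE_{\cM(w')}[\psi_{c,c'}] > 0$ for all $c' \ne c$ and all $w' \in \cB_\rho(w)$. Then $c$ is the unique maximizer of $\smooth{G_{\cM}}(w')$, so the tie-breaking rule is irrelevant and $\smooth{g_{\cM}}(w') = c = \smooth{g_{\cM}}(w)$. This is exactly $\rho$-robustness in the sense of \Cref{dfn:certified-robustness}. Without strictness, a tie could be broken toward $c' \ne c$, which is why the strictness caveat is needed.

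The only delicate point, and the step I expect to need the most care, is the passage from argmax membership to the hard prediction $\smooth{g_\cM}$. It must be stated precisely, either via strict margins as above, or by noting that the equivalence holds verbatim if the fixed tie-breaking rule always favours $c$. Everything else is linearity of expectation.
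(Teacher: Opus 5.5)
Your proposal is correct and follows essentially the same route as the paper: the linearity identity $\bbE_{\cM(w')}[\psi_{c,c'}] = \smooth{G_{\cM}}(w')_c - \smooth{G_{\cM}}(w')_{c'}$, both implications read off from it, and strict margins giving a unique maximizer for the final claim. The only cosmetic difference is that the paper argues (2)$\Rightarrow$(1) by contrapositive (a failing $w'$ and $c'$ give a negative expectation), whereas you argue it directly; the two are equivalent.
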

\begin{proof}
    Fix $w' \in \cB_\rho(w)$ and $c, c' \in \cY$.
    By linearity of expectation together with the definition of the smoothed classifier,
    \begin{align}
        \label{eq:margin-identity}
        \bbE_{\widetilde{w} \sim \cM(w')}\left[\psi_{c,c'}(\widetilde{w})\right] &= \bbE_{\widetilde{w} \sim \cM(w')}\left[G(\widetilde{w})_c - G(\widetilde{w})_{c'}\right] \nonumber \\
        &= \bbE_{\widetilde{w} \sim \cM(w')}\left[G(\widetilde{w})_c\right] - \bbE_{\widetilde{w} \sim \cM(w')}\left[G(\widetilde{w})_{c'}\right] \nonumber \\
        &= \smooth{G_{\cM}}(w')_c - \smooth{G_{\cM}}(w')_{c'}.
    \end{align}
    Suppose (1) holds; that is, for every $w' \in \cB_{\rho}(w)$, we have that $c \in \argmax_{y \in \cY} \smooth{G_{\cM}}(w')_y$.
    Then for every $c' \ne c$ we have $\smooth{G_{\cM}}(w')_c \ge \smooth{G_{\cM}}(w')_{c'}$, so by \Cref{eq:margin-identity} $\bbE_{\widetilde{w} \sim \cM(w')}[\psi_{c,c'}] \ge 0$.
    Since every element of $\cD_{w, \rho}$ is of the form $\cM(w')$ for some $w' \in \cB_{\rho}(w)$, it follows that each $\psi_{c,c'}$ is robustly certified at $\cM(w)$ with respect to $\cD_{w, \rho}$.

    On the other hand, suppose (1) does not hold.
    Then there exists some $w' \in \cB_{\rho}(w)$ and $c' \ne c$ with $\smooth{G_{\cM}}(w')_{c'} > \smooth{G_{\cM}}(w')_{c}$.
    Then \Cref{eq:margin-identity} gives $\bbE_{\widetilde{w} \sim \cM(w')}[\psi_{c,c'}] < 0$, so that $\psi_{c,c'}$ is not robustly certified in the sense of \Cref{dfn:specification} and condition (2) fails.
    
    Finally, if the certified margins are strict, i.e., $\smooth{G_{\cM}}(w')_c > \smooth{G_{\cM}}(w')_{c'}$ for all $c' \ne c$ and all $w' \in \cB_\rho(w)$, then $c$ is the unique maximizer of $y \mapsto \smooth{G_{\cM}}(w')_y$, and we recover robustness in the desired sense of \Cref{dfn:certified-robustness}.    
\end{proof}

\subsection{Proof of \Cref{thm:joint-convexity-bound}}
\label{apx:proof-joint-convexity-bound}
\begin{proof}
    The result follows by applying joint convexity of $D_f$ (\Cref{lem:f-convex}) and rearranging terms according to the partition:
    \begin{align*}
        D_f(\mu \| \nu) &\le \sum_{i=1}^N \theta_i D_f(\mu_i \|\nu_i) = \sum_{i=1}^k \sum_{j \in S_i} \theta_j D_f(\mu_j \| \nu_j) \\
        &\le \sum_{i=1}^k \sum_{j \in S_i} \theta_j \delta_i = \sum_{i=1}^k \delta_i \sum_{j \in S_i} \theta_j = \sum_{i=1}^k \beta_i \delta_i. \tag*{\qedhere}
    \end{align*}
\end{proof}

\begin{cor}[R\'enyi mixture bound]
    \label{cor:renyi-mixture-full}
    With the setting in \Cref{thm:joint-convexity-bound}, the R\'enyi divergence $\RY{\alpha}(\mu \| \nu)$ satisfies the following LogSumExp bound:
    \begin{equation}
        \label{eq:renyi-mixture-1}
        \RY{\alpha}(\mu \| \nu) \le \left(\tfrac{1}{\alpha - 1}\right) \log \left(\sum_{i=1}^N \theta_i \exp((\alpha - 1) \RY{\alpha}(\mu_i \| \nu_i)) \right)
    \end{equation}
    for all $\alpha \in (0, 1) \cup (1, \infty)$.
    Moreover, given $\beta_i$ and $\delta_i$ in the setting of \Cref{thm:joint-convexity-bound}, we have the reorganized bound
    \begin{equation}
        \label{eq:renyi-mixture-2}
        \RY{\alpha}(\mu \| \nu) \le \left(\tfrac{1}{\alpha - 1}\right) \log \left(\sum_{i=1}^k \beta_i \exp((\alpha - 1) \delta_i)\right).
    \end{equation}
\end{cor}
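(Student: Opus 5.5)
The plan is to work with the Hellinger integral $\Hint[\alpha]$ rather than with $\RY{\alpha}$ directly. The reason is that $\Hint[\alpha]$ is, up to an affine change, the $f$-divergence $\Hel[\alpha]$, and joint convexity applies to it. By \Cref{eq:hellinger-integral}, $\Hint[\alpha](\mu \| \nu) = 1 + (\alpha-1)\Hel[\alpha](\mu\|\nu)$. Since $\Hel[\alpha]$ is generated by the convex function $f_\alpha(u) = (u^\alpha - 1)/(\alpha-1)$, the joint convexity property (\Cref{lem:f-convex}), extended by induction to finite mixtures exactly as in the first line of the proof of \Cref{thm:joint-convexity-bound}, gives
\begin{equation*}
    \Hel[\alpha](\mu\|\nu) \le \sum_{i=1}^N \theta_i \Hel[\alpha](\mu_i\|\nu_i).
\end{equation*}
Multiplying by $\alpha-1$ and adding $1 = \sum_i \theta_i$ yields two cases. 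For $\alpha>1$ the inequality is preserved: $\Hint[\alpha](\mu\|\nu) \le \sum_i \theta_i \Hint[\alpha](\mu_i\|\nu_i)$. For $\alpha \in (0,1)$ it flips: $\Hint[\alpha](\mu\|\nu) \ge \sum_i \theta_i \Hint[\alpha](\mu_i\|\nu_i)$. Concretely, this is convexity of $(p,q)\mapsto p^\alpha q^{1-\alpha}$ for $\alpha>1$ and concavity for $\alpha<1$.

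Next, substitute $\Hint[\alpha](\mu_i\|\nu_i) = \exp((\alpha-1)\RY{\alpha}(\mu_i\|\nu_i))$ from \Cref{eq:renyi}. Then apply $\frac{1}{\alpha-1}\log(\cdot)$ to both sides. For $\alpha>1$ this map is increasing, so the inequality direction is kept. For $\alpha<1$ it is decreasing, which reverses the flipped inequality back. In both cases we obtain \Cref{eq:renyi-mixture-1}. The step needing care is this sign bookkeeping, together with the edge cases where some $\Hint[\alpha]$ is $0$ or $\infty$. For $\alpha>1$ with $\mu_i \not\ll \nu_i$, both sides are $+\infty$ under the stated convention, so the bound holds trivially. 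For $\alpha<1$, a zero Hellinger integral corresponds to $\RY{\alpha} = +\infty$ and $\exp(-\infty)=0$, which is consistent.

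For \Cref{eq:renyi-mixture-2}, I would use the hypothesis from \Cref{thm:joint-convexity-bound}, read here as $\RY{\alpha}(\mu_j\|\nu_j)\le \delta_i$ for $j\in S_i$. For $\alpha>1$, $x\mapsto \exp((\alpha-1)x)$ is increasing, so $\sum_j \theta_j \exp((\alpha-1)\RY{\alpha}(\mu_j\|\nu_j)) \le \sum_i \beta_i \exp((\alpha-1)\delta_i)$ after grouping by the partition. Monotonicity of $\frac{1}{\alpha-1}\log$ then gives the claim. For $\alpha<1$, both maps are decreasing. The grouped sum is therefore $\ge \sum_i \beta_i e^{(\alpha-1)\delta_i}$, and applying $\frac{1}{\alpha-1}\log$ reverses this into the desired upper bound. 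The conclusion follows by chaining with \Cref{eq:renyi-mixture-1}.
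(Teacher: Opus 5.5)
Your proof is correct and is essentially the paper's argument. The paper splits into $0<\alpha<1$ and $\alpha>1$ and applies joint convexity to the generators $1-u^\alpha$ and $u^\alpha-1$, which are positive multiples of your single generator $f_\alpha(u)=(u^\alpha-1)/(\alpha-1)$, so your unified sign bookkeeping through $\Hint[\alpha]$ is the same proof, and your reading of $\delta_i$ as bounds on $\RY{\alpha}(\mu_j\|\nu_j)$, with monotonicity tracked in both regimes, is what the paper's one-line remark for \Cref{eq:renyi-mixture-2} leaves implicit. One tiny imprecision: for $\alpha>1$ with some $\mu_i\not\ll\nu_i$, only the right-hand side of \Cref{eq:renyi-mixture-1} is guaranteed to be $+\infty$ (the mixtures can still satisfy $\mu\ll\nu$, even $\mu=\nu$), but that already makes the bound trivial, so nothing breaks.
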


\begin{proof}
    We first note that \Cref{eq:renyi-mixture-2} follows straightforwardly from \Cref{eq:renyi-mixture-1} due to monotonicity of $\log$ and $\exp$.

    We prove \Cref{eq:renyi-mixture-1} for the cases $0 < \alpha < 1$ and $\alpha > 1$ separately.
    \paragraph{Case 1: $0 < \alpha < 1$}
    The R\'enyi divergence is related to the $f$-divergence generated by $f(u) = 1 - u^{\alpha}$ by $\RY{\alpha}(\mu \| \nu) = (\tfrac{1}{\alpha - 1}) \log(1 - D_f(\mu \| \nu))$, where in particular joint convexity of $D_f$ and monotonicity of $\log((1 -\cdot)^{-1})$ give the bound
    \begin{align*}
        \RY{\alpha}(\mu \| \nu) &\le \left(\tfrac{1}{1 - \alpha}\right)\log\left(\left(1 - \sum_{i=1}^N \theta_i D_f(\mu_i \| \nu_i)\right)^{-1}\right) \\
        &= \left(\tfrac{1}{\alpha - 1}\right)\log\left(1 - \sum_{i=1}^N \theta_i D_f(\mu_i \| \nu_i)\right).
    \end{align*}
    Using the fact that $D_{f}(\mu_i \| \nu_i) = 1 - \exp((\alpha - 1)\RY{\alpha}(\mu_i \| \nu_i))$, we get
    \begin{align*}
        \RY{\alpha}(\mu \| \nu) &\le \left(\tfrac{1}{\alpha - 1}\right)\log\left(1 - \sum_{i=1}^N \theta_i (1 - \exp((\alpha - 1)\RY{\alpha}(\mu_i \| \nu_i)))\right) \\
        &= \left(\tfrac{1}{\alpha - 1}\right)\log\left(\sum_{i=1}^N \theta_i \exp((\alpha - 1)\RY{\alpha}(\mu_i \| \nu_i))\right)
    \end{align*}
    where the final step follows since $\sum_i \theta = 1$.
    
    \paragraph{Case 2: $\alpha > 1$}
    The R\'enyi divergence is related to the $f$-divergence generated by $f(u) = u^{\alpha} - 1$ by $\RY{\alpha}(\mu \| \nu) = (\tfrac{1}{\alpha - 1}) \log(1 + D_f(\mu \| \nu))$, where in particular joint convexity of $D_f$ and monotonicity of $\log(1 + \cdot)$ give the bound
    \begin{equation*}
        \RY{\alpha}(\mu \| \nu) \le \left(\tfrac{1}{\alpha - 1}\right) \log\left(1 + \sum_{i=1}^N \theta_i D_f(\mu_i \| \nu_i)\right).
    \end{equation*}
    Using the fact that $D_f(\mu_i \| \nu_i) = \exp((\alpha - 1)\RY{\alpha}(\mu_i \| \nu_i)) - 1$, we get
    \begin{align*}
        \RY{\alpha}(\mu \| \nu) &\le \left(\tfrac{1}{\alpha - 1}\right) \log\left(1 + \sum_{i=1}^N \theta_i (\exp((\alpha - 1)\RY{\alpha}(\mu_i \| \nu_i)) - 1)\right) \\
        &= \left(\tfrac{1}{\alpha - 1}\right) \log\left(\sum_{i=1}^N \theta_i \exp((\alpha - 1)\RY{\alpha}(\mu_i \| \nu_i))\right)
    \end{align*}
    where the final step follows since $\sum_i \theta_i = 1$.
\end{proof}

%% file: sections/A_tightness.tex
\section{Tightness of certificates}
\label{apx:tightness}
Since \citet{cohen2019certified} proved optimality of their certificate for Randomized Smoothing with Gaussian noise, it has been customary to pair derivation of certified guarantees with an analysis of the tightness of the resulting certificate.
In the ideal case, one can prove that a given Randomized Smoothing certificate is \textit{tight} in the following precise sense.
Given an input $w \in \cW$, the certification procedure determines a robustness region $\cR$ at which the input $w$ is robust ($\smooth{g}(w) = \smooth{g}(w')$ for all $w' \in \cB_\rho(w)$ where $\rho \in \cR$).
We can say that the region $\cR$ is \textit{tight} if, for any $\rho \notin \cR$, there exists a ``worst-case'' input $w' \in \cB_{\rho}(w)$ and base algorithm $G$ (subject to some constraints determined by evaluating $G$ at $w$) with the property that $\smooth{g(w')} \ne \smooth{g(w)}$.
In the information-limited setting of \Cref{thm:lim-info-cvx}, $G$ is constrained by the aggregated vote probabilities $0 \le \theta_b \le \theta_a \le 1$.
In the full-information setting of \Cref{thm:full-info-cvx}, more information about the vote distribution can be used.

Turning to our framework, our analysis introduces slack at two key steps.
First, we replace the induced constraint set of $\cD_{w, \rho}$ of \Cref{thm:robustness-equivalence} by a convex outer approximation $\cD_{\cF}$.
Second, we replace the exact divergence bounds $\tau_f^\star = \sup_{w' \in \cB_{\rho}(w)} \Df(\cM(w') \| \cM(w))$ by analytical upper bounds of \Cref{thm:joint-convexity-bound}.
\footnote{To be precise, slackness could also enter through the reorganization process of \Cref{thm:joint-convexity-bound}. However, in the cases we consider, this reorganization is exact.}

The first source of slack corresponds to the convex relaxation itself, and is already reasonably well-understood.
By \Cref{thm:tight-certificate-hockey}, if the exact Hockey-stick profile $\tau_\beta^\star$ (see \Cref{eq:hs-profile}) is available at all orders $\beta \ge 0$, then the resulting certificate is tight in the information-limited setting.
This means no procedure that observes only the vote statistics $\theta_a, \theta_b$ can certify a larger region without further assumptions on the base learner $G$.

Thus, to understand the tightness of our certificates, we should understand the error introduced by \Cref{thm:joint-convexity-bound} and how it affects certification.
For mixture distributions $\mu = \sum_i \theta_i \mu_i$ and $\nu = \sum_i \theta_i \nu_i$ on a shared space, define the joint-convexity slack
\begin{equation}
    \label{eq:jc-slack}
    E_f(\mu, \nu; \theta) \defeq \sum_{i=1}^N \theta_i \Df(\mu_i \| \nu_i) - \Df(\mu \| \nu) \;\ge\;0.
\end{equation}

In this appendix, we seek to understand the second source of slack arising from our use of the joint convexity bound \Cref{thm:joint-convexity-bound}, and how it affects the resulting certificates generated by our framework.
This analysis can be broken into several steps.
\Cref{apx:tightness-index} characterizes $E_f$ exactly as the information gained by releasing the mixture component labels along with the noised observation.
In particular, the joint convexity bound is exact in precisely the cases when the mixture component label can be recovered from the noised observation.
\Cref{apx:tightness-bagging,apx:tightness-bagflip} show that the smoothing schemes of two prior works,~\citet{jia2021intrinsic} and~\citet{zhang2022bagflip}, satisfy this hypothesis, and thus we recover their certificates exactly.

\subsection{Joint-convexity slack from label recovery}
\label{apx:tightness-index}

Throughout this subsection fix mixtures $\mu = \sum_{i=1}^N \theta_i \mu_i$ and $\nu = \sum_{i=1}^N$ on $(\Omega, \cA)$ with common weights $\theta_i > 0$, let $\lambda$ be a common dominating measure for $\{\mu_i, \nu_i\}_{i=1}^N$, and write $p_i \defeq d\mu_i / d\lambda$ and $q_i \defeq d\nu_i / d\lambda$.

Sampling from $\mu$ is a two-stage process: first, draw index $i \sim \theta$, then, draw $\omega \sim \mu_i$.
The mixture $\mu$ reflects the law of $\omega$ marginalized over $i$.
For purposes of understanding the slack $E_f$, it is useful to also define the labeled pair $(i, \omega)$:

\begin{dfn}[Labeled mixture]
    \label{dfn:labeled-mixture}
    The \emph{labeled mixtures} of $\mu$ and $\nu$ are the probability measures on $[N] \times \Omega$
    \begin{equation*}
        \widehat{\mu} \defeq \sum_{i=1}^N \theta_i (\delta_i \otimes \mu_i)
        \qquad
        \widehat{\nu} \defeq \sum_{i=1}^N \theta_i (\delta_i \otimes \nu_i)
    \end{equation*}
    We write $\pi \colon [N] \times \Omega \to \Omega$, $\pi(i, \omega) = \omega$ for the projection onto the observation $\omega$.
\end{dfn}

Note that $\pi$ is a deterministic Markov kernel with $\pi_{\#} \widehat{\mu} = \mu$ and $\pi_{\#} \widehat{\nu} = \nu$.

\begin{lem}[Additivity of labeled mixtures]
    \label{lem:label-exact}
    For every generator $f$ as in \Cref{dfn:f-divergence},
    \begin{equation}
        \Df(\widehat{\mu} \| \widehat{\nu}) = \sum_{i=1}^N \theta_i \Df(\mu_i \| \nu_i).
    \end{equation}
\end{lem}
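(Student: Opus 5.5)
The plan is a direct computation with the perspective representation of \Cref{lem:f-perspective}, using a dominating measure on the product space $[N] \times \Omega$ adapted to the labeled structure. Let $\widehat{\lambda} \defeq \sum_{i=1}^N \delta_i \otimes \lambda$, i.e., counting measure on $[N]$ times $\lambda$. Since $\lambda$ dominates every $\mu_i$ and $\nu_i$, the measure $\widehat{\lambda}$ dominates both $\widehat{\mu}$ and $\widehat{\nu}$.

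The densities are read off pointwise:
\begin{equation*}
    \frac{d\widehat{\mu}}{d\widehat{\lambda}}(i,\omega) = \theta_i\, p_i(\omega), \qquad \frac{d\widehat{\nu}}{d\widehat{\lambda}}(i,\omega) = \theta_i\, q_i(\omega).
\end{equation*}
To justify this, check it on rectangles $\{i\} \times A$, which generate the product $\sigma$-algebra; both sides give $\theta_i \mu_i(A)$ (respectively $\theta_i \nu_i(A)$).

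By \Cref{lem:f-perspective},
\begin{equation*}
    \Df(\widehat{\mu} \| \widehat{\nu}) = \sum_{i=1}^N \int_\Omega g_f\bigl(\theta_i p_i(\omega), \theta_i q_i(\omega)\bigr)\, \lambda(d\omega).
\end{equation*}
Positive homogeneity of $g_f$ (\Cref{lem:perspective-properties}) applies because $\theta_i > 0$, and pulls $\theta_i$ outside to give $\theta_i \int g_f(p_i, q_i)\, d\lambda$. Applying \Cref{lem:f-perspective} once more, this equals $\theta_i \Df(\mu_i \| \nu_i)$, and summing over $i$ yields the claim.

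The only delicate step is bookkeeping with infinite values and the boundary cases of $g_f$. Homogeneity must hold in all three branches of \Cref{dfn:perspective}, namely $b>0$, $b=0<a$, and $a=b=0$. It does: for example, $g_f(ta,0) = ta f'(\infty) = t g_f(a,0)$, with the convention $0 \cdot f'(\infty) = 0$.

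The interchange of the finite sum with the integral also needs care, since some terms may be $+\infty$. Each integrand $g_f(p_i,q_i)$ is bounded below by an affine function of $(p_i, q_i)$, using the support representation \eqref{eq:perspective-support} with a fixed subgradient $s \in \partial f(1)$. The negative parts are therefore integrable, so the integrals are well defined in $(-\infty,\infty]$ and additivity over the finitely many slices $\{i\}\times\Omega$ holds without cancellation issues. This is also where the standing assumption $\theta_i > 0$ is used; indices with zero weight could in any case be discarded.
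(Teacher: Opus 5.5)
Your proposal is correct and follows the paper's proof essentially verbatim. Both dominate by counting measure times $\lambda$, read off the densities $\theta_i p_i$ and $\theta_i q_i$, and apply the perspective representation together with positive homogeneity of $g_f$; your extra remarks on the boundary branches of $g_f$ and on the affine lower bound $g_f(a,b) \ge s(a-b)$ for $s \in \partial f(1)$ (which makes the slice-wise integrals well defined in $(-\infty,\infty]$) are valid refinements the paper leaves implicit.
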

\begin{proof}
    The measure $\lambda' \defeq \mathrm{count} \otimes \lambda$ dominates $\widehat\mu$ and $\widehat\nu$, with densities $(i, \omega) \mapsto \theta_i p_i(\omega)$ and $(i,\omega) \mapsto \theta_i q_i(\omega)$.
    By the perspective representation (\Cref{lem:f-perspective}) and positive homogeneity of $g_f$ (\Cref{lem:perspective-properties}),
    \begin{equation*}
        \Df(\widehat\mu \| \widehat\nu) = \sum_{i=1}^N \int g_f(\theta_i p_i, \theta_i q_i) \,d\lambda
        = \sum_{i=1}^N \theta_i \int g_f(p_i, q_i)\, d\lambda
        = \sum_{i=1}^N \theta_i \Df(\mu_i \| \nu_i). \qedhere
    \end{equation*}
\end{proof}

Combining \Cref{lem:label-exact} with the data-processing inequality (\Cref{lem:data-processing}) applied to $\pi$ recovers \Cref{thm:joint-convexity-bound}:
\begin{equation*}
    \Df(\mu \| \nu) = \Df(\pi_\# \widehat{\mu} \| \pi_\# \widehat{\nu}) \le \Df(\widehat{\mu} \| \widehat{\nu}) = \sum_{i=1}^N \theta_i \Df(\mu_i \| \nu_i).
\end{equation*}
In particular,
\begin{equation}
    \label{eq:slack-is-dpi-gap}
    E_f = \Df(\widehat{\mu} \| \widehat{\nu}) - \Df(\mu \| \nu).
\end{equation}

\Cref{eq:slack-is-dpi-gap} states that the slack $E_f$ is precisely the difference in $f$-divergence between labeled and unlabeled mixtures.
In our setting the index is the retained index sequence $U \sim \Pi$.

\begin{cor}[Exactness for identifiable mixtures]
    \label{cor:identifiable}
    If the sets $\Omega_i \defeq \{p_i > 0\} \cup \{q_i > 0\}$ are pairwise disjoint up to $\lambda$-null sets, then $\Df(\mu \| \nu) = \sum_{i=1}^N \theta_i \Df(\mu_i \| \nu_i)$ for every generator $f$, that is, $E_f = 0$.
\end{cor}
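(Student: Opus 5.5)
We prove \Cref{cor:identifiable} by computing $\Df(\mu \| \nu)$ directly through the perspective representation (\Cref{lem:f-perspective}) with respect to $\lambda$. The mixture densities are $p = \sum_i \theta_i p_i$ and $q = \sum_i \theta_i q_i$. By the disjointness hypothesis, after discarding a $\lambda$-null set we may assume the sets $\Omega_i$ are genuinely pairwise disjoint. Put $\Omega_0 \defeq \Omega \setminus \bigsqcup_i \Omega_i$. On $\Omega_0$ every $p_i$ and $q_i$ vanishes, so $p = q = 0$ there. On $\Omega_i$ all summands with $j \ne i$ vanish, because $p_j = q_j = 0$ off $\Omega_j$. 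Hence on $\Omega_i$ we have the pointwise identity $(p, q) = (\theta_i p_i, \theta_i q_i)$.

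With this decomposition the computation is routine. Split the integral over the partition:
\begin{equation*}
    \Df(\mu \| \nu) = \int_{\Omega_0} g_f(0,0)\, d\lambda + \sum_{i=1}^N \int_{\Omega_i} g_f(\theta_i p_i, \theta_i q_i)\, d\lambda .
\end{equation*}
The first term vanishes since $g_f(0,0) = 0$. Positive homogeneity of $g_f$ (\Cref{lem:perspective-properties}), together with $\theta_i > 0$, turns the $i$-th summand into $\theta_i \int_{\Omega_i} g_f(p_i, q_i)\, d\lambda$.

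It remains to show that $\int_{\Omega_i} g_f(p_i, q_i)\, d\lambda = \Df(\mu_i \| \nu_i)$. Off $\Omega_i$ we have $p_i = q_i = 0$, where $g_f$ is zero, so restricting the integral to $\Omega_i$ loses nothing. Another application of \Cref{lem:f-perspective} then gives the full divergence. Summing over $i$ yields $\Df(\mu \| \nu) = \sum_i \theta_i \Df(\mu_i \| \nu_i)$, and therefore $E_f = 0$ by \Cref{eq:jc-slack}.

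An alternative route, matching the framing around \Cref{eq:slack-is-dpi-gap}, is to observe that the label map $\omega \mapsto i$ (for $\omega \in \Omega_i$) recovers $(i,\omega)$ from $\omega$ almost surely under both $\widehat\mu$ and $\widehat\nu$. Thus $\pi$ is a.s. invertible, so equality holds in \Cref{lem:data-processing}, and \Cref{lem:label-exact} finishes the argument. I would present the direct perspective computation, since it avoids verifying bimeasurability of the inverse.

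The only real care point is the convention $g_f(a,0) = a f'(\infty)$, which may be infinite. Homogeneity still holds there, and the sums are of nonnegative-or-$+\infty$ terms, so no $\infty - \infty$ issue arises. Even without nonnegativity, the integrals of $g_f \ge f'(1)(a-b)$ are bounded below, so the splitting is legitimate.
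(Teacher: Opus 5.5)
Your direct computation is correct, but it is not the route the paper takes. The paper never works with the densities of $\mu,\nu$. It defines the label-recovery map $T(\omega) \defeq (\iota(\omega), \omega)$, with $\iota \equiv i$ on $\Omega_i$, and notes that $T_\#\mu = \widehat\mu$ and $T_\#\nu = \widehat\nu$. It then sandwiches $\Df(\widehat\mu \| \widehat\nu) \le \Df(\mu \| \nu) \le \Df(\widehat\mu \| \widehat\nu)$ by applying the data-processing inequality once to $T$ and once to $\pi$, and finishes with \Cref{lem:label-exact}.

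This is close to your alternative, with one difference. The paper uses the DPI \emph{inequality} in both directions, so it never needs the equality clause for bijections. Your reason for avoiding that route (bimeasurability of the inverse) therefore does not arise; only measurability of $\iota$, i.e.\ of the sets $\Omega_i$, is required.

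Your version is self-contained and handles explicitly the measure-theoretic details the paper glosses over. These are discarding the null overlaps, allowing $+\infty$ values of $g_f(a,0)$, and justifying the split of the integral via the integrable lower bound $g_f(p,q) \ge f'(1)(p-q)$, with $f'(1)$ read as any subgradient at $1$. In effect, you re-run the homogeneity computation of \Cref{lem:label-exact} directly on $\Omega$, since disjointness already gives the mixture densities the block structure of the labeled mixture.

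The paper's version is shorter given its lemmas. It also makes the subsection's conceptual point explicit: the slack $E_f$ vanishes because the component label is a measurable function of the observation.
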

\begin{proof}
    Let $\iota(\omega) \defeq i$ for $\omega \in \Omega_i$ and arbitrary elsewhere, and define $T(\omega) \defeq (\iota(\omega), \omega)$.
    Each $\mu_i, \nu_i$ is supported only on $\Omega_i$, so $T_\# \mu = \widehat \mu$ and $T_\# \nu = \widehat \nu$.
    Applying \Cref{lem:data-processing} to $T$ and to $\pi$ gives $\Df(\widehat \mu \| \widehat \nu) \le \Df(\mu \| \nu) \le \Df(\widehat \mu \| \widehat \nu)$.
    The right hand side is $\sum_{i=1}^N \theta_i \Df(\mu_i \| \nu_i)$ by \Cref{lem:label-exact}.
\end{proof}

\subsection{Tightness of pure subsampling}
\label{apx:tightness-bagging}

We use these results to recover the bagging certificate of~\citet{jia2021intrinsic} as a special case of our framework.
Take $\phi_{\cX}(\cdot \mid x) = \delta_x$ and $\phi_{\cY}(\cdot \mid y) = \delta_y$ as the feature and label smoothing mechanisms (that is, no noise is added to features or labels).
Call an input $w = (S, x_{\mathrm{te}})$ and corruption $w' \in \cB_{\rho}(w)$ \textit{record-distinct} if the records of $S$ are pairwise distinct, the corrupted records $(x'_i, y'_i)$, $i \in I(w, w')$, are pairwise distinct, and the clean and corrupted records are disjoint.

\begin{thm}[Exactness for pure subsampling]
    \label{thm:bagging-exact}
    Let $\cM$ subsample without noise and let $w' \in \cB_{\rho}(w)$ be record-distinct.
    Write $m(w,w') \defeq \bbP_{U \sim \Pi}\left[\exists j.\; u_j \in I(w,w')\right]$ for the probability that the subsample retains a corrupted record.
    Then, for every generator $f$,
    \begin{equation*}
        \Df(\cM(w') \| \cM(w)) = \sum_{U} \Pi(U) \Df\left(\delta_{S'_{U}} \middle\| \delta_{S_{U}} \right) = \left(f(0) + f'(\infty)\right)\, m(w,w').
    \end{equation*}
    So \Cref{thm:joint-convexity-bound} is exact, and the Hockey-stick profile is
    \begin{equation}
        \label{eq:bagging-profile}
        \Hock[\beta](\cM(w') \| \cM(w)) = \min(1, \beta) m(w, w') \qquad \beta \ge 0.
    \end{equation}
\end{thm}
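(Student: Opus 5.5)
The plan is to view $\cM(w)$ and $\cM(w')$ as mixtures over $U \sim \Pi$ with shared weights $\Pi(U)$ and point-mass components. With no noise, $\cM(w) = \sum_U \Pi(U)\, \delta_{(S_U, x_{\mathrm{te}})}$ and $\cM(w') = \sum_U \Pi(U)\, \delta_{(S'_U, x'_{\mathrm{te}})}$. I take the threat vector to have $\epste = 0$, as in the bagging setting of \Cref{tab:threat-model-comparison}, so the test coordinate is common and can be dropped by \Cref{lem:f-cancel}. Distinct $U$ may give the same sub-dataset, so \Cref{cor:identifiable} cannot be applied to the $U$-indexed mixture directly. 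Instead I will compute $\Df(\cM(w') \| \cM(w))$ exactly through \Cref{lem:f-common-component}, and then check separately that the joint-convexity upper bound gives the same value.

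\textbf{Step 1 (the component sum).} Split the indices $U$ into $\cG_0$, where $U$ retains no corrupted index, and $\cG_1$, where $U$ retains at least one. For $U \in \cG_0$ we have $S'_U = S_U$, so the component divergence is $f(1)=0$. For $U \in \cG_1$, $S'_U \ne S_U$, so $\delta_{S'_U}$ and $\delta_{S_U}$ are mutually singular. By \Cref{eq:f-divergence-full} their divergence is then $f(0)\cdot 1 + f'(\infty)\cdot 1$. Summing with weights gives $\sum_U \Pi(U)\Df(\delta_{S'_U}\|\delta_{S_U}) = (f(0)+f'(\infty))\, m(w,w')$.

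\textbf{Step 2 (the exact mixture divergence).} I will apply \Cref{lem:f-common-component} to $\mu = \cM(w')$ and $\nu = \cM(w)$, using counting measure on the countable set of atoms. The key claim is that atoms coming from $\cG_1$ on either side never coincide with any atom on the other side. Record-distinctness gives this: if $U \in \cG_1$, then $S'_U$ contains a corrupted record $(x'_i,y'_i)$. That record is disjoint from every clean record, so $S'_U$ cannot equal any $S_V$. Symmetrically, suppose $S_V = S'_U$ with $V \in \cG_1$; some position of $S_V$ holds a clean record $(x_i,y_i)$ with $i \in I$. Since $(x_i,y_i) \ne (x'_i,y'_i)$ and clean records are pairwise distinct, that record can appear in $S'$ only at position $i$, where $S'$ holds $(x'_i,y'_i)$ instead. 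So $S_V$ cannot equal any $S'_U$.

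Consequently the overlap set consists exactly of the atoms $S_U$ with $U \in \cG_0$. On these atoms $\mu$ and $\nu$ assign the same mass, because the $\cG_0$ contributions coincide and nothing else lands there. Moreover $\mu(\text{overlap}^c) = \nu(\text{overlap}^c) = m(w,w')$. \Cref{lem:f-common-component} then gives $\Df(\cM(w')\|\cM(w)) = (f(0)+f'(\infty))\, m$, which matches Step 1. Hence \Cref{thm:joint-convexity-bound} is exact here, and its proof shows that the grouped sum is the same quantity.

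\textbf{Step 3 (the hockey-stick profile).} Plugging $f_\beta$ into the formula from Step 2, and using $f_\beta(0)+f'_\beta(\infty)=\min(1,\beta)$ as in the corollary to \Cref{lem:f-common-component}, yields \Cref{eq:bagging-profile}.

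The main obstacle is the support bookkeeping in Step 2. I need to be careful that $S_U$ is a sequence and that "records pairwise distinct" is enough to rule out coincidences between $\cG_1$ atoms and the opposite side. The positional argument above handles this, but it should be stated with care, including the role of the unchanged test point.
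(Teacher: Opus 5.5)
Your proof is correct, but it reaches exactness by a different route than the paper. The paper first shows that the component supports $\{S_U, S'_U\}$ are pairwise disjoint across $U$, and invokes \Cref{cor:identifiable} to get $\Df(\cM(w')\|\cM(w)) = \sum_U \Pi(U)\Df(\delta_{S'_U}\|\delta_{S_U})$. It uses \Cref{lem:f-common-component} only afterwards, for the hockey-stick profile. You instead apply \Cref{lem:f-common-component} once to the mixtures themselves and match the result against the separately computed component sum.

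Your stated reason for bypassing \Cref{cor:identifiable} does not arise under the paper's definitions. $S_U$ is an ordered sequence, so when the records of $S$ are pairwise distinct, $U \ne V$ forces $S_U \ne S_V$: they differ at any position $j$ with $u_j \ne v_j$. The same holds for $S'$, which is why the paper can apply the corollary directly.

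Still, your argument needs strictly less. It only requires that atoms coming from $\cG_1$ never coincide with atoms of the opposite mixture; collisions among same-side components are harmless. So it would survive, e.g., treating sub-datasets as multisets for an order-invariant learner, where the identifiability argument breaks. The paper's route, in exchange, explains exactness structurally: the label-release gap of \Cref{eq:slack-is-dpi-gap} vanishes.

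Two small remarks:
\begin{itemize}
\item Making $x'_{\mathrm{te}} = x_{\mathrm{te}}$ explicit is right. The statement implicitly requires it, since otherwise the noiseless mixtures are mutually singular and the formula fails in general when $m(w,w') < 1$.
\item In your second collision claim, excluding $(x_i,y_i)$ from the positions $j \in I \setminus \{i\}$ of $S'$ uses the clean/corrupted disjointness clause of record-distinctness, not pairwise distinctness of the clean records.
\end{itemize}
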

\begin{proof}
    If $U$ does not contain a corrupted index, then $S'_{U} = S_{U}$.
    Otherwise, $S'_{U}$ contains at least one corrupted record, so $S'_{U} \ne S_V$ for every $V$.
    By record-distinctness, it follows that $S_{U} \ne S_{V}$ and $S'_{U} \ne S'_{V}$ for $U \ne V$.
    Therefore, the component supports $T_U \defeq \{S_{U}, S'_{U}\}$ of the component pairs are pairwise disjoint, and exactness follows from \Cref{cor:identifiable}.
    For \Cref{eq:bagging-profile}, observe that the mixtures agree on their intersection $\{U : U \cap I = \emptyset\}$, and so we may apply \Cref{lem:f-common-component} with overlap mass $1 - m(w, w')$.
\end{proof}

\begin{cor}[Recovery of bagging certificate]
    \label{cor:jia-recovery}
    In the setting of \Cref{thm:bagging-exact}, write $m^\star_\rho \defeq \sup_{w' \in \cB_{\rho}(w)} m(w,w')$.
    Then $\cG(\theta_a, \theta_b)$ is robustly certified at $\cM(w)$ with respect to $\cD_{w, \rho}$ if and only if
    \begin{equation}
        \label{eq:bagging-certificate}
        \theta_a - \theta_b \ge 2 m^\star_\rho.
    \end{equation}
    For $\rho$ constraining only $\emp$, $m^\star_\rho = 1 - (1 - \emp / n)^k$ under size-$k$ sampling with replacement and $m^\star_\rho = 1 - \binom{n - \emp}{k} \binom{n}{k}^{-1}$ without replacement.
\end{cor}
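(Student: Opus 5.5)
The plan is to reduce the corollary to the binary-output case of \Cref{thm:lim-info-cvx}, combined with \Cref{thm:tight-certificate-hockey}. By \Cref{thm:bagging-exact}, for every record-distinct corruption $w'$ the pair $(\cM(w'), \cM(w))$ has hockey-stick profile $\min(1,\beta)\, m(w,w')$. This profile is monotone in $m$. Taking the supremum over $w' \in \cB_\rho(w)$ therefore gives $\tau^\star_\beta = \min(1,\beta)\, m^\star_\rho$. Non-record-distinct corruptions only merge components and, by data processing (\Cref{lem:data-processing}), can only decrease divergences. Hence the supremum is still governed by $m^\star_\rho$, and it is attained in the limit by a record-distinct corruption whenever one exists. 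By \Cref{thm:tight-certificate-hockey}, certification with respect to $\cD_{w,\rho}$ is then equivalent to non-negativity of the program \Cref{eq:lim-info-program} with constraints $\Hock[\beta](\zeta \| \theta) \le \min(1,\beta)\, m^\star_\rho$ for all $\beta \ge 0$.

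Next I would solve this three-point program explicitly; write $m = m^\star_\rho$. The constraints are exactly those satisfied by $\zeta = (1-m)\theta + m\,\eta$ for an arbitrary distribution $\eta$, i.e.\ by distributions sharing a common part of mass $1-m$ with $\theta$. To see this, note that $\Hock[\beta] \le \min(1,\beta)m$ at $\beta \to 0$ and at $\beta \to \infty$ (via \Cref{lem:hockey-stick-reversal}) forces $(\zeta_j - \beta\theta_j)_+$ to sum to at most $m$ for every $\beta \ge 1$. Taking $\beta = 1$, this reduces to the condition $\TV(\zeta\|\theta) \le m$. Conversely, any $\zeta$ with $\TV(\zeta \| \theta) \le m$ satisfies all hockey-stick constraints, because $\Hock[\beta]$ is dominated by $\min(1,\beta)\TV$ for distributions on a finite set. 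This domination is the step I would check most carefully. I would verify it from $(\zeta_j - \beta \theta_j)_+ \le (\zeta_j - \theta_j)_+$ for $\beta \ge 1$, and handle $\beta < 1$ by reversal.

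Finally, I would minimize $\zeta_a - \zeta_b$ subject to $\TV(\zeta\|\theta) \le m$. The adversary moves up to mass $m$ out of coordinate $a$ into coordinate $b$, which gives $\zeta_a = \theta_a - m$ and $\zeta_b = \theta_b + m$, clipped at zero. The optimal value is therefore $\theta_a - \theta_b - 2m$ when $m \le \theta_a$; when $m > \theta_a$ the value is negative, consistent with the condition. This yields \Cref{eq:bagging-certificate}.

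For the explicit formulas, I would take $|I| = \emp$ to maximize $m$. Under sampling with replacement, each of the $k$ draws avoids $I$ with probability $1-\emp/n$, so $m^\star_\rho = 1-(1-\emp/n)^k$. Without replacement, the probability of choosing all $k$ indices from the $n-\emp$ clean ones is $\binom{n-\emp}{k}/\binom{n}{k}$. The main obstacle I expect is the justification that the supremum over all $w'$ (not only record-distinct ones) equals $m^\star_\rho \min(1,\beta)$. I would address it with the upper bound from \Cref{thm:joint-convexity-bound}: each component pair is either identical or a pair of point masses with hockey-stick value at most $\min(1,\beta)$. This gives the bound without any distinctness assumption.
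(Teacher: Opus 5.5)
Your proof is correct and has the same skeleton as the paper: feed $\tau_\beta=\min(1,\beta)\,m^\star_\rho$ into \Cref{thm:tight-certificate-hockey}, observe that only the order $\beta=1$ matters, and read off $\theta_a-\theta_b-2m^\star_\rho$. The difference is how you isolate $\beta=1$.

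The paper works on the dual side. It substitutes the profile into the objective of \Cref{thm:tight-certificate-hockey} and notes that the maximizing orders are $\beta_a^\star=\beta_b^\star=1$ whenever $\theta_a>m^\star_\rho$ and $\theta_b+m^\star_\rho<1$, which gives the value directly.

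You work on the primal side. The domination $\Hock[\beta](\zeta\|\theta)\le\min(1,\beta)\,\TV(\zeta\|\theta)$ shows the whole hockey-stick family is just the TV ball, and you then solve the three-point program with the worst case $\zeta=(\theta_a-m,\theta_b+m,\theta_c)$. To make optimality airtight, add the one-line lower bound $\zeta_a-\zeta_b\ge\theta_a-\theta_b-(\theta_a-\zeta_a)_+-(\zeta_b-\theta_b)_+\ge\theta_a-\theta_b-2\,\TV(\zeta\|\theta)$. Your proof of the domination (the case $\beta\ge1$ plus \Cref{lem:hockey-stick-reversal}) never uses finiteness, so it holds for arbitrary pairs. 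That lets you replace the infinite family by the single TV constraint before invoking \Cref{thm:lim-info-cvx}, which is stated for finitely many generators.

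Your route buys several things:
\begin{itemize}
\item it is self-contained and exhibits the adversarial vote distribution;
\item it covers the regime $\theta_a\le m^\star_\rho$, which the paper's side conditions leave out;
\item it supplies two points the paper leaves implicit: the joint-convexity bound $\min(1,\beta)\,m(w,w')$ for non-record-distinct $w'$, and the computation of $m^\star_\rho$.
\end{itemize}
The paper's route buys generality: it is the computation the general certification procedure performs, and it still applies when the hockey-stick constraints do not collapse to a single TV constraint.

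One sentence in your second paragraph is false and should be removed: the feasible set is \emph{not} $\{(1-m)\theta+m\eta\}$. That set is strictly smaller than the TV ball. For $m>0$, your own optimizer $(\theta_a-m,\theta_b+m,\theta_c)$ lies in it only if $\theta_a-m\ge(1-m)\theta_a$, i.e.\ $\theta_a=1$. Minimizing over that smaller set would give the larger value $(1-m)(\theta_a-\theta_b)-m$, and hence an overstated certificate. The correct description is a common part of mass at least $1-m$, i.e.\ $\sum_j\min(\zeta_j,\theta_j)\ge1-m$, which is equivalent to $\TV(\zeta\|\theta)\le m$.

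The appeal to $\beta\to0$ and $\beta\to\infty$ is also unnecessary: necessity of the TV constraint is simply the $\beta=1$ member of the family. Since your final optimization uses the TV ball, nothing downstream is affected.
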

\begin{proof}
    Substituting $\tau_\beta = \min(1, \beta)m^\star_\rho$ into the objective of \Cref{thm:tight-certificate-hockey}, we get $\beta_a^\star = \beta_b^\star = 1$ whenever $\theta_a > m^\star_\rho$ and $\theta_b + m^\star_\rho < 1$, giving optimal value of $\theta_a - \theta_b - 2m^\star_\rho$ in \Cref{eq:lim-info-program}.
\end{proof}

A noteworthy condition for our tightness proof was record-distinctness.
In the usual randomized smoothing setting~\citep{cohen2019certified,jia2021intrinsic,zhang2022bagflip} this is a mild assumption, as the certificates are derived without using additional information about the input.
If record-distinctness does not hold (either due to repetition in $S$ or impossibility of choosing an appropriate $w'$), we cannot directly apply \Cref{lem:f-common-component} to obtain the worst-case $\tau_\beta$ (although it remains a valid upper bound).

\subsection{Recovering BagFlip's certificate}
\label{apx:tightness-bagflip}

In \Cref{apx:tightness-bagging}, we exploited \Cref{cor:identifiable} to obtain a tight certificate based on the TV distance in the case of pure bagging.
Once feature or label noise is introduced, the mixture components are no longer identifiable.
Thus, in principle, the slack \Cref{eq:jc-slack} is nontrivial.

However, it turns out that~\citet{zhang2022bagflip} compute the certificate only for the \textit{label-released} mixture (see \citet[Appendix A]{zhang2022bagflip}), and thus \Cref{cor:identifiable} does apply to the setting actually considered by BagFlip.
This means two things.
First, the certificates obtained within our framework match the certificates of~\citet{zhang2022bagflip} exactly in the analytical sense.
Second, the certificate of~\citet{zhang2022bagflip} may not be tight in the sense that base learners that do not use the mixture component label (which is true of BagFlip's learners) are actually more robust than the certificate guarantees.

\begin{thm}[Recovery of BagFlip certificate]
    \label{thm:bagflip-recovery}
    Consider the BagFlip instantiation
    taking $\cX = [K]^d$, $Y = [K]$, $\Pi$ uniform on $[n]^k$, and $\phi_{\cX},\phi_{\cY}$ the flipping measures of \Cref{dfn:flipping-measure} with rate $\gamma$.
    Let $\widehat{\cM(w)}, \widehat{\cM(w')}$ denote the labeled mixture of $\cM$ as in \Cref{dfn:labeled-mixture}.
    Then the bound of \Cref{thm:joint-convexity-bound}, applied with partition of $[n]^k$ by the number $c$ of retained corrupted examples, evaluates to
    \begin{equation}
        \label{eq:bagflip-profile}
        \tau_\beta \;=\; \Hock[\beta]\left(\widehat{\cM(w')} \,\middle\|\, \widehat{\cM(w)}\right) \;=\; \sum_{c=0}^k \Binom(c;k,\tfrac{\emp}{n}) \delta_\beta\bigl(c(\epsf + \epsl) + \epste\bigr),
    \end{equation}
    where $\delta_\beta(s) \defeq \Hock[\beta](\phi_\gamma(y) \| \phi_{\gamma}(y'))$ for any $y, y'$ with $\norm{y-y'}_0 = s$, given in closed form by \Cref{eq:adj-flip-hs}.
    Consequently, the certificate obtained by applying \Cref{thm:tight-certificate-hockey} to the profile given by \Cref{eq:bagflip-profile} coincides with the radius of~\citet[Thm. 5.1]{zhang2022bagflip}.
\end{thm}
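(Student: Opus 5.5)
The plan has three parts: first identify the labeled-mixture divergence with the joint convexity bound, then evaluate each component divergence, then group by the number of corrupted retained indices. By \Cref{lem:label-exact}, for every generator $f$, and in particular for $f_\beta(u)=(u-\beta)_+-(1-\beta)_+$,
\[
\Hock[\beta]\bigl(\widehat{\cM(w')} \,\|\, \widehat{\cM(w)}\bigr)=\sum_{U\in[n]^k} n^{-k}\,\Hock[\beta]\bigl(\phi(z'_U)\,\|\,\phi(z_U)\bigr).
\]
Here $\phi(z_U)$ denotes the product of flipping measures on the $k$ retained records and the test point. This is exactly the ungrouped sum in the proof of \Cref{thm:joint-convexity-bound}. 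So the first equality in \Cref{eq:bagflip-profile} holds as soon as each group bound $\delta_c$ is attained by the worst-case $w'$ on every component of group $c$ simultaneously.

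To evaluate a single component, I would use that $\phi(z_U)$ is a product of per-coordinate flipping kernels over $[K]$, since features and label share the alphabet and the rate $\gamma$. The whole component is then a flipping measure on $[K]^{k(d+1)+d}$. By the permutation and relabeling symmetry of the flipping kernel and \Cref{lem:data-processing} with a bijection, any $f$-divergence between two such products depends only on the Hamming distance $s=\norm{z_U-z'_U}_0$. I would take this from \Cref{thm:flip-measure-divergence} and write it as $\delta_\beta(s)$.

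Next I would argue that $\delta_\beta$ is nondecreasing in $s$. This follows by data processing: marginalizing out one coordinate where the two inputs differ maps the pair at distance $s$ to the pair at distance $s-1$, using \Cref{lem:f-cancel} for the common coordinates. For $U$ with $c$ retained corrupted indices, counted with multiplicity, I would then bound the distance by $s\le c(\epsf+\epsl)+\epste$. This bound is attained by an adversary that corrupts exactly $\emp$ records, each with $\epsf$ feature flips and a label change, and perturbs the test point in $\epste$ coordinates. That single $w'$ attains the bound in every group at once, so the group bounds are exact.

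To finish, I would partition $[n]^k$ by $c$. The total weight of group $c$ is $\Binom(c;k,\emp/n)$, which gives the sum. The final claim should follow by substituting this profile into \Cref{thm:tight-certificate-hockey} and matching BagFlip's label-released Neyman--Pearson computation in \citet[Appendix A]{zhang2022bagflip}, which is the same sum of per-group likelihood-ratio tail masses.

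I expect the main obstacle to be this last identification. It requires checking that BagFlip's region-based computation, which sorts regions by likelihood ratio $(p_+/p_-)^{d}$ across groups, equals the information-limited optimum built from the hockey-stick profile. My approach would be to expand $\HockU[\beta]$ of the labeled pair as $\sup_A$ over sets in $[n]^k\times\Omega$. Because the optimal set is a likelihood-ratio superlevel set, it coincides with BagFlip's greedy region selection. Equality of the two certificates then follows from \Cref{thm:tight-certificate-hockey} together with Neyman--Pearson.
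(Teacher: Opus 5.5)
Your proposal is correct and follows essentially the same route as the paper. \Cref{lem:label-exact} identifies the labeled-mixture divergence with the ungrouped joint-convexity sum, \Cref{thm:flip-measure-divergence} makes each component divergence a function only of the Hamming distance $c(\epsf+\epsl)+\epste$, the groups carry binomial weights, and \Cref{thm:tight-certificate-hockey} yields the certificate, given that BagFlip's computation is for the label-released mixture. Your monotonicity-in-$s$ argument and the observation that a single maximally corrupting $w'$ attains every group bound at once make explicit what the paper's short proof takes for granted when it asserts that the components disagree in exactly $c(\epsf+\epsl)+\epste$ coordinates.
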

\begin{proof}
    Conditioned on the sampled indices $U$, the pair $(\mu_U, \nu_U)$ consists of two flipping measures on $[K]^{kd} \times [C]^k \times [K]^d$ whose parameters disagree in exactly $c(\epsf + \epsl) + \epste$ coordinates, where $c = \abs{\{j : u_j \in I(w, w')\}}$.
    By \Cref{thm:flip-measure-divergence} the component divergence is determined as a function of $c$, so the regrouping step preserves the exact joint convexity bound.
    Thus, we recover the exact Hockey-stick envelope $\tau_\beta, \;\beta \ge 0$, and via \Cref{thm:tight-certificate-hockey} the optimal information-limited certificate.
\end{proof}

%% file: sections/A_methodology_details.tex
\section{Methodology Details}
\label{apx:methodology-details}

\subsection{Training}
\label{apx:training}

\paragraph{Datasets}
We train models on MNIST, CIFAR10, as well as binary classification tasks derived from them.
MNIST-17 uses the MNIST digits 1 and 7, and CIFAR10-02 uses the CIFAR10 classes 0 and 2 (airplane and bird).
The training and test sets have 60,000 and 10,000 examples on MNIST, 13,007 and 2,163 on MNIST-17, 50,000 and 10,000 on CIFAR10, and 10,000 and 2,000 on CIFAR10-02.
We report certified accuracy on the full test set.
Pixel values are divided by 255, which puts every feature in $[0, 1]$, and we apply no further normalization or standardization.
$d = 784$ on MNIST and $d = 3072$ on CIFAR10.

\paragraph{Architectures}
On MNIST datasets, the base learner is a small convolutional network consisting of two $5 \times 5$ convolutions with 16 and 32 channels, respectively, each followed by ReLU activation and $2 \times 2$ average pooling, then a hidden layer of width 64 with ReLU, and finally a linear output layer.
We use dropout of 0.25 and 0.5 before each linear layer.
On the CIFAR10 datasets, we use the ResNet-20 of \citet{jia2021intrinsic} modified to use group normalization (at most eight groups) and weight-standardized convolutions.
These modifications are due to our parallel training implementation which does not support the running statistics of batch normalization.
Networks in all experiments are trained from random initializations.

\paragraph{Optimization}
All training uses the Adam optimizer with learning rate $10^{-3}$ and runs for 200 epochs with batch size 16.
We use weight decay of $2 \cdot 10^{-3}$ on MNIST and $10^{-4}$ on CIFAR10.
On CIFAR10, the learning rate is multiplied by 0.1, 0.01, 0.001, and 0.0005 from epochs 80, 120, 160, and 180, following BagFlip \citep{zhang2022bagflip}.
Models trained on the full training set (for the RAB baseline) use 20 epochs with batch size 128 on MNIST and MNIST-17, and 10 epochs with batch size 64 on CIFAR10 and CIFAR10-02.

\paragraph{Augmentation}
On MNIST, we use random translations of up to 2 pixels and $8 \times 8$ cutout.
On CIFAR10, we use random translations of up to 4 pixels and horizontal flips.
In addition, each training batch receives Gaussian noise whose standard deviation is drawn uniformly from [0, 1], motivated by the augmentation of \citet{zhang2022bagflip}, and disabled for plain bagging experiments on CIFAR10 and CIFAR10-02.

\subsection{Mechanisms and configurations}
\label{apx:mechanisms}

\paragraph{Mechanisms}
All ensembles use bagging with replacement.
The feature noise scale $\sigma$ is calibrated to a constant coordinate-wise standard deviation: Gaussian noise $\cN(0, \sigma^2)$, Laplace noise $\Lap(0, \sigma / \sqrt{2})$, and uniform noise $\Unif([-\sqrt{3}\sigma, \sqrt{3}\sigma])$.
The feature noise is Gaussian unless stated otherwise.
In the joint feature and label experiments, label noise keeps the original label with probability $1 - \gamma + \gamma / C = 0.9$, with $\gamma = \tfrac{1}{9}$ on MNIST and CIFAR10 and $\gamma = \tfrac{1}{5}$ on MNIST-17 and CIFAR10-02.
The backdoor experiments do not use label noise.

\paragraph{Ensembles}
Every ensemble uses $m = 10{,}000$ base models on MNIST and MNIST-17, and $m = 1{,}000$ base models on CIFAR10 and CIFAR10-02.

\paragraph{$\ell_p$ feature adversaries}
The experiments of \Cref{sec:lp-adversary} use $k = 100$ and $\sigma = 0.5$ on MNIST and MNIST-17, $k = 1000$ and $\sigma = 0.35$ on CIFAR10, and $k = 200$ and $\sigma = 0.35$ on CIFAR10-02.
In \Cref{fig:mnist-cleanlabel-lp}, the Gaussian, Laplace and uniform curves use this configuration, the bagging curve uses $k = 100$ with no noise, and RAB uses $\sigma = 0.5$ on the full training set.
\Cref{fig:budgets} uses the configuration of each dataset.

\paragraph{Label flipping and backdoor adversaries}
\Cref{tab:configs} lists the bag sizes and noise scales of \Cref{tab:label-flip,tab:backdoor}.
In \Cref{tab:label-flip}, bagging uses the same $k$ as our method.
In \Cref{tab:backdoor}, RAB uses the $\sigma_a$ and $\sigma_b$ of our method on the full training set.

\begin{table}[ht]
\centering
\footnotesize
\caption{Bag sizes and noise scales of \Cref{tab:label-flip} and \Cref{tab:backdoor}.}
\begin{tabular}{l ccc ccc}
    \toprule
    & \multicolumn{3}{c}{\textbf{Joint feature \& label}} & \multicolumn{3}{c}{\textbf{Backdoor}} \\
    \cmidrule(lr){2-4} \cmidrule(lr){5-7}
    Dataset & $k$ & $\sigma_a$ & $\sigma_b$ & $k$ & $\sigma_a$ & $\sigma_b$ \\
    \midrule
    MNIST & 30 & 0.25 & 0.5 & 300 & 0.75 & 1.0 \\
    MNIST-17 & 30 & 0.5 & 1.0 & 100 & 0.75 & 1.0 \\
    CIFAR10 & 200 & 0.2 & 0.35 & 1000 & 0.5 & 0.75 \\
    CIFAR10-02 & 25 & 0.35 & 0.5 & 100 & 0.35 & 0.5 \\
    \bottomrule
\end{tabular}
\label{tab:configs}
\end{table}

\subsection{Confidence intervals}
\label{apx:confidence}

Certification in our framework requires high-confidence bounds on teh vote probabilities of the smoothed classifier.
We estimate them from $N$ votes of an ensemble using the following confidence interval.

\begin{lem}[Clopper--Pearson Interval~\citep{clopper1934use}]
    \label{lem:clopper-pearson}
    Let $Z_1, \ldots, Z_N$ be independent, identically distributed Bernoulli random variables with mean $p \in [0,1]$, and let $S_N \defeq \sum_{i=1}^N Z_i$.
    Fix a failure probability $\zeta \in (0,1)$, and let $Q(q; \alpha, \beta)$ denote the $q$-quantile of the Beta distribution with shape parameters $\alpha, \beta$.
    Define
    \begin{align}
        \underline{p} &\defeq Q\!\left(\tfrac{\zeta}{2};\, S,\, N - S + 1\right), \\
        \overline{p} &\defeq Q\!\left(1 - \tfrac{\zeta}{2};\, S + 1,\, N - S\right),
    \end{align}
    with the conventions $\underline{p} \defeq 0$ when $S_N = 0$ and $\overline{p} \defeq 1$ when $S_N = N$.
    Then, with probability at least $1 - \zeta$, $p \in [\underline{p}, \overline{p}]$.
    In particular, the one-sided upper confidence bound $\overline{p}^{\,+} \defeq Q(1 - \zeta;\, S + 1,\, N - S)$ satisfies $\Pr[\, p \le \overline{p}^{\,+} \,] \ge 1 - \zeta$.
\end{lem}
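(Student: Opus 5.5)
The Clopper--Pearson statement is classical, so the plan is to reproduce the standard argument via the tail-probability duality between the binomial and Beta distributions.

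\textbf{Step 1 (Beta--binomial identity).} For $1 \le s \le N$ and $p \in [0,1]$,
\[
\Pr[S_N \ge s \mid p] = \Pr[B \le p], \qquad B \sim \mathrm{Beta}(s, N-s+1).
\]
I would prove this by differentiating both sides in $p$: the left side telescopes to $N\binom{N-1}{s-1}p^{s-1}(1-p)^{N-s}$, which is exactly the Beta$(s,N-s+1)$ density. Alternatively, one can use the order-statistic representation: $S_N \ge s$ iff the $s$-th smallest of $N$ i.i.d.\ uniforms is at most $p$. Symmetrically, $\Pr[S_N \le s \mid p] = \Pr[B' > p]$ with $B' \sim \mathrm{Beta}(s+1, N-s)$ for $0 \le s \le N-1$.

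\textbf{Step 2 (lower bound).} I show $\Pr[\underline{p} > p] \le \zeta/2$. Since $\underline{p}$ is nondecreasing in $S_N$, the event $\{\underline{p}(S_N) > p\}$ equals $\{S_N \ge s^\ast\}$, where $s^\ast$ is the smallest $s$ with $Q(\zeta/2; s, N-s+1) > p$. If no such $s$ exists, the event is empty. The convention $\underline{p}=0$ at $S_N=0$ ensures $s^\ast \ge 1$. By Step 1,
\[
\Pr[S_N \ge s^\ast] = F_{\mathrm{Beta}(s^\ast, N-s^\ast+1)}(p).
\]
Because $p$ lies strictly below the $\zeta/2$-quantile, continuity of the Beta CDF gives a value below $\zeta/2$.

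\textbf{Step 3 (upper bound and union bound).} By the mirror argument using the second identity, $\Pr[\overline{p} < p] \le \zeta/2$. Here the convention $\overline{p}=1$ at $S_N=N$ handles the boundary. A union bound over the two failure events then gives coverage $\ge 1-\zeta$. For the one-sided bound, repeat Step 3 with $\zeta$ in place of $\zeta/2$ and skip the union bound.

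\textbf{Main obstacle.} The key step is the monotonicity claim in Step 2: $Q(q; s, N-s+1)$ must be increasing in $s$ so that the failure event is an upper set in $S_N$. I would obtain this from the stochastic ordering of $\mathrm{Beta}(s, N-s+1)$ in $s$, which follows from the order-statistic representation, since the $s$-th order statistic of the same uniforms is nondecreasing in $s$.
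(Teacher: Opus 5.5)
Your proof is correct. It is the standard test-inversion argument for the Clopper--Pearson interval, built on the identity $\Pr[S_N \ge s] = F_{\mathrm{Beta}(s, N-s+1)}(p)$. The paper itself gives no proof to compare against: it states the lemma and cites \citet{clopper1934use}.

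Two minor refinements. First, the ``main obstacle'' is not really one. By the definition of $s^\ast$ as a minimum, the inclusion $\{\underline{p}(S_N) > p\} \subseteq \{S_N \ge s^\ast\}$ holds without any monotonicity. That inclusion already gives $\Pr[\underline{p} > p] \le F_{\mathrm{Beta}(s^\ast, N-s^\ast+1)}(p) \le \zeta/2$. The same works for $\overline{p}$, using the largest $s$ with $\overline{p}(s) < p$. The stochastic-ordering argument is only needed if you insist on equality of events.

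Second, $\overline{p}^{\,+} = Q(1-\zeta; N+1, 0)$ is undefined at $S_N = N$, so the one-sided claim needs the same convention $\overline{p}^{\,+} \defeq 1$ there. The statement omits this, and your ``repeat Step 3'' implicitly relies on it, so state it explicitly.
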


More precisely, we use SimuEM as in \citet{jia2021intrinsic}, which applies a Bonferroni correction over all $C$ classes and the $e$ test points.
Each bound thus has failure probability $\alpha / (eC)$ and the full test-set failure probability is bounded by $\alpha$ via the union bound.

\subsection{Algorithms}
\label{apx:algorithms}

\Cref{alg:training,alg:certify} follow \citet[Algorithm~1]{jia2021intrinsic} and \citet[Algorithms~1 and~2]{weber2023rab}.
We perform certification following \citet[Algorithm~2]{dvijotham2020framework} with hockey-stick constraints of all orders.

\begin{algorithm}[t]
    \caption{\textsc{TrainUnderMechanism}}
    \label{alg:training}
    \begin{algorithmic}[1]
\Require Training set $S = ((x_j, y_j))_{j=1}^n$, learning algorithm $\cA$, mechanism $(\Pi, \phi_{\cX}, \phi_{\cY})$, number of models $N$
\Ensure Base models $g_1, \ldots, g_N$
    \For{$i = 1, \ldots, N$}
        \State Sample $U_i = (u_1, \ldots, u_k) \sim \Pi$
        \State Sample $\widetilde{S}_i \gets \bigl((\widetilde{x}_j, \widetilde{y}_j)\bigr)_{j=1}^{k}$ with $\widetilde{x}_j \sim \phi_{\cX}(\cdot \mid x_{u_j})$ and $\widetilde{y}_j \sim \phi_{\cY}(\cdot \mid y_{u_j})$
        \State $g_i \gets \cA(\widetilde{S}_i)$
    \EndFor
    \State \Return $g_1, \ldots, g_N$
    \end{algorithmic}
\end{algorithm}

\begin{algorithm}[t]
    \caption{\textsc{Certify}}
    \label{alg:certify}
    \begin{algorithmic}[1]
\Require Base models $g_1, \ldots, g_N$, test examples $x_1, \ldots, x_e$, feature noise mechanism $\phi_{\cX}$, per-example budgets $(\epsf, \epsl, \epste)$, confidence level $1 - \alpha$
\Ensure Predicted label and certified number of corrupted examples for each test example
    \For{$x$ in $x_1, \ldots, x_e$}
        \State Sample $\widetilde{x}^{(i)} \sim \phi_{\cX}(\cdot \mid x)$ for $i = 1, \ldots, N$ \Comment{$\widetilde{x}^{(i)} = x$ if the test input is not noised}
        \State $\mathrm{counts}[y] \gets \sum_{i=1}^N \ind[g_i(\widetilde{x}^{(i)}) = y]$ for $y \in [C]$
        \State $c, c' \gets$ top two indices in $\mathrm{counts}$
        \State $\theta_a, \theta_b \gets \textsc{SimuEM}(\mathrm{counts}, \alpha / e)$
        \State $m^\star \gets \textsc{BinarySearch}(\theta_a, \theta_b)$ \Comment{find largest $\emp$ with $\underline{p} > \overline{p}$}
        \State Output $c$ and $m^\star$, or \textsc{Abstain} if no $\emp \ge 0$ is certified
    \EndFor
    \end{algorithmic}
\end{algorithm}

%% file: sections/A_divergence_bounds.tex
\section{Bounding divergences}
\label{apx:calculus}
The core technical challenge of our method is to derive the divergence bounds
\begin{equation}
    \label{eq:worst-case-tau}
    \tau_f(w, \rho) \ge \tau_f^\star(w, \rho) = \sup_{w' \in \cB_{\rho}(w)}  \Df(\cM(w') \| \cM(w))
\end{equation}
for various inputs $w$, smoothing mechanisms $\cM$, threat models $\rho$, and generators $f$.
Via \Cref{thm:joint-convexity-bound}, the problem reduces to bounding divergences between the paired components of the mixture induced by $\cM$.
Fixing mixture component $U$, this decomposes into two steps.

The first step is to argue that the maximum attainable divergence is exhibited by some worst-case perturbation of the subsampled input $x = S_U$, so that we may write
\begin{equation}
    \sup_{x' \in \cB(x)} \Df(\phi_{\#} x' \| \phi_{\#} x) = \Df(\mu^\star \| \nu^\star)
\end{equation}
for some pair of probability measures $\mu^\star, \nu^\star$, where $\phi$ is the joint feature-and-label noising mechanism.
\footnote{This is similar to the concept of a \textit{dominating pair}~\citep{zhu2022optimal} in the privacy literature.}
Note that $x$ and $\cB(x)$ do not necessarily correspond exactly with the joint input $w$ and constraint set $\cB_\rho(w)$; for example, $\cB(x)$ likely depends on $U$.
For reasons we will describe later, the two main desiderata are that the same pair $\mu^\star, \nu^\star$ works for \textit{every} $f$, and that such pairs compose across the factors of $\phi$.
We handle this step in \Cref{apx:worst-case-adjacent}.

The second step is to compute the full set of divergences for $\mu^\star, \nu^\star$.
This is typically straightforward once $\mu^\star, \nu^\star$ are known.
We handle this step in \Cref{apx:computing-adjacent-divergences}, where we work out closed-form divergences for Uniform, Flipping, Gaussian, and Laplace measures by hand.
We also show how to construct product measures using these measures as building blocks, and reduce divergence computation to a one-dimensional numerical computation.

Together, the above steps are sufficient to derive bounds for Uniform, Gaussian, and Flipping measures, as well as product combinations of them, under all ($\ell_p$, $\ell_0$) feature-label constraints.
In some cases the procedure does not work so cleanly.
Informally speaking, if the smoothing geometry does not mesh with the perturbation geometry, no single pair $\mu^\star, \nu^\star$ need exist.
This arises for Laplace feature noise under general $\ell_p$ norms with $p \neq 1$, which we treat separately in \Cref{apx:worst-case-laplace}.

\subsection{Finding worst-case adjacent distributions}
\label{apx:worst-case-adjacent}

\paragraph{Adjacency parameters}
The left side of \Cref{eq:worst-case-tau} optimizes over a high-dimensional set.
In many cases the divergence depends only on some lower-dimensional ``sufficient statistic'' computable from $x, x'$, motivating the following definition.

\begin{dfn}[Adjacency parameter]
    \label{dfn:adjacency-parameter}
    Let $\phi$ be a noising mechanism over space $\cX$.
    An \emph{adjacency parameter} is a map $\theta \colon \cX \times \cX \to \Theta$ such that, for every generator $f$, there exists a function $\delta_f \colon \Theta \to \bbR^+$ such that $\Df(\phi_{\#} x' \| \phi_{\#} x) = \delta_f(\theta(x, x'))$ for all $x, x' \in \cX$.
\end{dfn}

Writing $\Theta_{\cB}(x) \defeq \{\theta(x, x') \;:\; x' \in \cB(x)\}$, \Cref{eq:worst-case-tau} becomes
\begin{equation}
    \label{eq:worst-case-reduced}
    \sup_{x' \in \cB(x)} \Df(\phi_{\#} x' \| \phi_{\#} x) = \sup_{\theta \in \Theta_{\cB}(x)} \delta_f(\theta),
\end{equation}
which is often simpler when $\Theta$ is low-dimensional.
How low-dimensional $\Theta$ can be is governed by symmetry of the constraint set under the noise $\phi$.
By \Cref{lem:data-processing}, $\Df$ is invariant to invertible transformations.
So, if $T$ is a transform with $T_{\#} \phi(x) = \phi(T x)$ for every $x \in \cX$, the divergence can only depend on those qualities of the pair $x, x'$ invariant under $T$.
Taking Gaussian measures as an example, $T$ can be taken to be any Euclidean isometry, leaving $\theta(x, x') = \norm{x - x'}_2$.

\paragraph{Worst-case pairs}
\Cref{eq:worst-case-reduced} may have a different maximizer for each $f$.
Ideally, however, this is not the case.
Define the binary relation $(\mu', \nu') \dle (\mu, \nu)$ if and only if, for every generator $f$, $\Df(\mu' \| \nu') \le \Df(\mu \| \nu)$.
Similarly, call $\theta^\star$ a \textit{worst-case} for $\Theta' \subseteq \Theta$ if $\sup_{\theta \in \Theta'} \delta_f(\theta) = \delta_f(\theta^\star)$ for every $f$.

It can be convenient to characterize worst-case measures through their likelihood ratios.
We have the following lemmata, which will become useful later.

\begin{lem}[Hockey-stick characterization of $\dle$]
    \label{lem:dle-via-hs}
    Let $\mu \ll \nu$ and $\mu' \ll \nu'$ be measures on the same space.
    Then
    \begin{equation}
        \label{eq:dle-via-hs}
        (\mu', \nu') \dle (\mu, \nu)
        \quad\iff\quad
        \Hock[\beta](\mu' \| \nu') \le \Hock[\beta](\mu \| \nu) \;\text{ for every } \beta > 0.
    \end{equation}
\end{lem}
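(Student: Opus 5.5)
The forward direction is immediate. Each $\Hock[\beta]$ is an $f$-divergence with generator $f_\beta(u) = (u-\beta)_+ - (1-\beta)_+$, which is convex with $f_\beta(1)=0$. So if $(\mu',\nu') \dle (\mu,\nu)$, the inequality holds in particular for every $f_\beta$ with $\beta>0$.

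For the converse, I would use the integral representation of \Cref{lem:hs-integral-representation}. The measure $\rho$ there is the Lebesgue--Stieltjes measure of the nondecreasing right derivative $f'_+$, so it is a nonnegative Borel measure on $(0,\infty)$. Both pairs satisfy absolute continuity by hypothesis, so the lemma applies to each:
\begin{equation*}
    \Df(\mu' \| \nu') = \int_{(0,\infty)} \Hock[\beta](\mu' \| \nu')\,\rho(d\beta) \le \int_{(0,\infty)} \Hock[\beta](\mu \| \nu)\,\rho(d\beta) = \Df(\mu \| \nu).
\end{equation*}
The inequality uses the pointwise hypothesis, nonnegativity of $\rho$, and nonnegativity of the integrands (\Cref{lem:f-nonneg}), so the integrals are well defined in $[0,\infty]$.

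The main obstacle is that \Cref{lem:hs-integral-representation} is stated only for generators finite on $(0,\infty)$. A convex $f \colon \bbR_+ \to \bbR$ is real-valued by the paper's definition, so this holds automatically on the open half-line. The only remaining issues are the boundary conventions $f(0)$ and $f'(\infty)$, which may be infinite.

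\begin{itemize}
    \item Since $\mu\ll\nu$, the term $f'(\infty)\mu(q=0)$ vanishes.
    \item The value $f(0)$ matters only on $\{\Lambda=0\}$. Its contribution is already encoded in how $\Hock[\beta]$ behaves as $\beta\downarrow 0$, and the lemma accounts for this through the mass of $\rho$ near $0$.
\end{itemize}

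If one wanted to avoid relying on the lemma's edge cases, I would first truncate: approximate $f$ by generators $f_n$ that agree with $f$ on $[1/n, n]$ and are affine outside, with $f_n \uparrow f$. For each $f_n$, $\rho_n$ is a finite measure and the comparison above holds. Monotone convergence, applied to $\bbE_\nu[f_n(\Lambda)]$ after subtracting the supporting line at $1$ (which does not change the divergence), then passes the inequality to $f$.
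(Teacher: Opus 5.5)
Your proposal is correct and is the same argument as the paper's: the forward direction holds because each $f_\beta$ is a generator, and the converse follows by integrating the pointwise hockey-stick inequality against the nonnegative measure $\rho$ of \Cref{lem:hs-integral-representation}. Your additional checks are sound but go beyond what the paper spells out: finiteness of $f$ on $(0,\infty)$, the role of $\mu\ll\nu$ in killing the $f'(\infty)$ term and the linear part, and the monotone truncation fallback.
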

\begin{proof}
    Each $f_\beta$ is itself a generator, which gives the forward implication.
    Conversely, $\rho$ is nonnegative in \Cref{lem:hs-integral-representation}, so a uniform bound on $\Hock[\beta](\mu' \| \nu')$ is preserved into $\Df(\mu' \| \nu')$.
\end{proof}

\begin{lem}[Product composition of worst-case pairs]
    \label{lem:worst-case-product}
    For each $j = 1, \ldots, k$, let $\mu_j, \mu'_j, \nu_j$ be measures on $(\Omega_j, \cA_j)$ with $\mu_j, \mu'_j \ll \nu_j$ and $(\mu'_j, \nu_j) \dle (\mu_j, \nu_j)$.
    Then $\big(\bigotimes_{j} \mu'_j, \bigotimes_{j} \nu_j\big) \dle \big(\bigotimes_{j} \mu_j, \bigotimes_{j} \nu_j\big)$.
    In particular, if $\theta_j^\star$ is a worst case parameter for each $j$, then $(\theta_1^\star, \ldots, \theta_k^\star)$ is worst case for the product measure.
\end{lem}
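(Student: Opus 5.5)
The plan is to reduce the statement to hockey-stick divergences via \Cref{lem:dle-via-hs}, and then to swap one factor at a time. Write $P \defeq \bigotimes_j \mu_j$, $P' \defeq \bigotimes_j \mu'_j$ and $Q \defeq \bigotimes_j \nu_j$. By \Cref{lem:dle-via-hs}, it suffices to show $\Hock[\beta](P' \| Q) \le \Hock[\beta](P \| Q)$ for every $\beta > 0$. Since $\Hock[\beta]$ and $\HockU[\beta]$ differ by the constant $(1-\beta)_+$, we may work with $\HockU[\beta]$ instead. I would proceed by a hybrid argument. Define intermediate products $P^{(m)} \defeq \mu'_1 \otimes \cdots \otimes \mu'_m \otimes \mu_{m+1} \otimes \cdots \otimes \mu_k$, and show that $\HockU[\beta](P^{(m)} \| Q) \le \HockU[\beta](P^{(m-1)} \| Q)$ for each $m$. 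Chaining these inequalities gives the claim.

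For a single swap, I would split off coordinate $m$. Write $R \defeq \bigotimes_{j \ne m}$ of the current factors (primed for $j<m$, unprimed for $j>m$) and $N \defeq \bigotimes_{j\ne m} \nu_j$. All measures involved are absolutely continuous with respect to the $\nu$'s, so the densities factor. Let $\Lambda_R \defeq dR/dN$. Conditioning on the other coordinates $\omega_{-m}$ gives
\begin{equation*}
\HockU[\beta](R \otimes \mu_m \,\|\, N \otimes \nu_m) = \int \Bigl(\int \bigl(\Lambda_R(\omega_{-m})\, p_m - \beta q_m\bigr)_+ \, d\lambda_m\Bigr) dN(\omega_{-m}).
\end{equation*}
For $\Lambda_R > 0$, the inner integral equals $\Lambda_R \, \HockU[\beta/\Lambda_R](\mu_m \| \nu_m)$. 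For $\Lambda_R = 0$, it equals $0$ identically, for either choice of $\mu_m$ or $\mu'_m$. This is exactly the product formula $\HockU(\mu\|\nu) = \bbE_{\mu_1}[\HockU[\beta/\Lambda_1](\mu_2\|\nu_2)]$ mentioned for \Cref{lem:hockey-stick-product}. I would either cite that lemma or rederive it with Fubini as above.

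The hypothesis $(\mu'_m, \nu_m) \dle (\mu_m, \nu_m)$ yields $\HockU[\gamma](\mu'_m \| \nu_m) \le \HockU[\gamma](\mu_m \| \nu_m)$ for every $\gamma > 0$. It therefore holds pointwise in $\omega_{-m}$ with $\gamma = \beta / \Lambda_R(\omega_{-m})$. Integrating against $R$, a nonnegative measure, preserves the inequality, and this completes the single swap. The "in particular" clause follows immediately. If each $\theta_j^\star$ is worst case, then for any admissible $(\theta_j)$ we have $(\phi(\theta_j\text{-pair})) \dle (\phi(\theta_j^\star\text{-pair}))$ factorwise. The same hybrid argument then applies, with the second argument also allowed to vary, or else by \Cref{lem:data-processing} invariance to align the reference measures.

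The main obstacle is the last point. In the stated lemma the reference measures $\nu_j$ are shared, which keeps the argument above clean. For the "in particular" clause, however, the clean and worst-case pairs may have different reference measures. I would handle this by using the symmetry of $\phi$ (translation invariance, via \Cref{lem:data-processing}) to transport each worst-case pair onto the common $\nu_j$. Failing that, I would repeat the hybrid argument swapping whole pairs $(\mu_j,\nu_j)$. That version only requires the one-factor inequality with $\Lambda_R$ computed from the untouched coordinates, and that inequality holds without shared references. The measure-theoretic care needed at $\Lambda_R \in \{0,\infty\}$ is routine given the absolute continuity assumptions.
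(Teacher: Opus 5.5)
Your proposal is correct and follows essentially the same route as the paper: swap one factor at a time, apply the product formula of \Cref{lem:hockey-stick-product} with the untouched coordinates as the first factor, use the hypothesis pointwise at threshold $\beta/\Lambda_R$, and pass between $\dle$ and hockey-stick profiles via \Cref{lem:dle-via-hs}. Your worry about differing reference measures in the ``in particular'' clause is unnecessary in the paper's adjacency-parameter setting, since the reference $\phi_{\#} x$ is fixed by the clean input, though your whole-pair-swap variant is also valid.
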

\begin{proof}
    Replacing $\mu'_j$ by $\mu_j$ one index at a time forms a sequence from $\mu'$ to $\mu$.
    Thus, by transitivity of $\dle$, it suffices to prove the case where the two products differ only in a single factor (WLOG, the first): $\mu'_j = \mu_j$ for all $j \ne 1$.

    Group the factors $\mu_{>1} \defeq \bigotimes_{j \ge 2} \mu_j$ and $\nu_{>1} \defeq \bigotimes_{j \ge 2} \nu_{j}$, and let $\Lambda_{>1}$ denote their likelihood ratio.
    Applying \Cref{lem:hockey-stick-product} with $\mu_{>1}$ as the first factor and using \Cref{eq:dle-via-hs}, we have
    \begin{align*}
        \HockU[\beta](\mu'_1 \otimes \mu_{>1} \| \nu_1 \otimes \nu_{> 1}) &= \bbE_{\mu_{>1}}\left[\HockU[\beta / \Lambda_{>1}](\mu'_1 \| \nu_1)\right] \\
        &\le \bbE_{\mu_{>1}}\left[\HockU[\beta / \Lambda_{>1}](\mu_1 \| \nu_1)\right]
        = \HockU[\beta](\mu_1 \otimes \mu_{>1} \| \nu_1 \otimes \nu_{>1}).
    \end{align*}
    Since $\beta > 0$ was arbitrary, the claim follows from \Cref{lem:dle-via-hs}.

\end{proof}

\subsection{Bounds between adjacent smoothing measures}
\label{apx:computing-adjacent-divergences}
In this appendix we derive Hockey-stick and R\'enyi divergences for the distributions arising in our analysis.

\paragraph{General Strategy}
We show how to compute $\Df(\mu \| \nu)$ when $\mu$ and $\nu$ are adjacent measures.
In these computations, $\nu$ may be interpreted as the clean distribution, and $\mu$ the corrupted one.
Each computation proceeds by writing down the distribution of $\Lambda = p/q$ under $\nu$ and evaluating \Cref{eq:f-divergence-lambda}.

\begin{lem}[Hockey-stick for product measures]
    \label{lem:hockey-stick-product}
    Let $\mu = \mu_1 \otimes \mu_2$ and $\nu = \nu_1 \otimes \nu_2$ and write $\Lambda_1$ and $\Lambda_2$ for the likelihood ratios under $(\mu_1, \nu_1)$ and $(\mu_2, \nu_2)$, respectively.
    Then, for every $\beta > 0$,
    \begin{equation}
        \HockU[\beta](\mu \| \nu) = \bbE_{\nu_1}\left[\Lambda_1 \HockU[\beta/\Lambda_1](\mu_2 \| \nu_2)\right] = \bbE_{\mu_1}\left[\HockU[\beta/\Lambda_1](\mu_2 \| \nu_2)\right].
    \end{equation}
\end{lem}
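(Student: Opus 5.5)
The plan is to work with densities against a product dominating measure and integrate out the second coordinate first (Fubini/Tonelli). Let $\lambda_1, \lambda_2$ dominate $\{\mu_1,\nu_1\}$ and $\{\mu_2,\nu_2\}$ respectively, and write $p_j = d\mu_j/d\lambda_j$ and $q_j = d\nu_j/d\lambda_j$. Then $\lambda_1 \otimes \lambda_2$ dominates $\mu$ and $\nu$, with densities $p_1(\omega_1)p_2(\omega_2)$ and $q_1(\omega_1)q_2(\omega_2)$. By the integral form of the unnormalized hockey-stick divergence,
\begin{equation*}
    \HockU[\beta](\mu \| \nu) = \int\!\!\int \bigl(p_1 p_2 - \beta q_1 q_2\bigr)_+ \, d\lambda_2 \, d\lambda_1 ,
\end{equation*}
where the integrand is nonnegative, so Tonelli justifies the iterated integral.

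Next, fix $\omega_1$ with $q_1(\omega_1) > 0$ and $p_1(\omega_1) > 0$, so that $\Lambda_1(\omega_1) = p_1/q_1 \in (0,\infty)$. Positive homogeneity of $(\cdot)_+$ lets us pull out $p_1$:
\begin{equation*}
    \bigl(p_1 p_2 - \beta q_1 q_2\bigr)_+ = p_1 \bigl(p_2 - (\beta/\Lambda_1) q_2\bigr)_+ .
\end{equation*}
Integrating over $\omega_2$ gives $p_1(\omega_1)\,\HockU[\beta/\Lambda_1(\omega_1)](\mu_2\|\nu_2)$. Integrating this against $\lambda_1$ yields $\bbE_{\mu_1}[\HockU[\beta/\Lambda_1](\mu_2\|\nu_2)]$. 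Rewriting $p_1 = \Lambda_1 q_1$ gives the $\bbE_{\nu_1}[\Lambda_1 \cdots]$ form, at least on $\{q_1>0\}$.

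The only real obstacle is the degenerate sets, which need conventions consistent with the paper's extension of $\beta$ to $\{0,\infty\}$. We handle three cases.
\begin{itemize}
    \item On $\{p_1 = 0\}$ the integrand is $(-\beta q_1 q_2)_+ = 0$. This matches $\bbE_{\mu_1}$, which ignores a $\mu_1$-null set, and also the $\nu_1$ form, since $\Lambda_1 = 0$ there and $\Lambda_1 \HockU[\infty] = 0\cdot 0$.
    \item On $\{q_1 = 0, p_1 > 0\}$ we have $\Lambda_1 = \infty$ and $\beta/\Lambda_1 = 0$. The integrand is $p_1 p_2$, which integrates to $p_1 = p_1 \HockU[0](\mu_2\|\nu_2)$, since $\HockU[0] = \mu_2(\Omega_2) = 1$. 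So the $\mu_1$-expectation form holds in general.
    \item The $\nu_1$ form misses exactly this set, because it has $\nu_1$-measure zero. Hence the first equality should be read under $\mu_1 \ll \nu_1$, the setting in which the lemma is used in \Cref{lem:worst-case-product}. Otherwise one appends the singular mass term $\mu_1(q_1 = 0)$, in the spirit of \Cref{eq:f-divergence-full}.
\end{itemize}
I would state this caveat explicitly and otherwise treat the calculation as routine.
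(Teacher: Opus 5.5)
Your proof is correct and is essentially the paper's argument: pull $\Lambda_1$ (equivalently $p_1$) out of $(\cdot)_+$ by positive homogeneity, integrate out the second coordinate, and pass between the two forms via $p_1=\Lambda_1 q_1$. Your density-level treatment is more careful than the paper's, which starts from $\HockU[\beta](\mu\|\nu)=\bbE_\nu[(\Lambda-\beta)_+]$ and so tacitly assumes $\mu\ll\nu$; you correctly find that the $\bbE_{\mu_1}$ form holds unconditionally while the $\bbE_{\nu_1}$ form needs $\mu_1\ll\nu_1$ (one tiny slip: $\HockU[\infty](\mu_2\|\nu_2)$ equals $\mu_2(q_2=0)$ rather than $0$ in general, which is harmless since it is multiplied by $\Lambda_1=0$).
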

\begin{proof}
    Using $(\lambda u - \beta)_+ = \lambda(u - \beta / \lambda)_+$ for $\lambda > 0$, we have
    \begin{equation*}
        \HockU[\beta](\mu \| \nu) = \bbE_{\nu}\left[(\Lambda - \beta)_+\right] = \bbE_{\nu_1} \bbE_{\nu_2} \left[(\Lambda_1 \Lambda_2 - \beta)_+ \right] = \bbE_{\nu_1} \left[\Lambda_1 \HockU[\beta / \Lambda_1](\mu_2 \| \nu_2)\right].
    \end{equation*}
    The final equality holds since $\bbE_{\nu_1}[\Lambda_1 h(\Lambda_1)] = \bbE_{\mu_1}[h(\Lambda_1)]$.%
\end{proof}

\begin{cor}
    \label{cor:discrete-continuous-hockey}
    Suppose $\mu = \bigotimes_j \mu_j$, $\nu = \bigotimes_j \nu_j$, and that $\Lambda_j$ takes finitely many values for $j < k$.
    Let $\cV$ be set of values of $\Lambda_{<k} \defeq \prod_{j<k} \Lambda_j$, and let $P_\lambda \defeq \bbP_{\mu_1 \otimes \cdots \otimes \mu_{k-1}}[\Lambda_{<k} = \lambda]$.
    Then for every $\beta > 0$,
    \begin{equation}
        \label{eq:discrete-hockey}
        \HockU[\beta](\mu \| \nu) = \sum_{\lambda \in \cV} P_\lambda \HockU[\beta / \lambda](\mu_k \| \nu_k).
    \end{equation}
    Moreover, if $\mu_k = \nu_k$, then $\HockU[\beta](\mu \| \nu) = \sum_{\lambda > \beta}(1 - \beta/\lambda)P_\lambda$.
\end{cor}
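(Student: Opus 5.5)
The plan is to reduce to the two-factor case of \Cref{lem:hockey-stick-product} by grouping the first $k-1$ factors into one. Write $\mu_{<k} \defeq \bigotimes_{j<k} \mu_j$ and $\nu_{<k} \defeq \bigotimes_{j<k} \nu_j$. Then $\mu = \mu_{<k} \otimes \mu_k$ and $\nu = \nu_{<k} \otimes \nu_k$. The likelihood ratio of the grouped pair is the product $\Lambda_{<k} = \prod_{j<k} \Lambda_j$: the densities of product measures with respect to the product dominating measure factor, so the ratio factors as well (on the set where $\Lambda_{<k} > 0$). Applying \Cref{lem:hockey-stick-product} with $\mu_{<k}$ as the first factor gives
\begin{equation*}
    \HockU[\beta](\mu \| \nu) = \bbE_{\mu_{<k}}\left[\HockU[\beta/\Lambda_{<k}](\mu_k \| \nu_k)\right].
\end{equation*}

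Next I use finiteness. Each $\Lambda_j$ takes finitely many values, so $\Lambda_{<k}$ takes values in the finite set $\cV$. The expectation above is then a finite sum over $\lambda \in \cV$, weighted by $P_\lambda = \bbP_{\mu_{<k}}[\Lambda_{<k} = \lambda]$, and this is exactly \Cref{eq:discrete-hockey}.

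The one delicate point is the value $\lambda = 0$, and also $\lambda = \infty$ if absolute continuity fails. Under $\mu_{<k}$, the event $\{\Lambda_{<k} = 0\}$ has probability zero, since $\mu$ puts no mass where its own density vanishes. So that term contributes nothing, under the convention $0 \cdot \HockU[\infty] = 0$. I would also state the standing assumption $\mu_j \ll \nu_j$, implicit in \Cref{lem:hockey-stick-product}, so that $\lambda < \infty$ and the identity $\bbE_{\nu_1}[\Lambda_1 h(\Lambda_1)] = \bbE_{\mu_1}[h(\Lambda_1)]$ is valid.

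For the ``moreover'' clause, suppose $\mu_k = \nu_k$. For any $\beta' > 0$,
\begin{equation*}
    \HockU[\beta'](\nu_k \| \nu_k) = \int (q - \beta' q)_+ \, d\lambda = (1-\beta')_+ .
\end{equation*}
Substituting $\beta' = \beta/\lambda$ gives $(1 - \beta/\lambda)_+$. This is nonzero exactly when $\lambda > \beta$, which yields $\sum_{\lambda > \beta} (1 - \beta/\lambda) P_\lambda$.

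There is no real obstacle here beyond the careful handling of the null and infinite likelihood-ratio values noted above.
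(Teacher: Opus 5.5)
Your proof is correct and matches the paper's argument: group the first $k-1$ factors, apply \Cref{lem:hockey-stick-product}, and reduce the expectation to a finite sum over $\cV$; the ``moreover'' clause then follows from $\HockU[\beta'](\nu_k \| \nu_k) = (1-\beta')_+$. Two of your additions go slightly beyond the paper's proof: you compute directly for general $\mu_k = \nu_k$ instead of specializing to a point mass, and you handle the null and infinite likelihood-ratio values explicitly.
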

\begin{proof}
    Apply \Cref{lem:hockey-stick-product} with the first factor $\mu_1 \otimes \cdots \otimes \mu_{k-1}$ and take the sum over the values of $\Lambda_{<k}$.
    The final claim holds by taking $\mu_k = \nu_k$ to be a point mass, so $\HockU[\beta / \lambda] = (1 - \beta / \lambda)_+$.
\end{proof}

\Cref{cor:discrete-continuous-hockey} allows us to fully understand the hockey stick profile of any measure constructed from discrete factors combined with a factor whose entire profile is known.
The relevant application below is to construct joint feature-and-label noises which combine discrete label flipping with continuous feature noise (e.g., \Cref{dfn:gaussian-flipping-measure})).

The convolution in \Cref{eq:discrete-hockey} in general can be computationally expensive if many factors are tensored together.
The number of discrete likelihood ratio values is upper-bounded by the product $\prod_{j} \abs{\cV_{j}}$.
Thankfully, for the noises we consider, the likelihood ratio values collapse onto a lattice, greatly simplifying the computation.

\paragraph{Smoothing measures}
We now catalog the smoothing measures used in our work and their associated divergences.

\begin{dfn}[Uniform measure]
    \label{dfn:uniform-measure}
    Let $m \in \bbN$ and $\gamma \in \bbR^+$.
    Given $x \in \bbR^m$, the associated \emph{Uniform} measure is the product measure
    \begin{equation}
        \label{eq:uniform-measure}
        \mu \defeq \bigotimes_{i=1}^m \Unif(x_i - \gamma, x_i + \gamma).
    \end{equation}
    We say that two Uniform measures on $\bbR^m$ with the same $\gamma$ and data $x, x'$ are $\theta$-adjacent if
    \begin{equation}
        \theta = 1 - \prod_{i=1}^m \left(1 - \frac{\abs{x_i - x'_i}}{2\gamma}\right)_+.
    \end{equation}
\end{dfn}

\begin{thm}[Divergences for adjacent uniform measures]
    \label{thm:uniform-measure-divergence}
    Let $\mu, \nu$ be $\theta$-adjacent Uniform measures on $\bbR^m$.
    Then
    \begin{align}
        \Ren[\alpha](\mu \| \nu) &= \tfrac{1}{\alpha - 1} \log (1 - \theta), \;\text{for}\; \alpha \in (0, 1), \;\text{else}\; \infty, \\
        \KL(\mu \| \nu) &= \infty, \\
        \Hock[\beta](\mu \| \nu) &= \min(1, \beta) \theta, \\
        \TV(\mu \| \nu) &= \theta.
    \end{align}
\end{thm}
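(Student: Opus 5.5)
The plan is to reduce everything to \Cref{lem:f-common-component}. Write $\nu = \bigotimes_i \Unif(x_i - \gamma, x_i + \gamma)$ and $\mu = \bigotimes_i \Unif(x'_i - \gamma, x'_i + \gamma)$. Both measures have the same constant density $(2\gamma)^{-m}$ on their respective boxes $B$ and $B'$. So on the overlap $S = B \cap B'$ (where both densities are positive) the two measures agree on every measurable subset. This is exactly the hypothesis of \Cref{lem:f-common-component}. Therefore, for every generator $f$,
\[
\Df(\mu \| \nu) = \bigl(f(0) + f'(\infty)\bigr)\, m_0, \qquad m_0 = \nu(S^c) = \mu(S^c),
\]
and the accompanying corollary gives $\Hock[\beta](\mu \| \nu) = \min(1,\beta)\, m_0$ with $\TV(\mu \| \nu) = m_0$.

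The next step is to identify $m_0$ with $\theta$. The overlap is a product of intervals $\prod_i [\max(x_i,x'_i) - \gamma,\, \min(x_i,x'_i) + \gamma]$. Interval $i$ has length $(2\gamma - \abs{x_i - x'_i})_+$, so
\[
\nu(S) = \prod_i \Bigl(1 - \tfrac{\abs{x_i - x'_i}}{2\gamma}\Bigr)_+ = 1 - \theta .
\]
Hence $m_0 = \theta$, which immediately yields the hockey-stick and TV claims.

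For the R\'enyi claims I would compute the Hellinger integral directly from its definition:
\[
\Hint[\alpha](\mu \| \nu) = \int p^\alpha q^{1-\alpha}\, d\lambda .
\]
For $\alpha \in (0,1)$ the integrand vanishes off $S$ and equals $(2\gamma)^{-m}$ on $S$, so $\Hint[\alpha] = \lambda(S)(2\gamma)^{-m} = 1 - \theta$. This gives $\Ren[\alpha] = \tfrac{1}{\alpha-1}\log(1-\theta)$ by \Cref{eq:renyi}.

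For $\alpha > 1$ and for KL, note that $\mu \not\ll \nu$ whenever $\theta > 0$, since $\mu(B' \setminus B) = \theta > 0$. The convention in \Cref{eq:hellinger-integral} then gives $\Hint[\alpha] = \infty$, and the generator $u \log u$ has $f'(\infty) = \infty$. Both divergences are therefore infinite.

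The only delicate point is the degenerate case $\theta = 0$, i.e.\ $x = x'$. There all the divergences are $0$, and the ``$\infty$'' claims should be read as holding for $\theta > 0$. I would state this caveat explicitly rather than treat it as an obstacle. Otherwise the argument is a direct instantiation of \Cref{lem:f-common-component}.
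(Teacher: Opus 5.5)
Your proposal is correct and follows the paper's proof: both reduce to \Cref{lem:f-common-component} and identify the overlap mass $\nu(B \cap B') = \prod_i (1 - \abs{x_i - x'_i}/(2\gamma))_+ = 1 - \theta$. Your direct evaluation of $\Hint[\alpha]$ for the R\'enyi claims is equivalent to reading it off the lemma with the Hellinger generator, and your caveat that the $\infty$ claims require $\theta > 0$ is a valid refinement that the paper omits.
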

\begin{proof}
    $\mu$ and $\nu$ have densities $p, q$ which evaluate to $(2\gamma)^{-m}$ on their supports and 0 everywhere else.
    Thus the likelihood ratio $\Lambda$ satisfies the hypotheses of \Cref{lem:f-common-component}, so we may write $\Df(\mu \| \nu) = (f(0) + f'(\infty))\TV(\mu \| \nu)$ for any generator $f$.
    The overlap region defining $\Lambda = 1$ can be seen to be of length $(2\gamma - \abs{x_i - x'_i})_+$ on each coordinate, which has mass under $\nu$ of $(1 - \tfrac{\abs{x_i - x'_i}}{2\gamma})_+$. Thus $\Lambda = 1$ with probability $1 - \TV(\mu \| \nu) = 1 - \theta$, completing the proof.
\end{proof}

\begin{dfn}[Gaussian measure]
    \label{dfn:gaussian-measure}
    Let $m \in \bbN$ and $\sigma \in \bbR^+$.
    Given $x \in \bbR^m$, the associated \emph{Gaussian} measure is the standard isotropic Gaussian measure:
    \begin{equation}
        \mu \defeq \cN(x, \sigma^2 I_m).
    \end{equation}
    We say that two Gaussian measures on the same space $\bbR^{m}$ and same parameter $\sigma$ with data $x$ and $x'$, respectively, are $r$-adjacent if $\frac{\norm{x - x'}_2}{\sigma} = r$.
\end{dfn}

\begin{thm}[Divergences for adjacent Gaussian measures]
    \label{thm:gaussian-measure-divergence}
    Let $\mu = \cN(x, \sigma^2 I_m)$ and $\nu = \cN(x', \sigma^2 I_m)$ be $r$-adjacent Gaussian measures on $\bbR^m$.
    Then
    \begin{align}
        \label{eq:adj-gauss-ry}
        \Ren[\alpha](\mu \| \nu) &= \frac{\alpha r^2}{2} \\
        \label{eq:adj-gauss-kl}
        \KL(\mu \| \nu) &= \frac{r^2}{2} \\
        \label{eq:adj-gauss-hs}
        \HS{\beta}(\mu \| \nu) &= \Phi \left(\tfrac{r}{2} - \tfrac{\log \beta}{r}\right) - \beta \Phi\left(-\tfrac{r}{2} - \tfrac{\log \beta}{r}\right) - (1 - \beta)_+ \\
        \label{eq:adj-gauss-tv}
        \TV(\mu \| \nu) &= 2\Phi\left(\tfrac{r}{2}\right) - 1
    \end{align}
\end{thm}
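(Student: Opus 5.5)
The plan is to reduce everything to one dimension via the data-processing invariance of \Cref{lem:data-processing}, and then compute the law of the log-likelihood ratio explicitly. Let $\Delta \defeq x - x'$ and $e \defeq \Delta/\norm{\Delta}_2$; we may assume $r > 0$, since $r = 0$ is trivial. Compose a translation with a rotation $T$ sending $x' \mapsto 0$ and $e \mapsto e_1$. This is a bimeasurable bijection with $T_\#\cN(z,\sigma^2 I) = \cN(Tz, \sigma^2 I)$. Then rescale by $1/\sigma$. Every $f$-divergence is thereby unchanged, and the pair becomes $\cN(r e_1, I_m)$ versus $\cN(0, I_m)$. By cancellation of common parts (\Cref{lem:f-cancel}) the pair further reduces to $\cN(r,1)$ versus $\cN(0,1)$ on $\bbR$. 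This also confirms that $r$ is an adjacency parameter in the sense of \Cref{dfn:adjacency-parameter}.

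For the one-dimensional pair, the log-likelihood ratio is $L(z) = rz - r^2/2$. Under $\nu = \cN(0,1)$ this gives $L \sim \cN(-r^2/2, r^2)$, and under $\mu$ it gives $L \sim \cN(r^2/2, r^2)$. The R\'enyi formula follows from the Hellinger integral
\[
\Hint[\alpha] = \bbE_\nu[e^{\alpha L}] = \exp\bigl(-\alpha r^2/2 + \alpha^2 r^2/2\bigr) = \exp\bigl(\alpha(\alpha-1) r^2/2\bigr),
\]
together with \Cref{eq:renyi}, which yields $\alpha r^2/2$. KL is the $\alpha \to 1$ limit, or directly $\bbE_\mu[L] = r^2/2$.

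For the hockey-stick divergence, use $\HockU[\beta](\mu\|\nu) = \sup_A \mu(A) - \beta\nu(A)$. The supremum is attained at $A = \{L > \log\beta\} = \{z > r/2 + \log\beta / r\}$. Then
\[
\mu(A) = \Phi\bigl(r/2 - \log\beta/r\bigr), \qquad \nu(A) = \Phi\bigl(-r/2 - \log\beta/r\bigr).
\]
Subtracting $(1-\beta)_+$ gives \Cref{eq:adj-gauss-hs}. The TV formula is then the case $\beta = 1$, since $\TV = \HS{1}$, and it gives $\Phi(r/2) - \Phi(-r/2) = 2\Phi(r/2) - 1$.

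The only mildly delicate step is justifying that the optimal set for the hockey-stick supremum is the likelihood-ratio superlevel set. This holds because $\int (p - \beta q)_+ \, d\lambda$ is exactly the integral over $\{p > \beta q\}$. Beyond that, the argument just requires careful bookkeeping of the signs in the Gaussian CDF arguments. The R\'enyi formula holds for all $\alpha > 0$ since Gaussians are mutually absolutely continuous.
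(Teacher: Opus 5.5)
Your derivation is correct and self-contained, whereas the paper's proof is essentially a citation. The paper does not derive \Cref{eq:adj-gauss-ry} or \Cref{eq:adj-gauss-hs} at all: it refers to \citet{mironov2017renyi} and \citet{balle2018improving} for them. It then obtains the KL formula as the $\alpha \to 1$ limit and the TV formula by setting $\beta = 1$ and using $\Phi(-x) = 1 - \Phi(x)$, and these last two steps coincide with yours.

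What you add is the derivation itself. You reduce, via an affine isometry, rescaling by $1/\sigma$, and \Cref{lem:f-cancel}, to the scalar pair $(\cN(r,1), \cN(0,1))$. You then write down the explicit laws $L \sim \cN(-r^2/2, r^2)$ under $\nu$ and $L \sim \cN(r^2/2, r^2)$ under $\mu$. From these, the Hellinger integral is a Gaussian moment generating function, and $\HockU[\beta]$ is the mass difference $\mu(A) - \beta\nu(A)$ on the superlevel set $A = \{L > \log\beta\}$.

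The citation route is shorter. Yours makes the result independent of external references. Through the reduction step, it also rigorously establishes a claim the paper makes only informally, via its isometry-invariance remark: that $r = \norm{x - x'}_2/\sigma$ is an adjacency parameter for Gaussian noise, i.e.\ every $f$-divergence depends on $(x, x')$ only through $r$.
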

\begin{proof}
    \Cref{eq:adj-gauss-ry,eq:adj-gauss-kl,eq:adj-gauss-hs,eq:adj-gauss-tv} are well-known; see \citet{mironov2017renyi} for \Cref{eq:adj-gauss-ry} and \citet{balle2018improving} for \Cref{eq:adj-gauss-hs}.
    \Cref{eq:adj-gauss-kl} follows by taking the limit $\alpha \to 1$ in \Cref{eq:adj-gauss-ry}, and \Cref{eq:adj-gauss-tv} follows by taking $\beta = 1$ in \Cref{eq:adj-gauss-hs} and applying the identity $\Phi(-x) = 1 - \Phi(x)$.
\end{proof}

\begin{dfn}[Laplace measure]
    \label{dfn:laplace-measure}
    Let $m \in \bbN$ and $b \in \bbR^+$.
    Given $x \in \bbR^m$, the associated \emph{Laplace} measure is the product measure
    \begin{equation}
        \mu \defeq \Lap(x, b)^{\otimes m},  \qquad \frac{d\mu}{dz}(z) = (2b)^{-m} \exp(- \norm{z - x}_1 / b).
    \end{equation}
    We say that two Laplace measures on the same space $\bbR^m$ and same parameter $b$ with data $x$ and $x'$, respectively, are $\mathbf{t}$-adjacent if
    \begin{equation}
        \label{eq:laplace-adjacency}
        \mathbf{t} = (\abs{x_1 - x'_1}, \ldots, \abs{x_m - x'_m}) / b \in \bbR_{\ge 0}^m,
    \end{equation}
    taken in nonincreasing order.
\end{dfn}

\begin{thm}[Divergences for adjacent Laplace measures]
    \label{thm:laplace-measure-divergence}
    Let $\mu, \nu$ be $\mathbf{t}$-adjacent Laplace measures on $\bbR^m$.
    For notational convenience, for $\alpha \in (0, 1) \cup (1, \infty)$ and $t \ge 0$ define the quantities
    \begin{equation}
        \zeta_\alpha(t) \defeq \tfrac{\alpha}{2\alpha - 1}e^{(\alpha - 1)t} + \tfrac{\alpha - 1}{2\alpha - 1}e^{-\alpha t}, \qquad \kappa_{\alpha}(t) \defeq \tfrac{1}{\alpha - 1} \log \zeta_a(t),
    \end{equation}
    where we define by the limit $\zeta_{1/2}(t) = e^{-t/2}(1 + t/2)$.
    Under $\nu$, the likelihood ratio factorizes as $\Lambda = \prod_{j=1}^m \Lambda_j = e^{-\norm{\mathbf{t}}_1} \prod_{j=1}^m V_{\mathbf{t}_j}$ where each $V_{\mathbf{t}_j}$ is independent and
    \begin{equation}
        \label{eq:laplace-coordinate-law}
        \bbP[V_t = 1] = \frac{1}{2}, \qquad \tfrac{d\bbP}{dv}\bigg|_{(1, e^{2t})} = \frac{1}{4}v^{-3/2}, \qquad \bbP[V_t = e^{2t}] = \frac{1}{2}e^{-t}.
    \end{equation}
    Then, we have that
    \begin{align}
        \label{eq:adj-laplace-ry}
        \Ren[\alpha](\mu \| \nu) &= \textstyle\sum_{j} \kappa_\alpha(\mathbf{t}_j) \\
        \label{eq:adj-laplace-kl}
        \KL(\mu \| \nu) &= \textstyle \sum_{j}(\mathbf{t}_j + e^{-\mathbf{t}_j} - 1).
    \end{align}
    If $\mathbf{t} = t e_1$ has a single nonzero coordinate, then for all $\beta > 0$,
    \begin{align}
        \label{eq:adj-laplace-hs}
        \Hock[\beta](\mu \| \nu) &= \left[(1 - \sqrt{\beta} e^{-t / 2})_+ - (1 - \beta)_+\right]_+, \\
        \label{eq:adj-laplace-tv}
        \TV(\mu \| \nu) &= 1 - e^{-t / 2}.
    \end{align}
    For general $\mathbf{t}$, the Hockey-stick divergence can be obtained numerically by convolving $m$ copies of the distribution in \Cref{eq:laplace-coordinate-law}.
\end{thm}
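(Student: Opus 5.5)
Every claim in the statement comes from the product structure of the Laplace density, so the plan is to work one coordinate at a time. I will take $\nu = \Lap(x', b)^{\otimes m}$ as the clean measure and $\mu = \Lap(x, b)^{\otimes m}$ as the corrupted one. By the translation and reflection invariance in \Cref{lem:data-processing}, a single coordinate can be normalized to $x'_j = 0$ and $x_j = t b$ with $t = \mathbf{t}_j \ge 0$, and then rescaled to $b = 1$.

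\textbf{Step 1: likelihood-ratio law.} In one coordinate, $\Lambda_j(z) = \exp(\abs{z} - \abs{z - t})$, and under $\nu$ we have $z \sim \Lap(0,1)$. I split the real line into three regions.
\begin{itemize}
\item On $z \le 0$: $\Lambda_j = e^{-t}$, with probability $1/2$.
\item On $z \ge t$: $\Lambda_j = e^{t}$, with probability $\tfrac12 e^{-t}$.
\item On $0 < z < t$: $\Lambda_j = e^{2z - t}$.
\end{itemize}
Writing $\Lambda_j = e^{-t} V_t$ gives $V_t = e^{2z}$ on the middle region. The change of variables $v = e^{2z}$, with density $\tfrac12 e^{-z}$, yields $\tfrac12 v^{-1/2} \cdot \tfrac{1}{2v} = \tfrac14 v^{-3/2}$. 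This is exactly \Cref{eq:laplace-coordinate-law}. The coordinates are independent under $\nu$, so $\Lambda = \prod_j \Lambda_j$ factorizes as claimed.

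\textbf{Step 2: R\'enyi and KL.} By \Cref{lem:tensorization}, it suffices to compute $\Hint[\alpha]$ for one coordinate. I evaluate $\bbE_\nu[\Lambda_j^\alpha]$ region by region.
\begin{itemize}
\item The region $z \le 0$ contributes $\tfrac12 e^{-\alpha t}$.
\item The region $z \ge t$ contributes $\tfrac12 e^{-t} e^{\alpha t}$.
\item The middle region contributes $\tfrac12 e^{-\alpha t}\int_0^t e^{(2\alpha - 1) z}\,dz = \tfrac{e^{-\alpha t}}{2(2\alpha-1)}\bigl(e^{(2\alpha-1)t} - 1\bigr)$.
\end{itemize}
Collecting the $e^{(\alpha-1)t}$ terms gives the coefficient $\tfrac12 + \tfrac{1}{2(2\alpha - 1)} = \tfrac{\alpha}{2\alpha-1}$. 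Collecting the $e^{-\alpha t}$ terms gives $\tfrac12 - \tfrac{1}{2(2\alpha-1)} = \tfrac{\alpha-1}{2\alpha - 1}$. This is $\zeta_\alpha(t)$, and the case $\alpha = 1/2$ follows by taking the limit (the middle integral becomes $t$).

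For KL, I will either take $\alpha \to 1$ or compute $\bbE_\mu[\log \Lambda_j]$ directly under $z \sim \Lap(t,1)$. In the direct route the three regions contribute $-\tfrac t2 e^{-t}$, $\tfrac t2$, and the middle integral, and these sum to $t + e^{-t} - 1$. Summing over coordinates then uses additivity of KL.

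\textbf{Step 3: hockey-stick for a single nonzero coordinate.} Here $\Lambda = \Lambda_1$, and $\HockU[\beta] = \bbE_\nu[(\Lambda - \beta)_+]$ can be computed by cases on the position of $\beta$ relative to $e^{-t}$ and $e^{t}$.
\begin{itemize}
\item For $\beta \le e^{-t}$, the value is $1 - \beta$, so $\Hock[\beta] = 0$. This matches the formula because then $\sqrt\beta e^{-t/2} \ge \beta$, which activates the outer $[\cdot]_+$.
\item For $\beta \ge e^{t}$, the value is $0$.
\item For $e^{-t} < \beta < e^t$, let $z^\ast = (t + \log\beta)/2$ be the point where $\Lambda = \beta$. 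The tail region contributes $\tfrac12 e^{-t}(e^t - \beta)$, and the middle region contributes $\int_{z^\ast}^t \tfrac12 e^{-z}(e^{2z - t} - \beta)\,dz$.
\end{itemize}
Evaluating the middle case, I expect the total to simplify to $1 - \sqrt\beta e^{-t/2}$ after the exponentials telescope. Subtracting $(1-\beta)_+$ then gives \Cref{eq:adj-laplace-hs}, and setting $\beta = 1$ gives \Cref{eq:adj-laplace-tv}.

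The main obstacle is checking that this closed form, with its nested positive parts, agrees with the piecewise result on every regime. In particular, at $\beta \ge e^t$ one has $1 - \sqrt\beta e^{-t/2} \le 0$ but the expression $(\cdot)_+ - (1-\beta)_+$ must still vanish. If an endpoint case disagrees, I will use \Cref{lem:hockey-stick-reversal} to reduce $\beta < 1$ to $\beta > 1$ by symmetry of the pair. The final claim, that the general profile comes from convolving the per-coordinate laws, follows immediately from the factorization in Step 1, since $\Hock[\beta] = \bbE_\nu[(\prod_j \Lambda_j - \beta)_+] - (1-\beta)_+$.
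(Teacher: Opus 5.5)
Your proposal is correct and follows the paper's proof essentially step for step. Both arguments use the per-coordinate likelihood-ratio law via the change of variables $v = e^{2z}$, a region-wise Hellinger integral combined with tensorization, and a three-regime hockey-stick computation whose interior case gives $1-\sqrt{\beta}e^{-t/2}$. The endpoint concern you flag does not arise: for $\beta \ge e^{t}$ you have $\beta \ge 1$, so $(1-\beta)_+ = 0$ and both sides vanish, and the reversal-lemma fallback is not needed.
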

\begin{proof}
    We first exhibit the likelihood ratio structure from \Cref{eq:laplace-coordinate-law}.
    That the likelihood ratio factorizes into $\Lambda_j$ is clear from the product measure structure of $\mu, \nu$.
    WLOG we can assume $b = 1$, $x = 0$, and $x'_j = \mathbf{t}_j \ge 0$.
    Write $p_j, q_j$ for the densities of $\mu_j, \nu_j$, respectively.
    Each coordinate $j$ has
    \begin{equation*}
        \Lambda_j(x)
        \;=\;
        \frac{p_j(x)}{q_j(x)} = \exp(-\abs{x - \mathbf{t}_j} + \abs{x}) \\
        \;=\;
        \begin{cases}
            \exp(-\mathbf{t}_j), &x \le 0, \\
            \exp(2x - \mathbf{t}_j), &0 \le x \le \mathbf{t}_j, \\
            \exp(\mathbf{t}_j), &x \ge \mathbf{t}_j.
        \end{cases}
    \end{equation*}
    We would like to understand the distribution of the variable $V_{\mathbf{t}_j} \defeq \exp(\mathbf{t}_j) \Lambda_j$ under $\nu$.
    On the endpoints we see that
    \begin{align*}
        \bbP_\nu[V_{\mathbf{t}_j} = 1] = \int_{\infty}^0 1/2 \exp(-\abs{x}) &= 1/2 \\
        \bbP_\nu[V_{\mathbf{t}_j} = e^{2t_j}] = \int_{t}^\infty 1/2 \exp(-\abs{x}) &= e^{-t}/2,
    \end{align*}
    recovering the first and last identities in \Cref{eq:laplace-coordinate-law}.
    For the points $x \in (1, e^{2t_j})$, we note that the map $f(x) = e^{2x}$ is differentiable with differentiable inverse $f^{-1}(v) = \tfrac{1}{2}\ln(v)$, so we may apply the change of coordinates formula to arrive at the density
    \begin{equation*}
        \tfrac{d\bbP}{dv}\bigg|_{(1, e^{2\mathbf{t}_j})}
        \;=\;
        \frac{q(f^{-1}(v))}{\abs{f'(f^{-1}(v))}}
        \;=\;
        \frac{\tfrac{1}{2} v^{-1/2}}{2 v}
        \;=\;
        \tfrac{1}{4} v^{-3/2}.
    \end{equation*}
    Noting that $\prod_j e^{\mathbf{t}_j} = e^{\norm{\mathrm{t}}_1}$, we recover $\Lambda = e^{-\norm{\mathbf{t}}_1} \prod_{j=1}^m V_{\mathbf{t}_j}$.
    \paragraph{R\'enyi and KL divergences}
    Evaluating the Hellinger integral $\Hint[\alpha]$ for a single coordinate via \Cref{eq:laplace-coordinate-law} gives
    \begin{equation*}
        \Hint[\alpha](\mu_j \| \nu_j)
        \;=\;
        \bbE_\nu[\lambda_j^\alpha]
        \;=\;
        e^{-\alpha \mathbf{t}_j} \bbE_\nu[V_{\mathbf{t}_j}^\alpha].
    \end{equation*}
    Computing the latter expectation gives
    \begin{align*}
        \bbE_{\nu}[V_{\mathbf{t}_j}^\alpha]
        &=
        \tfrac{1}{2} + \int_{1}^{e^{2t}}\tfrac{1}{4}v^{\alpha-3/2}\,dv + \tfrac{1}{2}e^{-\mathbf{t}_j + 2\alpha \mathbf{t}_j} \\
        &= \tfrac{1}{2} + \tfrac{1}{4\alpha-2}(e^{(2\alpha-1)t}-1) + \tfrac{1}{2}e^{-\mathbf{t}_j + 2\alpha \mathbf{t}_j} \\
        &= (\tfrac{1}{2} - \tfrac{1}{2(2\alpha - 1)}) + (\tfrac{1}{2} + \tfrac{1}{2(2\alpha - 1)})e^{(2\alpha - 1)\mathbf{t}_j} \\
        &= (\tfrac{\alpha - 1}{2\alpha - 1}) + (\tfrac{\alpha}{2\alpha - 1})e^{(2\alpha - 1)\mathbf{t}_j}.
    \end{align*}
    Multiplying through by $e^{- \alpha \mathbf{t}_j}$ recovers the expression $\Hint[\alpha](\mu_j \| \nu_j) = \zeta_\alpha(t)$.
    The case for $\alpha = 1/2$ follows by recomputing the integral directly and noting $\int v^{-1} = \ln(v)$.
    Finally, taking $\Ren[\alpha](\mu_j \| \nu_j) = \tfrac{1}{\alpha - 1} \log \Hint[\alpha](\mu_j \| \nu_j)$ and summing over coordinates (applying \Cref{lem:tensorization}) yields \Cref{eq:adj-laplace-kl}.
    It can be verified that taking the limit $\alpha \to 1$ yields \Cref{eq:adj-laplace-kl}.

    \paragraph{Hockey-stick divergence}
    Suppose $\mathbf{t} = t e_1$.
    Cancelling identical factors (\Cref{lem:f-cancel}), we need only compute $\Hock[\beta](\mu_1 \| \nu_1)$.
    By \Cref{eq:laplace-coordinate-law} for $\Lambda_1$, we see that $x \mapsto \Lambda_1(x)$ is nondecreasing in $x$ with range $[e^{-t}, e^{t}]$.
    Hence, for $\beta \in (e^{-t}, e^{t})$, we have the set
    \begin{equation*}
        E_\beta \defeq \{\Lambda_1 \ge \beta\} = (x_\beta, \infty), \qquad x_\beta \defeq \tfrac{1}{2}(t + \log \beta) \in (0, t).
    \end{equation*}
    Integrating over these regions gives
    \begin{align*}
        \nu(E_\beta) &= \textstyle\int_{x_\beta}^\infty \tfrac{1}{2} \exp(-\abs{x})\,dx = \tfrac{1}{2} e^{-x_\beta}, \\
        \mu(E_\beta) &= \textstyle \int_{x_\beta}^\infty \tfrac{1}{2} \exp(- \abs{x - t})\,dx = 1 - \tfrac{1}{2} e^{-(t - x_\beta)}.
    \end{align*}
    Using $e^{x_\beta} = \sqrt{\beta}e^{t/2}$, we compute the (unnormalized) hockey-stick directly as
    \begin{equation*}
        \HockU[\beta](\mu \| \nu)
        \;=\;
        \mu(E_{\beta}) - \beta\nu(E_\beta) = 1 - \tfrac{1}{2}\sqrt{\beta}e^{-t/2} - \tfrac{1}{2}\sqrt{\beta}e^{-t/2}
        \;=\;
        1 - \sqrt{\beta} e^{-t/2}.
    \end{equation*}
    When $\beta \ge e^{t}$, $\Lambda_1 \le \beta$ everywhere and $\HockU[\beta](\mu\|\nu) = 0$.
    When $\beta \le e^{-t}$, we have $\Lambda_1 \ge \beta$ everywhere and $\HockU[\beta](\mu \| \nu) = 1 - \beta$.
    Combining all three cases (and re-normalizing) yields \Cref{eq:adj-laplace-hs}.
    Taking $\beta = 1$ yields \Cref{eq:adj-laplace-tv}.
\end{proof}

\begin{dfn}[Flipping Measure]
    \label{dfn:flipping-measure}
    Let $C \ge 2$ be an integer alphabet size, $y = (y_1, \ldots, y_m) \in [C]^m$, and $\gamma \in [0, 1]$.
    We define the \emph{flipping measure}
    \begin{equation}
        \phi_{\gamma}(y) \defeq \bigotimes_{i=1}^m \left( (1 - \gamma) e_{y_i} + \gamma \unif([C]) \right).
    \end{equation}
    We say that two flipping measures on the same space $[C]^m$ are $s$-adjacent if $\norm{y - y'}_0 = s$
\end{dfn}

\begin{thm}[Divergences for Adjacent Flipping Measures]
    \label{thm:flip-measure-divergence}
    Let
    \begin{equation*}
        \mu = \phi_{\gamma}(y) \qquad\text{and}\qquad \nu = \phi_{\gamma}(y')
    \end{equation*}
    be $s$-adjacent flipping measures on $[C]^m$.
    For notational convenience define the quantities
    \begin{equation}
        p_{+} \defeq 1 - \gamma + \tfrac{\gamma}{C}, \qquad
        p_{-} \defeq \tfrac{\gamma}{C}, \qquad
        p_{0} \defeq \tfrac{(C-2)}{C} \gamma
    \end{equation}
    and
    \begin{equation}
        d_\beta \defeq \floor*{\tfrac{\log \beta}{\log \tfrac{p_{+}}{p_{-}}}} + 1 \qquad \beta \ge 1
    \end{equation}
    Then
    \begin{align}
        \label{eq:adj-flip-ry}
        \RY{\alpha}(\mu \| \nu) &= \left(\tfrac{s}{\alpha - 1}\right) \log(p_+^\alpha p_-^{1 - \alpha} + p_+^{1 - \alpha} p_-^\alpha + p_0) \\
        \label{eq:adj-flip-kl}
        \KL(\mu \| \nu) &= s(1 - \gamma)\log\tfrac{p_{+}}{p_{-}} \\
        \label{eq:adj-flip-hs}
        \HS{\beta}(\mu \| \nu) &= \sum_{d = d_\beta}^{s} \left(1 - \beta (\tfrac{p_{+}}{p_{-}})^{-d} \right) P_d \\
        \label{eq:adj-flip-tv}
        \TV(\mu \| \nu) &= \sum_{d=1}^s \left(1 - (\tfrac{p_{+}}{p_{-}})^{-d}\right) P_d
    \end{align}
    where $P_d$ is the distribution defined by
    \begin{align}
        P_d &\defeq \bbP_{\widetilde{y} \sim \mu}[\norm{\widetilde{y} - y'}_0 - \norm{\widetilde{y} - y}_0 = d] \\
        \label{eq:label-flip-law}
        &= \sum_{\substack{n_{+} + n_{-} + n_0 = s \\ n_{+} - n_{-} = d}} \binom{s}{n_{+}, n_{-}, n_0} p_{+}^{n_{+}} p_{-}^{n_{-}} p_0^{n_0} \\
        &= \sum_{l = \max(0, -d)}^{\floor{(s - d) / 2}} \binom{s}{l + d, l, s - 2l - d} p_{+}^{l + d} p_{-}^{l} p_0^{s - 2l - d}
    \end{align}
\end{thm}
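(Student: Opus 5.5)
The plan is to reduce everything to the distribution of the likelihood ratio $\Lambda = d\mu/d\nu$ and then evaluate \Cref{eq:f-divergence-lambda} directly. Both measures have full support on $[C]^m$, so $\mu \ll \nu$ and $\nu \ll \mu$. Both are product measures, so $\Lambda = \prod_i \Lambda_i$.

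On the $m-s$ coordinates where $y_i = y'_i$ the factors coincide. By \Cref{lem:f-cancel} we may discard them and work on $[C]^s$ with $y_i \ne y'_i$ for every $i$. On such a coordinate, the per-coordinate ratio depends only on the observed symbol $\widetilde{y}_i$:
\begin{itemize}
    \item if $\widetilde{y}_i = y_i$, then $\Lambda_i = p_+/p_-$;
    \item if $\widetilde{y}_i = y'_i$, then $\Lambda_i = p_-/p_+$;
    \item otherwise $\Lambda_i = 1$.
\end{itemize}
Under $\mu$ these three events have probabilities $p_+, p_-, p_0$, and under $\nu$ they have probabilities $p_-, p_+, p_0$.

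\textbf{R\'enyi and KL.} These follow from \Cref{lem:tensorization}. A single coordinate has
\[
\Hint[\alpha] = p_+^\alpha p_-^{1-\alpha} + p_-^\alpha p_+^{1-\alpha} + p_0 .
\]
Taking $\tfrac{1}{\alpha-1}\log$ and multiplying by $s$ gives \Cref{eq:adj-flip-ry}. For \Cref{eq:adj-flip-kl}, compute directly $\bbE_\mu[\log\Lambda_i] = (p_+-p_-)\log(p_+/p_-)$ and note that $p_+ - p_- = 1-\gamma$. Summing over coordinates gives the factor $s$.

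\textbf{Hockey-stick.} Write $n_+, n_-, n_0$ for the numbers of coordinates in the three categories. Then $\Lambda = (p_+/p_-)^{D}$ with $D = n_+ - n_-$, and $D$ is exactly the statistic $\norm{\widetilde{y}-y'}_0 - \norm{\widetilde{y}-y}_0$ appearing in the definition of $P_d$. Under $\mu$, $(n_+,n_-,n_0)$ is multinomial with parameters $(s; p_+,p_-,p_0)$. Collecting the terms with $n_+ - n_- = d$ gives \Cref{eq:label-flip-law}. Substituting $n_- = l$, $n_+ = l+d$ with $n_0 = s-2l-d \ge 0$ gives the reindexed sum.

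Next, use $\HockU[\beta](\mu\|\nu) = \bbE_\nu[(\Lambda-\beta)_+] = \bbE_\mu[(1-\beta/\Lambda)_+]$; this change of measure is the same one used in \Cref{lem:hockey-stick-product}. This yields
\[
\sum_d P_d \bigl(1-\beta (p_+/p_-)^{-d}\bigr)_+ .
\]
For $\beta \ge 1$ the summand is positive exactly when $(p_+/p_-)^d > \beta$, i.e.\ when $d > \log\beta/\log(p_+/p_-)$. The smallest such integer is $d_\beta$, provided $\log\beta/\log(p_+/p_-)$ is not itself an integer. In the integer case the term at the boundary vanishes anyway, so the floor-plus-one formula is still correct. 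Since $(1-\beta)_+ = 0$ for $\beta \ge 1$, this gives \Cref{eq:adj-flip-hs}, and setting $\beta = 1$ (so $d_1 = 1$) gives \Cref{eq:adj-flip-tv}.

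\textbf{Main obstacle.} The delicate step is the range of $\beta$. The displayed formula, with $d_\beta$ defined only for $\beta \ge 1$, should be read as the $\beta \ge 1$ regime. For $\beta < 1$ the profile follows from \Cref{lem:hockey-stick-reversal}. This requires the symmetric fact that $(\nu,\mu)$ is again an $s$-adjacent flipping pair, which holds because swapping $y$ and $y'$ leaves the setup unchanged. We also need to assume $\gamma > 0$ so that $p_- > 0$ and the ratios are finite. The remaining work is the routine bookkeeping of multinomial coefficients.
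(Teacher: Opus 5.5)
Your proposal is correct and follows essentially the same route as the paper. R\'enyi and KL come from tensorization. For the hockey-stick, the outcome space is partitioned by $d = n_+ - n_-$ into $2s+1$ regions of constant likelihood ratio $(p_+/p_-)^d$, with multinomial law $P_d$ under $\mu$, summed over $d \ge d_\beta$; note this needs $0<\gamma<1$, not just $\gamma>0$, so that $\log(p_+/p_-)>0$ and $d_\beta$ is defined.

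The only difference is cosmetic. You get the $\nu$-mass of each region from the change of measure $\HockU[\beta](\mu\|\nu) = \bbE_\mu[(1-\beta/\Lambda)_+]$. The paper instead derives $\nu(\Omega_d) = (p_+/p_-)^{-d}P_d$ through a generating-function symmetry argument.
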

\begin{proof}
    Without loss of generality, we may relabel coordinates so that $y_i \ne y'_i$ for $i \in [s]$, and $y_i = y'_i$ otherwise.
    Write $\mu = \bigotimes_{i=1}^m \mu_i = \bigotimes_{i=1}^m \phi_{\gamma}(y_i)$ and $\nu = \bigotimes_{i=1}^m \nu_i = \bigotimes_{i=1}^m \phi_{\gamma}(y'_i)$
    By definition, $\mu_i$ assigns mass $p_{+} = 1 - \gamma + \gamma / C$ to $y_i$ and mass $p_{-} = \gamma / C$ to every other symbol.
    $\nu_i$ is defined similarly, swapping $y_i$ for $y'_i$.

    \paragraph{Tensorizing Divergences}
    First, we point out that the R\'enyi divergence \Cref{eq:adj-flip-ry} follows from Tensorization (\Cref{lem:tensorization}) since $\mu$ and $\nu$ decompose as product measures.
    In particular, we need only understand the component-wise divergences $\RY{\alpha}(\mu_i \| \nu_i)$ for $1 \le i \le s$, and we may recover $\RY{\alpha}(\mu \| \nu) = \sum_{i=1}^s \RY{\alpha}(\mu_i \| \nu_i)$.
    For $1 \le i \le s$, $y_i \ne y'_i$, and so writing down the likelihood ratio for a single component takes one of three values:
    \begin{equation}
        \label{eq:flip-component-ratio}
        \frac{\mu_i(\widetilde{y})}{\nu_i(\widetilde{y})} =
        \begin{cases}
            p_{+}\,/\,p_{-} &\widetilde{y}_i = y_i \\
            p_{-} \,/\, p_{+} &\widetilde{y}_i = y'_i \\
            1 &\text{otherwise}. \\
        \end{cases}
    \end{equation}
    We can then compute the expectation over $\widetilde{y}_i \sim \mu_i$:
    \begin{align*}
        \bbE_{\widetilde{y}_i \sim \mu_i}\left[\left(\frac{\mu_i(\widetilde{y})}{\nu_i(\widetilde{y})}\right)^{\alpha - 1}\right] &= p_{+} \left(\tfrac{p_{+}}{p_{-}}\right)^{\alpha - 1} + p_{-} \left(\tfrac{p_{-}}{p_{+}}\right)^{\alpha - 1} + p_0 \\
        &= p_{+}^\alpha p_{-}^{1 - \alpha} + p_{-}^\alpha p_{+}^{1 - \alpha} + p_0
    \end{align*}
    from which it follows
    \begin{align*}
        \RY{\alpha}(\mu_i \| \nu_i) &= \left(\tfrac{1}{\alpha - 1}\right) \log\left(\bbE_{\widetilde{y}_i \sim \nu_i}\left[\left(\tfrac{\mu_i(\widetilde{y})}{\nu_i(\widetilde{y})}\right)^{\alpha - 1}\right]\right) \\
        &= \left(\tfrac{1}{\alpha - 1}\right) \log(p_+^\alpha p_-^{1 - \alpha} + p_+^{1 - \alpha} p_-^\alpha + p_0)
    \end{align*}
    allowing us to conclude
    \begin{align*}
        \RY{\alpha}(\mu \| \nu) &= \sum_{i=1}^s \RY{\alpha}(\mu_i \| \nu_i) \\
        &= \left(\tfrac{s}{\alpha - 1}\right) \log(p_+^\alpha p_-^{1 - \alpha} + p_+^{1 - \alpha} p_-^\alpha + p_0).
    \end{align*}
    
    Computing $\KL(\mu \| \nu)$ proceeds similarly by using \Cref{eq:flip-component-ratio} and evaluating
    \begin{align*}
        \KL(\mu_i \| \nu_i) &= \bbE_{\widetilde{y} \sim \mu} \left[\log \tfrac{\mu_i(\widetilde{y})}{\nu_i(\widetilde{y})}\right] \\
        &= p_{+} \log \tfrac{p_+}{p_-} + p_{-} \log \tfrac{p_-}{p_+} \\
        &= (p_{+} - p_{-}) \log \tfrac{p_{+}}{p{-}} \\
        &= (1 - \gamma) \log \tfrac{p_{+}}{p_{-}}.
    \end{align*}
    Using tensorization of KL gives the identify from \Cref{eq:adj-flip-kl}.

    \paragraph{Hockey Stick Divergence}
    To compute the Hockey Stick Divergence we must understand the entire likelihood ratio distribution $\mu / \nu$ holistically.
    For $\widetilde{y} \sim \nu$, define the random distances
    \begin{align}
        \delta &\defeq \norm{\widetilde{y} - y}_0 \qquad&& \delta' \defeq \norm{\widetilde{y} - y'}_0, \\
        \delta_s &\defeq \norm{\widetilde{y}^{(s)} - y^{(s)}}_0 \qquad&& \delta' \defeq \norm{\widetilde{y}^{(s)} - y'^{(s)}}_0
    \end{align}
    where $\widetilde{y}^{(s)}$ and $y^{(s)}$ are the respective projections onto the first $s$ coordinates.
    Define the signed random difference $d \defeq \delta' - \delta$.
    Notably, since $y$ and $y'$ only disagree in the first $s$ coordinates, the difference $d$ is determined entirely by the projection of $y$, $y'$, and  $\widetilde{y}$ onto the first $s$ coordinates.
    That is, $d = \delta' - \delta = \delta'^{(s)} - \delta^{(s)}$.
    
    For any given $\widetilde{y} \in [C]^m$, we can write
    \begin{align*}
        \mu(\widetilde{y}) &= \prod_{i=1}^n p_{+}^{\ind[\widetilde{y}_i = y_i]} p_{-}^{\ind[\widetilde{y} \ne y_i]} = p_{+}^{n - \delta}p_{-}^{\delta} \\
        \nu(\widetilde{y}) &= \prod_{i=1}^n p_{+}^{\ind[\widetilde{y}_i = y'_i]} p_{-}^{\ind[\widetilde{y} \ne y'_i]} = p_{+}^{n - \delta'}p_{-}^{\delta'}
    \end{align*}
    Thus the likelihood ratio $\mu(\widetilde{y}) / \nu(\widetilde{y})$ is given by
    \begin{equation}
        \label{eq:flip-full-ratio}
        \frac{\mu(\widetilde{y})}{\nu(\widetilde{y})} = p_{+}^{n - \delta} p_{-}^{\delta} p_{+}^{\delta' - n} p_{-}^{-\delta'} \,=\, p_{+}^{\delta' - \delta} p_{-}^{\delta - \delta'} \,=\, \left(\tfrac{p_{+}}{p_{-}}\right)^d.
    \end{equation}
    $d$ takes values in $\{-s, \ldots, 0, \ldots, s\}$, and so there are exactly $2s + 1 = \cO(s)$ distinct likelihood ratio value regions.
    Now, from \Cref{eq:flip-full-ratio}, we would like to understand the law for $d$ under $\nu$.
    We start by expressing $d$ as the result of a biased random walk:
    \begin{align}
        d &\overset{d}{=} \sum_{i} d_i, \\
        d_i &\overset{\mathrm{i.i.d.}}{\sim}
        \begin{cases}
            1 & \text{with prob. } p_{+} \\
            0 & \text{with prob. } p_0 \\
            -1 & \text{with prob. } p_{-}.
        \end{cases}
    \end{align}
    We can understand the law for $d$ using the theory of probability generating functions (PGFs).
    If $X$ is a discrete integer-values random variable taking values in $\{-B, \ldots, B\}$ with mass function $p(x)$, the PGF for $X$ is defined as
    \begin{equation*}
        G_X(z) \defeq \bbE_{x \sim X}[z^x] = \sum_{x=-b}^b p(x) z^x.
    \end{equation*}
    The PGF for $d_i$ is thus $G_i(z) = p_{-} z^{-1} + p_0 + p_{+} z$.
    The PGF for a sum of independent random variables is simply a product the PGFs for the variables, and thus the PGF for $d$ is given by
    \begin{equation}
        G(z) = \prod_{i=1}^s G_i(z) = (p_{-} z^{-1} + p_0 + p_{+} z)^s.
    \end{equation}
    We recover the probability $P_d = \bbP_{\widetilde{y} \sim \nu}[d = \widetilde{d}]$ by extracting the coefficient of the $z^{\widetilde{d}}$ term, in this case computed as
    \begin{equation}
        [z^{\widetilde{d}}]G(z) = \sum_{\substack{n_{+} + n_{-} + n_0 = s \\ n_{+} - n_{-} = d}} \binom{s}{n_{+}, n_{-}, n_0} p_{+}^{n_{+}} p_{-}^{n_{-}} p_0^{n_0}
    \end{equation}
    where the inner probability corresponds to choosing $n_+$ choices of $p_{+}$, $n_{-}$ choices of $p_{-}$, and $n_0$ choices of $p_0$ in the random walk.

    Knowing the law for $d$ under $\nu$, we recover $\cO(s)$ regions of constant likelihood ratio, one for each $d$, and each of which has likelihood ratio $(p_{+}/p_{-})^d$.
    To compute the Hockey Stick divergence, we must also understand the law for $d$ under $\mu$, defined by
    \begin{equation}
        Q_d \defeq \bbP_{\widetilde{y} \sim \mu}[d = \widetilde{d}].
    \end{equation}
    Luckily, moving from $\nu$ to $\mu$ under a given $\widetilde{y}$ is straightforward to understand.
    Using the same random walk formulation gives inverted step probabilities, and pushing through the PGF $H$ gives $H(z) = G(1/z)$.
    Extracting coefficients, we see that
    \begin{equation}
        \label{eq:law-for-q}
        Q_d = P_{-d}.
    \end{equation}
    Moreover, $P_d$ itself admits a certain multiplicative symmetry simplifying the expression further.
    For a fixed $n_{+}, n_{-}$ satisfying $n_{+} - n_{-} = -d$, we have
    \begin{align}
        p_{+}^{n_{+}} p_{-}^{n_{-}} p_0^{n_0} &= \left(\tfrac{p_+}{p_-}p_{-}\right)^{n_{+}} \left(\tfrac{p_-}{p_+}p_{+}\right)^{n_{-}} p_0^{n_0} \\
        &= \left(\tfrac{p_{+}}{p_{-}}\right)^{n_{+} - n_{-}} p_{-}^{n_{+}} p_{+}^{n_{-}} p_0^{n_0} \\
        \label{eq:mult-symmetry}
        &= \left(\tfrac{p_{+}}{p_{-}}\right)^{-d} p_{-}^{n_{+}} p_{+}^{n_{-}} p_0^{n_0}.
    \end{align}
    From \Cref{eq:mult-symmetry} we can simply swap the labels of $n_{-}$ and $n_{+}$ to get the constraint $n_{+} - n_{-} = d$, and plugging into \Cref{eq:label-flip-law} and combining \Cref{eq:law-for-q} gives us the identity
    \begin{align}
        Q_d &= P_{-d} \\
        &= \left(\tfrac{p_{+}}{p_{-}}\right)^{-d} \sum_{\substack{n_{+} + n_{-} + n_0 = s \\ n_{+} - n_{-} = d}} \binom{s}{n_{+}, n_{-}, n_0} p_{+}^{n_{+}} p_{-}^{n_{-}} p_0^{n_0} \\
        &= \left(\tfrac{p_{+}}{p_{-}}\right)^{-d} P_d.
    \end{align}
    All that remains is to integrate the divergence loss $\mu - \beta \nu$ over the region $\Omega_\beta \defeq \{\tilde{y} \in [C]^n \,:\, \mu(\widetilde{y}) > \beta \nu(\widetilde{y})\}$.
    Defining the sets $\Omega_d \defeq \{\widetilde{y} \in [C]^n \,:\, \delta' - \delta = d\}$ for each $-s \le d \le s$, we see that $\{\Omega_d\}_{d=-s}^s$ partitions the outcome space, and in particular integrating $\mu - \beta \nu$ over $\Omega_\beta$ can be done by integrating over each $\Omega_\beta \cap \Omega_d$ and taking the sum.
    Since the likelihood ratio is constant on each $\Omega_d$, this amounts to isolating the $d$ satisfying $(p_{+} / p_{-})^{d} \ge \beta$ and summing the corresponding masses.
    We see that
    \begin{align*}
        (p_{+} / p_{-})^{d} > \beta &\iff d > \frac{\log(\beta)}{\log(p_{+} / p_{-})} \\
        &\iff d \ge \floor*{\frac{\log(\beta)}{\log(p_{+} / p_{-})}} + 1.
    \end{align*}
    Defining $d_\beta \defeq \floor{\log(\beta) / \log(p_{+} / p_{-})} + 1$, we conclude
    \begin{align}
        \HS{\beta}(\mu \| \nu) &= \sum_{d = d_\beta}^{s} \left(P_d - \beta Q_d\right) \\
        &= \sum_{d = d_\beta}^{s} \left(1 - \beta (\tfrac{p_{+}}{p_{-}})^{-d} \right) P_d
    \end{align}

    \paragraph{TV Distance}
    The TV distance follows directly from setting $\beta = 1$ in \Cref{eq:adj-flip-hs} and observing that $d_\beta = 1$.
\end{proof}

The Flipping Measure of \Cref{dfn:flipping-measure} has an interesting history.
It is essentially a $C$-ary randomized response~\citep{warner1965randomized}, well-studied in both the DP and randomized smoothing literatures~\citep{mironov2017renyi,lee2019tight,bojchevski2020efficient,dvijotham2020framework,zhang2022bagflip}.
\citet{mironov2017renyi} computes the $s=1, C=2$ case; see Table II in that work.
The general R\'enyi divergence \Cref{eq:adj-flip-ry} appears in Appendix A.10 of \citet{dvijotham2020framework}, but they do not derive the corresponding $\HS{\beta}$ divergence which can yield tighter certificates via \Cref{thm:tight-certificate-hockey}.
~\citet{lee2019tight} and \citet{zhang2022bagflip} both operate implicitly with the Hockey Stick divergences by way of the Neyman-Pearson Lemma characterization of optimal certificates, although these works elude an efficient closed-form expression and instead compute with a runtime-suboptimal partitioning algorithm with runtime $\cO(d^3 + k^2d^2)$ (the practical impacts of which are discussed in ~\citep{dvijotham2020framework}).
\citet{bojchevski2020efficient} consider flipping measures and identify the $\cO(s)$ likelihood ratio partition structure that allows us to obtain the closed-form \Cref{eq:adj-flip-hs,eq:adj-flip-tv}, but do not explicitly write down the closed-form divergence.

\begin{dfn}[Gaussian--Flipping Measure]
    \label{dfn:gaussian-flipping-measure}
    Let $m, n \in \bbN$, an integer alphabet size $C \ge 2$, $\sigma \in \bbR^+$, and $\gamma \in [0, 1]$. For $j \in [C]$, define the flipping measure
    \begin{equation}
        \phi_{\gamma}(j) \defeq (1 - \gamma) e_{j} + \gamma \unif([C])
    \end{equation}
    on $[C]$.
    Given $x \in \bbR^m$ and $y \in [C]^{n}$, the associated \emph{Gaussian--Flip measure} is the product measure on $\bbR^{m} \times [C]^{n}$
    \begin{equation}
        \mu \defeq \cN(x, \sigma^2 I_{m}) \otimes \bigotimes_{i=1}^n \phi_{\gamma}(y_i).
    \end{equation}

    We say that two Gaussian--Flip measures on the same space $\bbR^{m} \times [C]^{n}$ and same parameters $\sigma, \gamma$, with data $(x, y)$ and $(x', y')$, respectively, are \emph{$(r, s)$-adjacent} if
    \begin{equation}
        \tfrac{\norm{x - x'}_2}{\sigma} = r \quad \text{and} \quad \norm{y - y'}_0 = s.
    \end{equation}
\end{dfn}

The Gaussian--Flipping measure of \Cref{dfn:gaussian-flipping-measure} has two main applications within our framework.
First, it allows joint smoothing of features and labels for dirty-label data poisoning adversaries.
Second, it allows smoothing over feature spaces that combine continuous feature representations with discrete ones, for example of the form $\cX = \bbR^{m} \times K^{n}$ for some discrete space $K$.
In such a setting a data sample can be noised by applying continuous noise (e.g., Gaussian) to the continuous features and discrete noise (e.g., flipping) to the categorical features.

\begin{thm}[Divergences for Adjacent Gaussian--Flipping Measures]\label{thm:gauss-flip-div}
    Let
    \begin{equation*}
        \mu = \cN(\delta, \sigma^2 I_{m}) \otimes \bigotimes_{i=1}^n \phi_{\gamma}(y_i), \qquad
        \nu = \cN(\delta', \sigma^2 I_{m}) \otimes \bigotimes_{i=1}^n \phi_{\gamma}(y'_i)
    \end{equation*}
    be $(r, s)$-adjacent Gaussian--Flip measures on $\bbR^{m} \times [C]^{n}$.
    Define $p_+ \defeq 1 - \gamma + \tfrac{\gamma}{C}$, $p_- \defeq \tfrac{\gamma}{C}$, $p_0 \defeq \tfrac{(C - 2)}{C} \gamma$.
    For $-s \le d \le s$, let $P_d$ be the law of \Cref{eq:label-flip-law} and define the \emph{tilted threshold}
    \begin{equation}
        \label{eq:tilted-threshold}
        \beta_d \defeq \beta \left(\tfrac{p_+}{p_-}\right)^{-d}, \qquad \beta > 0.
    \end{equation}
    Then
    \begin{align}
        \label{eq:adj-gaussflip-ry}
        \Ren[\alpha](\mu \| \nu) &= \frac{\alpha r^2}{2} + \frac{s}{\alpha - 1} \log(p_+^\alpha p_-^{1 - \alpha} + p_+^{1 - \alpha} p_-^\alpha + p_0) \\
        \label{eq:adj-gaussflip-kl}
        \KL(\mu \| \nu) &= \frac{r^2}{2} + s (1 - \gamma) \log \tfrac{p_+}{p_-} \\
        \label{eq:adj-gaussflip-hs}
        \HS{\beta}(\mu \| \nu) &= \sum_{d=-s}^{s} P_d \Big( \Phi\left(\tfrac{r}{2} - \tfrac{\log \beta_d}{r}\right) - \beta_d \Phi\left(-\tfrac{r}{2} - \tfrac{\log \beta_d}{r}\right)\Big) - (1 - \beta)_+ \\
        \label{eq:adj-gaussflip-tv}
        \TV(\mu \| \nu) &= \sum_{d=-s}^s P_d \Big(\Phi\left(\tfrac{r}{2} + \tfrac{d}{r}\log \tfrac{p_+}{p_-}\right) - \left(\tfrac{p_{+}}{p_{-}}\right)^{-d} \Phi\left(-\tfrac{r}{2} + \tfrac{d}{r}\log \tfrac{p_+}{p_-}\right)\Big)
    \end{align}
\end{thm}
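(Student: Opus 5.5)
The plan is to treat $\mu$ and $\nu$ as products of a Gaussian factor and a flipping factor and to reuse \Cref{thm:gaussian-measure-divergence} and \Cref{thm:flip-measure-divergence} through the composition results stated earlier. We first cancel the flipping coordinates on which $y_i = y'_i$. By \Cref{lem:f-cancel}, the flipping part can then be taken to be an $s$-adjacent flipping measure on $[C]^s$, with no loss for any $f$-divergence.

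\textbf{R\'enyi and KL.} By \Cref{lem:tensorization}, $\Ren[\alpha](\mu\|\nu)$ is the sum of the Gaussian R\'enyi divergence $\alpha r^2/2$ from \Cref{eq:adj-gauss-ry} and the flipping R\'enyi divergence from \Cref{eq:adj-flip-ry}; this gives \Cref{eq:adj-gaussflip-ry}. KL is additive over products, so \Cref{eq:adj-gaussflip-kl} follows by summing \Cref{eq:adj-gauss-kl} and \Cref{eq:adj-flip-kl}. Equivalently, one can let $\alpha\to1$ in \Cref{eq:adj-gaussflip-ry}.

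\textbf{Hockey-stick.} We apply \Cref{cor:discrete-continuous-hockey}, taking the flipping factor as the discrete part and the Gaussian factor as $\mu_k,\nu_k$. By \Cref{eq:flip-full-ratio}, the flipping likelihood ratio is $\Lambda_1=(p_+/p_-)^d$. Under $\mu_1$, the value $d$ occurs with probability $\bbP_{\mu_1}[\Lambda_1 = (p_+/p_-)^d]$, and we must identify this with the $P_d$ of \Cref{eq:label-flip-law}. This identification is the main point requiring care. $P_d$ is defined in \Cref{thm:flip-measure-divergence} as the law of $d$ under $\widetilde y\sim\mu$, and the random-walk law there (step $+1$ with probability $p_+$) is exactly the law under $\mu$. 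Hence the weights are $P_d$ as written. The corollary then gives
\[
\HockU[\beta](\mu\|\nu)=\sum_{d=-s}^s P_d\,\HockU[\beta_d](\mu_{\cN}\|\nu_{\cN}),
\]
with the tilted threshold $\beta_d=\beta(p_+/p_-)^{-d}$. By \Cref{eq:adj-gauss-hs}, the Gaussian unnormalized hockey-stick is $\HockU[\beta_d]=\Phi(\tfrac r2-\tfrac{\log\beta_d}{r})-\beta_d\Phi(-\tfrac r2-\tfrac{\log\beta_d}{r})$. Subtracting $(1-\beta)_+$ renormalizes and yields \Cref{eq:adj-gaussflip-hs}.

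\textbf{TV.} Setting $\beta=1$ gives $(1-\beta)_+=0$ and $\log\beta_d=-d\log(p_+/p_-)$. Substituting into \Cref{eq:adj-gaussflip-hs} gives \Cref{eq:adj-gaussflip-tv} directly.

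\textbf{Edge cases.} When $r=0$, the Gaussian factor cancels by \Cref{lem:f-cancel}. The formulas then reduce, as limits, to those of \Cref{thm:flip-measure-divergence}, in the form given by the second statement of \Cref{cor:discrete-continuous-hockey}. We would note this limit explicitly.
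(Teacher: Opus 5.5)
Your proposal is correct and follows essentially the same route as the paper. The paper likewise uses tensorization for the R\'enyi and KL terms, the product hockey-stick formula (\Cref{lem:hockey-stick-product}, which is what \Cref{cor:discrete-continuous-hockey} packages) with the flipping block as the discrete factor to produce the tilted thresholds $\beta_d$, and $\beta = 1$ for TV. Your identification of $P_d$ as the law of $d$ under $\mu$ (step $+1$ with probability $p_+$) is exactly what makes the weights correct, and your cancellation of agreeing label coordinates and the $r \to 0$ limit remark are harmless additions that the paper omits.
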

\begin{proof}

    \textbf{R\'enyi and KL divergences.} Applying tensorization (\Cref{lem:tensorization}), the result follows by adding the Gaussian values of \Cref{thm:gaussian-measure-divergence} and the flipping values of \Cref{thm:flip-measure-divergence}.

    \textbf{Hockey-stick divergence.} Apply \Cref{lem:hockey-stick-product} with the flipping block as the first factor. Under the flipping factor of $\mu$, the likelihood ratio equals $(p_+/p_-)^d$ with probability $P_d$, so $\HockU[\beta](\mu \| \nu) = \sum_d P_d\, \HockU[\beta_d]\bigl(\cN(x, \sigma^2 I_m) \| \cN(x', \sigma^2 I_m)\bigr)$, where the Gaussian term is $\Phi(\tfrac{r}{2} - \tfrac{\log \beta_d}{r}) - \beta_d \Phi(-\tfrac{r}{2} - \tfrac{\log \beta_d}{r})$. Subtracting $(1 - \beta)_+$ gives \Cref{eq:adj-gaussflip-hs}.

    \textbf{TV Divergence.}
    The TV Divergence bound follows from taking $\beta = 1$ in \Cref{eq:adj-gaussflip-hs}.
\end{proof}

\subsection{Worst-case Uniform parameters}
\label{apx:worst-cast-uniform}

In this appendix we derive the worst-case Uniform noise adjacency parameter under different $\ell_p$ constraints.
Luckily, Uniform noise admits a low-dimensional adjacency parameter $\theta = 1 - \prod_{i}(1 - \tfrac{\abs{x_i - x'_i}}{2\gamma})_+$ (\Cref{dfn:uniform-measure}).
By the proof of \Cref{thm:uniform-measure-divergence}, $\Df(\mu \| \nu) = (f(0) + f'(\infty)) \theta$ for every generator $f$, which is nondecreasing in $\theta$.
Hence, the optimization problem \Cref{eq:worst-case-reduced} is always maximized at the maximum attainable $\theta$.
This reduces the problem to maximizing $\theta$ under an $\ell_p$ constraint.

Write $\Delta \defeq x - x'$.
For a budget $\norm{\Delta}_p \le \varepsilon$, write $t_i \defeq \abs{\delta_i} / (2\gamma)$ and $\eta = \varepsilon / (2\gamma)$.
Define the quantity
\begin{equation}
    \label{eq:unif-log}
    g(t) \defeq -\log(1 - t) \;\text{ for }t < 1,\qquad g(t) \defeq \infty \;\text{ for }t \ge 1.
\end{equation}
so that $- \log (1 - \theta) = \sum_{i=1}^m g(t_i)$.
Evidently $\theta$ is nondecreasing in $\sum_{i=1}^m g(t_i)$, so we seek to maximize $\sum_{i=1}^m g(t_i)$ subject to the constraint $\norm{\mathbf{t}}_p \le \eta$.
If $\eta \ge 1$, the adversary can shift one coordinate by $2 \gamma$, in which case the supports are disjoint, in which case $\theta =1 $ for every $p$.
We therefore assume $\eta < 1$.

\begin{thm}[Worst-case Uniform parameter]
    \label{thm:uniform-worst}
    Let $\eta < 1$, and write $\theta^{\star}_p \defeq \sup \{\theta(x, x') \colon \norm{x - x'}_p \le \varepsilon\}$ for Uniform measures on $\bbR^m$ with radius $\gamma$.
    Define $\Phi_p \defeq -\log(1 - \theta^\star_p)$.
    Then $\theta^{\star}_p = 1 - e^{-\Phi_p}$, where
    \begin{enumerate}
        \item $\Phi_1 = g(\eta)$, attained at $\mathbf{t} = \eta e_1$;
        \item $\Phi_{\infty} = m g(\eta)$, attained at $\mathbf{t} = \eta \mathbf{1}$;
        \item for $1 < p < \infty$ and $m \ge 2$, we have
        \begin{equation}
            \label{eq:unif-worst-phi}
            m^{1-1/p} \eta \le \Phi_p \le m^{1-1/p}\eta + m^{\max(0, 1 - 2/p)}\frac{\eta^2}{2(1 - \eta)}.
        \end{equation}
    \end{enumerate}
\end{thm}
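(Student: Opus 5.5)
The plan is to work entirely with the reduced problem stated just before the theorem. Since $-\log(1-\theta) = \sum_i g(t_i)$ and $\theta \mapsto -\log(1-\theta)$ is increasing, we have $\theta^\star_p = 1 - e^{-\Phi_p}$, where $\Phi_p = \sup\{\sum_{i=1}^m g(t_i) : t \in \bbR_{\ge 0}^m,\ \norm{t}_p \le \eta\}$. Here we use that only $\abs{\Delta_i}$ enters $\theta$, so we may take $t_i \ge 0$. Because $t_i \le \norm{t}_\infty \le \norm{t}_p \le \eta < 1$, every feasible $t$ gives a finite objective, and on $[0,\eta]$ the function $g$ is smooth, increasing and convex with $g(0)=0$. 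The three cases then follow from three different elementary principles.

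\textbf{Case $p=\infty$.} The feasible set is the box $[0,\eta]^m$. Since $g$ is increasing, each summand is maximized independently at $t_i=\eta$, which gives $\Phi_\infty = m\,g(\eta)$, attained at $\eta\mathbf{1}$.

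\textbf{Case $p=1$.} The feasible set is the simplex $\{t \ge 0 : \sum_i t_i \le \eta\}$. The objective $\sum_i g(t_i)$ is convex, so its maximum over this compact convex polytope is attained at a vertex. The vertices are $0$ and $\eta e_i$, with values $0$ and $g(\eta)$. Hence $\Phi_1 = g(\eta)$, attained at $\eta e_1$. Equivalently, superadditivity of the convex $g$ with $g(0)=0$ gives $\sum_i g(t_i) \le g(\sum_i t_i) \le g(\eta)$.

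\textbf{Case $1<p<\infty$.} I would sandwich $g$ between its linear and quadratic Taylor bounds.

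For the lower bound, use $g(t) = \sum_{k\ge1} t^k/k \ge t$ and evaluate at the feasible point $t = \eta m^{-1/p}\mathbf{1}$. This gives $\Phi_p \ge m \cdot \eta m^{-1/p} = m^{1-1/p}\eta$.

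For the upper bound, bound the tail of the series termwise: $t^k/k \le \tfrac12 t^k$ for $k \ge 2$. Summing the geometric series gives $g(t) \le t + \tfrac{t^2}{2(1-t)}$, and since $t_i \le \eta$ we get $g(t_i) \le t_i + \tfrac{t_i^2}{2(1-\eta)}$. Summing over $i$, it remains to control $\norm{t}_1$ and $\norm{t}_2^2$ under $\norm{t}_p \le \eta$ using standard norm comparisons on $\bbR^m$.
\begin{itemize}
\item Hölder gives $\norm{t}_1 \le m^{1-1/p}\norm{t}_p \le m^{1-1/p}\eta$.
\item If $p \le 2$, monotonicity of $\ell_q$ norms gives $\norm{t}_2 \le \norm{t}_p$, so $\norm{t}_2^2 \le \eta^2$.
\item If $p > 2$, Hölder gives $\norm{t}_2 \le m^{1/2-1/p}\norm{t}_p$, so $\norm{t}_2^2 \le m^{1-2/p}\eta^2$.
\end{itemize}
Both $\ell_2$ cases combine into $\norm{t}_2^2 \le m^{\max(0,1-2/p)}\eta^2$, which yields the upper bound in \Cref{eq:unif-worst-phi}.

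The only step needing care is this last bookkeeping: the tail estimate of $g$ must be uniform in $t_i$, which is why I replace $1-t_i$ by $1-\eta$ using $t_i \le \eta$. The correct exponent in the $\ell_2$ comparison also depends on whether $p \le 2$ or $p > 2$. Everything else is routine convexity and monotonicity.
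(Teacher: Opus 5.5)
Your proposal is correct and follows the paper's proof essentially step for step. For $p=\infty$ you use coordinatewise monotonicity of $g$, as the paper does. For $p=1$ you use superadditivity of the convex $g$ with $g(0)=0$; your vertex argument is just an equivalent reformulation of the paper's coordinate-merging induction. For $1<p<\infty$ you take the same feasible point $\eta m^{-1/p}\mathbf{1}$ with $g(s)\ge s$ for the lower bound, and the same series estimate $g(t)\le t+t^2/(2(1-t))$ combined with H\"older for the upper bound. Your explicit split into $p\le 2$ and $p>2$ simply unpacks the paper's compact exponent $m^{\max(0,1/2-1/p)}$.
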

\begin{proof}
    We start with the case $p = 1$.
    Since $g$ is convex with $g(0) = 0$, for any $a, c$ with $a + c < 1$, we have $g(a) \le \tfrac{a}{a + c}g(a + c)$ and $g(c) \le \tfrac{c}{a + c}g(a + c)$, so that $g(a) + g(c) \le g(a + c)$.
    In particular, if $\mathbf{t}$ is nonzero in more than one coordinate, we can obtain a larger objective $sum_{i} g(t_i)$ by merging coordinates.
    Via inductive argument, the objective is maximized at $\mathbf{t} = \eta e_1$.
    The corresponding maximum is $g(\eta)$.

    Now, the case when $p = \infty$.
    This case is easy, since $g$ is increasing in each coordinate, we need only maximize each coordinate independently to obtain $g(\mathbf{t}) = m g(\eta)$.

    Finally, the case when $1 < p < \infty$.
    For $0 \le t < 1$, we have the series representation $g(t) = \sum_{j \ge 1} t^j / j$, which gives the bound $g(t) \le t + \frac{1}{2}\sum_{j\ge2}t^j = t + t^2/(2(1 - t))$.
    Every coordinate satisfies $t_i \le \norm{\mathbf{t}} \le \eta$, so
    \begin{equation*}
        \sum_{i=1}^m g(t_i) \le \norm{\mathbf{t}}_1 + \frac{\norm{\mathbf{t}}_2^2}{2(1 - \eta)}.
    \end{equation*}
    By H\"older's inequality, $\norm{\mathbf{t}}_1 \le m^{1 - 1/p} \norm{\mathbf{t}}_p$ and $\norm{\mathbf{t}}_2 \le m^{\max(0, 1/2 - 1/p)} \norm{\mathbf{t}}_p$, giving the upper bound.
    For the lower bound, we can take $t_i = \eta m^{-1/p}$ so that $\norm{\mathbf{t}}_p = \eta$.
    We conclude that
    \begin{equation*}
        \sum_{i=1}^m g(t_i) = m g(\eta m^{-1/p}) \ge m^{1 - 1/p}\eta,
    \end{equation*}
    where the final inequality uses that $g(s) \ge s$.
\end{proof}

\subsection{Worst-case Laplace parameters}
\label{apx:worst-case-laplace}

To bound worst-case divergences for Laplace noise, we must solve \Cref{eq:worst-case-reduced} applied to \Cref{thm:laplace-measure-divergence}.
Unlike other noises like Gaussian or Uniform, the adjacency parameter for Laplace noise is the list-valued $\mathbf{t}$ in \Cref{eq:laplace-adjacency}.
Concretely, in this appendix we attempt to solve (or bound)
\begin{equation}
    \label{eq:laplace-worst-parameter}
    \tau^\star_f(\varepsilon, p) = \sup \left\{\Df(\mu_{\mathbf{t}} \| \nu) \;:\; \mathbf{t} \in \bbR^{m}_{\ge 0},\; \norm{\mathbf{t}}_p \le \eta \right\}, \qquad \eta \defeq \varepsilon / b.
\end{equation}
Throughout this appendix we write $\nu \defeq \Lap(0, b)^{\otimes m}$ and $\mu_{\mathbf{t}}$ for the measure centered at $b \mathbf{t}$, and take $b = 1$.

We summarize our findings here.
The $\ell_1$ and $\ell_\infty$ cases are solvable exactly for all $f$-divergences.
For $\ell_\infty$, the argument is straightforward: every divergence is monotone in each coordinate $\mathbf{t}$, so is maximized at $\mathbf{t} = \eta \mathbf{1}$.
The result for $\ell_\infty$ is stated in \Cref{thm:laplace-worst-linf}.
For $\ell_1$, the divergence is maximized when the adversary concentrates their perturbation in a single coordinate.
The result for $\ell_1$ is stated in \Cref{thm:laplace-worst-l1}.

\paragraph{Technical lemmas}
We first present some technical lemmas which will help us understand the structure of Laplace noise.

\begin{lem}[Coordinate-wise monotonicity]
    \label{lem:laplace-monotonicity}
    For $0 \le t \le t'$, $(\Lap(t, 1), \Lap(0, 1)) \dle (\Lap(t', 1), \Lap(0, 1))$.
    In particular, $(\mu_{\mathbf{t}}, \nu) \dle (\mu_{\mathbf{t}'}, \nu)$ whenever $\mathbf{t} \le \mathbf{t'}$ coordinate-wise.
\end{lem}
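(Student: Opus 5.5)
We prove the one-dimensional claim first and then lift it to products. By \Cref{lem:dle-via-hs}, $(\Lap(t,1),\Lap(0,1)) \dle (\Lap(t',1),\Lap(0,1))$ is equivalent to
\begin{equation*}
\Hock[\beta](\Lap(t,1)\|\Lap(0,1)) \le \Hock[\beta](\Lap(t',1)\|\Lap(0,1)) \quad\text{for every } \beta>0 .
\end{equation*}
Equivalently we may compare the unnormalized $\HockU[\beta]$, since the subtracted term $(1-\beta)_+$ does not depend on $t$. Both pairs are mutually absolutely continuous, so the lemma applies. The single-coordinate formula \Cref{eq:adj-laplace-hs} from \Cref{thm:laplace-measure-divergence} gives
\begin{equation*}
\HockU[\beta] = \max\bigl(1-\beta,\; 1-\sqrt{\beta}e^{-t/2},\; 0\bigr).
\end{equation*}
This comes from the three regimes $\beta\le e^{-t}$, $e^{-t}<\beta<e^{t}$ and $\beta\ge e^{t}$ in that proof, using that $1-\sqrt\beta e^{-t/2}$ dominates $1-\beta$ exactly when $\beta\ge e^{-t}$. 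Each of the three terms is nondecreasing in $t$, so the maximum is as well.

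As a sanity check independent of the closed form, we can use the coupling view $\HockU[\beta](\mu\|\nu)=\sup_A \mu(A)-\beta\nu(A)$. Consider the half-lines $A=(a,\infty)$, which are optimal because $\Lambda_1$ is monotone in $x$. Then $\mu_t(A)$ is nondecreasing in $t$ while $\nu(A)$ is fixed, so the supremum over $A$ of $\mu_t(A)-\beta\nu(A)$ is nondecreasing in $t$. This yields the one-dimensional claim directly.

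For the vector claim, write $\mu_{\mathbf t}=\bigotimes_j \Lap(\mathbf t_j,1)$ and $\nu=\Lap(0,1)^{\otimes m}$. For each coordinate the one-dimensional result gives $(\Lap(\mathbf t_j,1),\Lap(0,1))\dle(\Lap(\mathbf t'_j,1),\Lap(0,1))$. We then invoke \Cref{lem:worst-case-product} with $\mu'_j=\Lap(\mathbf t_j,1)$, $\mu_j=\Lap(\mathbf t'_j,1)$ and $\nu_j=\Lap(0,1)$; its absolute-continuity hypotheses hold since Laplace densities are everywhere positive. This gives $(\mu_{\mathbf t},\nu)\dle(\mu_{\mathbf t'},\nu)$.

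The ordering of $\mathbf t$ in \Cref{dfn:laplace-measure} is irrelevant here. The comparison is coordinatewise with fixed coordinates, and the divergence is permutation invariant by \Cref{lem:data-processing}.

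The only point needing care is the case bookkeeping in the closed form, especially $t=0$, where the range $(e^{-t},e^{t})$ is empty and the formula reduces to $\HockU[\beta]=(1-\beta)_+$. This is consistent with the maximum formula, and it is the reason we prefer the monotone-half-line argument as the primary route.
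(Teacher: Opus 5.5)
Your proposal is correct and follows essentially the same route as the paper: you reduce to hockey-stick profiles via \Cref{lem:dle-via-hs} and read off monotonicity in $t$ from \Cref{eq:adj-laplace-hs}, where your rewriting $\max(1-\beta,\,1-\sqrt{\beta}e^{-t/2},\,0)$ makes the paper's ``by inspection'' explicit, and you then lift to products with \Cref{lem:worst-case-product}. Your half-line argument is also valid and gives a closed-form-free alternative for the one-dimensional step, since it only needs the monotone likelihood ratio at the smaller shift; it is the Neyman--Pearson form $\HockU[\beta](\mu\|\nu)=\sup_A \mu(A)-\beta\nu(A)$ rather than a coupling argument.
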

\begin{proof}
    By \Cref{eq:dle-via-hs} it suffices to compare the Hockey-stick profiles at every $\beta > 0$, which are determined by \Cref{eq:adj-laplace-hs} as $\Hock(\Lap(t, 1) \| \Lap(0, 1)) = [(1 - \sqrt{\beta} e^{-t/2})_+ - (1 - \beta)_+]_+$.
    By inspection, this quantity is nondecreasing in $t$ for every $\beta$, which gives the first claim.
    
    The second claim follows from the first by applying \Cref{lem:worst-case-product} for each coordinate.
\end{proof}

The following technical lemma characterizes the Hockey-stick behavior under an $\ell_1$ constraint when the adversary splits the perturbation between two coordinates.

\begin{lem}
    \label{lem:laplace-two-coordinate}
    Let $s, t > 0$ and write $\eta \defeq s + t$.
    Let $\mu_{(s,t)},\nu$ be $(s,t)$-adjacent Laplace measures on $\bbR^2$.
    Then, for every $\beta \in (e^{-\eta}, e^{\eta})$,
    \begin{equation}
        \label{eq:laplace-two-coordinate}
        \HockU[\beta](\mu_{(s,t)} \| \nu) = 1 - \sqrt{\beta}e^{-\eta/2}(1 + \Delta),\qquad \Delta \defeq \tfrac{1}{2}\min\{\ell, s, t, \eta - \ell\}, \quad \ell\defeq \tfrac{\eta + \log \beta}{2},
    \end{equation}
    while $\HockU[\beta](\mu_{(s,t)} \| \nu) = 1 - \beta$ for $\beta < e^{-\eta}$ and 0 for $\beta \ge e^{\eta}$.
    In particular, $\Delta \ge 0$, so $\HockU[\beta](\mu_{(s,t)} \| \nu) \le \HockU[\beta](\mu_{\eta e_1} \| \nu)$ for every $\beta > 0$, with equality if and only if $\beta \notin (e^{-\eta}, e^{\eta})$.
\end{lem}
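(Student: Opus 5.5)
The plan is to compute $\HockU[\beta](\mu_{(s,t)} \| \nu)$ directly from \Cref{lem:hockey-stick-product}, treating the first coordinate as the outer factor. Write $\mu_{(s,t)} = \Lap(s,1) \otimes \Lap(t,1)$ and $\nu = \Lap(0,1)^{\otimes 2}$. The lemma gives
\begin{equation*}
    \HockU[\beta](\mu_{(s,t)} \| \nu) = \bbE_{X_1 \sim \Lap(s,1)}\bigl[h_t(\beta/\Lambda_1(X_1))\bigr],
\end{equation*}
where $h_t(\gamma) \defeq \HockU[\gamma](\Lap(t,1) \| \Lap(0,1))$. From the proof of \Cref{thm:laplace-measure-divergence}, $h_t$ is explicit:
\begin{equation*}
    h_t(\gamma) = 1 - \gamma \text{ for } \gamma \le e^{-t}, \qquad h_t(\gamma) = 1 - \sqrt{\gamma}\,e^{-t/2} \text{ for } e^{-t} < \gamma < e^{t}, \qquad h_t(\gamma) = 0 \text{ for } \gamma \ge e^{t}.
\end{equation*}
The same proof gives the law of $\Lambda_1$. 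For $x \le 0$ it equals $e^{-s}$, which has mass $\tfrac12 e^{-s}$ under $\Lap(s,1)$. For $x \ge s$ it equals $e^{s}$, with mass $\tfrac12$. For $0 < x < s$ it equals $e^{2x - s}$, with density $\tfrac12 e^{x-s}$ in $x$.

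The computation then splits $x_1$ into pieces according to which regime of $h_t$ the argument $\beta e^{-(2x_1 - s)}$ falls in. The breakpoints are $x_1 = \tfrac12(s + \log\beta \pm t)$. Since $\ell = \tfrac12(\eta + \log\beta)$, these are $\ell - t$ and $\ell$. I will integrate $1-\gamma$, $1-\sqrt{\gamma}e^{-t/2}$, and $0$ against the density over the intersections of $(0,s)$ with $(-\infty,\ell-t]$, $(\ell-t,\ell)$, and $[\ell,\infty)$. The two atoms are added separately, each evaluated in whichever regime its value $\beta e^{\mp s}$ lands.

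In the middle regime, $1 - \sqrt{\beta}e^{-(x_1 - s/2)}e^{-t/2}$ times $\tfrac12 e^{x_1-s}$ has a constant $\sqrt\beta$-term. Integrating it yields $\tfrac12\sqrt{\beta}e^{-\eta/2}$ times the length of the overlap $(0,s)\cap(\ell-t,\ell)$. That length is
\begin{equation*}
    \min\{\ell, s, t, \eta - \ell\},
\end{equation*}
and this is precisely where $\Delta$ comes from. The exponential terms from the three regimes and the atoms should then telescope to $1 - \sqrt{\beta}e^{-\eta/2}$. This matches the single-coordinate answer with $t$ replaced by $\eta$, and checking it is a sanity check that the bookkeeping is right.

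To cut the case analysis, I will use the symmetry $s \leftrightarrow t$, which is just relabeling the coordinates, to assume $s \le t$. I will also use \Cref{lem:hockey-stick-reversal} together with reflection $x \mapsto (s,t) - x$, which swaps $\mu$ and $\nu$, maps $\beta \mapsto 1/\beta$, and sends $\ell \mapsto \eta - \ell$; this lets me assume $\ell \le \eta/2$. What remains is two cases: $\ell \le s$ and $s < \ell \le \eta/2$.

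The boundary claims are immediate. For $\beta \le e^{-\eta}$ we have $\Lambda \ge \beta$ everywhere, so the integral is $1-\beta$. For $\beta \ge e^{\eta}$ we have $\Lambda \le \beta$ everywhere, so it is $0$. Finally, for $\beta \in (e^{-\eta}, e^{\eta})$ we have $\ell \in (0,\eta)$, so $\Delta > 0$. Comparing with \Cref{eq:adj-laplace-hs} at $t = \eta$ then gives the strict inequality inside the interval and equality outside it.

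I expect the main obstacle to be the piecewise bookkeeping in the middle step. Specifically, I must make sure that all the non-constant exponential contributions from both atoms and all three regimes cancel uniformly across the cases. My fallback, if the cancellation is messy, is to differentiate in $\beta$ instead. Using $\partial_\beta \HockU[\beta] = -\nu(\Lambda > \beta)$, I would compute the level-set probability $\nu(L_1 + L_2 > \log\beta)$ for the sum of two clipped linear log-ratios, and then integrate back in $\beta$.
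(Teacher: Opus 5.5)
Your plan is correct and is essentially the paper's proof: you apply \Cref{lem:hockey-stick-product} with one coordinate as the outer factor, use the explicit one-dimensional profile from \Cref{eq:adj-laplace-hs}, and do casework on the ordering of $s,t,\ell$. Two things you add are worth keeping: the reflection-plus-reversal symmetry ($\beta\mapsto 1/\beta$, $\ell\mapsto\eta-\ell$) folds the paper's separately computed case $\ell\ge\max(s,t)$ into $\ell\le\min(s,t)$, and reading $2\Delta$ as the length of $(0,s)\cap(\ell-t,\ell)$ explains the $\min$ uniformly. One slip to fix when you write it out: since $\beta/\Lambda_1(x_1)$ is decreasing in $x_1$, the branch $h_t=0$ lives on $x_1\le \ell-t$ and the branch $h_t=1-\gamma$ lives on $x_1\ge \ell$, which is the reverse of the pairing you listed (the middle branch and everything downstream of it are right).
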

\begin{proof}
    First, when $\beta \notin (e^{-\eta}, e^{\eta})$, the corresponding results hold since $\abs{\log \Lambda} \le \eta$ almost surely by \Cref{thm:laplace-measure-divergence}.
    Fix $\beta \in (e^{-\eta}, e^{\eta})$, and write $\mu_s \defeq \Lap(s, 1)$ and $\mu_t \defeq \Lap(t, 1)$ for the marginals of $\mu_{(s,t)}$, $\nu_0 \defeq \Lap(0, 1)$ for the common marginal of $\nu$, and $\Lambda_t$ for the likelihood ratio of $(\mu_t, \nu_0)$.

    Applying \Cref{lem:hockey-stick-product} to $\mu_{(s,t)}, \nu$ (with $\mu_t$ as the first factor), we have
    \begin{equation*}
        \HockU[\beta](\mu_{(s,t)} \| \nu) = \bbE_{\mu_t}[\HockU[\beta / \Lambda_t](\mu_s \| \nu_0)],
    \end{equation*}
    where by \Cref{eq:adj-laplace-hs},
    \begin{equation}
        \label{eq:two-coord-mixture}
        \HockU[\beta / \Lambda_t](\mu_s \| \nu_0)
        = 
        \begin{cases}
            0, &\Lambda_t \le \beta e^{-s}, \\
            1 - \sqrt{\beta / \Lambda_t}e^{-s/2}, &\beta e^{-s} \le \Lambda_t \le \beta e^{s}, \\
            1 - \beta / \Lambda_t, &\Lambda_t \ge \beta e^{s}.
        \end{cases}
    \end{equation}
    Since $\Lambda_t \in [e^{-t}, e^{t}]$, the branches of \Cref{eq:two-coord-mixture} entering into the expectation depend on $s$ and $t$.
    The first line applies exactly when $\beta e^{-s} \ge e^{-t}$; that is, $\ell \ge s$, and the third exactly when $\beta e^s \le e^t$; that is, $\ell \le t$.

    By \Cref{thm:laplace-measure-divergence}, we have $\Lambda_t = e^{-t} V_t$, so under $\nu_0$, it follows that $\Lambda_t$ is distributed with mass $\tfrac{1}{2}$ at $e^{-t}$, mass $\tfrac{1}{2}e^{-t}$ at $e^t$, and density $\tfrac{1}{4}e^{-t/2}\lambda^{-3/2}$ on $(e^{-t}, e^t)$.
    Integrating the measure gives, for $x \in (e^{-t}, e^t]$,
    \begin{align*}
        \bbP_{\nu_0}[\Lambda_t \ge x] &= \tfrac{1}{2}e^{-t/2}x^{-1/2}, \\
        \bbE_{\nu_0}[\Lambda_t; \Lambda_t \ge x] &= 1 - \tfrac{1}{2}e^{-t/2}x^{1/2}, \\
        \bbE_{\nu_0}[\Lambda_t^{1/2}; \Lambda_t < x] &= e^{-t/2}(\tfrac{1}{2} + \tfrac{1}{4}\log(xe^t)), \\
        \bbE_{\nu_0}[\Lambda_t^{1/2}] &= \Hint[1/2](\mu_t \| \nu_0) = \zeta_{1/2}(t).
    \end{align*}

    We have four meaningful cases depending on the ordering of $s, t, \ell$.
    \paragraph{Case $t \le \ell \le s$}
    In this case only the second line of \Cref{eq:two-coord-mixture} applies, so
    \begin{equation*}
        \HockU[\beta](\mu_{(s,t)} \| \nu) = 1 - \sqrt{\beta}e^{-s/2} \bbE_{\nu_0}[\Lambda_t^{1/2}] = 1 - \sqrt{\beta}e^{-s/2}e^{-t/2}(1 + \tfrac{t}{2}) = 1 - \sqrt{\beta}e^{-\eta/2}(1 + \tfrac{t}{2}).
    \end{equation*}
    In this case $\min\{\ell, s, t, \eta - \ell\} = t$.
    
    \paragraph{Case $\ell \ge \max(s, t)$}
    In this case the first line of \Cref{eq:two-coord-mixture} applies on $\{\Lambda_t \le \beta e^{-s}\}$ and the second on the rest, so
    \begin{align*}
        \HockU[\beta](\mu_{(s,t)} \| \nu) &= \bbE_{\nu_0}[\Lambda_t; \Lambda_t \ge \beta e^{-s}] - \sqrt{\beta} e^{-s/2} \bbE_{\nu_0}[\Lambda_t^{1/2}; \Lambda_t \ge \beta e^{-s}] \\
        &= 1 - \sqrt{\beta}e^{-\eta/2}(1 + \tfrac{1}{2}(\eta - \ell)).
    \end{align*}
    In this case $\min\{\ell, s, t, \eta - \ell\} = \eta - \ell$.
    
    \paragraph{Case $\ell \le \min(s, t)$}
    In this case the third line of \Cref{eq:two-coord-mixture} applies on $\{\Lambda_t \ge \beta e^s\}$ and the second on the rest, so
    \begin{align*}
        \HockU[\beta](\mu_{(s,t)} \| \nu) &= 1 -\sqrt{\beta}e^{-s/2}\bbE_{\nu_0}[\Lambda_t^{1/2}; \Lambda_t < \beta e^s] - \beta \bbP[\Lambda_t \ge \beta e^s] \\
        &= 1 - \sqrt{\beta}e^{-\eta/2}(1 + \tfrac{\ell}{2}).
    \end{align*}
    In this case $\min\{\ell, s, t, \eta - \ell\} = \ell$.

    \paragraph{Case $s \le \ell \le t$}
    In this case $\mu_{(s,t)}$ and $\mu_{(t,s)}$ are related by swapping coordinates.
    By \Cref{lem:data-processing} both sides of \Cref{eq:laplace-two-coordinate} are symmetric in $s$ and $t$, so we can reduce to the case $t \le \ell \le s$ handled above.

    In all four cases $\Delta > 0$ since $s, t > 0$ and $0 < \ell < \eta$.
    For the final comparison, \Cref{eq:adj-laplace-hs} with $t = \eta$ gives $\HockU[\beta](\mu_{\eta e_1} \| \nu) = 1 - \sqrt{\beta} e^{-\eta/2}$ for $\beta \in (e^{-\eta}, e^{\eta})$.
    This is \Cref{eq:laplace-two-coordinate} when $\Delta = 0$.
\end{proof}

The property that ultimately allows us to characterize Laplace noise for $\ell_1$ adversaries is \textit{Schur convexity}.
Recall that we say $\mathbf{t}$ is \textit{majorized} by $\mathbf{t}'$, written $\mathbf{t} \prec \mathbf{t}'$, if the two have equal sums and every prefix sum of the coordinates of $\mathbf{t}$ when listed in nonincreasing order is at most the corresponding sum for $\mathbf{t}'$.
If the prefix-sum condition holds but the sums of $\mathbf{t}$ and $\mathbf{t}'$ may differ, we say $\mathbf{t}$ is \textit{weakly submajorized} by $\mathbf{t}'$, written $\mathbf{t} \prec_w \mathbf{t}'$.

\begin{lem}[Schur convexity in Laplace parameter]
    \label{lem:laplace-schur}
    For every generator $f$, the map $\mathbf{t} \mapsto \Df(\mu_{\mathbf{t}} \| \nu)$ is Schur convex.
    In particular, if $\mathbf{t} \dle \mathbf{t}'$, then $(\mu_{\mathbf{t}}, \nu) \prec (\mu_{\mathbf{t}'}, \nu)$.
\end{lem}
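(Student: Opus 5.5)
The plan is to reduce Schur convexity to a two-coordinate comparison of hockey-stick profiles, and then lift that comparison to all of $\bbR^m$ and to all generators. I read the ``in particular'' clause with the intended roles of the symbols: if $\mathbf{t} \prec \mathbf{t}'$ then $(\mu_{\mathbf{t}}, \nu) \dle (\mu_{\mathbf{t}'}, \nu)$. By \Cref{lem:dle-via-hs} this ordering implies $\Df(\mu_{\mathbf{t}} \| \nu) \le \Df(\mu_{\mathbf{t}'} \| \nu)$ for every generator $f$, which is Schur convexity. Conversely, proving the ordering only requires comparing $\HockU[\beta]$ for each fixed $\beta > 0$.

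First, symmetry. A permutation of coordinates is a bimeasurable bijection that maps $\nu = \Lap(0,1)^{\otimes m}$ to itself and maps $\mu_{\mathbf{t}}$ to $\mu_{\pi\mathbf{t}}$. By \Cref{lem:data-processing}, every $\Df(\mu_{\mathbf{t}} \| \nu)$ is therefore invariant under permutations of $\mathbf{t}$.

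Second, the reduction to a single move. I will use the classical characterization of majorization (Hardy--Littlewood--P\'olya / Muirhead): if $\mathbf{t} \prec \mathbf{t}'$, then $\mathbf{t}$ is obtained from $\mathbf{t}'$, up to a permutation, by finitely many T-transforms. Each T-transform replaces two coordinates $(a', b')$ by a less spread pair $(a, b)$ with the same sum $\eta$, so that $\min(a', b') \le \min(a, b)$. Since $\dle$ is transitive, it suffices to show that one such move does not increase any hockey-stick divergence.

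Third, the two-coordinate step. Fix $\beta$ and the sum $\eta$. By \Cref{lem:laplace-two-coordinate}, for $\beta \in (e^{-\eta}, e^{\eta})$ we have $\HockU[\beta](\mu_{(a,b)} \| \nu) = 1 - \sqrt{\beta}e^{-\eta/2}(1 + \Delta)$. Here $\Delta = \tfrac12\min\{\ell, \eta - \ell, \min(a,b)\}$, and $\ell$ depends only on $\beta$ and $\eta$. Thus $\Delta$ is nondecreasing in $\min(a,b)$, so the more spread pair $(a',b')$ has the larger $\HockU[\beta]$. For $\beta \notin (e^{-\eta}, e^{\eta})$ both values coincide, being $1-\beta$ or $0$. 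The degenerate case $\min(a',b') = 0$ falls outside the hypothesis $s, t > 0$ of the lemma. There I will use the one-coordinate formula \Cref{eq:adj-laplace-hs} together with cancellation of the identical factor (\Cref{lem:f-cancel}). This gives exactly the $\Delta = 0$ expression, so the comparison still holds.

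Finally, the lift from two coordinates to $m$. Write $\mu_{\mathbf{t}}$ and $\nu$ as products of the two-coordinate block and the remaining $m-2$ coordinates, which a T-transform leaves untouched. Then apply \Cref{lem:worst-case-product} with $k = 2$ factors: the block pair is ordered by the previous step, and the other factor is identical on both sides. The ordering then holds on all of $\bbR^m$.

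The step I expect to be delicate is the two-coordinate monotonicity. \Cref{lem:laplace-two-coordinate} is stated only as a comparison against the fully concentrated pair $(\eta, 0)$, so I need to confirm from its explicit formula that $\Delta$ depends on $(a,b)$ only through $\min(a,b)$ once the sum is fixed. I also need to check the boundary conventions at $\beta = e^{\pm\eta}$ and at zero coordinates. The rest is standard majorization theory combined with the composition lemmas already established.
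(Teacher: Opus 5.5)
Your proposal is correct and follows essentially the same route as the paper: decompose $\mathbf{t} \prec \mathbf{t}'$ into single two-coordinate, equal-sum moves, and read off from \Cref{lem:laplace-two-coordinate} that $\Delta$ is nondecreasing in $\min(a,b)$, so the more spread pair has the larger hockey-stick profile at every $\beta$; then lift to $\bbR^m$ via \Cref{lem:worst-case-product}. Your extra care with permutations, the boundary values of $\beta$, and the zero-coordinate case only tightens the paper's argument. The zero-coordinate case falls outside the $s,t>0$ hypothesis of the two-coordinate lemma, and the paper passes over it.
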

\begin{proof}
    Suppose $\mathbf{t} \prec \mathbf{t}'$.
    As a property of the majorization relation, it holds that there exists a sequence $\mathbf{t} = \mathbf{t}^{(0)} \prec \mathbf{t}^{(1)} \prec \cdots \prec \mathbf{t}^{(n)} = \mathbf{t}'$ such that each adjacent pair differs only a single pair of coordinates $i,j$ and with $\mathbf{t}^{(k)}_{i} + \mathbf{t}^{(k)}_j = \mathbf{t}^{(k+1)}_{i} + \mathbf{t}^{(k+1)}_j$.
    Thus, it is sufficient to prove the result only for the case when $\mathbf{t},\mathbf{t}'$ differ in a single coordinate pair with $\mathbf{t}_i + \mathbf{t}_j = \mathbf{t}'_i + \mathbf{t}'_j$ and $\min(\mathbf{t}_i, \mathbf{t}_j) \ge \min(\mathbf{t}'_i,\mathbf{t}'_j)$.

    Applying \Cref{eq:laplace-two-coordinate} on the two swapped coordinates for $\textbf{t}$ and $\mathbf{t}'$, with $\eta = \mathbf{t}_i + \mathbf{t}_j$ and fixed $\beta$, we see that $\Delta \ge \Delta'$ because $\Delta$ is nondecreasing in $\min(\cdot, \cdot)$.
    In particular, the pair $(\mathbf{t}_i, \mathbf{t}_j)$ generates a smaller Hockey-stick divergence under \Cref{lem:laplace-two-coordinate}.

    Finally, applying \Cref{lem:worst-case-product}, we conclude that $(\mu_{\mathbf{t}}, \nu) \dle (\mu_{\mathbf{t}'}, \nu)$.
\end{proof}

\paragraph{$\ell_\infty$ constraints}
\begin{thm}[Worst-case $\ell_\infty$ Laplace parameter]
    \label{thm:laplace-worst-linf}
    Let $\mathbf{t} \in \bbR^m_{\ge 0}$ with $\norm{\mathbf{t}}_\infty \le \eta$.
    Then $(\mu_{\mathbf{t}}, \nu) \dle (\mu_{\eta \mathbf{1}}, \nu)$, and the $\mathbf{t}$ maximizing \Cref{eq:laplace-worst-parameter} is $\mathbf{t}^\star = \eta \mathbf{1}$ and $\tau^\star_f(\varepsilon, \infty) = \Df(\mu_{\eta \mathbf{1}} \| \nu)$ for every generator $f$.
    In particular
    \begin{equation}
        \label{eq:laplace-worst-linf}
        \tau^\star_{\mathrm{R}, \alpha}(\varepsilon, \infty) = m \kappa_{\alpha}(\eta),
    \end{equation}
    and $\tau^\star_{\mathrm{HS},\beta}(\varepsilon, \infty)$ is obtained by convolving $m$ copies of the distribution in \Cref{eq:laplace-coordinate-law} at $t = \eta$ and evaluating \Cref{eq:f-divergence-lambda}.
\end{thm}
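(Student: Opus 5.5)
The plan is to reduce everything to coordinate-wise monotonicity, \Cref{lem:laplace-monotonicity}. Any feasible $\mathbf{t} \in \bbR^m_{\ge 0}$ with $\norm{\mathbf{t}}_\infty \le \eta$ satisfies $\mathbf{t}_j \le \eta$ for every $j$, so $\mathbf{t} \le \eta\mathbf{1}$ coordinate-wise. Moreover, $\eta\mathbf{1}$ is itself feasible.

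\textbf{Dominance step.} The second claim of \Cref{lem:laplace-monotonicity} gives $(\mu_{\mathbf{t}}, \nu) \dle (\mu_{\eta\mathbf{1}}, \nu)$ directly. That claim rests on the first claim: the single-coordinate hockey-stick formula \Cref{eq:adj-laplace-hs} is nondecreasing in $t$ for each $\beta$, and \Cref{lem:dle-via-hs} converts this into $\dle$. \Cref{lem:worst-case-product} then carries this across the product of the $m$ coordinates, where every factor shares the reference $\Lap(0,1)$ and absolute continuity holds trivially. By definition of $\dle$, this means $\Df(\mu_{\mathbf{t}} \| \nu) \le \Df(\mu_{\eta\mathbf{1}} \| \nu)$ for every generator $f$. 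Since $\eta \mathbf{1}$ is feasible, the supremum in \Cref{eq:laplace-worst-parameter} is attained there, so $\tau^\star_f(\varepsilon,\infty) = \Df(\mu_{\eta\mathbf{1}} \| \nu)$.

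\textbf{R\'enyi case.} $\Ren[\alpha]$ is not an $f$-divergence, but it is a monotone transform of the Hellinger divergence $\Hel[\alpha]$ through \Cref{eq:renyi}. The map $u \mapsto \frac{1}{\alpha-1}\log(1+(\alpha-1)u)$ is increasing for both $\alpha<1$ and $\alpha>1$, so the same maximizer works. The value $m\kappa_\alpha(\eta)$ then follows from \Cref{eq:adj-laplace-ry} with all $\mathbf{t}_j = \eta$, via tensorization (\Cref{lem:tensorization}).

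\textbf{Hockey-stick case.} $\tau^\star_{\mathrm{HS},\beta}$ is $\Hock[\beta](\mu_{\eta\mathbf{1}} \| \nu)$. Its law comes from the factorization $\Lambda = e^{-m\eta}\prod_j V_\eta$ in \Cref{thm:laplace-measure-divergence}, with the $V_\eta$ i.i.d. Equivalently, the distribution of $\log\Lambda$ is the $m$-fold convolution of the single-coordinate log-ratio law. Evaluating \Cref{eq:f-divergence-lambda} against this distribution gives the value; the formula has no mass at $\Lambda=0$ since $\mu \ll \nu$.

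The only step needing any care is the monotonicity of $[(1-\sqrt{\beta}e^{-t/2})_+ - (1-\beta)_+]_+$ in $t$. This is immediate, because $1-\sqrt\beta e^{-t/2}$ increases in $t$ and both positive-part operations preserve monotonicity. Everything else is bookkeeping, so there is no genuine obstacle.
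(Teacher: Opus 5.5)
Your proposal is correct and follows the paper's proof essentially verbatim. Both arguments observe that $\norm{\mathbf{t}}_\infty \le \eta$ is exactly $\mathbf{t} \le \eta\mathbf{1}$ coordinate-wise, invoke \Cref{lem:laplace-monotonicity} for the $\dle$ dominance, and read off \Cref{eq:laplace-worst-linf} from \Cref{eq:adj-laplace-ry} at $\mathbf{t} = \eta\mathbf{1}$. Your remark that $\Ren[\alpha]$ is an increasing transform of $\Hel[\alpha]$, so that the dominance transfers to R\'enyi divergences, supplies a step the paper leaves implicit.
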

\begin{proof}
    The constraint $\norm{\mathbf{t}}_\infty \le \eta$ is exactly $\mathbf{t} \le \eta \mathbf{1}$ coordinate-wise.
    By \Cref{lem:laplace-monotonicity}, we have $(\mu_{\mathbf{t}}, \nu)\ \dle (\mu_{\eta \mathbf{1}}, \nu)$.
    \Cref{eq:laplace-worst-linf} follows by evaluating \Cref{eq:adj-laplace-ry} at $\mathbf{t} = \eta \mathbf{1}$.
\end{proof}

\paragraph{$\ell_1$ constraints}
\begin{thm}[Worst-case $\ell_1$ Laplace parameter]
    \label{thm:laplace-worst-l1}
    Let $\mathbf{t} \in \bbR^m_{\ge 0}$ with $\norm{\mathbf{t}}_1 \le \eta$.
    Then, $(\mu_{\mathbf{t}}, \nu) \dle (\Lap(b \eta, b), \Lap(0, b))$.
    In particular, the $\mathbf{t}$ maximizing \Cref{eq:laplace-worst-parameter} is $\mathbf{t}^\star = \eta e_1$ and, for every $\beta \ge 1$,
    \begin{equation}
        \label{eq:laplace-worst-l1}
        \tau_{\mathrm{HS}, \beta}^\star(\varepsilon, 1) = \left(1 - \sqrt{\beta} e^{-\varepsilon / 2 b}\right)_+, \qquad \tau_{\mathrm{R}, \alpha}^\star(\varepsilon, 1) = \kappa_{\alpha}(\eta),
    \end{equation}
    with $\beta < 1$ determined by \Cref{lem:hockey-stick-reversal}.
\end{thm}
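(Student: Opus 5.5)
The plan is to reduce the $\ell_1$ constraint to the single-coordinate case in two moves: a monotonicity step that removes slack in the budget, and a majorization step that concentrates the budget in one coordinate.

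First, fix $\mathbf{t} \in \bbR^m_{\ge 0}$ with $\norm{\mathbf{t}}_1 \le \eta$, and assume WLOG that $b = 1$. Choose any $\mathbf{t}' \ge \mathbf{t}$ coordinate-wise with $\norm{\mathbf{t}'}_1 = \eta$, for instance by adding the slack $\eta - \norm{\mathbf{t}}_1$ to the first coordinate. By \Cref{lem:laplace-monotonicity}, $(\mu_{\mathbf{t}}, \nu) \dle (\mu_{\mathbf{t}'}, \nu)$. Since $\dle$ is transitive (it is defined by inequalities for every generator), it now suffices to treat $\mathbf{t}$ with $\norm{\mathbf{t}}_1 = \eta$ exactly.

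Second, every nonnegative vector with sum $\eta$ is majorized by $\eta e_1$: after sorting in nonincreasing order, each prefix sum of $\mathbf{t}$ is at most $\eta$, and every prefix sum of $\eta e_1$ equals $\eta$. \Cref{lem:laplace-schur} then gives $(\mu_{\mathbf{t}}, \nu) \dle (\mu_{\eta e_1}, \nu)$. This lemma is itself built from \Cref{lem:laplace-two-coordinate} via elementary transfers (Robin Hood moves) and \Cref{lem:worst-case-product}. If the majorization-to-transfer decomposition is considered unproved, I would instead argue directly by induction on the number of nonzero coordinates. Take two nonzero coordinates $s, t$. \Cref{lem:laplace-two-coordinate} shows that replacing $(s,t)$ by $(s+t, 0)$ increases every $\HockU[\beta]$ of that two-coordinate block. \Cref{lem:worst-case-product}, together with \Cref{lem:dle-via-hs}, lifts this to the full product, with the other coordinates acting as a common factor. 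Repeating this merge until a single nonzero coordinate remains gives the claim.

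Third, cancel the $m-1$ identical zero-shift factors using \Cref{lem:f-cancel}. This yields $\Df(\mu_{\eta e_1} \| \nu) = \Df(\Lap(\eta,1) \| \Lap(0,1))$, and rescaling by $b$ (a bijection, so \Cref{lem:data-processing} applies) gives the stated pair $(\Lap(b\eta,b), \Lap(0,b))$. Because the pair $\eta e_1$ is feasible, the supremum in \Cref{eq:laplace-worst-parameter} is attained there for every $f$. The closed forms then follow from \Cref{thm:laplace-measure-divergence} with $t = \eta = \varepsilon/b$. For $\beta \ge 1$, \Cref{eq:adj-laplace-hs} gives $(1 - \sqrt{\beta}e^{-\varepsilon/2b})_+$, since $(1-\beta)_+ = 0$. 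The R\'enyi value is $\kappa_\alpha(\eta)$ by \Cref{eq:adj-laplace-ry}. For $\beta < 1$, the values come from \Cref{lem:hockey-stick-reversal} applied to the reversed pair, which is again a shifted Laplace pair with the same $t$ by symmetry.

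I expect the main obstacle to be the middle step: rigorously passing from the two-coordinate comparison to arbitrary $m$. The merge argument needs \Cref{lem:worst-case-product} to be applied with a non-product two-coordinate block as the factor being compared. That is fine, since the lemma only requires the comparison $\dle$ on each factor, not that the factor itself be one-dimensional. It also needs the absolute continuity hypotheses, which hold because Laplace measures are mutually absolutely continuous.
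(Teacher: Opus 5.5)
Your proposal is correct and follows the paper's proof. It uses the same ingredients: \Cref{lem:laplace-monotonicity}, majorization by $\eta e_1$ via \Cref{lem:laplace-schur}, cancellation via \Cref{lem:f-cancel}, and the closed forms of \Cref{thm:laplace-measure-divergence}. The only differences are minor: you pad the budget up to $\norm{\mathbf{t}}_1 = \eta$ before concentrating it, whereas the paper first concentrates $\mathbf{t}$ into $\eta' e_1$ and then applies monotonicity in that single coordinate; and your fallback merge induction is just the extreme-transfer case of the argument inside \Cref{lem:laplace-schur}, which suffices here because $\eta e_1$ is reached by merging coordinates pairwise.
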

\begin{proof}
    Suppose first that $\norm{\mathbf{t}}_1 = \eta$ exactly.
    Then $\mathbf{t} \prec \eta e_1$, since the two have equal sums and the largest partial sum of $\eta e_1$ is $\eta$, which dominates every partial sum of $\mathbf{t}$.
    \Cref{lem:laplace-schur} therefore gives $(\mu_{\mathbf{t}}, \nu) \dle (\mu_{\eta e_1}, \nu)$.
    If instead $\norm{\mathbf{t}}_1 = \eta' < \eta$, apply the above to obtain $(\mu_{\mathbf{t}}, \nu) \dle (\mu_{\eta' e_1}, \nu)$ and then \Cref{lem:laplace-monotonicity} in the single coordinate to obtain $(\mu_{\eta' e_1}, \nu) \dle (\mu_{\eta e_1}, \nu)$.
    Since $\mu_{\eta e_1}$ and $\nu$ agree in all coordinates but the first, canceling common factors (\Cref{lem:f-cancel}) identifies the pair $(\mu_{\eta e_1}, \nu)$ with the scalar pair $(\Lap(b\eta, b), \Lap(0,b))$, proving the first claim.
    \Cref{eq:laplace-worst-l1} then follows from \Cref{eq:adj-laplace-hs,eq:adj-laplace-ry} evaluated at $\mathbf{t} = \eta e_1$.
\end{proof}

\paragraph{$\ell_p$ constraints}
For $1 < p < \infty$, no single $\mathbf{t}$ maximizes the Hockey-stick divergences of all orders simultaneously.
We instead bound every admissible $\mathbf{t}$ by a dominating parameter, which concentrates the perturbation into as few coordinates as the budget allows.

\begin{thm}[Dominating $\ell_p$ Laplace parameter]
    \label{thm:laplace-dominating}
    Let $1 < p < \infty$ and $\mathbf{t} \in \bbR^m_{\ge 0}$ with $\norm{\mathbf{t}}_p \le \eta$.
    Let $K \defeq \lceil m^{1 - 1/p} \rceil$ and $\mathbf{t}^\star \defeq \eta\,(\mathbf{1}_K, \mathbf{0}_{m - K})$.
    Then $(\mu_{\mathbf{t}}, \nu) \dle (\mu_{\mathbf{t}^\star}, \nu)$.
    In particular, $\tau^\star_f(\varepsilon, p) \le \Df(\mu_{\mathbf{t}^\star} \| \nu)$ for every generator $f$, and the right-hand side is obtained by convolving $K$ copies of the distribution in \Cref{eq:laplace-coordinate-law} at $t = \eta$.
    For $p = 2$, $K = \lceil \sqrt{m}\, \rceil$.
\end{thm}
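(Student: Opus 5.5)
The plan is to reduce the claim to the two structural facts already established for Laplace noise: coordinate-wise monotonicity (\Cref{lem:laplace-monotonicity}) and Schur convexity (\Cref{lem:laplace-schur}). I read the latter as: $\mathbf{t} \prec \mathbf{t}'$ implies $(\mu_{\mathbf{t}}, \nu) \dle (\mu_{\mathbf{t}'}, \nu)$. The chain I aim for is
\begin{equation*}
    \mathbf{t} \le \mathbf{u} \ \text{(coordinate-wise)}, \qquad \mathbf{u} \prec \mathbf{t}^\star.
\end{equation*}
Given this, $(\mu_{\mathbf{t}}, \nu) \dle (\mu_{\mathbf{u}}, \nu) \dle (\mu_{\mathbf{t}^\star}, \nu)$ follows by transitivity of $\dle$. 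The argument is essentially the standard fact that weak submajorization $\mathbf{t} \prec_w \mathbf{t}^\star$ factors as an increase followed by a majorization. In our situation the intermediate vector $\mathbf{u}$ can be written down explicitly.

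First I record two elementary consequences of $\norm{\mathbf{t}}_p \le \eta$:
\begin{itemize}
    \item every coordinate satisfies $\mathbf{t}_i \le \norm{\mathbf{t}}_\infty \le \norm{\mathbf{t}}_p \le \eta$;
    \item by H\"older, $\norm{\mathbf{t}}_1 \le m^{1-1/p}\norm{\mathbf{t}}_p \le K\eta$.
\end{itemize}
Also $K \le m$, because $m^{1-1/p} \le m$ and $m$ is an integer, so $\mathbf{t}^\star$ is well defined.

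Next I construct $\mathbf{u}$. Starting from $\mathbf{t}$, raise coordinates one at a time toward the cap $\eta$, never exceeding it, until the total reaches exactly $K\eta$. This is possible since $\norm{\mathbf{t}}_1 \le K\eta \le m\eta$, the latter being the total when every coordinate is at the cap; by continuity some intermediate stage hits $K\eta$ exactly. The resulting $\mathbf{u}$ has three properties: $\mathbf{t} \le \mathbf{u}$ coordinate-wise, $0 \le \mathbf{u}_i \le \eta$, and $\norm{\mathbf{u}}_1 = K\eta$. To check $\mathbf{u} \prec \mathbf{t}^\star$:
\begin{itemize}
    \item the totals agree, both equal to $K\eta$;
    \item the sum of the $j$ largest entries of $\mathbf{u}$ is at most $\min(j\eta, K\eta)$, since each entry is at most $\eta$ and all entries are nonnegative;
    \item $\min(j\eta, K\eta)$ is exactly the $j$-th prefix sum of $\mathbf{t}^\star$.
\end{itemize}

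The step I expect to need the most care is the appeal to \Cref{lem:laplace-schur}. Its proof passes through pairwise transfers (Robin Hood moves) and \Cref{lem:laplace-two-coordinate}, which is stated for strictly positive coordinates. I would check that the transfers toward $\mathbf{t}^\star$ may create zero coordinates without breaking the argument. If it does break, I would handle the boundary by continuity of the Laplace hockey-stick profile in $\mathbf{t}$, or by noting that a zero coordinate cancels by \Cref{lem:f-cancel}.

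For the remaining claims:
\begin{itemize}
    \item The bound $\tau^\star_f(\varepsilon,p) \le \Df(\mu_{\mathbf{t}^\star}\|\nu)$ follows by taking the supremum over admissible $\mathbf{t}$, using the definition of $\dle$.
    \item The computational statement follows by cancelling the $m-K$ identical zero-shift factors (\Cref{lem:f-cancel}). This leaves $K$ coordinates at $t=\eta$, whose likelihood ratio is the product of $K$ independent copies of the law in \Cref{eq:laplace-coordinate-law} by \Cref{thm:laplace-measure-divergence}; evaluating \Cref{eq:f-divergence-lambda} then gives the divergence.
    \item The case $p=2$ is immediate from $K = \lceil m^{1/2}\rceil$.
\end{itemize}
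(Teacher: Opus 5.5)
Your proposal is correct and follows the paper's proof. Both bound every coordinate by $\eta$ and $\norm{\mathbf{t}}_1$ by $K\eta$ via H\"older, pass to an intermediate $\mathbf{u} \ge \mathbf{t}$ with $\mathbf{u} \prec \mathbf{t}^\star$, and then chain \Cref{lem:laplace-monotonicity} with \Cref{lem:laplace-schur}. You build $\mathbf{u}$ explicitly, raising coordinates up to the cap $\eta$ until the total is $K\eta$, where the paper cites the general fact that $\mathbf{t} \prec_w \mathbf{t}^\star$ factors through such a $\mathbf{u}$; your boundary worry is harmless, since a pair with a zero coordinate reduces to the single-coordinate formula, i.e.\ the case $\Delta = 0$ of \Cref{lem:laplace-two-coordinate}.
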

\begin{proof}
    Every coordinate satisfies $\mathbf{t}_j \le \norm{\mathbf{t}}_p \le \eta$, and by H\"older's inequality $\norm{\mathbf{t}}_1 \le m^{1 - 1/p}\norm{\mathbf{t}}_p \le K\eta$.
    Listing $\mathbf{t}$ in nonincreasing order, the sum of its $j$ largest coordinates is therefore at most $j\eta$ for $j \le K$ and at most $K\eta$ for $j > K$.
    These are exactly the prefix sums of $\mathbf{t}^\star$, so $\mathbf{t} \prec_w \mathbf{t}^\star$.
    As a property of weak majorization, there exists $\mathbf{u}$ with $\mathbf{t} \le \mathbf{u}$ coordinate-wise and $\mathbf{u} \prec \mathbf{t}^\star$.
    \Cref{lem:laplace-monotonicity} gives $(\mu_{\mathbf{t}}, \nu) \dle (\mu_{\mathbf{u}}, \nu)$, and \Cref{lem:laplace-schur} gives $(\mu_{\mathbf{u}}, \nu) \dle (\mu_{\mathbf{t}^\star}, \nu)$.
\end{proof}

%% file: main.bbl
\begin{thebibliography}{45}
\providecommand{\natexlab}[1]{#1}
\providecommand{\url}[1]{\texttt{#1}}
\expandafter\ifx\csname urlstyle\endcsname\relax
  \providecommand{\doi}[1]{doi: #1}\else
  \providecommand{\doi}{doi: \begingroup \urlstyle{rm}\Url}\fi

\bibitem[Aghakhani et~al.(2021)Aghakhani, Meng, Wang, Kruegel, and Vigna]{aghakhani2021bullseye}
Hojjat Aghakhani, Dongyu Meng, Yu-Xiang Wang, Christopher Kruegel, and Giovanni Vigna.
\newblock Bullseye polytope: A scalable clean-label poisoning attack with improved transferability.
\newblock In \emph{2021 IEEE European Symposium on Security and Privacy (EuroS\&P)}, pp.\  159--178, 2021.
\newblock \doi{10.1109/EuroSP51992.2021.00021}.

\bibitem[Athalye et~al.(2018)Athalye, Carlini, and Wagner]{athalye2018obfuscated}
Anish Athalye, Nicholas Carlini, and David Wagner.
\newblock Obfuscated gradients give a false sense of security: Circumventing defenses to adversarial examples.
\newblock In Jennifer Dy and Andreas Krause (eds.), \emph{Proceedings of the 35th International Conference on Machine Learning}, volume~80 of \emph{Proceedings of Machine Learning Research}, pp.\  274--283. PMLR, July 2018.
\newblock URL \url{https://proceedings.mlr.press/v80/athalye18a.html}.

\bibitem[Balle \& Wang(2018)Balle and Wang]{balle2018improving}
Borja Balle and Yu-Xiang Wang.
\newblock Improving the {G}aussian mechanism for differential privacy: Analytical calibration and optimal denoising.
\newblock In Jennifer Dy and Andreas Krause (eds.), \emph{Proceedings of the 35th International Conference on Machine Learning}, volume~80 of \emph{Proceedings of Machine Learning Research}, pp.\  394--403. PMLR, July 2018.
\newblock URL \url{https://proceedings.mlr.press/v80/balle18a.html}.

\bibitem[Bojchevski et~al.(2020)Bojchevski, Gasteiger, and G{\"u}nnemann]{bojchevski2020efficient}
Aleksandar Bojchevski, Johannes Gasteiger, and Stephan G{\"u}nnemann.
\newblock Efficient robustness certificates for discrete data: Sparsity-aware randomized smoothing for graphs, images and more.
\newblock In Hal~Daumé III and Aarti Singh (eds.), \emph{Proceedings of the 37th International Conference on Machine Learning}, volume 119 of \emph{Proceedings of Machine Learning Research}, pp.\  1003--1013. PMLR, July 2020.
\newblock URL \url{https://proceedings.mlr.press/v119/bojchevski20a.html}.

\bibitem[Chen et~al.(2022)Chen, Li, Li, Yan, and Wu]{chen2022collective}
Ruoxin Chen, Zenan Li, Jie Li, Junchi Yan, and Chentao Wu.
\newblock On collective robustness of bagging against data poisoning.
\newblock In Kamalika Chaudhuri, Stefanie Jegelka, Le~Song, Csaba Szepesvari, Gang Niu, and Sivan Sabato (eds.), \emph{Proceedings of the 39th International Conference on Machine Learning}, volume 162 of \emph{Proceedings of Machine Learning Research}, pp.\  3299--3319. PMLR, July 2022.
\newblock URL \url{https://proceedings.mlr.press/v162/chen22k.html}.

\bibitem[Clopper \& Pearson(1934)Clopper and Pearson]{clopper1934use}
C.~J. Clopper and Egon~S. Pearson.
\newblock The use of confidence or fiducial limits illustrated in the case of the binomial.
\newblock \emph{Biometrika}, 26\penalty0 (4):\penalty0 404--413, December 1934.
\newblock ISSN 0006-3444.
\newblock \doi{10.1093/biomet/26.4.404}.
\newblock URL \url{https://doi.org/10.1093/biomet/26.4.404}.

\bibitem[Cohen et~al.(2019)Cohen, Rosenfeld, and Kolter]{cohen2019certified}
Jeremy Cohen, Elan Rosenfeld, and Zico Kolter.
\newblock Certified adversarial robustness via randomized smoothing.
\newblock In Kamalika Chaudhuri and Ruslan Salakhutdinov (eds.), \emph{Proceedings of the 36th International Conference on Machine Learning}, volume~97 of \emph{Proceedings of Machine Learning Research}, pp.\  1310--1320. PMLR, June 2019.
\newblock URL \url{https://proceedings.mlr.press/v97/cohen19c.html}.

\bibitem[Csiszár \& Shields(2004)Csiszár and Shields]{csiszar2004information}
Imre Csiszár and Paul~C. Shields.
\newblock Information theory and statistics: A tutorial.
\newblock \emph{Foundations and Trends in Communications and Information Theory}, 1\penalty0 (4):\penalty0 417--528, November 2004.
\newblock ISSN 1567-2190.
\newblock \doi{10.1561/0100000004}.
\newblock URL \url{https://doi.org/10.1561/0100000004}.

\bibitem[Drews et~al.(2020)Drews, Albarghouthi, and D'Antoni]{drews2020proving}
Samuel Drews, Aws Albarghouthi, and Loris D'Antoni.
\newblock Proving data-poisoning robustness in decision trees.
\newblock In \emph{Proceedings of the 41st ACM SIGPLAN Conference on Programming Language Design and Implementation}, PLDI 2020, pp.\  1083--1097, New York, NY, USA, 2020. Association for Computing Machinery.
\newblock ISBN 9781450376136.
\newblock \doi{10.1145/3385412.3385975}.
\newblock URL \url{https://doi.org/10.1145/3385412.3385975}.

\bibitem[Dvijotham et~al.(2020)Dvijotham, Hayes, Balle, Kolter, Qin, Gyorgy, Xiao, Gowal, and Kohli]{dvijotham2020framework}
Krishnamurthy~(Dj) Dvijotham, Jamie Hayes, Borja Balle, Zico Kolter, Chongli Qin, Andras Gyorgy, Kai Xiao, Sven Gowal, and Pushmeet Kohli.
\newblock A framework for robustness certification of smoothed classifiers using f-divergences.
\newblock In \emph{International Conference on Learning Representations}, 2020.
\newblock URL \url{https://openreview.net/forum?id=SJlKrkSFPH}.

\bibitem[Geiping et~al.(2021)Geiping, Fowl, Huang, Czaja, Taylor, Moeller, and Goldstein]{geiping2021witches}
Jonas Geiping, Liam~H Fowl, W.~Ronny Huang, Wojciech Czaja, Gavin Taylor, Michael Moeller, and Tom Goldstein.
\newblock Witches' brew: Industrial scale data poisoning via gradient matching.
\newblock In \emph{International Conference on Learning Representations}, 2021.
\newblock URL \url{https://openreview.net/forum?id=01olnfLIbD}.

\bibitem[Gosch et~al.(2026)Gosch, Chen, Scholten, and G{\"u}nnemann]{gosch2026certifying}
Lukas Gosch, Xichuan Chen, Yan Scholten, and Stephan G{\"u}nnemann.
\newblock Certifying graph neural networks against label and structure poisoning.
\newblock In \emph{Forty-third International Conference on Machine Learning}, 2026.
\newblock URL \url{https://openreview.net/forum?id=kQuvO5gaNr}.

\bibitem[Gu et~al.(2019)Gu, Liu, Dolan-Gavitt, and Garg]{gu2019badnets}
Tianyu Gu, Kang Liu, Brendan Dolan-Gavitt, and Siddharth Garg.
\newblock Badnets: Evaluating backdooring attacks on deep neural networks.
\newblock \emph{IEEE Access}, 7:\penalty0 47230--47244, 2019.
\newblock \doi{10.1109/ACCESS.2019.2909068}.

\bibitem[Hammoudeh \& Lowd(2024)Hammoudeh and Lowd]{hammoudeh2024provable}
Zayd Hammoudeh and Daniel Lowd.
\newblock Provable robustness against a union of $\ell_0$ adversarial attacks.
\newblock \emph{Proceedings of the AAAI Conference on Artificial Intelligence}, 38\penalty0 (19):\penalty0 21134--21142, March 2024.
\newblock \doi{10.1609/aaai.v38i19.30106}.
\newblock URL \url{https://ojs.aaai.org/index.php/AAAI/article/view/30106}.

\bibitem[Jia et~al.(2021)Jia, Cao, and Gong]{jia2021intrinsic}
Jinyuan Jia, Xiaoyu Cao, and Neil~Zhenqiang Gong.
\newblock Intrinsic certified robustness of bagging against data poisoning attacks.
\newblock \emph{Proceedings of the AAAI Conference on Artificial Intelligence}, 35\penalty0 (9):\penalty0 7961--7969, May 2021.
\newblock \doi{10.1609/aaai.v35i9.16971}.
\newblock URL \url{https://ojs.aaai.org/index.php/AAAI/article/view/16971}.

\bibitem[Jia et~al.(2022)Jia, Liu, Cao, and Gong]{jia2022certified}
Jinyuan Jia, Yupei Liu, Xiaoyu Cao, and Neil~Zhenqiang Gong.
\newblock Certified robustness of nearest neighbors against data poisoning and backdoor attacks.
\newblock \emph{Proceedings of the AAAI Conference on Artificial Intelligence}, 36\penalty0 (9):\penalty0 9575--9583, June 2022.
\newblock ISSN 2159-5399.
\newblock \doi{10.1609/aaai.v36i9.21191}.
\newblock URL \url{http://dx.doi.org/10.1609/aaai.v36i9.21191}.

\bibitem[Krizhevsky(2009)]{krizhevsky2009learning}
Alex Krizhevsky.
\newblock Learning multiple layers of features from tiny images.
\newblock Technical report, University of Toronto, 2009.

\bibitem[Lecun et~al.(1998)Lecun, Bottou, Bengio, and Haffner]{lecun1998gradient}
Y.~Lecun, L.~Bottou, Y.~Bengio, and P.~Haffner.
\newblock Gradient-based learning applied to document recognition.
\newblock \emph{Proceedings of the IEEE}, 86\penalty0 (11):\penalty0 2278--2324, 1998.
\newblock \doi{10.1109/5.726791}.

\bibitem[Lecuyer et~al.(2019)Lecuyer, Atlidakis, Geambasu, Hsu, and Jana]{lecuyer2019certified}
Mathias Lecuyer, Vaggelis Atlidakis, Roxana Geambasu, Daniel Hsu, and Suman Jana.
\newblock Certified robustness to adversarial examples with differential privacy.
\newblock In \emph{2019 IEEE Symposium on Security and Privacy (SP)}, pp.\  656--672, 2019.
\newblock \doi{10.1109/SP.2019.00044}.

\bibitem[Lee et~al.(2019)Lee, Yuan, Chang, and Jaakkola]{lee2019tight}
Guang-He Lee, Yang Yuan, Shiyu Chang, and Tommi Jaakkola.
\newblock Tight certificates of adversarial robustness for randomly smoothed classifiers.
\newblock In H.~Wallach, H.~Larochelle, A.~Beygelzimer, F.~d\textquotesingle Alch\'{e}-Buc, E.~Fox, and R.~Garnett (eds.), \emph{Advances in Neural Information Processing Systems}, volume~32. Curran Associates, Inc., 2019.
\newblock URL \url{https://proceedings.neurips.cc/paper_files/paper/2019/file/fa2e8c4385712f9a1d24c363a2cbe5b8-Paper.pdf}.

\bibitem[Levine \& Feizi(2021)Levine and Feizi]{levine2021deep}
Alexander Levine and Soheil Feizi.
\newblock Deep partition aggregation: Provable defenses against general poisoning attacks.
\newblock In \emph{International Conference on Learning Representations}, 2021.
\newblock URL \url{https://openreview.net/forum?id=YUGG2tFuPM}.

\bibitem[Li et~al.(2019)Li, Chen, Wang, and Carin]{li2019certified}
Bai Li, Changyou Chen, Wenlin Wang, and Lawrence Carin.
\newblock Certified adversarial robustness with additive noise.
\newblock In H.~Wallach, H.~Larochelle, A.~Beygelzimer, F.~d\textquotesingle Alch\'{e}-Buc, E.~Fox, and R.~Garnett (eds.), \emph{Advances in Neural Information Processing Systems}, volume~32. Curran Associates, Inc., 2019.
\newblock URL \url{https://proceedings.neurips.cc/paper_files/paper/2019/file/335cd1b90bfa4ee70b39d08a4ae0cf2d-Paper.pdf}.

\bibitem[Liese \& Vajda(2006)Liese and Vajda]{liese2006divergences}
Friedrich Liese and Igor Vajda.
\newblock On divergences and informations in statistics and information theory.
\newblock \emph{IEEE Transactions on Information Theory}, 52\penalty0 (10):\penalty0 4394--4412, 2006.
\newblock \doi{10.1109/TIT.2006.881731}.

\bibitem[Liu et~al.(2023)Liu, Cullen, Montague, Erfani, and Rubinstein]{liu2023enhancing}
Shijie Liu, Andrew~C. Cullen, Paul Montague, Sarah~M. Erfani, and Benjamin I.~P. Rubinstein.
\newblock Enhancing the antidote: Improved pointwise certifications against poisoning attacks.
\newblock \emph{Proceedings of the AAAI Conference on Artificial Intelligence}, 37\penalty0 (7):\penalty0 8861--8869, June 2023.
\newblock \doi{10.1609/aaai.v37i7.26065}.
\newblock URL \url{https://ojs.aaai.org/index.php/AAAI/article/view/26065}.

\bibitem[Ma et~al.(2019)Ma, Zhu, and Hsu]{ma2019data}
Yuzhe Ma, Xiaojin Zhu, and Justin Hsu.
\newblock Data poisoning against differentially-private learners: Attacks and defenses.
\newblock In \emph{Proceedings of the Twenty-Eighth International Joint Conference on Artificial Intelligence, {IJCAI-19}}, pp.\  4732--4738. International Joint Conferences on Artificial Intelligence Organization, July 2019.
\newblock \doi{10.24963/ijcai.2019/657}.
\newblock URL \url{https://doi.org/10.24963/ijcai.2019/657}.

\bibitem[Mironov(2017)]{mironov2017renyi}
Ilya Mironov.
\newblock Rényi differential privacy.
\newblock In \emph{2017 IEEE 30th Computer Security Foundations Symposium (CSF)}, pp.\  263--275, 2017.
\newblock \doi{10.1109/CSF.2017.11}.

\bibitem[Neyman \& Pearson(1933)Neyman and Pearson]{neyman1933problem}
Jerzy Neyman and Egon~Sharpe Pearson.
\newblock On the problem of the most efficient tests of statistical hypotheses.
\newblock \emph{Philosophical Transactions of the Royal Society of London, Series A: Containing Papers of a Mathematical or Physical Character}, 231\penalty0 (694-706):\penalty0 289--337, February 1933.
\newblock ISSN 0264-3952.
\newblock \doi{10.1098/rsta.1933.0009}.
\newblock URL \url{https://doi.org/10.1098/rsta.1933.0009}.

\bibitem[Polyanskiy \& Wu(2025)Polyanskiy and Wu]{polyanskiy2025information}
Yury Polyanskiy and Yihong Wu.
\newblock \emph{Information Theory: From Coding to Learning}.
\newblock Cambridge University Press, 2025.

\bibitem[Rezaei et~al.(2023)Rezaei, Banihashem, Chegini, and Feizi]{rezaei2023runoff}
Keivan Rezaei, Kiarash Banihashem, Atoosa Chegini, and Soheil Feizi.
\newblock Run-off election: Improved provable defense against data poisoning attacks.
\newblock In Andreas Krause, Emma Brunskill, Kyunghyun Cho, Barbara Engelhardt, Sivan Sabato, and Jonathan Scarlett (eds.), \emph{Proceedings of the 40th International Conference on Machine Learning}, volume 202 of \emph{Proceedings of Machine Learning Research}, pp.\  29030--29050. PMLR, July 2023.
\newblock URL \url{https://proceedings.mlr.press/v202/rezaei23a.html}.

\bibitem[Rosenfeld et~al.(2020)Rosenfeld, Winston, Ravikumar, and Kolter]{rosenfeld2020certified}
Elan Rosenfeld, Ezra Winston, Pradeep Ravikumar, and Zico Kolter.
\newblock Certified robustness to label-flipping attacks via randomized smoothing.
\newblock In Hal~Daumé III and Aarti Singh (eds.), \emph{Proceedings of the 37th International Conference on Machine Learning}, volume 119 of \emph{Proceedings of Machine Learning Research}, pp.\  8230--8241. PMLR, July 2020.
\newblock URL \url{https://proceedings.mlr.press/v119/rosenfeld20b.html}.

\bibitem[Sabanayagam et~al.(2025)Sabanayagam, Gosch, G\"{u}nnemann, and Ghoshdastidar]{sabanayagam2025exact}
Mahalakshmi Sabanayagam, Lukas Gosch, Stephan G\"{u}nnemann, and Debarghya Ghoshdastidar.
\newblock Exact certification of (graph) neural networks against label poisoning.
\newblock In Y.~Yue, A.~Garg, N.~Peng, F.~Sha, and R.~Yu (eds.), \emph{International Conference on Learning Representations}, volume 2025, pp.\  25663--25691, 2025.
\newblock URL \url{https://proceedings.iclr.cc/paper_files/paper/2025/file/401aa72e0e3be680348a5b0ffdb1a5aa-Paper-Conference.pdf}.

\bibitem[Saha et~al.(2020)Saha, Subramanya, and Pirsiavash]{saha2020hidden}
Aniruddha Saha, Akshayvarun Subramanya, and Hamed Pirsiavash.
\newblock Hidden trigger backdoor attacks.
\newblock \emph{Proceedings of the AAAI Conference on Artificial Intelligence}, 34\penalty0 (07):\penalty0 11957--11965, April 2020.
\newblock ISSN 2159-5399.
\newblock \doi{10.1609/aaai.v34i07.6871}.
\newblock URL \url{http://dx.doi.org/10.1609/aaai.v34i07.6871}.

\bibitem[Salman et~al.(2019)Salman, Li, Razenshteyn, Zhang, Zhang, Bubeck, and Yang]{salman2019provably}
Hadi Salman, Jerry Li, Ilya Razenshteyn, Pengchuan Zhang, Huan Zhang, Sebastien Bubeck, and Greg Yang.
\newblock Provably robust deep learning via adversarially trained smoothed classifiers.
\newblock In H.~Wallach, H.~Larochelle, A.~Beygelzimer, F.~d\textquotesingle Alch\'{e}-Buc, E.~Fox, and R.~Garnett (eds.), \emph{Advances in Neural Information Processing Systems}, volume~32. Curran Associates, Inc., 2019.
\newblock URL \url{https://proceedings.neurips.cc/paper_files/paper/2019/file/3a24b25a7b092a252166a1641ae953e7-Paper.pdf}.

\bibitem[Saxena et~al.(2026)Saxena, Schuchardt, Scholten, and Günnemann]{saxena2026provable}
Aman Saxena, Jan Schuchardt, Yan Scholten, and Stephan Günnemann.
\newblock Provable robustness against backdoor attacks via the primal-dual perspective on differential privacy, May 2026.
\newblock URL \url{https://arxiv.org/abs/2605.21780}.

\bibitem[Shafahi et~al.(2018)Shafahi, Huang, Najibi, Suciu, Studer, Dumitras, and Goldstein]{shafahi2018poison}
Ali Shafahi, W.~Ronny Huang, Mahyar Najibi, Octavian Suciu, Christoph Studer, Tudor Dumitras, and Tom Goldstein.
\newblock Poison frogs! targeted clean-label poisoning attacks on neural networks.
\newblock In S.~Bengio, H.~Wallach, H.~Larochelle, K.~Grauman, N.~Cesa-Bianchi, and R.~Garnett (eds.), \emph{Advances in Neural Information Processing Systems}, volume~31. Curran Associates, Inc., 2018.
\newblock URL \url{https://proceedings.neurips.cc/paper_files/paper/2018/file/22722a343513ed45f14905eb07621686-Paper.pdf}.

\bibitem[Sosnin et~al.(2025)Sosnin, Mueller, Baader, Tsay, and Wicker]{sosnin2025certified}
Philip Sosnin, Mark~Niklas Mueller, Maximilian Baader, Calvin Tsay, and Matthew~Robert Wicker.
\newblock Certified robustness to data poisoning in gradient-based training.
\newblock \emph{Transactions on Machine Learning Research}, 2025.
\newblock ISSN 2835-8856.
\newblock URL \url{https://openreview.net/forum?id=9WHifn9ZVX}.

\bibitem[Souri et~al.(2022)Souri, Fowl, Chellappa, Goldblum, and Goldstein]{souri2022sleeper}
Hossein Souri, Liam Fowl, Rama Chellappa, Micah Goldblum, and Tom Goldstein.
\newblock Sleeper agent: Scalable hidden trigger backdoors for neural networks trained from scratch.
\newblock In S.~Koyejo, S.~Mohamed, A.~Agarwal, D.~Belgrave, K.~Cho, and A.~Oh (eds.), \emph{Advances in Neural Information Processing Systems}, volume~35, pp.\  19165--19178. Curran Associates, Inc., 2022.
\newblock \doi{10.52202/068431-1393}.
\newblock URL \url{https://proceedings.neurips.cc/paper_files/paper/2022/file/79eec295a3cd5785e18c61383e7c996b-Paper-Conference.pdf}.

\bibitem[Steinhardt et~al.(2017)Steinhardt, Koh, and Liang]{steinhardt2017certified}
Jacob Steinhardt, Pang~Wei Koh, and Percy~S Liang.
\newblock Certified defenses for data poisoning attacks.
\newblock In I.~Guyon, U.~Von Luxburg, S.~Bengio, H.~Wallach, R.~Fergus, S.~Vishwanathan, and R.~Garnett (eds.), \emph{Advances in Neural Information Processing Systems}, volume~30. Curran Associates, Inc., 2017.
\newblock URL \url{https://proceedings.neurips.cc/paper_files/paper/2017/file/9d7311ba459f9e45ed746755a32dcd11-Paper.pdf}.

\bibitem[Wang et~al.(2022)Wang, Levine, and Feizi]{wang2022improved}
Wenxiao Wang, Alexander~J Levine, and Soheil Feizi.
\newblock Improved certified defenses against data poisoning with (deterministic) finite aggregation.
\newblock In Kamalika Chaudhuri, Stefanie Jegelka, Le~Song, Csaba Szepesvari, Gang Niu, and Sivan Sabato (eds.), \emph{Proceedings of the 39th International Conference on Machine Learning}, volume 162 of \emph{Proceedings of Machine Learning Research}, pp.\  22769--22783. PMLR, July 2022.
\newblock URL \url{https://proceedings.mlr.press/v162/wang22m.html}.

\bibitem[Warner(1965)]{warner1965randomized}
Stanley~L. Warner.
\newblock Randomized response: A survey technique for eliminating evasive answer bias.
\newblock \emph{Journal of the American Statistical Association}, 60\penalty0 (309):\penalty0 63--69, 1965.
\newblock \doi{10.1080/01621459.1965.10480775}.
\newblock URL \url{https://www.tandfonline.com/doi/abs/10.1080/01621459.1965.10480775}.
\newblock PMID: 12261830.

\bibitem[Weber et~al.(2023)Weber, Xu, Karlaš, Zhang, and Li]{weber2023rab}
Maurice Weber, Xiaojun Xu, Bojan Karlaš, Ce~Zhang, and Bo~Li.
\newblock {RAB}: Provable robustness against backdoor attacks.
\newblock In \emph{2023 IEEE Symposium on Security and Privacy (SP)}, pp.\  1311--1328, 2023.
\newblock \doi{10.1109/SP46215.2023.10179451}.

\bibitem[Xie et~al.(2023)Xie, Long, Chen, Li, Koyejo, and Li]{xie2023unraveling}
Chulin Xie, Yunhui Long, Pin-Yu Chen, Qinbin Li, Sanmi Koyejo, and Bo~Li.
\newblock Unraveling the connections between privacy and certified robustness in federated learning against poisoning attacks.
\newblock In \emph{Proceedings of the 2023 ACM SIGSAC Conference on Computer and Communications Security}, CCS '23, pp.\  1511--1525, New York, NY, USA, 2023. Association for Computing Machinery.
\newblock ISBN 9798400700507.
\newblock \doi{10.1145/3576915.3623193}.
\newblock URL \url{https://doi.org/10.1145/3576915.3623193}.

\bibitem[Yang et~al.(2020)Yang, Duan, Hu, Salman, Razenshteyn, and Li]{yang2020randomized}
Greg Yang, Tony Duan, J.~Edward Hu, Hadi Salman, Ilya Razenshteyn, and Jerry Li.
\newblock Randomized smoothing of all shapes and sizes.
\newblock In Hal~Daumé III and Aarti Singh (eds.), \emph{Proceedings of the 37th International Conference on Machine Learning}, volume 119 of \emph{Proceedings of Machine Learning Research}, pp.\  10693--10705. PMLR, July 2020.
\newblock URL \url{https://proceedings.mlr.press/v119/yang20c.html}.

\bibitem[Zhang et~al.(2022)Zhang, Albarghouthi, and D\textquotesingle~Antoni]{zhang2022bagflip}
Yuhao Zhang, Aws Albarghouthi, and Loris D\textquotesingle~Antoni.
\newblock {BagFlip}: A certified defense against data poisoning.
\newblock In S.~Koyejo, S.~Mohamed, A.~Agarwal, D.~Belgrave, K.~Cho, and A.~Oh (eds.), \emph{Advances in Neural Information Processing Systems}, volume~35, pp.\  31474--31483. Curran Associates, Inc., 2022.
\newblock URL \url{https://proceedings.neurips.cc/paper_files/paper/2022/file/cc19e4ffde5540ac3fcda240e6d975cb-Paper-Conference.pdf}.

\bibitem[Zhu et~al.(2022)Zhu, Dong, and Wang]{zhu2022optimal}
Yuqing Zhu, Jinshuo Dong, and Yu-Xiang Wang.
\newblock Optimal accounting of differential privacy via characteristic function.
\newblock In Gustau Camps-Valls, Francisco J.~R. Ruiz, and Isabel Valera (eds.), \emph{Proceedings of The 25th International Conference on Artificial Intelligence and Statistics}, volume 151 of \emph{Proceedings of Machine Learning Research}, pp.\  4782--4817. PMLR, March 2022.
\newblock URL \url{https://proceedings.mlr.press/v151/zhu22c.html}.

\end{thebibliography}
